\documentclass[times, review, 10pt]{elsarticle}

\usepackage{amsmath,amssymb,amsthm}
\DeclareMathOperator{\lcm}{lcm}
\newtheorem{theorem}{Theorem}[section]
\newtheorem{lemma}{Lemma}[section]
\newtheorem{definition}{Definition}[section]
\newtheorem{example}{Example}[section]
\newtheorem{remark}{Remark}[section]
\usepackage{placeins} 
\usepackage{geometry}
\usepackage{amsfonts}
\usepackage{mathrsfs} 
\usepackage{euscript}
\usepackage{booktabs}
\usepackage{graphicx}
\usepackage{makecell}
\usepackage{multirow}
\usepackage{comment}
\usepackage{diagbox}
\usepackage{hyperref}
\usepackage{appendix}
\usepackage{caption}

\usepackage{tikz}      
\usepackage[ruled, vlined, linesnumbered]{algorithm2e}

\journal{Pattern Recognition}
\begin{document}

\begin{frontmatter}



\title{Semi-Tensor Product-Based Multi-Term Randomized T-SVD and Its Visual Applications
}


\author[1]{Xingchen Xiao}
\author[1]{Feng Zhang\corref{cor1}}
\author[1]{Wenjin Qin}
\author[1]{Jianjun Wang}
\cortext[cor1]{Corresponding author. Email addresses: zfmath@swu.edu.cn (F. Zhang).}
\affiliation[1]{organization={School of Mathematics and Statistics, Southwest University},
            city={Chongqing},
            postcode={400715},
            country={China}}

\begin{abstract}
Tensor singular value decomposition (T-SVD), which is built upon the tensor-tensor product (t-product), has emerged as a powerful tool for processing high-dimensional visual data such as color images and videos. 
However, the standard t-product imposes strict dimensional compatibility constraints. Although extensions based on the semi-tensor product (STP) relax this restriction, their single-term formulations still suffer from limited approximation accuracy. Moreover, these deterministic methods incur high computational costs when processing large-scale tensor data. 
To address these issues, this paper introduces a novel semi-tensor product for third-order tensors under the t-product framework induced by arbitrary invertible linear transforms. The resulting tensor semi-tensor product breaks the rigid dimension matching requirement of the standard t-product, while retaining the closed-form property of T-SVD. 
Based on this construction, we develop a multi-term semi-tensor product singular value decomposition (MSTP-SVD), which integrates multiple orthogonal decomposition terms to significantly improve low-rank approximation accuracy compared with single-term schemes. 
To reduce the computational cost of multi-term modeling, we incorporate randomized projection and power iteration techniques into the MSTP-SVD framework, yielding an accelerated multi-term randomized semi-tensor product SVD (MRSTP-SVD) algorithm that achieves a balance between reconstruction accuracy and computational efficiency. 
Experiments on image and video compression and completion tasks demonstrate the effectiveness of the proposed method.

\end{abstract}





\begin{keyword}
Tensor singular value decomposition \sep Semi-tensor product \sep Multi-term Kronecker product decomposition \sep Randomized algorithm \sep  Tensor compression and completion
\end{keyword}

\end{frontmatter}



\section{Introduction}
\label{intro}
The rapid advancement of data acquisition technologies has led to an explosion of high-dimensional visual data, including color images, color videos, hyperspectral remote sensing images, and medical imaging data.
Tensors, as natural higher-order extensions of matrices, have become the standard representation for such multidimensional data and found widespread applications in computer vision \cite{SONG2025111600,YANG2022108311,zhang2020low,hou2021robust}, 
machine learning \cite{xie2017kronecker,xue_enhanced_2020}, signal processing \cite{sidiropoulos2017tensor,cichocki_tensor_2015}, and data mining \cite{papalexakis2016tensors,sun2009multivis}. 
Compared with flattening multidimensional signals into matrices or vectors, tensor representations preserve inherent cross-dimensional structural correlations, which are crucial for extracting meaningful features and achieving superior performance in various analytical tasks \cite{ZHU2026114295,luo2023low,wang_guaranteed_2023,11370221,KONG2023109545}. 
Nevertheless, the development of efficient and accurate tensor decomposition algorithms for large-scale visual data remains a fundamental challenge.

\par Among numerous tensor decomposition techniques, CANDECOMP/PARAFAC (CP) \cite{hitchcock1928multiple}, Tucker decomposition \cite{tucker1966some}, Tensor Train (TT) \cite{oseledets2011tensor}, and Tensor Ring (TR) \cite{zhao2016tensor} have been extensively studied and successfully applied. However, these decompositions either suffer from NP-hard rank determination (CP), lack optimal truncation properties (Tucker), 
or involve unbalanced matricization schemes that may not capture global information effectively \cite{hillar_most_2013,bengua2017efficient}. In this context, the tensor singular value decomposition (T-SVD) pioneered by Kilmer et al. \cite{kilmer2011factorization,kernfeld2015tensor} has emerged as a theoretically attractive alternative. Built upon the tensor-tensor product (t-product), T-SVD provides a closed-form factorization (see Theorem \ref{th:t-svd}). 
Moreover, truncated T-SVD achieves optimal approximation in the Frobenius norm sense for any unitary-invariant tensor norm \cite{kilmer2021tensor}, analogous to the classical Eckart–Young–Mirsky theorem for matrices. These theoretical advantages, combined with natural parallelizability across frontal slices, 
have made T-SVD particularly successful in image compression \cite{ahmadi2024randomizedtsvd}, video completion \cite{LIU2025111610}, face recognition \cite{8908805}, and background modeling \cite{10420484}.

\par Despite its elegance, the standard t-product framework imposes strict dimensional compatibility requirements. Specifically, for two third-order tensors $\mathcal{X} \in \mathbb{R}^{n_1 \times n_2 \times n_3}$ and $\mathcal{Y} \in \mathbb{R}^{m_1 \times m_2 \times n_3}$, their t-product $\mathcal{X} * \mathcal{Y}$ is well-defined only when $n_2 = m_1$. 
This constraint severely limits the applicability of T-SVD to real-world tensors whose modes do not satisfy such rigid matching conditions. To address this limitation, researchers have explored alternative algebraic structures. 
Notably, the matrix semi-tensor product (STP) proposed by Cheng \cite{cheng2012introduction} supports dimensionally mismatched operations by introducing block Kronecker products, offering greater flexibility than conventional matrix multiplication.
 Recent work by Chen et al. \cite{chen2023tensor} has extended this idea to the tensor setting, defining a tensor semi-tensor product that relaxes dimension constraints while preserving certain algebraic properties.

\par However, a critical observation is that existing tensor STP-based decompositions adopt a single-term formulation, which represents the target tensor using only one set of orthogonal factors. This single-component structure inherently restricts the representation power, particularly when the underlying tensor exhibits complex multi-mode correlations or does not possess rapidly decaying singular values. 
In such scenarios, single-term approximations inevitably suffer from limited accuracy, failing to capture fine-grained structural information effectively \cite{batselier2017constructive}. This observation motivates the exploration of multi-term decomposition strategies, 
wherein multiple orthogonal components are combined to achieve more accurate and flexible tensor representations—a paradigm that has proven successful in classical matrix decompositions and other tensor frameworks, yet remains largely underexplored in the STP-based context.

\par Furthermore, deterministic tensor decomposition algorithms, whether based on exact SVD computations or iterative optimization procedures, incur substantial computational costs. As shown in \cite{zhang2018randomized}, computing a $k$-term truncated T-SVD requires \(\mathcal{O} \left( n_1 n_2 n_3 log(n_3) + m_1 m_2 k \right)\) operations, 
which becomes prohibitively expensive for high-resolution videos (e.g., $1920 \times 1080 \times 1000$) or large-scale hyperspectral datasets. This challenge is further exacerbated in multi-term decomposition scenarios, where multiple factorizations must be performed, 
multiplying the computational burden accordingly. Although randomized algorithms have been successfully developed for Tucker decomposition \cite{che2025efficient}, CP approximation \cite{malik2020fast}, and even T-SVD \cite{WANG2026112066,liu2021hyperspectral} to reduce complexity, their application to STP-based frameworks remains largely unexplored.

\par Motivated by these observations, we aim to develop a comprehensive framework that simultaneously addresses the aforementioned three challenges: 
(i) dimensional rigidity of traditional t-product operations, 
(ii) approximation accuracy limitations of single-term decomposition schemes, 
and (iii) computational inefficiency of deterministic methods for large-scale data. These three challenges are not isolated; addressing one often exacerbates another. For instance, relaxing dimensional constraints through semi-tensor products may introduce additional algebraic complexity, 
while improving accuracy via multi-term modeling inevitably increases computational cost. Moreover, we observe that existing tensor STP constructions rely exclusively on specific transform bases (e.g. discrete Fourier transform (DFT)), which may not be optimal for all application scenarios. 
A framework based on arbitrary invertible linear transforms could provide greater flexibility and potentially superior approximation properties for diverse data characteristics. Therefore, a unified approach that carefully balances these competing objectives is highly desirable.

\par To this end, the main contributions of this paper are summarized as follows:
\begin{itemize}
        \item We propose a novel semi-tensor product for third-order tensors under a generalized t-product framework induced by arbitrary invertible linear transforms. Unlike prior work that relies exclusively on DFT, our construction accommodates any unitary transform, providing a more flexible algebraic foundation while retaining the closed-form property of T-SVD and essential algebraic properties. 
        \item We establish a multi-term semi-tensor product singular value decomposition (MSTP-SVD) model that integrates multiple orthogonal decomposition terms, significantly improving low-rank approximation accuracy compared with existing single-term schemes. We provide theoretical analysis of the approximation error bounds.
        \item We develop an accelerated multi-term randomized STP-SVD (MRSTP-SVD) algorithm by incorporating random projection and power iteration techniques, which achieves substantial speedup over deterministic counterparts with controllable approximation error. We derive expected error bounds and discuss the effects of key parameters.
        \item We conduct extensive experiments on image compression, video compression, and tensor completion tasks, demonstrating that the proposed method achieves superior or comparable performance against state-of-the-art baselines in both reconstruction quality and computational efficiency.
\end{itemize}
\par The remainder of this paper is organized as follows. Section \ref{sec:2} reviews preliminaries on tensor notations, t-product operations, and matrix semi-tensor product. In Section \ref{sec:3}, we present our novel tensor semi-tensor product definition along with its theoretical properties and connections to existing frameworks. Section \ref{sec:4} develops the MSTP-SVD decomposition model. The randomized MRSTP-SVD algorithm is deferred to Section \ref{sec:random}. 
Section \ref{sec:experiment} reports experimental results on various tasks, and Section \ref{sec:conclusion} concludes the paper with discussions on future work.

\section{Notations and Preliminaries}
\label{sec:2}
In this section, some notations and basic preliminaries adopted throughout this paper are summarized. Vectors are represented by bold lowercase letters, matrices by bold uppercase letters, and tensors by calligraphic letters. $\mathbb{R}$ and $\mathbb{C}$ denote the real and complex Euclidean spaces, respectively.
For a third-order tensor \(\mathcal{A} \in \mathbb{R}^{n_1 \times n_2 \times n_3}\), its \((i,j,k)\)-th entry is written as \(\mathcal{A}_{ijk}\).  
We adopt $\mathcal{A}^{(i)}$ to represent the $i$-th frontal slices. For two matrices \(\mathbf{A}, \mathbf{B} \in \mathbb{R}^{n_1 \times n_2}\), their inner product is defined as $\langle \mathbf{A}, \mathbf{B} \rangle = \mathrm{Tr}\left( \mathbf{A}^\top \mathbf{B} \right)$ , where $\mathbf{A}^\top$ denotes the transpose of $\mathbf{A}$ and $\mathrm{Tr}(\cdot)$
denotes the matrix trace operator. Additionally, $\mathbf{A}^\mathrm{H}$ denotes the conjugate transpose of matrix $\mathbf{A}$. For a arbitrary tensor $\mathcal{A}$, its Frobenius norm is defined as
$\|\mathcal{A}\|_F=\sqrt{\sum_{i,j.k} |\mathcal{A}_{ijk}|^2}$.

To facilitate the understanding of what follows, we first briefly review the definitions and key properties of vectors and the STP of matrices, 
which serve as the essential mathematical foundation for this study. Note that this paper considers only the left semi-tensor product. 
The right semi-tensor product and its formulation for general dimensions, as well as other related aspects, are discussed in detail in \cite{cheng2002matrix}. 
Hereafter, the term ``semi-tensor product'' refers to the left semi-tensor product.

\begin{definition}[Semi-tensor product of vectors \cite{cheng2007survey}]
        Let  $\mathbf{x} \in \mathbb{R}^{1 \times np}$  be a row vector and $\mathbf{y} \in \mathbb{R}^{p \times 1}$ be a column vector.  We split $\mathbf{x}$ into $p$ blocks as $\mathbf{x}_1, \mathbf{x}_2, \cdots, \mathbf{x}_p$, each block is a $1\times n$ row vector.
        Then we can define the STP of two vectors, denoted by $\ltimes$, as
\begin{equation*}
\mathbf{x} \ltimes \mathbf{y} = \sum_{i=1}^{p} \mathbf{x}_i y_i \in \mathbb{R}^{1\times n}, \quad \mathbf{y}^\top \ltimes \mathbf{x}^\top = \sum_{i=1}^{p} y_i (\mathbf{x}_i)^\top \in \mathbb{R}^{n\times 1}.
\end{equation*}
\end{definition}

\begin{definition}[Semi-tensor product of matrices \cite{cheng2007survey}]\label{def:matrix semi}
        Given two matrices $\mathbf{A}\in \mathbb{R}^{m \times n}$ and $\mathbf{B}\in \mathbb{R}^{p \times q}$. If $n=tp$ or $p=tn, t\in \mathbb{Z}^+$, then we can define the  semi-tensor product of $\mathbf{A}$ and $\mathbf{B}$, 
        denoted by $\mathbf{C}=\mathbf{A}\ltimes\mathbf{B}$. Here $\mathbf{C}$ is a block matrix which has $m \times q$ blocks, each block can be represented as 
        \[\mathbf{C}^{ij}= \mathbf{A}^i \ltimes \mathbf{B}_j,
        \]
        where $\mathbf{A}^i$ is the $i$-th row of $\mathbf{A}$ and $\mathbf{B}_j$ is the $j$-th column of $\mathbf{B}$.
\end{definition}

\begin{lemma}[The associative law of semi-tensor product of matrix \cite{cheng2012introduction}] \label{le:as law matrix}
        Let $\mathbf{A}, \mathbf{B}, \mathbf{C}$ be matrices of compatible dimensions,  The following properties of the semi-tensor product of matrices hold:

        \[(\mathbf{A}\ltimes \mathbf{B})\ltimes\mathbf{C} = \mathbf{A}\ltimes(\mathbf{B}\ltimes\mathbf{C}).
        \]
\end{lemma}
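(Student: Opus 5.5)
The plan is to reduce the semi-tensor product to an ordinary matrix product via the Kronecker-product representation, and then let associativity of ordinary multiplication finish the argument. The key identity is the following. If $\mathbf{A}\in\mathbb{R}^{m\times n}$ and $\mathbf{B}\in\mathbb{R}^{p\times q}$ with $n=tp$, then $\mathbf{A}\ltimes\mathbf{B}=\mathbf{A}(\mathbf{B}\otimes\mathbf{I}_t)$. If instead $p=tn$, then $\mathbf{A}\ltimes\mathbf{B}=(\mathbf{A}\otimes\mathbf{I}_t)\mathbf{B}$. More generally, with $s=\lcm(n,p)$,
\[
\mathbf{A}\ltimes\mathbf{B}=(\mathbf{A}\otimes\mathbf{I}_{s/n})(\mathbf{B}\otimes\mathbf{I}_{s/p}).
\]

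First, I verify this identity from Definition~\ref{def:matrix semi}. Take the case $n=tp$. The $(i,j)$ block is $\mathbf{A}^i\ltimes\mathbf{B}_j$. Splitting the row $\mathbf{A}^i$ into $p$ blocks of length $t$ gives $\sum_k \mathbf{A}^i_k b_{kj}$. This is exactly the $i$-th row of $\mathbf{A}$ multiplied by the $j$-th block column $\mathbf{B}_j\otimes\mathbf{I}_t$ of $\mathbf{B}\otimes\mathbf{I}_t$. The case $p=tn$ is symmetric, using the column-vector form of the vector STP.

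Second, I use two standard Kronecker facts: $(\mathbf{X}\otimes\mathbf{I}_a)\otimes\mathbf{I}_b=\mathbf{X}\otimes\mathbf{I}_{ab}$, and $(\mathbf{X}\mathbf{Y})\otimes\mathbf{I}=(\mathbf{X}\otimes\mathbf{I})(\mathbf{Y}\otimes\mathbf{I})$. Together they show that inflating a product by an identity is the same as inflating each factor.

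Third, let $\mathbf{A}\in\mathbb{R}^{m\times n}$, $\mathbf{B}\in\mathbb{R}^{p\times q}$ and $\mathbf{C}\in\mathbb{R}^{r\times s}$. Expand both sides into the canonical form
\[
(\mathbf{A}\otimes\mathbf{I}_\alpha)(\mathbf{B}\otimes\mathbf{I}_\beta)(\mathbf{C}\otimes\mathbf{I}_\gamma),
\]
where $\alpha,\beta,\gamma$ are the minimal integers making the inner dimensions match. Concretely, I expect $n\alpha=p\beta$ and $q\beta=r\gamma$, with $\beta$ tied to $\lcm$'s of $n,p,q,r$. For the left side, write $\mathbf{A}\ltimes\mathbf{B}=(\mathbf{A}\otimes\mathbf{I}_{a_1})(\mathbf{B}\otimes\mathbf{I}_{b_1})$. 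Then STP this with $\mathbf{C}$, inflating $\mathbf{A}\ltimes\mathbf{B}$ by $\mathbf{I}_{c}$ and $\mathbf{C}$ by its own identity. The second step distributes $\mathbf{I}_c$ over both factors, which gives $\alpha=a_1c$ and $\beta=b_1c$. The right side is handled the same way. Ordinary associativity then gives equality, provided both expansions produce the same $(\alpha,\beta,\gamma)$.

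The main obstacle is this last point: checking that the two groupings really yield the same inflation indices. I would handle it with $\lcm$ arithmetic. On the left, $\beta=\frac{\lcm(n,p)}{p}\cdot\frac{\lcm(q\,\lcm(n,p)/p,\;r)}{q\,\lcm(n,p)/p}$. On the right, the analogous expression has the roles reversed. I would show both equal the smallest $\beta$ such that $p\beta$ is a multiple of $n$ and $q\beta$ is a multiple of $r$. Each $\beta$ is such a multiple. Minimality also holds, since any common solution divides through by the successive $\lcm$ steps. With $\beta$ fixed, $\alpha=p\beta/n$ and $\gamma=q\beta/r$ are determined, so both sides coincide. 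Under the paper's divisibility hypotheses ($n=tp$ or $p=tn$ at each stage), this is a short case check. I would present the divisibility cases explicitly rather than the general $\lcm$ form.
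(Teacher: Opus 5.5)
Your proof is correct. The paper gives no proof of this lemma; it only cites Cheng. The closest thing to a paper argument is the Kronecker representation it records in Lemma~\ref{lem:equiv_stp}, and your route is the standard one built on exactly that representation. You inflate each factor by an identity, push the outer inflation through the inner product with the mixed-product rule, and then check that both groupings give the same triple $(\alpha,\beta,\gamma)$. Your minimality step does close the argument. Write $b_1=n/\gcd(n,p)$ and $b_2=r/\gcd(q,r)$. Then $\beta_L=\lcm(qb_1,r)/q$ and $\beta_R=\lcm(n,pb_2)/p$. Both equal the generator $\lcm\bigl(n/\gcd(n,p),\,r/\gcd(q,r)\bigr)$ of the set $\{\beta : n\mid p\beta,\ r\mid q\beta\}$. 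Once $\beta$ is fixed, $\alpha=p\beta/n$ and $\gamma=q\beta/r$ are forced.

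Keep the general $\lcm$ version in the write-up rather than switching to a divisibility case check. Under the divisibility-only Definition~\ref{def:matrix semi}, one grouping can be defined while the other is not. Take $\mathbf{A}\in\mathbb{R}^{m\times 4}$, $\mathbf{B}\in\mathbb{R}^{2\times 3}$, $\mathbf{C}\in\mathbb{R}^{2\times 1}$. Then $(\mathbf{A}\ltimes\mathbf{B})\ltimes\mathbf{C}$ is defined, since $4=2\cdot 2$ and then $6=3\cdot 2$. But $\mathbf{B}\ltimes\mathbf{C}$ is not, since neither of $3$ and $2$ divides the other. So ``compatible dimensions'' is unambiguous only when the $\lcm$ form is taken as the definition, and that is exactly the case your general argument covers. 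Also, you use $s$ both for $\lcm(n,p)$ and for the column count of $\mathbf{C}$; rename one of them.
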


Let \(\mathbf{I}_n\) denote an \(n \times n\) identity matrix. Then, the matrix STP defined in Definition \ref{def:matrix semi} admits the following representation in terms of the Kronecker product (see Supplementary Material for its definition).
\begin{lemma}\label{lem:equiv_stp}\cite{cheng2012introduction}
Let $\mathbf{A} \in \mathbb{R}^{m \times n }$, and $\mathbf{B} \in \mathbb{R}^{p \times q }$. Then, the following properties hold:
\begin{equation*}
\mathbf{A} \ltimes \mathbf{B} = (\mathbf{A}\otimes\mathbf{I}_{t/n})(\mathbf{B}\otimes\mathbf{I}_{t/p})\in\mathbb{R}^{m(t/n) \times q(t/p)} ,    
\end{equation*} 
where $t$ is the least common multiple of $n$ and $p$, i.e., $t=\lcm(n,p)$.
\end{lemma}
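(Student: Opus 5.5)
We prove Lemma \ref{lem:equiv_stp} by comparing the two sides block by block, under the standing assumption of Definition \ref{def:matrix semi} that $n=sp$ or $p=sn$ for some $s\in\mathbb{Z}^+$. The two cases are symmetric, so we treat $n=sp$ in detail. Then $t=\lcm(n,p)=n$, so $t/n=1$ and $t/p=s$, and the right-hand side reduces to $\mathbf{A}(\mathbf{B}\otimes\mathbf{I}_s)\in\mathbb{R}^{m\times qs}$. On the left, $\mathbf{C}=\mathbf{A}\ltimes\mathbf{B}$ has $m\times q$ blocks $\mathbf{C}^{ij}=\mathbf{A}^i\ltimes\mathbf{B}_j$. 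Each block is a $1\times s$ row vector by the vector STP, since $\mathbf{A}^i\in\mathbb{R}^{1\times sp}$ and $\mathbf{B}_j\in\mathbb{R}^{p\times 1}$. So both sides are $m\times qs$ matrices.

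The core step is the vector identity
\[
\mathbf{x}\ltimes\mathbf{y}=\mathbf{x}(\mathbf{y}\otimes\mathbf{I}_s)
\]
for $\mathbf{x}\in\mathbb{R}^{1\times sp}$ and $\mathbf{y}\in\mathbb{R}^{p\times 1}$. Partition $\mathbf{x}=[\mathbf{x}_1,\dots,\mathbf{x}_p]$ with each $\mathbf{x}_k\in\mathbb{R}^{1\times s}$. The matrix $\mathbf{y}\otimes\mathbf{I}_s$ is the vertical stack of the blocks $y_1\mathbf{I}_s,\dots,y_p\mathbf{I}_s$. Hence $\mathbf{x}(\mathbf{y}\otimes\mathbf{I}_s)=\sum_k \mathbf{x}_k y_k$, which is exactly the definition of the vector STP.

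Next we lift this to matrices. The $(i,j)$ block of $\mathbf{A}(\mathbf{B}\otimes\mathbf{I}_s)$, taking columns $(j-1)s+1,\dots,js$, equals $\mathbf{A}^i$ times the $j$-th block column of $\mathbf{B}\otimes\mathbf{I}_s$. That block column is $\mathbf{B}_j\otimes\mathbf{I}_s$. By the vector identity, this block equals $\mathbf{A}^i\ltimes\mathbf{B}_j=\mathbf{C}^{ij}$, which settles the case $n=sp$.

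The case $p=sn$ is handled the same way. Now $t=p$, and the right-hand side is $(\mathbf{A}\otimes\mathbf{I}_s)\mathbf{B}\in\mathbb{R}^{ms\times q}$. Each block $\mathbf{C}^{ij}$ is an $s\times 1$ column vector, obtained by the transposed form of the vector STP given in the definition. We verify that $(\mathbf{A}^i\otimes\mathbf{I}_s)\mathbf{B}_j$ equals this column by splitting $\mathbf{B}_j$ into $n$ blocks of length $s$.

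The main obstacle is purely bookkeeping: the orientation of the block layout in this second case. Here the blocks $\mathbf{C}^{ij}$ stack vertically into an $ms\times q$ matrix, and this must match the row-block structure of $\mathbf{A}\otimes\mathbf{I}_s$. We will check it by transposing the first case's argument, using $(\mathbf{A}\otimes\mathbf{I})^\top=\mathbf{A}^\top\otimes\mathbf{I}$.
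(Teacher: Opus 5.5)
Your argument is correct. The paper gives no proof of this lemma; it simply imports it from Cheng (2012), so there is no competing route to compare against.

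Your block-by-block check uses only Definition~\ref{def:matrix semi}, the vector STP, and the block structure of $\mathbf{y}\otimes\mathbf{I}_s$ and $\mathbf{A}^i\otimes\mathbf{I}_s$. It is the standard verification. It also shows why the left STP pairs with $\mathbf{A}\otimes\mathbf{I}$ rather than $\mathbf{I}\otimes\mathbf{A}$: the vector STP splits $\mathbf{x}$ into contiguous segments, which is exactly what right multiplication by $\mathbf{y}\otimes\mathbf{I}_s$ produces.

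Two minor points.

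First, restricting to $n=sp$ or $p=sn$ is the right reading. Only there does Definition~\ref{def:matrix semi} give the left-hand side an independent meaning. For general $n,p$ the Kronecker formula is Cheng's definition of the STP. That general form is what the paper actually relies on in Definition~\ref{de:tstp} and Lemma~\ref{lem:equiv_tstp}, where $t=\lcm(n_2,m_1)$ with no divisibility assumed.

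Second, in the case $p=sn$ the transposition detour is unnecessary. Write $\mathbf{B}_j$ as a stack of segments $\mathbf{b}_{j,1},\dots,\mathbf{b}_{j,n}\in\mathbb{R}^{s\times 1}$ and let $a_{ik}$ denote the entries of $\mathbf{A}^i$. Then $(\mathbf{A}^i\otimes\mathbf{I}_s)\mathbf{B}_j=[a_{i1}\mathbf{I}_s,\dots,a_{in}\mathbf{I}_s]\,\mathbf{B}_j=\sum_{k}a_{ik}\mathbf{b}_{j,k}$, which is literally the second formula of the vector STP. If you do transpose, you additionally need $(\mathbf{x}\ltimes\mathbf{y})^\top=\mathbf{y}^\top\ltimes\mathbf{x}^\top$. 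That identity is immediate from the two formulas in the vector definition.
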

\begin{remark}
        when $p = n$, $\mathbf{A} \ltimes \mathbf{B} =(\mathbf{A} \otimes \mathbf{I}_1)(\mathbf{B} \otimes \mathbf{I}_1)= \mathbf{A}\mathbf{B}$. That is, the matrix STP can degenerate into the standard matrix multiplication.
\end{remark}

Existing work in \cite{kernfeld2015tensor} proposes the generalized definition of the t-product under arbitrary invertible linear transforms. In our paper, we consider a linear transform $L:\mathbb{R}^{n_1\times n_2 \times n_3} \rightarrow \mathbb{R}^{n_1\times n_2 \times n_3}$, which is defined as
\begin{equation}\label{eq:transform}
        \bar{\mathcal{A}}= L(\mathcal{A})=\mathcal{A}\times_3\mathbf{L},
\end{equation}
where ``$\times_3 $'' represents mode-3 product (Definition 2.5 in \cite{kernfeld2015tensor}) and  $\mathbf{L}\in\mathbb{C}^{n_3\times n_3}$ can be any invertible transform matrix. Clearly, $L$ is invertible with the inverse transform defined as $\mathcal{A} = L^{-1}(\bar{\mathcal{A}}) = \bar{\mathcal{A}} \times_3 \mathbf{L}^{-1}$.
\par we construct a block-diagonal matrix as follows:
\[\bar{\mathbf{A}}=\mathrm{bdiag}(\bar{\mathcal{A}}):=
\begin{bmatrix}
\bar{\mathcal{A}}^{(1)} & & & \\
& \bar{\mathcal{A}}^{(2)} & & \\
& & \ddots & \\
& & & \bar{\mathcal{A}}^{(n_3)}
\end{bmatrix}.
\]
The block-diagonal matrix constructed via the block-diagonalization operation can be converted back into the original tensor by the \(\mathrm{fold}\) operator: $\mathrm{fold}(\mathrm{bdiag}(\bar{\mathcal{A}}))=\bar{\mathcal{A}}.$

Based on the definition of $\mathrm{fold}$ and $\mathrm{bdiag}$  operators and invertible linear transform $L$, the definition of t-product can be given as follows.
\begin{definition}[T-product \cite{kernfeld2015tensor}] \label{def:tproduct}
Let $L$ be any invertible linear transform in \eqref{eq:transform}, and $\mathcal{A}$ be an $n_1\times n_2\times n_3$ third-order tensor and $\mathcal{B} $ be an 
$n_2\times n_4\times n_3$ third-order tensor, then the product of $\mathcal{A}*_L\mathcal{B}$ is an $n_1\times
n_4\times n_3$ tensor $\mathcal{C}$ which can be represented as
\[\mathcal{C} = \mathcal{A} *_L \mathcal{B} =L^{-1}[\mathrm{fold}(\mathrm{bdiag}(\bar{\mathcal{A}})\times 
\mathrm{bdiag}(\bar{\mathcal{B}}))],
\]
where ``$\times$'' denotes the standard matrix product.
\end{definition}
\begin{definition}[Tensor transpose \cite{kernfeld2015tensor}] 
Let $L$ be any invertible linear transform in \eqref{eq:transform}. Let $\mathcal{A}$ be an $n_1\times n_2 \times n_3$ tensor, then the transpose of 
$\mathcal{A}$ under $L$, denoted as $\mathcal{A}^\top$, satisfies $L(\mathcal{A}^\top)^{(i)}=L(\mathcal{A}^{(i)})^\top$,
$i=1,\dots,n_3$.
\end{definition}

\begin{definition}[Identity tensor \cite{kernfeld2015tensor}] \label{def:id tensor}
Let $L$ be any invertible linear transform in \eqref{eq:transform}. Let $\mathcal{I}$ be an $n\times n\times n_3$ tensor so that each frontal slices of 
$L(\mathcal{I})=\bar{\mathcal{I}}$ is an $n\times n$ sized identity matrix. Then
$\mathcal{I}=L^{-1}(\bar{\mathcal{I}})$ gives the identity tensor under $L$.
\end{definition}

\begin{definition}[F-diagonal tensor \cite{kilmer2011factorization}] 
A third-order tensor tensor called f-diagonal tensor if  each of its frontal slices is a diagonal matrix.
\end{definition}

\begin{definition}[Orthogonal tensor \cite{kernfeld2015tensor}] 
Let $L$ be any invertible linear transform in \eqref{eq:transform}. $\mathcal{Q}$ is an $n\times n\times n_3$ orthogonal tensor if $\mathcal{Q}^\top *_L \mathcal{Q}=\mathcal{Q} *_L \mathcal{Q}^\top=\mathcal{I}.$
\end{definition}

\begin{theorem}[T-SVD \cite{kernfeld2015tensor}] \label{th:t-svd}
Let $L$ be any invertible linear transform in \eqref{eq:transform}, and $\mathcal{A}\in\mathbb{R}^{n_1\times n_2\times n_3}$, then it can be factorized as 
\[\mathcal{A}=\mathcal{U} *_L \mathcal{S} *_L \mathcal{V}^\top,
\]
where $\mathcal{U}\in \mathbb{R}^{n_1\times n_1\times n_3}$, $\mathcal{V} \in \mathbb{R}^{n_2\times n_2\times n_3}$ are
orthogonal, and $\mathcal{S} \in \mathbb{R}^{n_1\times n_2\times n_3}$ is an f-diagonal tensor.
\end{theorem}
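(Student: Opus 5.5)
The plan is to work entirely in the transform domain, where the product $*_L$ becomes a slice-wise matrix product by Definition \ref{def:tproduct}, and then pull the factors back with $L^{-1}$. Compute $\bar{\mathcal{A}} = L(\mathcal{A}) = \mathcal{A}\times_3\mathbf{L}$. For each frontal slice $i=1,\dots,n_3$, take a matrix SVD
\[
\bar{\mathcal{A}}^{(i)} = \mathbf{U}_i \mathbf{S}_i \mathbf{V}_i^{\mathrm{H}},
\]
with $\mathbf{U}_i\in\mathbb{C}^{n_1\times n_1}$ and $\mathbf{V}_i\in\mathbb{C}^{n_2\times n_2}$ unitary, and $\mathbf{S}_i$ real, nonnegative and diagonal. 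Stack these into tensors $\bar{\mathcal{U}}$, $\bar{\mathcal{S}}$, $\bar{\mathcal{V}}$ slice by slice, and set $\mathcal{U}=L^{-1}(\bar{\mathcal{U}})$, $\mathcal{S}=L^{-1}(\bar{\mathcal{S}})$, $\mathcal{V}=L^{-1}(\bar{\mathcal{V}})$.

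Next, check the required properties slice-wise.
\begin{itemize}
\item \emph{Factorization.} Definition \ref{def:tproduct} gives $L(\mathcal{U}*_L\mathcal{S}*_L\mathcal{V}^\top)^{(i)} = \mathbf{U}_i\mathbf{S}_i(\bar{\mathcal{V}}^{(i)})^\top$, using the transpose definition (with $^\top$ read as conjugate transpose for complex $\mathbf{L}$). This equals $\bar{\mathcal{A}}^{(i)}$, and applying $L^{-1}$ recovers $\mathcal{A}$.
\item \emph{Orthogonality.} $L(\mathcal{U}^\top*_L\mathcal{U})^{(i)} = \mathbf{U}_i^{\mathrm{H}}\mathbf{U}_i = \mathbf{I}$, which is exactly $\bar{\mathcal{I}}^{(i)}$ in Definition \ref{def:id tensor}. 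The same holds for $\mathcal{U}*_L\mathcal{U}^\top$ and for $\mathcal{V}$.
\item \emph{F-diagonality.} Since $\mathcal{S}=\bar{\mathcal{S}}\times_3\mathbf{L}^{-1}$, each entry $\mathcal{S}_{jk\ell}$ is a linear combination of the entries $\bar{\mathcal{S}}_{jk m}$, $m=1,\dots,n_3$. All of these vanish for $j\neq k$, so every frontal slice of $\mathcal{S}$ is diagonal.
\end{itemize}

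The main obstacle is the claim that the factors are \emph{real}, as the statement asserts. For a general complex invertible $\mathbf{L}$, the slice-wise SVDs must be chosen consistently for $L^{-1}$ to return real tensors. For $\mathbf{L}$ real there is no difficulty: $\bar{\mathcal{A}}$ is real, so real SVDs can be taken and all factors are real. For DFT-like $\mathbf{L}$, use the conjugate symmetry of $\bar{\mathcal{A}}$: whenever $\bar{\mathcal{A}}^{(i')} = \overline{\bar{\mathcal{A}}^{(i)}}$, choose $\mathbf{U}_{i'}=\overline{\mathbf{U}_i}$, $\mathbf{V}_{i'}=\overline{\mathbf{V}_i}$, $\mathbf{S}_{i'}=\mathbf{S}_i$. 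Then $\bar{\mathcal{U}}$, $\bar{\mathcal{S}}$, $\bar{\mathcal{V}}$ inherit the same symmetry, and their inverse transforms are real. For a fully general $\mathbf{L}$, I would state the result with $\mathcal{U},\mathcal{S},\mathcal{V}$ over $\mathbb{C}$, as in \cite{kernfeld2015tensor}, and note that realness holds under these structural assumptions.
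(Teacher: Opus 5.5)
Your proof is correct and is the standard transform-domain argument of \cite{kernfeld2015tensor}, which the paper cites without proof and itself follows in the supplementary proof of Theorem~\ref{th:tmstpsvd}: take slice-wise SVDs of $\bar{\mathcal{A}}$, pull them back by $L^{-1}$, and check factorization, orthogonality and f-diagonality slice by slice. Your realness caveat corrects the statement itself, not just your method: take $n_3=2$ and $\mathbf{L}=\bigl[\begin{smallmatrix}1&\mathrm{i}\\1&2\mathrm{i}\end{smallmatrix}\bigr]$; comparing real parts in $(\bar{\mathcal{U}}^{(\ell)})^{\mathrm{H}}\bar{\mathcal{U}}^{(\ell)}=\mathbf{I}$, $\ell=1,2$ (equally with plain transposes), forces $\mathcal{U}^{(2)}=\mathbf{0}$ for every real orthogonal $\mathcal{U}$, so real factors would give $\mathcal{A}^{(1)}=\mathbf{Q}\mathbf{D}_1\mathbf{W}^\top$ and $\mathcal{A}^{(2)}=\mathbf{Q}\mathbf{D}_2\mathbf{W}^\top$ with common real orthogonal $\mathbf{Q},\mathbf{W}$ and real diagonal $\mathbf{D}_1,\mathbf{D}_2$, which is impossible for $\mathcal{A}^{(1)}=\mathbf{I}_2$, $\mathcal{A}^{(2)}=\bigl[\begin{smallmatrix}0&1\\0&0\end{smallmatrix}\bigr]$ because it would make $\mathcal{A}^{(2)}$ symmetric.
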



\section{Semi-tensor product of tensors via arbitrary invertible linear transforms}
\label{sec:3}
\par In this section, we define a novel tensor STP induced by arbitrary invertible linear transforms and investigate several theoretical properties of this newly proposed multiplication. The definitions and properties regarding Kronecker product are provided in the Supplementary Material. Building on these auxiliary prerequisites, we now define the tensor STP via arbitrary invertible linear transforms.
\begin{definition}[Semi-tensor product of tensors]\label{de:tstp}
Let $L$ be any invertible linear transform in \eqref{eq:transform}. Suppose that $\mathcal{A} \in\mathbb{R}^{n_1\times n_2\times n_3}$ and $\mathcal{B} \in\mathbb{R}^{m_1\times m_2\times n_3}$. We have 
\begin{equation*}
        \begin{split}
                 \mathcal{A} \ltimes_L \mathcal{B} 
                                &= \bigl(\mathcal{A} \otimes \mathcal{I}_{t/n_2}\bigr)*_L\bigl(\mathcal{B} \otimes \mathcal{I}_{t/m_1}\bigr)\\  
                                 &=  L^{-1}\biggl[\mathrm{fold}\biggl(\mathrm{bdiag}\bigl(\overline{\mathcal{A} \otimes \mathcal{I}_{t/n_2}}\bigr)\times 
\mathrm{bdiag}\bigl(\overline{\mathcal{B} \otimes \mathcal{I}_{t/m_1}}\bigr)\biggr)\biggr]\in \mathbb{R}^{n_1(t/n_2) \times m_2(t/m_1) \times n_3},
        \end{split}
\end{equation*}
where $t=\lcm(n_2,m_1)$, $\mathcal{I}_{t/n_2} \in \mathbb{R}^{t/n_2 \times t/n_2 \times 1}$ and $\mathcal{I}_{t/m_1} \in \mathbb{R}^{t/m_1 \times t/m_1 \times 1}$ are identity tensors.
\end{definition}
\begin{remark}
 We can see that if $n_2=m_1$,  $\mathcal{A} \ltimes_L \mathcal{B} =(\mathcal{A} \otimes \mathcal{I}_1)*_L(\mathcal{B} \otimes \mathcal{I}_1)= \mathcal{A}*_L\mathcal{B}$. That is, the tensor STP can reduce to the t-product.        
\end{remark}
The following equivalent representation follows directly from Definition \ref{de:tstp}.

\begin{lemma}\label{lem:equiv_tstp}
Let $L$ be any invertible linear transform in \eqref{eq:transform}. Suppose that $\mathcal{A} \in\mathbb{R}^{n_1\times n_2\times n_3}$ and $\mathcal{B} \in\mathbb{R}^{m_1\times m_2\times n_3}$ are third-order tensors. The following property holds:
\begin{equation*}     
\mathcal{A} \ltimes_L \mathcal{B}=  L^{-1}\biggl[\mathrm{fold}\biggl(\mathrm{bdiag}\bigl(\bar{\mathcal{A}}\bigr) \ltimes \mathrm{bdiag}(\bar{\mathcal{B}})\biggr)\biggr].   
\end{equation*}

\begin{proof} Based on  Lemma \ref{lem:equiv_stp} and Definition \ref{de:tstp}, and the definitions of the $\mathrm{fold}$, $\mathrm{bdiag}$ operators as well as the  Kronecker product, we have
\begin{equation*}
\begin{split}
 \mathcal{A} \ltimes_L \mathcal{B} &= L^{-1}\Biggl[\mathrm{fold}\biggl(\mathrm{bdiag}\Bigl(\overline{\mathcal{A}\otimes\mathcal{I}_{t/n_2}}\Bigr)\times 
\mathrm{bdiag}\Bigl(\overline{\mathcal{B} \otimes \mathcal{I}_{t/m_1}}\Bigr)\biggr)\Biggr]   \\
                                &=L^{-1}\Biggl[\mathrm{fold}\biggl(\mathrm{bdiag}\Bigl(\bar{\mathcal{A} }\otimes \mathcal{I}_{t/n_2}\Bigr)\times 
\mathrm{bdiag}\Bigl(\bar{\mathcal{B} }\otimes \mathcal{I}_{t/m_1}\Bigr)\biggr)\Biggr]   \\
                                 &=L^{-1}\Biggl[\mathrm{fold}\biggl(\Bigl(\mathrm{bdiag}(\bar{\mathcal{A}}) \otimes \mathbf{I}_{t/n_2}\Bigr)\times 
\Bigl(\mathrm{bdiag}(\bar{\mathcal{B}}) \otimes \mathbf{I}_{t/m_1}\Bigr)\biggr)\Biggr]   \\
                                 &=  L^{-1}\Biggl[\mathrm{fold}\biggl(\mathrm{bdiag}\bigl(\bar{\mathcal{A}}\bigr) \ltimes \mathrm{bdiag}\bigl(\bar{\mathcal{B}}\bigr)\biggr)\Biggr].
        \end{split}
\end{equation*}
\end{proof}
\end{lemma}
\begin{remark}\label{rem:transform_domain_interpretation}
Lemma \ref{lem:equiv_tstp} establishes that the tensor STP is equivalent to the STP of block-diagonal matrices in the transform domain, i.e., $L(\mathcal{A} \ltimes_L \mathcal{B})=\operatorname{fold}\bigl(\operatorname{bdiag}(\bar{\mathcal{A}})\ltimes \operatorname{bdiag}(\bar{\mathcal{B}})\bigr)$.
Accordingly, $\mathcal{A} \ltimes_L \mathcal{B}$ can be computed via slice-wise matrix STP operations on $\bar{\mathcal{A}}$ and $\bar{\mathcal{B}}$,  followed by the inverse linear transform $L^{-1}$.
\end{remark}
\par Next, we establish a fundamental property pertaining to the STP of tensors.
\begin{theorem} [The associative law of semi-tensor product of tensors]
        Let $\mathcal{A},\mathcal{B},\mathcal{C}$ be  third-order tensors with compatible dimensions such that their semi-tensor products are well-defined. Then the following equation holds.
        \[(\mathcal{A}\ltimes_L \mathcal{B})\ltimes_L \mathcal{C}=\mathcal{A} \ltimes_L (\mathcal{B}\ltimes_L \mathcal{C}).
        \]
        \begin{proof}
By Lemma \ref{lem:equiv_tstp},
\[
\mathcal{A} \ltimes_L \mathcal{B}=L^{-1}\Biggl[\mathrm{fold}\biggl(\mathrm{bdiag}\bigl(\bar{\mathcal{A}}\bigr) \ltimes \mathrm{bdiag}\bigl(\bar{\mathcal{B}}\bigr)\biggr)\Biggr], \mathcal{B} \ltimes_L \mathcal{C}=L^{-1}\Biggl[\mathrm{fold}\biggl(\mathrm{bdiag}\bigl(\bar{\mathcal{B}}\bigr) \ltimes \mathrm{bdiag}\bigl(\bar{\mathcal{C}}\bigr)\biggr)\Biggr].
\]
Then,
\begin{equation*}
\begin{split}
\bigl(\mathcal{A}\ltimes_L \mathcal{B}\bigr)\ltimes_L \mathcal{C}
&=L^{-1}\Biggl[\mathrm{fold}\biggl(\mathrm{bdiag}\Bigl(L\Bigl(L^{-1}\Bigl(\mathrm{fold}\Bigl(\mathrm{bdiag}\bigl(\bar{\mathcal{A}}\bigr) \ltimes \mathrm{bdiag}\bigl(\bar{\mathcal{B}}\bigr)\Bigr)\Bigr)\Bigr)\Bigr) \ltimes \mathrm{bdiag}\bigl(\bar{\mathcal{C}}\bigr)\biggr)\Biggr] \\
&=L^{-1}\Biggl[\mathrm{fold}\biggl(\Bigl(\mathrm{bdiag}\bigl(\bar{\mathcal{A}}\bigr)\ltimes \mathrm{bdiag}\bigl(\bar{\mathcal{B}}\bigr)\Bigr)\ltimes \mathrm{bdiag}\bigl(\bar{\mathcal{C}}\bigr)\biggr)\Biggr],
\end{split}
\end{equation*}
Similarly,
\begin{equation*}
\mathcal{A} \ltimes_L \bigl(\mathcal{B}\ltimes_L \mathcal{C}\bigr)
=L^{-1}\Biggl[\mathrm{fold}\biggl(\mathrm{bdiag}\bigl(\bar{\mathcal{A}}\bigr)\ltimes \Bigl(\mathrm{bdiag}\bigl(\bar{\mathcal{B}}\bigr)\ltimes \mathrm{bdiag}\bigl(\bar{\mathcal{C}}\bigr)\Bigr)\biggr)\Biggr].
\end{equation*}
The conclusion holds trivially by Lemma \ref{le:as law matrix}.
        \end{proof}
\end{theorem}
\section{Tensor singular value decomposition via semi-tensor product}
\label{sec:4}
\subsection{Single-term STP-SVD of matrices}\label{subsec:matrix stpsvd}
\par Let 
\[
\mathbf{A} = 
\begin{bmatrix}
\mathbf{A}_{1,1} & \cdots & \mathbf{A}_{1,n_1} \\
\mathbf{A}_{2,1} & \cdots & \mathbf{A}_{2,n_1} \\
\vdots & \ddots & \vdots \\
\mathbf{A}_{m_1,1} & \cdots & \mathbf{A}_{m_1,n_1}
\end{bmatrix}
\in \mathbb{R}^{m_1 m_2 \times n_1 n_2},
\]
where each block $\mathbf{A}_{i,j}$ with $i=1,2,\cdots,m_1$ and $j=1,2,\cdots,n_1$ is an $m_2\times n_2$ matrix. Next, we define the rearrangement operator $\mathscr{R}$, applied to the matrix \(\mathbf{A}\), as follows:  
\begin{equation}\label{eq:R(A)}
        \mathscr{R}(\mathbf{A})=\bigl[\mathbf{A}_1\ \mathbf{A}_2\ \cdots\ \mathbf{A}_{n_1} \bigr]^\top \in \mathbb{R}^{m_1 n_1 \times m_2 n_2},
\end{equation}
where $\mathbf{A}_j = \bigl[\mathrm{vec}(\mathbf{A}_{1,j}),\mathrm{vec}(\mathbf{A}_{2,j}),\cdots,\mathrm{vec}(\mathbf{A}_{m_1,j})\bigr]^\top$.
 
From the definition of $\mathscr{R}$ in \eqref{eq:R(A)}, we obtain the following result on optimal Kronecker product approximation.
\begin{lemma}\label{lem:kpd}\cite{van1993approximation}
        Given $\mathbf{A} \in \mathbb{R}^{m_1m_2 \times n_1n_2}$, there exist matrices $\mathbf{B} \in \mathbb{R}^{m_1 \times n_1}$ and $\mathbf{C} \in \mathbb{R}^{m_2 \times n_2}$
        such that $\mathrm{vec}(\mathbf{B}) = \sqrt{\sigma_1}\mathbf{U}(:,1)$, $\mathrm{vec}(\mathbf{C}) = \sqrt{\sigma_1}\mathbf{V}(:,1)$
        which minimize 
        \begin{equation*}
        \left\| \mathbf{A}-\mathbf{B}\otimes \mathbf{C}\right\|_F,      
        \end{equation*}
where $\sigma_1$ denotes the largest singular value of $\mathscr{R}(\mathbf{A})$ given in \eqref{eq:R(A)}, and $\mathbf{U}(:,1)$ and $\mathbf{V}(:,1)$ are the corresponding left and right singular vectors, respectively.
\end{lemma}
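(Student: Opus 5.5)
The plan is the classical Van Loan--Pitsianis rearrangement argument: show that $\mathscr{R}$ turns a Kronecker product into a rank-one matrix, and then invoke the Eckart--Young--Mirsky theorem for the best rank-one approximation of $\mathscr{R}(\mathbf{A})$.

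\textbf{Step 1 (rank-one identity).} I would first show that for any $\mathbf{B}\in\mathbb{R}^{m_1\times n_1}$ and $\mathbf{C}\in\mathbb{R}^{m_2\times n_2}$,
\[
\mathscr{R}(\mathbf{B}\otimes\mathbf{C})=\mathrm{vec}(\mathbf{B})\,\mathrm{vec}(\mathbf{C})^\top .
\]
The $(i,j)$ block of $\mathbf{B}\otimes\mathbf{C}$ is $b_{ij}\mathbf{C}$. By \eqref{eq:R(A)}, the row of $\mathscr{R}(\cdot)$ indexed by $(i,j)$ is $\mathrm{vec}(b_{ij}\mathbf{C})^\top=b_{ij}\,\mathrm{vec}(\mathbf{C})^\top$. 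The rows are stacked column-block by column-block ($j$ outer, $i$ inner), which is exactly the column-major order of $\mathrm{vec}(\mathbf{B})$. So the index bookkeeping closes and the identity holds.

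\textbf{Step 2 (norm preservation).} Next I would note that $\mathscr{R}$ is linear and only permutes entries. Hence it is an isometry for the Frobenius norm, and
\[
\|\mathbf{A}-\mathbf{B}\otimes\mathbf{C}\|_F=\bigl\|\mathscr{R}(\mathbf{A})-\mathrm{vec}(\mathbf{B})\mathrm{vec}(\mathbf{C})^\top\bigr\|_F .
\]

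\textbf{Step 3 (reduction to rank-one approximation).} The pair $(\mathrm{vec}(\mathbf{B}),\mathrm{vec}(\mathbf{C}))$ ranges over all vector pairs in $\mathbb{R}^{m_1n_1}\times\mathbb{R}^{m_2n_2}$. Therefore the set $\{\mathrm{vec}(\mathbf{B})\mathrm{vec}(\mathbf{C})^\top\}$ is exactly the set of matrices of rank at most one. The original problem is thus equivalent to best rank-one approximation of $\mathscr{R}(\mathbf{A})$ in the Frobenius norm.

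\textbf{Step 4 (Eckart--Young--Mirsky).} Write $\mathscr{R}(\mathbf{A})=\mathbf{U}\boldsymbol{\Sigma}\mathbf{V}^\top$ for the SVD. By Eckart--Young--Mirsky, a minimizer is $\sigma_1\mathbf{U}(:,1)\mathbf{V}(:,1)^\top$. Splitting the scalar symmetrically gives
\[
\mathrm{vec}(\mathbf{B})=\sqrt{\sigma_1}\,\mathbf{U}(:,1),\qquad \mathrm{vec}(\mathbf{C})=\sqrt{\sigma_1}\,\mathbf{V}(:,1).
\]
Un-vectorizing these yields the claimed $\mathbf{B}$ and $\mathbf{C}$. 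The minimum value is $\bigl(\sum_{k\ge2}\sigma_k^2\bigr)^{1/2}$.

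The only real obstacle is Step 1, namely making sure the ordering in \eqref{eq:R(A)} matches column-major $\mathrm{vec}$. If a transpose convention mismatch appears there, it only relabels rows by a permutation, which affects neither the norm nor the rank.
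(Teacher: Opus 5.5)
Your argument is correct and is essentially the standard Van Loan--Pitsianis proof (rearrangement to a rank-one problem, then Eckart--Young) from the cited source; the paper gives no proof of its own and simply cites that reference. Your Step 1 reading of $\mathscr{R}$ (rows $\mathrm{vec}(\mathbf{A}_{i,j})^\top$ stacked vertically, $j$ outer and $i$ inner) is the intended one. It is also the only reading consistent with the stated size $m_1n_1\times m_2n_2$, whereas the paper's displayed formula $[\mathbf{A}_1\ \cdots\ \mathbf{A}_{n_1}]^\top$, read literally, would give an $n_1m_2n_2\times m_1$ matrix.
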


\par By virtue of Lemma \ref{lem:kpd}, an SVD-like approximate matrix decomposition using the STP, which is referred to in the literature as STP-SVD.

\begin{theorem}\label{th:stpsvd} \cite{chen2023tensor}
Given $\mathbf{A} \in \mathbb{R}^{m_1 m_2 \times n_1 n_2}$. Then there exist  orthogonal matrices $\mathbf{U} \in \mathbb{R}^{m_1 \times m_1}$ and $\mathbf{V} \in \mathbb{R}^{n_1 \times n_1}$ such that
\begin{equation}
\mathbf{A}=\mathbf{U} \ltimes \mathbf{\Sigma} \ltimes \mathbf{V}^\top + \mathbf{E},
\end{equation}
where $\mathbf{\Sigma} = \mathrm{blockdiag}(\mathbf{S}_1, \mathbf{S}_2, \cdots \mathbf{S}_p) \in \mathbb{R}^{m_1 m_2 \times n_1 n_2}$ is a block-diagonal matrix with blocks $\mathbf{S}_i\in \mathbb{R}^{m_2 \times n_2}$ for $i=1, 2, \cdots, p$ and $p=\min\{m_1, n_1\}$,
satisfying $\|\mathbf{S}_1\|_F \ge \|\mathbf{S}_2\|_F \ge \cdots \ge \|\mathbf{S}_p\|_F $. The approximation error satisfies
\begin{equation*}
\|\mathbf{E}\|_F = \sqrt{\sum_{i=2}^{v} \sigma_i^2},
\end{equation*}
where $\sigma_2 \ge \dots \ge \sigma_v \ge 0$ are the singular values of 
$\mathscr{R}(\mathbf{A}) \in \mathbb{R}^{m_1 n_1 \times m_2 n_2}$ defined in \eqref{eq:R(A)}, 
and $v = \min\{m_1 n_1, m_2 n_2\}$.
\end{theorem}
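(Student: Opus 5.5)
Our plan is to reduce the statement to the optimal Kronecker product approximation of Lemma \ref{lem:kpd}, combined with the Kronecker representation of the matrix STP in Lemma \ref{lem:equiv_stp}.

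\textbf{Step 1: the Kronecker approximation.} Apply Lemma \ref{lem:kpd} to $\mathbf{A}$, viewed as an $m_1\times n_1$ array of $m_2\times n_2$ blocks. Take the SVD $\mathscr{R}(\mathbf{A})=\sum_{i=1}^{v}\sigma_i \mathbf{u}_i\mathbf{v}_i^\top$. Then $\mathbf{B}\otimes\mathbf{C}$ is an optimal rank-one Kronecker approximation, where $\mathrm{vec}(\mathbf{B})=\sqrt{\sigma_1}\mathbf{u}_1$ and $\mathrm{vec}(\mathbf{C})=\sqrt{\sigma_1}\mathbf{v}_1$. The operator $\mathscr{R}$ is a permutation of entries, so it is an isometry for $\|\cdot\|_F$, and it maps $\mathbf{B}\otimes\mathbf{C}$ to the rank-one matrix $\mathrm{vec}(\mathbf{B})\mathrm{vec}(\mathbf{C})^\top$. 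Setting $\mathbf{E}:=\mathbf{A}-\mathbf{B}\otimes\mathbf{C}$, we get
\[
\|\mathbf{E}\|_F=\|\mathscr{R}(\mathbf{A})-\sigma_1\mathbf{u}_1\mathbf{v}_1^\top\|_F=\sqrt{\textstyle\sum_{i=2}^{v}\sigma_i^2}
\]
by the matrix Eckart--Young theorem. Which side of the Kronecker product carries the $m_1\times n_1$ factor depends on the ordering convention in \eqref{eq:R(A)}. We will fix it so that $\mathbf{B}\in\mathbb{R}^{m_1\times n_1}$.

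\textbf{Step 2: rewrite $\mathbf{B}\otimes\mathbf{C}$ in STP form.} Take the ordinary SVD $\mathbf{B}=\mathbf{U}\mathbf{D}\mathbf{V}^\top$, with $\mathbf{U}\in\mathbb{R}^{m_1\times m_1}$ and $\mathbf{V}\in\mathbb{R}^{n_1\times n_1}$ orthogonal and $\mathbf{D}=\mathrm{diag}(d_1,\dots,d_p)$ padded to size $m_1\times n_1$, where $d_1\ge\cdots\ge d_p\ge 0$. The mixed-product property gives
\[
\mathbf{B}\otimes\mathbf{C}=(\mathbf{U}\otimes\mathbf{I}_{m_2})(\mathbf{D}\otimes\mathbf{C})(\mathbf{V}\otimes\mathbf{I}_{n_2})^\top .
\]
Define $\mathbf{\Sigma}:=\mathbf{D}\otimes\mathbf{C}=\mathrm{blockdiag}(d_1\mathbf{C},\dots,d_p\mathbf{C})$, so that $\mathbf{S}_i=d_i\mathbf{C}$. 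The ordering $\|\mathbf{S}_1\|_F\ge\cdots\ge\|\mathbf{S}_p\|_F$ then follows from the ordering of the $d_i$.

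\textbf{Step 3: identify the products with the STP.} By Lemma \ref{lem:equiv_stp}, $\mathbf{U}$ has $m_1$ columns and $\mathbf{\Sigma}$ has $m_1m_2$ rows, so $t=m_1m_2$ and
\[
\mathbf{U}\ltimes\mathbf{\Sigma}=(\mathbf{U}\otimes\mathbf{I}_{m_2})\mathbf{\Sigma}.
\]
Next, $\mathbf{U}\ltimes\mathbf{\Sigma}$ has $n_1n_2$ columns and $\mathbf{V}^\top$ has $n_1$ rows, so $t=n_1n_2$ and the product equals $(\mathbf{U}\ltimes\mathbf{\Sigma})(\mathbf{V}^\top\otimes\mathbf{I}_{n_2})$. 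Lemma \ref{le:as law matrix} makes the bracketing irrelevant. Together these give $\mathbf{A}=\mathbf{U}\ltimes\mathbf{\Sigma}\ltimes\mathbf{V}^\top+\mathbf{E}$ with the stated error.

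\textbf{Main obstacle.} The conceptual content is light, so the difficulty is bookkeeping. Two points need care. First, the convention in $\mathscr{R}$ must be checked so that it sends $\mathbf{B}\otimes\mathbf{C}$ exactly to $\mathrm{vec}(\mathbf{B})\mathrm{vec}(\mathbf{C})^\top$; this is what justifies the isometry step. Second, the dimensions in Lemma \ref{lem:equiv_stp} must be checked in the edge cases where the lcm is taken, confirming that the resulting identity factors are $\mathbf{I}_{m_2}$ and $\mathbf{I}_{n_2}$ and that the orthogonal factor sits on the correct side of the Kronecker product.
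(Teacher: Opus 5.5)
Your proposal is correct and takes essentially the same route as the paper. The paper gives no separate proof of this theorem, but its Supplementary Material proof of the multi-term Theorem \ref{th:mstpsvd} specializes to it at $k=1$: it takes the Kronecker factors from Lemma \ref{lem:kpd}, computes an SVD of $\mathbf{B}$, and uses the mixed-product property to write $\mathbf{B}\otimes\mathbf{C}=(\mathbf{U}\otimes\mathbf{I}_{m_2})(\mathbf{\Sigma}_{\mathbf{B}}\otimes\mathbf{C})(\mathbf{V}^\top\otimes\mathbf{I}_{n_2})=\mathbf{U}\ltimes\mathbf{\Sigma}\ltimes\mathbf{V}^\top$, with diagonal blocks equal to the singular values of $\mathbf{B}$ times $\mathbf{C}$. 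Your argument that $\mathscr{R}$ is an entry-permuting isometry sending $\mathbf{B}\otimes\mathbf{C}$ to $\mathrm{vec}(\mathbf{B})\mathrm{vec}(\mathbf{C})^\top$, which yields the error identity, and your lcm bookkeeping in Lemma \ref{lem:equiv_stp} supply details the paper only asserts.
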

Algorithm~\ref{alg:stpsvd} presents the complete procedure for Theorem~\ref{th:stpsvd}.
\begin{remark}
By applying truncated SVD to the matrix $\mathbf{B}$, a truncated STP-SVD algorithm applicable to general matrices can be derived. It follows the identical workflow except replacing full $\mathrm{svd}$ by rank-truncated $\mathrm{svds}$ with given rank parameter $r$. For brevity, its pseudocode is deferred to Supplementary Material.
Correspondingly, we can obtain the upper bound of the error matrix $\mathbf{E}$ in this case \cite{chen2023tensor},
\begin{equation*}
   \|\mathbf{E}\|_F \leq \sqrt{\sum_{i=2}^{v} \sigma_i^2} + \sqrt{\sum_{j=r+1}^{p} \|\mathbf{S}_j\|_F^2}.
\end{equation*}       
\end{remark}

\begin{algorithm}[ht]
\caption{STP-SVD of matrices \cite{chen2023tensor}}
\label{alg:stpsvd}
\KwIn{$\mathbf{A} \in \mathbb{R}^{m_1 m_2 \times n_1 n_2}$.}
\KwOut{$\mathbf{U}, \mathbf{\Sigma}, \mathbf{V}$.}
Calculate matrices $\mathbf{B} \in \mathbb{R}^{m_1 \times n_1}$ and $\mathbf{C} \in \mathbb{R}^{m_2 \times n_2}$ via Lemma \ref{lem:kpd}, such that $\mathbf{A} \approx \mathbf{B} \otimes \mathbf{C}$\;
Perform full SVD on $\mathbf{B}$: $[\mathbf{U}_\mathbf{B}, \mathbf{\Sigma}_{\mathbf{B}}, \mathbf{V}_\mathbf{B}] = \mathrm{svd}(\mathbf{B})$\;
\Return{$\mathbf{U} = \mathbf{U}_\mathbf{B}, \mathbf{\Sigma} = \mathbf{\Sigma_B} \otimes \mathbf{C}, \mathbf{V}=\mathbf{V}_\mathbf{B}$.}
\end{algorithm}
\subsection{Multi-term STP-SVD of matrices}\label{subsec: matrix MSTP-SVD}
While the single-term STP-SVD (Subsection \ref{subsec:matrix stpsvd}) establishes an STP-based matrix factorization framework, its approximation accuracy is inherently limited by a single Kronecker component. To address this bottleneck and achieve more flexible, precise approximation, we propose a multi-term STP decomposition scheme that integrates multiple orthogonal components. This approach substantially improves approximation fidelity while preserving the structural properties and computational efficiency of the original single-term formulation. We first present a supporting lemma before detailing the multi-term decomposition.
\begin{lemma}\label{lem:mkpd} \cite{van1993approximation}
Given $\mathbf{A} \in \mathbb{R}^{m_1 m_2 \times n_1 n_2}$ and let $k$ be a given positive integer. Then there exist matrices $\mathbf{B}_i \in \mathbb{R}^{m_1 \times n_1}$ and $\mathbf{C}_i \in \mathbb{R}^{m_2 \times n_2}$ ($i=1,2,\cdots,k$)
such that $\mathrm{vec}(\mathbf{B}_i) = \sqrt{\sigma_i}\mathbf{U}(:,i)$, $\mathrm{vec}(\mathbf{C}_i) = \sqrt{\sigma_i}\mathbf{V}(:,i)$
which minimize 
\begin{equation}\label{eq:multi_kpd}
\left\| \mathbf{A} - \sum_{i=1}^{k} \mathbf{B}_i \otimes \mathbf{C}_i \right\|_F,       
\end{equation}
where $\sigma_i$ denotes the $i$-th singular value of $\mathscr{R}(\mathbf{A})$ given in  \eqref{eq:R(A)}, and $\mathbf{U}(:,i)$ and $\mathbf{V}(:,i)$
are the corresponding left and right singular vectors, respectively.
\end{lemma}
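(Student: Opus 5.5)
The plan is to convert the Kronecker approximation problem into a low-rank matrix approximation problem through the rearrangement $\mathscr{R}$, and then invoke the Eckart--Young--Mirsky theorem.

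\emph{Step 1: $\mathscr{R}$ is a linear isometry that sends Kronecker products to rank-one matrices.} By construction, $\mathscr{R}$ only permutes the entries of $\mathbf{A}$. Concretely, the row of $\mathscr{R}(\mathbf{A})$ indexed by the block position $(i,j)$ is $\mathrm{vec}(\mathbf{A}_{i,j})^\top$. Hence $\mathscr{R}$ is linear and satisfies $\|\mathscr{R}(\mathbf{X})\|_F=\|\mathbf{X}\|_F$ for every $\mathbf{X}\in\mathbb{R}^{m_1m_2\times n_1n_2}$.

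Next consider $\mathbf{X}=\mathbf{B}\otimes\mathbf{C}$. Its $(i,j)$ block is $b_{ij}\mathbf{C}$, so the corresponding row of $\mathscr{R}(\mathbf{X})$ is $b_{ij}\,\mathrm{vec}(\mathbf{C})^\top$. The rows are ordered with $i$ running fastest inside each $j$, which is exactly column-major order. Therefore
\[
\mathscr{R}(\mathbf{B}\otimes\mathbf{C})=\mathrm{vec}(\mathbf{B})\,\mathrm{vec}(\mathbf{C})^\top .
\]
The only delicate point in the whole proof is checking this index convention against the definition in \eqref{eq:R(A)}.

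\emph{Step 2: reformulate the objective.} By linearity and the isometry from Step 1,
\[
\Bigl\|\mathbf{A}-\sum_{i=1}^k\mathbf{B}_i\otimes\mathbf{C}_i\Bigr\|_F=\Bigl\|\mathscr{R}(\mathbf{A})-\sum_{i=1}^k\mathrm{vec}(\mathbf{B}_i)\mathrm{vec}(\mathbf{C}_i)^\top\Bigr\|_F .
\]
As the $\mathbf{B}_i\in\mathbb{R}^{m_1\times n_1}$ and $\mathbf{C}_i\in\mathbb{R}^{m_2\times n_2}$ range freely, the vectors $\mathrm{vec}(\mathbf{B}_i)\in\mathbb{R}^{m_1n_1}$ and $\mathrm{vec}(\mathbf{C}_i)\in\mathbb{R}^{m_2n_2}$ also range freely. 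The sum $\sum_{i=1}^k\mathrm{vec}(\mathbf{B}_i)\mathrm{vec}(\mathbf{C}_i)^\top$ therefore ranges over exactly the set of matrices of rank at most $k$ in $\mathbb{R}^{m_1n_1\times m_2n_2}$. The original problem is thus equivalent to the best rank-$k$ approximation of $\mathscr{R}(\mathbf{A})$ in Frobenius norm.

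\emph{Step 3: apply Eckart--Young--Mirsky.} The best rank-$k$ approximation of $\mathscr{R}(\mathbf{A})$ is the truncated SVD
\[
\sum_{i=1}^k\sigma_i\mathbf{U}(:,i)\mathbf{V}(:,i)^\top=\sum_{i=1}^k\bigl(\sqrt{\sigma_i}\mathbf{U}(:,i)\bigr)\bigl(\sqrt{\sigma_i}\mathbf{V}(:,i)\bigr)^\top .
\]
If $k>\min\{m_1n_1,m_2n_2\}$, the extra terms are taken to be zero. Now choose $\mathbf{B}_i$ and $\mathbf{C}_i$ by un-vectorizing, so that $\mathrm{vec}(\mathbf{B}_i)=\sqrt{\sigma_i}\mathbf{U}(:,i)$ and $\mathrm{vec}(\mathbf{C}_i)=\sqrt{\sigma_i}\mathbf{V}(:,i)$. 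This choice attains the minimum in \eqref{eq:multi_kpd}, with minimal value $\bigl(\sum_{i>k}\sigma_i^2\bigr)^{1/2}$. For $k=1$ the argument recovers Lemma \ref{lem:kpd} as a special case.
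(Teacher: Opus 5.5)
Your proof is correct and is the standard argument. The paper gives no proof of its own here and simply cites Van Loan and Pitsianis, whose proof is exactly this reduction: $\mathscr{R}$ is an entry-permuting isometry with $\mathscr{R}(\mathbf{B}\otimes\mathbf{C})=\mathrm{vec}(\mathbf{B})\,\mathrm{vec}(\mathbf{C})^\top$, so the problem becomes best rank-$k$ approximation of $\mathscr{R}(\mathbf{A})$, which Eckart--Young--Mirsky settles.

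Your reading of \eqref{eq:R(A)} as the vertical stack of $\mathbf{A}_1,\dots,\mathbf{A}_{n_1}$, with the block-row index running fastest, is the intended one. The literal transposes in the paper do not typecheck, and every dimensionally consistent reading gives this ordering. Your explicit minimal value $\bigl(\sum_{i>k}\sigma_i^2\bigr)^{1/2}$ is also worth having: it is exactly what the paper later uses for $\|\mathbf{E}_k\|_F$ in Theorem~\ref{th:mstpsvd}, even though the lemma as stated does not include it.
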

We formally introduce this improved decomposition in Theorem \ref{th:mstpsvd}, along with a rigorous theoretical analysis of its approximation error.
\begin{theorem}[Multi-term STP-SVD of matrices]\label{th:mstpsvd}
         Let $\mathbf{A} \in \mathbb{R}^{m_1 m_2 \times n_1 n_2}$ and $k$ be a given positive integer. 
         Then $\mathbf{A}$ can be factorized as 
         \begin{equation}
                \mathbf{A}=\sum_{i=1}^{k}\mathbf{U}_i \ltimes \mathbf{\Sigma}_i \ltimes \mathbf{V}_i^\top + \mathbf{E}_k,
         \end{equation}
         where $\mathbf{U}_i \in \mathbb{R}^{m_1 \times m_1}$ and $\mathbf{V}_i \in \mathbb{R}^{n_1 \times n_1}$ $(i=1,2,\cdots,k)$ are orthogonal, 
         and each $\mathbf{\Sigma}_i  \in\mathbb{R}^{m_1 m_2\times n_1 n_2}(i=1,2,\cdots,k)$ is  block-diagonal  with  blocks 
         $\mathbf{S}_{ij} \in \mathbb{R}^{m_2 \times n_2}$ $( j=1,2,\cdots,p)$ and $p=\min \{m_1 , n_1\}$, satisfying $\|\mathbf{S}_{i1}\|_F \ge \|\mathbf{S}_{i2}\|_F \ge \cdots \ge \|\mathbf{S}_{ip}\|_F$.
        The approximation error satisfies
\begin{equation*}
\|\mathbf{E}_k\|_F^2 = \sum_{i=k+1}^{v} \sigma_i^2,
\end{equation*}
where $\sigma_{k+1} \ge \sigma_{k+2} \ge \dots \ge \sigma_v \ge 0$ are the singular values of 
$\mathscr{R}(\mathbf{A}) \in \mathbb{R}^{m_1 n_1 \times m_2 n_2}$ defined in \eqref{eq:R(A)}, 
and $v = \min\{m_1 n_1, m_2 n_2\}$.
\begin{proof}
The proof can be found in the Supplementary Material.
\end{proof} 
\end{theorem}
\begin{remark}
Compared with the STP-SVD (Theorem \ref{th:stpsvd}), the multi-term formulation in Theorem \ref{th:mstpsvd} provides a more accurate approximation of the target matrix. In particular, when $k=1$, the multi-term decomposition collapses to the single-term case. As $k$ increases, 
the approximation error $\|\mathbf{E}_k\|_F$ decreases monotonically, since additional Kronecker components are included to capture more information from the original matrix. We now present Algorithm~\ref{alg:mstpsvd of matrices} for the multi-term STP-SVD.
\end{remark}
\begin{algorithm}[!ht]
\caption{Multi-term STP-SVD of matrices}
\label{alg:mstpsvd of matrices}
\KwIn{$\mathbf{A} \in \mathbb{R}^{m_1 m_2 \times n_1 n_2}$, the number of terms $k$.}
\KwOut{$\mathbf{U}_i$, $\mathbf{\Sigma}_i$, $\mathbf{V}_i$.}
Calculate matrices $\mathbf{B}_i \in \mathbb{R}^{m_1 \times n_1}$ and $\mathbf{C}_i \in \mathbb{R}^{m_2 \times n_2}$ ($i=1,\ldots,k$) via Lemma \ref{lem:mkpd}, such that $\mathbf{A} \approx \sum_{i=1}^{k} \mathbf{B}_i \otimes \mathbf{C}_i$\;
\For{$i = 1$ \KwTo $k$}{
    Perform full SVD on $\mathbf{B}_i$: $[\mathbf{U}_i, \mathbf{\Sigma}_{\mathbf{B}_i}, \mathbf{V}_i] = \operatorname{svd}(\mathbf{B}_i)$\;
    $\mathbf{\Sigma}_i = \mathbf{\Sigma}_{\mathbf{B}_i} \otimes \mathbf{C}_i$\;
}
\Return{$\mathbf{U}_i, \mathbf{\Sigma}_i, \mathbf{V}_i$.}
\end{algorithm}
\begin{remark}
Theorem \ref{th:mstpsvd} gives the full multi-term STP-SVD of matrices. To reduce computational overhead, we develop a truncated variant that retains only the leading $r$ singular components of each $\mathbf{B}_i$. The resulting decomposition preserves the same factorization structure and admits the following error bound:
\begin{equation}\label{eq:matrix_truncated mstpsvd_error}
\|\mathbf{E}_{k,r}\|_F^2 \leq \sum_{i=k+1}^{v}\sigma_i^2 + \sum_{i=1}^{k}\sum_{j=r+1}^{p}\|\mathbf{S}_{ij}\|_F^2,        
\end{equation}
where \(\mathbf{E}_{k,r}\) denotes the approximation error induced by the truncated multi-term STP-SVD. The first term on the right-hand side originates from the rank-k truncated SVD over \(\mathscr{R}(\mathbf{A})\), while the second term arises from truncated SVD for each matrix \(\mathbf{B}_i\). Here, r stands for the number of retained singular components for truncation on each \(\mathbf{B}_i\). For brevity, its detailed computational procedure is deferred to Supplementary Material.
\end{remark}
\subsection{Multi-term STP-SVD of tensors}\label{subsec:mstpsvd}
\par Theorem \ref{th:mstpsvd} establishes the multi-term STP-SVD for general real matrices, which approximates a target matrix by summing multiple orthogonal STP factorization components. Benefiting from the tensor STP defined under arbitrary invertible linear transforms in Definition \ref{de:tstp}, this matrix decomposition paradigm can be naturally generalized to third-order tensors, giving rise to the tensor MSTP-SVD formulation shown in Theorem \ref{th:tmstpsvd} with structurally consistent factorization form.
\begin{theorem}[MSTP-SVD]\label{th:tmstpsvd}
Let $L$ be any invertible linear transform in \eqref{eq:transform} and the transform matrix $\mathbf{L}$ satisfies $\mathbf{L}^\mathrm{H}\mathbf{L} = \mathbf{L}\mathbf{L}^\mathrm{H} = \rho\mathbf{I}_{l}$ and $\mathbf{L}^{-1} =  \mathbf{L}^\mathrm{H}/\rho$ for some constant $\rho > 0$, and \(\mathcal{A} \in \mathbb{R}^{m_1 m_2 \times n_1 n_2 \times l}\). Then it can be factorized as 
\begin{equation}\label{eq:mstpsvd}
       \mathcal{A} = \sum_{i=1}^{k} \mathcal{U}_i \ltimes_L  \mathcal{S}_i \ltimes_L \mathcal{V}_i^\top + \mathcal{E}_k,
\end{equation}
where $\mathcal{U}_i \in \mathbb{R}^{m_1 \times m_1 \times l}$ and $\mathcal{V}_i \in \mathbb{R}^{n_1 \times n_1 \times l}$ ($i=1,2,\cdots,k$) are orthogonal, each frontal slice of $\mathcal{S}_i \in \mathbb{R}^{m_1 m_2 \times n_1 n_2 \times l}$ is a block-diagonal matrix, and $\mathcal{E}_k$ is an error tensor, its squared Frobenius norm satisfies
\begin{equation}\label{eq:mstpsvd_error}
\|\mathcal{E}_k\|_F^2 = \frac{1}{\rho} \sum_{j=1}^{l} \sum_{i=k+1}^v \bigl(\hat{\sigma}_i^{(j)}\bigr)^2,  
\end{equation}
where $\hat{\sigma}_i^{(j)}$ is the $i$-th singular value of $\mathscr{R}(\bar{\mathcal{A}}^{(j)})$ and $v = \min\{m_1 n_1, m_2 n_2\}$.
\begin{proof}
The proof is provided in Supplementary Material.
\end{proof}
\end{theorem}

\begin{remark}
As implied by the constructive proof of Theorem \ref{th:tmstpsvd}, the MSTP-SVD of tensor $\mathcal{A}$ can be computed by performing the matrix MSTP-SVD on each frontal slice of $\bar{\mathcal{A}}$. Algorithm~\ref{alg:mstpsvd} summarizes this procedure.
\end{remark}
\begin{remark}
In Subsection \ref{subsec: matrix MSTP-SVD}, we introduced the truncated MSTP-SVD for matrices, and the framework can be naturally generalized to third-order tensors herein. For brevity, its detailed computational procedure is deferred to Supplementary Material. 
\end{remark}
\begin{algorithm}[!ht]
\caption{MSTP-SVD method of tensors}
\label{alg:mstpsvd}
\KwIn{$\mathcal{A} \in \mathbb{R}^{m_1 m_2 \times n_1 n_2 \times l}$, the number of terms $k$.}
\KwOut{$\mathcal{U}_i$, $\mathcal{S}_i$, $\mathcal{V}_i$.}
Obtain $\bar{\mathcal{A}}$ by applying an invertible linear transform $L$ on $\mathcal{A}$\;
\For{$j = 1$ \KwTo $l$}{
    Approximate the $j$-th frontal slice by Lemma \ref{lem:mkpd}:$\bar{\mathcal{A}}^{(j)} \approx \sum_{i=1}^{k} \mathbf{B}_i^{(j)} \otimes \mathbf{C}_i^{(j)}$\;
    \For{$i = 1$ \KwTo $k$}{
        Compute the full SVD of $\mathbf{B}_i^{(j)}$: $[\mathbf{U}_i^{(j)}, \mathbf{\Sigma}_{\mathbf{B}_i}^{(j)}, \mathbf{V}_i^{(j)}] = \mathrm{svd}\left(\mathbf{B}_i^{(j)}\right)$\;
       $\mathbf{\Sigma}_i^{(j)} = \mathbf{\Sigma}_{\mathbf{B}_i}^{(j)} \otimes \mathbf{C}_i^{(j)}$\;
        Store $\mathbf{U}_i^{(j)}$, $\mathbf{\Sigma}_i^{(j)}$, $\mathbf{V}_i^{(j)}$ into $\mathcal{U}_i$, $\mathcal{S}_i$, $\mathcal{V}_i$, respectively\;
    }
}
\Return{$\mathcal{U}_i = L^{-1}(\mathcal{U}_i)$, $\mathcal{S}_i = L^{-1}(\mathcal{S}_i)$, $\mathcal{V}_i = L^{-1}(\mathcal{V}_i)$.}
\end{algorithm}
The core idea of truncated MSTP-SVD (TMSTP-SVD) is to perform a truncated matrix MSTP-SVD on each  $\mathbf{B}_i^{(j)}$ when decomposing the $\bar{\mathcal{A}}^{(j)}$. 
For this purpose, let $\mathbf{R} = [R_{ij}] \in \mathbb{N}_+^{k \times l}$ be a matrix of positive integers, where $R_{ij}$ denotes the truncation rank for the SVD of $\mathbf{B}_i^{(j)}$. For the tensor factorization \eqref{eq:mstpsvd},
the number of diagonal blocks within the $j$-th frontal slice $\mathcal{S}_i^{(j)}$ precisely equals $R_{ij}$. Based on this correspondence, we formally define the rank-truncation matrix $\mathbf{R} \in \mathbb{N}_+^{k \times l}$ below.
\begin{definition}
Consider $\mathcal{A} \in \mathbb{R}^{m_1 m_2 \times n_1 n_2 \times l}$ that obeys the full MSTP-SVD factorization given in \eqref{eq:mstpsvd}.
We define the truncation rank matrix $\mathbf{R} = [R_{ij}] \in \mathbb{N}_+^{k \times l}$, whose entry $R_{ij}$ stands for the retained singular rank adopted for matrix $\mathbf{B}_i^{(j)}$ during the decomposition of the $\bar{\mathcal{A}}^{(j)}$.
For the truncated variant with rank matrix $\mathbf{R}$, we denote the corresponding approximation error tensor as $\mathcal{E}_{k,\mathbf{R}}$.
\end{definition}
\begin{remark}
Suppose that the transform matrix $\mathbf{L}$ satisfies $\mathbf{L}^\mathrm{H}\mathbf{L} = \mathbf{L}\mathbf{L}^\mathrm{H} = \rho\mathbf{I}_{l}$ and $\mathbf{L}^{-1} =  \mathbf{L}^\mathrm{H}/\rho$ for some constant $\rho > 0$. For the TMSTP-SVD with $k$ multi-terms and truncation rank matrix $\mathbf{R}$, the approximation error tensor $\mathcal{E}_{k,\mathbf{R}}$ obeys the following squared Frobenius norm upper bound:
\begin{equation}\label{eq:truncated_mstpsvd_error}
        \begin{split}
\|\mathcal{E}_{k,\mathbf{R}}\|_F^2 &= \left(\frac{1}{\sqrt{\rho}} \|\mathrm{bdiag}(\bar{\mathcal{E}}_{k,\mathbf{R}})\|_F\right)^2 = \frac{1}{\rho} \sum_{j=1}^{l} \|\bar{\mathcal{E}}_{k,\mathbf{R}}^{(j)}\|_F^2\\
&\leq \frac{1}{\rho} \sum_{j=1}^{l} \Biggl(\sum_{i=k+1}^{v}\bigl(\hat{\sigma}_i^{(j)}\bigr)^2 + \sum_{i=1}^{k}\sum_{t=R_{ij}+1}^{p} \|\mathbf{S}_{it}^{(j)}\|_F^2\Biggr),        
        \end{split}
\end{equation}
where  the last inequality is from \eqref{eq:matrix_truncated mstpsvd_error}. The two summation terms inside the parentheses correspond to two independent sources of approximation error:
\begin{itemize}
    \item The term $\sum_{i=k+1}^{v}(\hat{\sigma}_i^{(j)})^2$ arises from truncating the trailing singular values of the rearranged matrix $\mathscr{R}(\bar{\mathcal{A}}^{(j)})$, where $\hat{\sigma}_i^{(j)}$ are the singular values of $\mathscr{R}(\bar{\mathcal{A}}^{(j)})$ and $v = \min\{m_1 n_1, m_2 n_2\}$.
    \item The term $\sum_{i=1}^{k}\sum_{t=R_{ij}+1}^{p}\|\mathbf{S}_{it}^{(j)}\|_F^2$ originates from the rank-$R_{ij}$ truncated SVD of each matrix $\mathbf{B}_i^{(j)}$, where $\mathbf{S}_{it}^{(j)}$ are the diagonal blocks located in the $j$-th frontal slice of $\bar{\mathcal{S}}_i$, and $p = \min\{m_1, n_1\}$.
\end{itemize}
\end{remark}

\section{Fast randomized tensor singular value decomposition via semi-tensor product}\label{sec:random}
\par Despite improved approximation accuracy, deterministic MSTP-SVD (Subsection \ref{subsec:mstpsvd}) incurs heavy computational cost dominated by SVD on the large rearranged matrix \(\mathscr{R}(\bar{\mathcal{A}}^{(j)})\), as verified in Fig.~\ref{fig:svd_time} with four high-resolution RGB images. We therefore adopt a randomized acceleration scheme, replacing the costly exact SVD with its randomized counterpart for notable speedup with negligible accuracy loss.
\begin{figure}[!t]
  \centering
  \includegraphics[width=\textwidth]{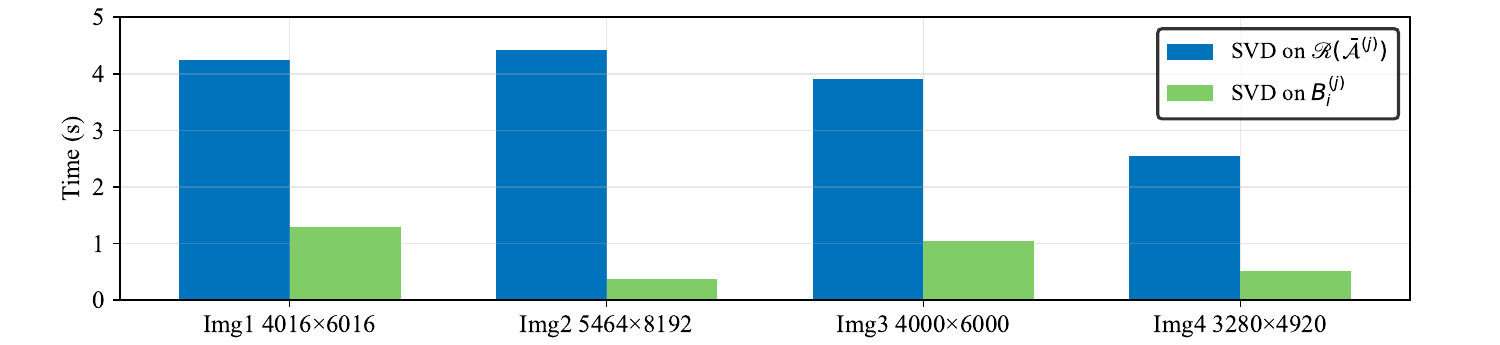}
  \caption{Runtime comparison of SVD computations for \(\mathscr{R}(\bar{\mathcal{A}}^{(j)})\) (Case 1) and submatrices \(\mathbf{B}_{i}^{(j)}\) (Case 2) under MSTP-SVD (\(k=1\)).}
  \label{fig:svd_time}
\end{figure}

Randomized T-SVD projects high-dimensional tensors onto a low-dimensional subspace via random projection, performs efficient T-SVD in the subspace, and reconstructs results in the original space, achieving significant complexity reduction with bounded approximation error.
\par As the t-product under any invertible linear transform  decouples into slice-wise matrix multiplications \cite{kernfeld2015tensor}, we  analyze randomized T-SVD entirely in the transform domain. 
We generate a Gaussian random tensor \(\mathcal{G}\) (each frontal slice $\bar{\mathcal{G}}^{(j)}$ has independent standard-normal entries \cite{qin2024nonconvex}) and apply  transform $L$ to both  \(\mathcal{A}\) and  \(\mathcal{G}\)  to obtain $\bar{\mathcal{A}}$ and $\bar{\mathcal{G}}$.
Direct full SVD on each large-scale frontal slice $\bar{\mathcal{A}}^{(j)}$ incurs prohibitive cost. Naive random projection \(\mathbf{Y}_0 = \bar{\mathcal{A}}^{(j)}\bar{\mathcal{G}}^{(j)}.\)
suffers from large approximation bias when the singular-value gap of \(\bar{\mathcal{A}}^{(j)}\) is narrow, requiring heavy oversampling \cite{halko2011finding}. We thus adopt power-enhanced random projection:
\begin{equation*}
\mathbf{Y} = \bigl(\bar{\mathcal{A}}^{(j)}(\bar{\mathcal{A}}^{(j)})^\top\bigr)^q \bar{\mathcal{A}}^{(j)}\bar{\mathcal{G}}^{(j)}.       
\end{equation*}
\begin{remark}
Substituting the SVD of \(\bar{\mathcal{A}}^{(j)}\) into the power term shows that singular values are raised to the $2q+1$. 
This widens the singular-value gap \(\tau_k^{(j)}=\hat{\sigma}_{k+1}^{(j)}/\hat{\sigma}_{k}^{(j)}\), suppresses trivial components, and mitigates oversampling-induced bias, yielding a tighter error bound with only marginal extra computational cost.
\end{remark}
After obtaining \(\mathbf{Y}\), thin QR factorization yields an orthonormal basis \(\mathbf{Q}_j\) for the principal subspace of \(\bar{\mathcal{A}}^{(j)}\). 
Projecting \(\bar{\mathcal{A}}^{(j)}\) onto \(\mathbf{Q}_j\) gives a compact matrix \(\mathbf{B} = \mathbf{Q}_j^\top \bar{\mathcal{A}}^{(j)}\), whose  SVD is far cheaper.
We retain the top-k singular components, store truncated factors \(\mathbf{U},\mathbf{S},\mathbf{V}^\top\) in the transform domain, and apply \(L^{-1}\) to recover T-SVD factors in the original space. The full procedure is outlined in Algorithm~\ref{alg:RTSVD}.
\begin{algorithm}[!ht]
\caption{Randomized T-SVD with power iteration (RT-SVD) \cite{zhang2018randomized}}
\label{alg:RTSVD}
\KwIn{$\mathcal{A} \in \mathbb{R}^{m \times n \times l}$, truncation term $k$, oversampling parameter $s \ge 0$, iteration parameter $q \ge 0$.}
\KwOut{$\mathcal{U}_k$, $\mathcal{S}_k$, $\mathcal{V}_k$.}
Generate a Gaussian random tensor $\mathcal{G} \in \mathbb{R}^{n \times (k+s) \times l}$\;
Compute $\bar{\mathcal{A}} = L(\mathcal{A})$ and $\bar{\mathcal{G}} = L(\mathcal{G})$\;
\For{$j = 1$ \KwTo $l$}{
    Compute $\mathbf{Y} = \bigl(\bar{\mathcal{A}}^{(j)} (\bar{\mathcal{A}}^{(j)})^\top\bigr)^q \bar{\mathcal{A}}^{(j)} \bar{\mathcal{G}}^{(j)}$\;
    Compute thin-QR factorization $\mathbf{Y} = \mathbf{Q}_j \mathbf{R}$\;
    Compute $\mathbf{B} = \mathbf{Q}_j^\top \bar{\mathcal{A}}^{(j)}$\;
    Compute the SVD of $\mathbf{B}$: $\mathbf{B} = \mathbf{U} \mathbf{S} \mathbf{V}^\top$\;
    Form $\mathbf{U}_k$, $\mathbf{V}_k$, $\mathbf{S}_k$ by truncating $\mathbf{Q}_j\mathbf{U}$, $\mathbf{V}$, $\mathbf{S}$ with $k$\;
    Set $\bar{\mathcal{U}_k}^{(j)} = \mathbf{U}_k$, $\bar{\mathcal{S}_k}^{(j)} = \mathbf{S}_k$, $\bar{\mathcal{V}_k}^{(j)} = \mathbf{V}_k$\;
}
\Return{$\mathcal{U}_k = L^{-1}(\bar{\mathcal{U}_k})$,\ $\mathcal{S}_k = L^{-1}(\bar{\mathcal{S}_k})$,\ $\mathcal{V}_k = L^{-1}(\bar{\mathcal{V}_k})$.}
\end{algorithm}

Based on the runtime results in Fig.~\ref{fig:svd_time}, which verifies that SVD on \(\mathscr{R}(\bar{\mathcal{A}}^{(j)})\) dominates the computational cost of deterministic MSTP-SVD, we embed the above randomized subspace extraction scheme into the MSTP-SVD framework. Concretely, we replace \(\bar{\mathcal{A}}^{(j)}\) adopted in standard randomized T-SVD with the rearranged matrix \(\mathscr{R}(\bar{\mathcal{A}}^{(j)})\) defined for multi-term semi-tensor modeling, leading to our accelerated MRSTP-SVD algorithm. Its complete pipeline is provided in Algorithm~\ref{alg:mrstpsvd}. 

We further derive the unified expected error bound for the proposed MRSTP-SVD algorithm, which is stated in the following theorem.
\begin{theorem}\label{th:MRSTP-SVD error}
Let $L$ be  any invertible linear transform in  \eqref{eq:transform}, and the transform matrix $\mathbf{L}$ satisfies $\mathbf{L}^\mathrm{H}\mathbf{L} = \mathbf{L}\mathbf{L}^\mathrm{H} = \rho\mathbf{I}_{l}$ and $\mathbf{L}^{-1} =  \mathbf{L}^\mathrm{H}/\rho$ for some constant $\rho > 0$. Suppose that $\mathcal{A} \in \mathbb{R}^{m_1 m_2 \times n_1 n_2\times l} $  and  $\mathcal{G} \in \mathbb{R}^{m_2 n_2 \times (k+s) \times l}$  is a Gaussian random tensor with $k+s \leq \min\{m_1 n_1,m_2 n_2\}$.
If $\widetilde{\mathcal{A}} = \sum_{i=1}^{k} \mathcal{U}_i \ltimes_L  \mathcal{S}_i \ltimes_L \mathcal{V}_i^\top $, where $\mathcal{U}_i$, $\mathcal{S}_i$, and $\mathcal{V}_i$ are obtained from  Algorithm~\ref{alg:mrstpsvd}, then,
\begin{equation}\label{eq:mrstpsvd_error}
        \mathbb{E}\|\mathcal{A}-\widetilde{\mathcal{A} }\|_F^2 \leq \frac{2}{\rho} \sum_{j=1}^{l} \left[\left(2+\frac{k}{s-1} (\tau_k^{(j)})^{4q} \right) \left(\sum_{i=k+1}^{v} (\hat{\sigma}_i^{(j)})^2 \right)  \right],
\end{equation}
where $v=\min\{m_1 n_1,m_2 n_2\}$, $k$ is the target truncation term, $s \ge 2 $ denotes the oversampling parameter , $q$ is the number of power iteration steps, $\hat{\sigma}_i^{(j)}$ denotes the $i$-th singular value of $\mathscr{R}(\bar{\mathcal{A}}^{(j)})$, $\tau_k^{(j)}=\hat{\sigma}_{k+1}^{(j)}/\hat{\sigma}_k^{(j)} \ll 1$ is the singular value gap.
\begin{proof}
The detailed proof  is provided in Supplementary Material.
\end{proof}
\end{theorem}
Theorem \ref{th:MRSTP-SVD error} reveals a delicate three-way trade-off among hyperparameters $k$, $s$, and $q$. Inceasing the truncation term $k$ reduces residual energy from truncated trailing singular components but amplifies random-sampling bias. 
We can mitigate this sampling error by adopting a larger oversampling parameter $s$.
Especially for slowly-decaying singular values of \(\mathscr{R}(\bar{\mathcal{A}}^{(j)})\), more power-iteration steps $q$ suppress the detrimental effect of the singular-value gap \(\tau_k^{(j)}\), albeit with additional computational overhead.
\begin{remark}
        The above error bound is derived based on full MSTP-SVD, if we adopt the TMSTP-SVD scheme defined in Supplementary Material, the deterministic residual \(\|\mathcal{A}-\mathcal{A}_{\text{MSTP}}\|_F^2\) is updated to \eqref{eq:truncated_mstpsvd_error}. Substituting this truncated deterministic error into the proof flow of Theorem \ref{th:MRSTP-SVD error}, the overall expected squared Frobenius error of truncated MRSTP-SVD (TMRSTP-SVD) reads
\begin{equation*}
        \mathbb{E}\|\mathcal{A}-\widetilde{\mathcal{A}}_{\text{Trunc}}\|_F^2 \leq \frac{2}{\rho} \sum_{j=1}^{l} \left[\left(2+\frac{k}{s-1} (\tau_k^{(j)})^{4q} \right) \left(\sum_{i=k+1}^{v} (\hat{\sigma}_i^{(j)})^2 \right)+\sum_{i=1}^{k} \sum_{t=R_{ij}+1}^{p}\|\mathbf{S}_{it}^{(j)}\|_F^2  \right].
\end{equation*}
Compared with \eqref{eq:mrstpsvd_error}, the extra  term \(\sum_{i=1}^{k}\sum_{t=R_{ij}+1}^{p} \left\|S_{it}^{(j)}\right\|_F^2\) originates from rank-\(R_{ij}\) truncated SVD for each block matrix \(B_i^{(j)}\), where \(S_{it}^{(j)}\) are diagonal blocks in the $j$-th frontal slice of \(\bar{\mathcal{S}}_i\), and \(p=\min\{m_1,n_1\}\). 
\end{remark}
Correspondingly, we outline the full procedure of TMRSTP-SVD, which serves as a simplified truncated counterpart to Algorithm~\ref{alg:mrstpsvd}. For brevity, its pseudocode is deferred to Supplementary Material.
\begin{algorithm}[!ht]
\caption{MRSTP-SVD method of tensors}
\label{alg:mrstpsvd}
\KwIn{$\mathcal{A} \in \mathbb{R}^{m_1 m_2 \times n_1 n_2 \times l}$, the number of terms $k$, oversampling parameter $s \ge 0$, iteration parameter $q \ge 0$, truncated rank matrix $\mathbf{R}$.}
\KwOut{$\mathcal{U}_i$, $\mathcal{S}_i$, $\mathcal{V}_i$.}
Generate a Gaussian random tensor $\mathcal{G} \in \mathbb{R}^{n_1 n_2 \times (k+s) \times l}$\;
Compute $\bar{\mathcal{A}} = L(\mathcal{A})$ and $\bar{\mathcal{G}} = L(\mathcal{G})$\;
\For{$j = 1$ \KwTo $l$}{
    Obtain $\mathscr{R}(\bar{\mathcal{A}}^{(j)})$ by reorganizing the blocks of $\bar{\mathcal{A}}^{(j)}$\;
    Compute $\mathbf{Y} = \bigl(\mathscr{R}(\bar{\mathcal{A}}^{(j)}) \mathscr{R}(\bar{\mathcal{A}}^{(j)})^\top\bigr)^q \mathscr{R}(\bar{\mathcal{A}}^{(j)}) \bar{\mathcal{G}}^{(j)}$\;
    Compute thin-QR factorization $\mathbf{Y} = \mathbf{Q}_j \mathbf{R}$\;
    Compute $\mathbf{B} = \mathbf{Q}_j^\top \mathscr{R}(\bar{\mathcal{A}}^{(j)})$\;
    Compute the SVD of $\mathbf{B}$: $\mathbf{B} = \mathbf{U} \mathbf{S} \mathbf{V}^\top$\;
    Form $\mathbf{U}_k$, $\mathbf{V}_k$, $\mathbf{S}_k$ by truncating $\mathbf{Q}_j\mathbf{U}$, $\mathbf{V}$, $\mathbf{S}$ with $k$\;
    \For{$i = 1$ \KwTo $k$}{
        $\operatorname{vec}(\mathbf{B}_i^{(j)}) = \sqrt{\mathbf{S}_k(i,i)} \, \mathbf{U}_k(:, i)$, \quad$\operatorname{vec}(\mathbf{C}_i^{(j)}) = \sqrt{\mathbf{S}_k(i,i)} \, \mathbf{V}_k(:, i)$
        such that $\bar{\mathcal{A}}^{(j)} \approx \sum_{i=1}^{k} \mathbf{B}_i^{(j)} \otimes \mathbf{C}_i^{(j)}$\;
        Compute the full SVD of $\mathbf{B}_i^{(j)}$: $[\mathbf{U}_i^{(j)}, \mathbf{\Sigma}_{B_i}^{(j)}, \mathbf{V}_i^{(j)}] = \operatorname{svd}\left(\mathbf{B}_i^{(j)}\right)$\;
        $\mathbf{\Sigma}_i^{(j)} = \mathbf{\Sigma}_{\mathbf{B}_i}^{(j)} \otimes \mathbf{C}_i^{(j)}$\;
        Store $\mathbf{U}_i^{(j)}$, $\mathbf{\Sigma}_i^{(j)}$, $\mathbf{V}_i^{(j)}$ into $\mathcal{U}_i$, $\mathcal{S}_i$, $\mathcal{V}_i$, respectively\;
    }
}
\Return{$\mathcal{U}_i = L^{-1}(\mathcal{U}_i)$, $\mathcal{S}_i = L^{-1}(\mathcal{S}_i)$, $\mathcal{V}_i = L^{-1}(\mathcal{V}_i)$.}
\end{algorithm}
\section{Numerical experiments}
\label{sec:experiment}
In this section, we conduct numerical experiments on real-world data and compare with other algorithms: truncated T-SVD (TT-SVD) \cite{kilmer2011factorization}, STP-SVD and  truncated STP-SVD (TSTP-SVD) \cite{chen2023tensor} to substantiate the superiority and effectiveness of our methods. All simulations are performed on a laptop computer with 2.50GHz Intel(R) Core(TM) i5-10300H CPU and 24GB memory.
The Peak Signal-to-Noise Ratio (PSNR), the structural similarity (SSIM), and the
CPU runtime are employed to evaluate the  performance of the proposed algorithm. Let \(\mathcal{X}\), \(\hat{\mathcal{X}} \in \mathbb{R}^{m \times n \times l}\) be the original tensor and its reconstruction. The PSNR and SSIM are defined as
\begin{equation*}
        \text{PSNR} = 10\log_{10}\left(\frac{\|\mathcal{X}\|_\infty^2}{\frac{1}{mnl}\|\mathcal{X}-\mathcal{\hat{X}}\|_F^2}\right), \text{SSIM} = \frac{\left(2\mu_{\mathcal{X}}\mu_{\hat{\mathcal{X}}}+C_1\right)\left(2\sigma_{\mathcal{X} \hat{\mathcal{X}}}+C_2\right)}{\left(\mu_{\mathcal{X}}^2+\mu_{\hat{\mathcal{X}}}^2+C_1\right)\left(\sigma_{\mathcal{X}}^2+\sigma_{\hat{\mathcal{X}}}^2+C_2\right)},
\end{equation*}
where $\|\mathcal{X}\|_\infty$ denotes the maximum absolute value of all entries in \(\mathcal{X}\), $\sigma_{\mathcal{X} \hat{\mathcal{X}}}$ is the cross-covariance between $\mathcal{X}$ and $\hat{\mathcal{X}}$, $\mu_{\mathcal{X}}$, $\mu_{\hat{\mathcal{X}}}$ represent the average values of \(\mathcal{X}\) and \(\hat{\mathcal{X}}\), $\sigma_{\mathcal{X}}$, $\sigma_{\hat{\mathcal{X}}}$ are the standard deviations, $C_1, C_2$ are constants.
In general, larger PSNR and SSIM values correspond to superior reconstruction quality.

\begin{figure}[!htbp]
  \centering
    \includegraphics[width=\textwidth]{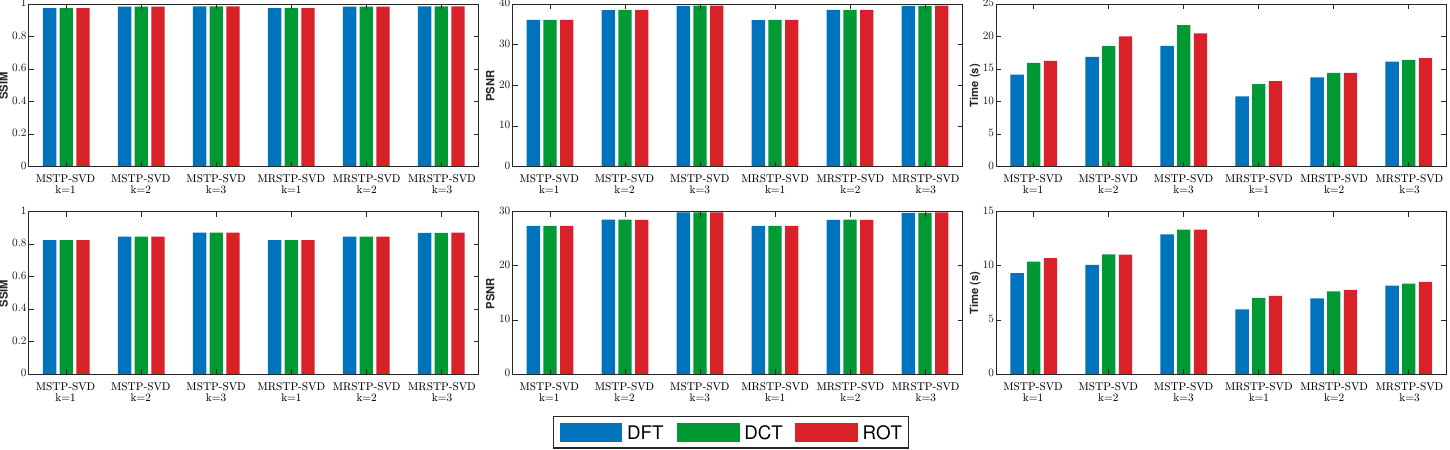}
    \caption{Quantitative metrics (PSNR, SSIM, runtime) of MSTP-SVD and randomized MRSTP-SVD for image compression under invertible transforms (DFT, DCT, ROT). Top: lake; Bottom: road.}
    \label{fig:conv_three_subplot}
  \end{figure}

\subsection{Compression on the image data}\label{subsec:image}
In this section, we employ the proposed algorithm for image compression. First, we select three distinct linear transforms: DFT, discrete cosine transform (DCT), and random orthogonal transform (ROT), and test their impacts on the performance of our proposed algorithm on four RGB benchmark images (Lake, Night, Road, and Fruit) downloaded from the ISO Republic website\footnote{https://isorepublic.com/}. Their corresponding resolutions are \(6016 \times 4016 \times 3\), \(4920 \times 3280 \times 3\), \(8192 \times 5464 \times 3\), and \(6000 \times 4000 \times 3\), respectively.
 Fig.~\ref{fig:conv_three_subplot}  reports PSNR, SSIM and runtime (in seconds) under different transform schemes, showcasing results for the lake and road images. The three invertible linear transforms deliver comparable PSNR and SSIM, while DFT exhibits persistently lower computational overhead. Averaged across two test images, DFT reduces total runtime by over 10\% compared with DCT and ROT. We therefore select DFT as the default linear transform for all subsequent experiments. Additional results for other test images in the Supplementary Material further validate this finding.
\begin{figure}[!ht]
\centering
\renewcommand{\arraystretch}{0.3}
\setlength\tabcolsep{0.1pt}
\begin{tabular}{@{}ccccccc@{}}

\tiny Original &\tiny TT-SVD & \tiny STP-SVD & \tiny TSTP-SVD &\tiny\makecell[c]{MSTP-SVD\\[-4pt](k=2)} & \tiny\makecell[c]{MSTP-SVD\\[-4pt](k=3)} &\tiny\makecell[c]{TMSTP-SVD\\[-4pt](k=2)} \\
\includegraphics[width=0.672in]{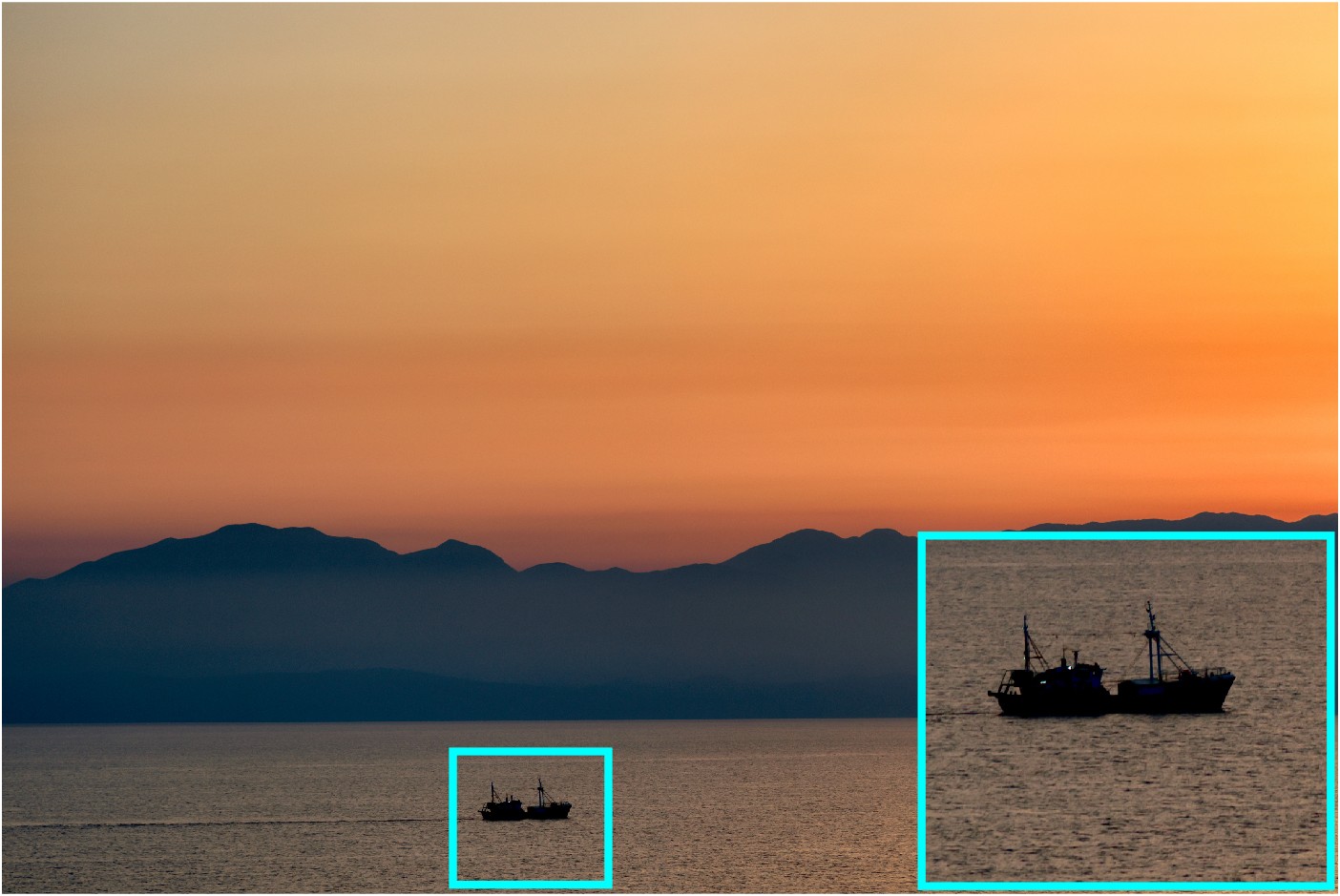} &
\includegraphics[width=0.672in]{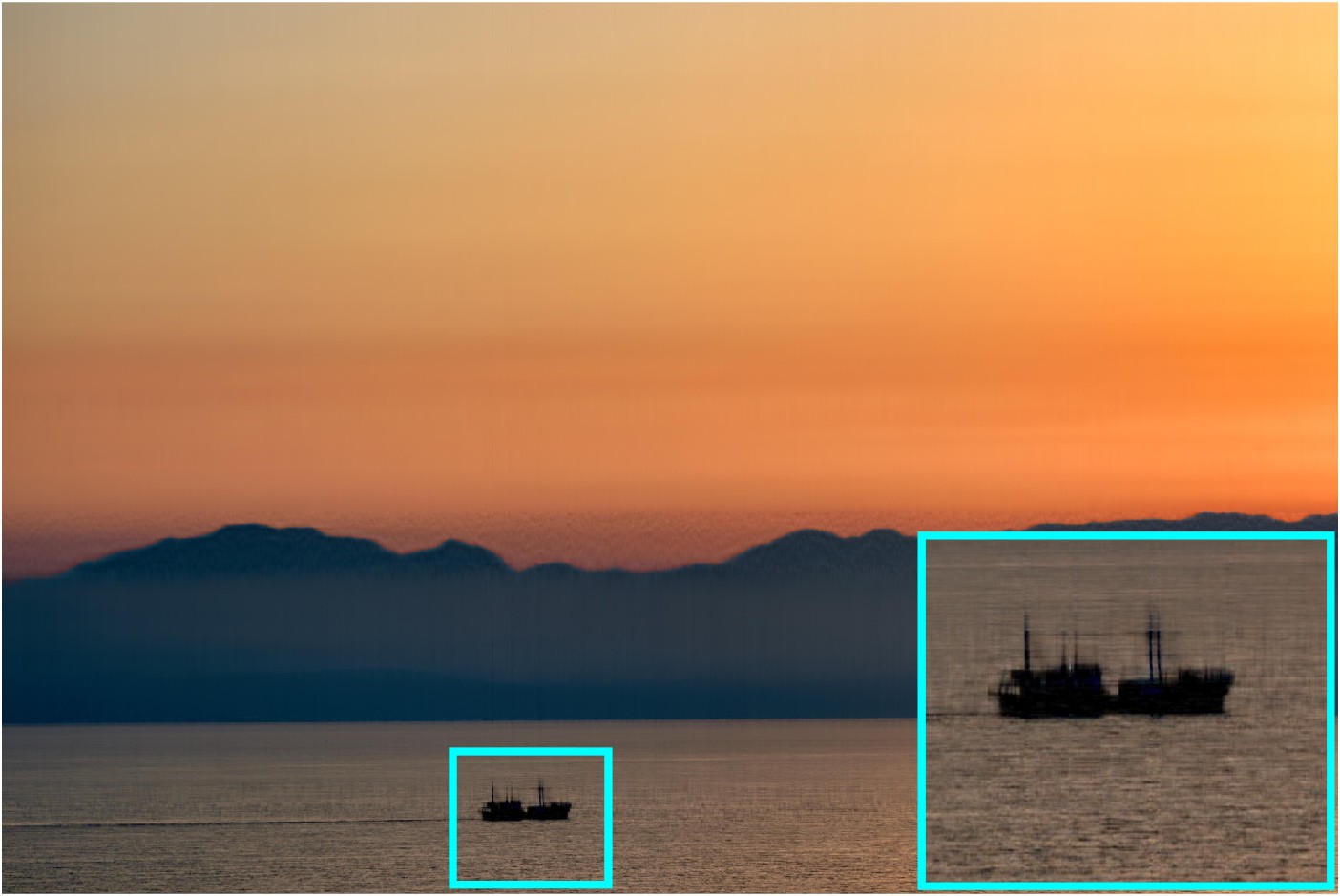} &
\includegraphics[width=0.672in]{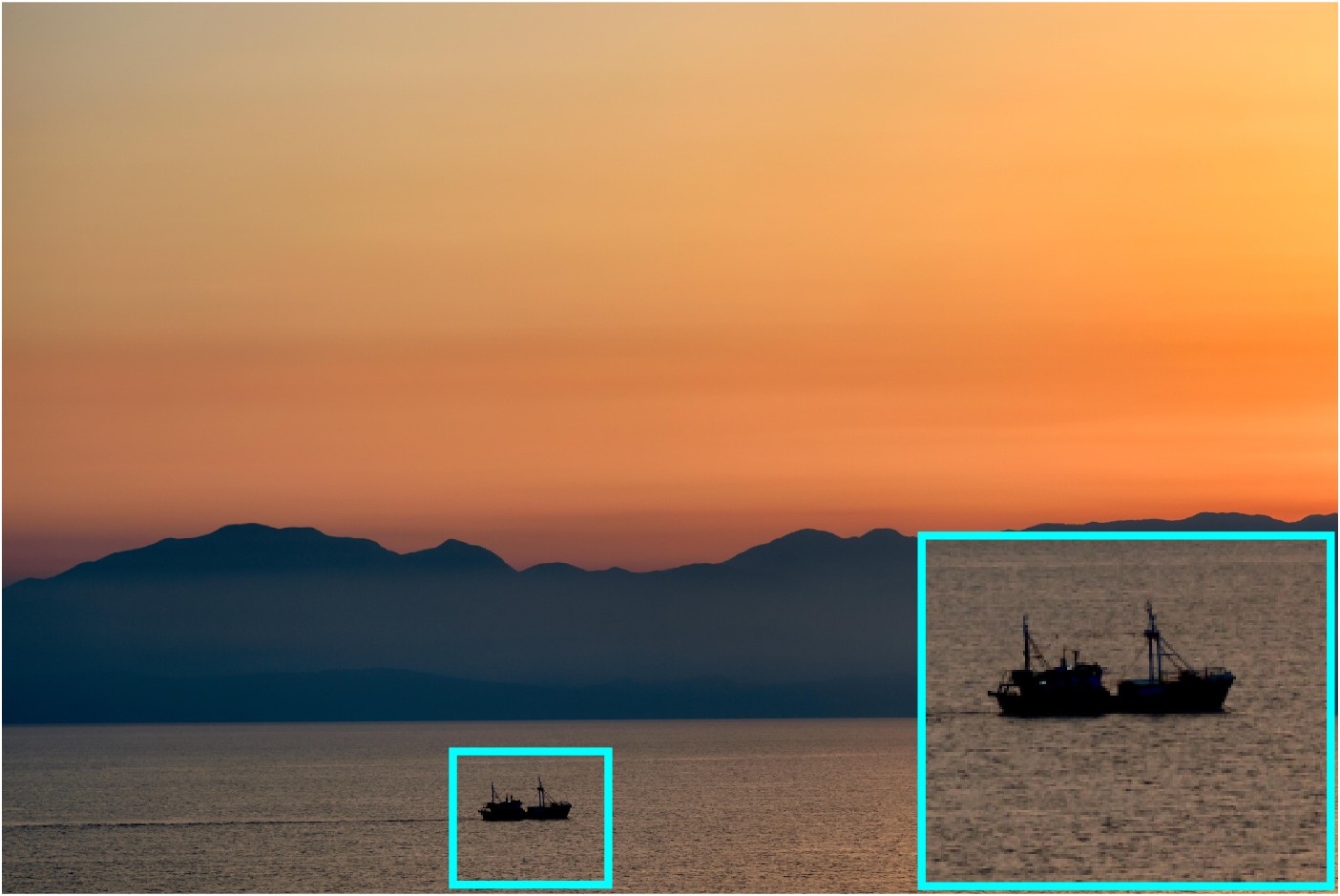} &
\includegraphics[width=0.672in]{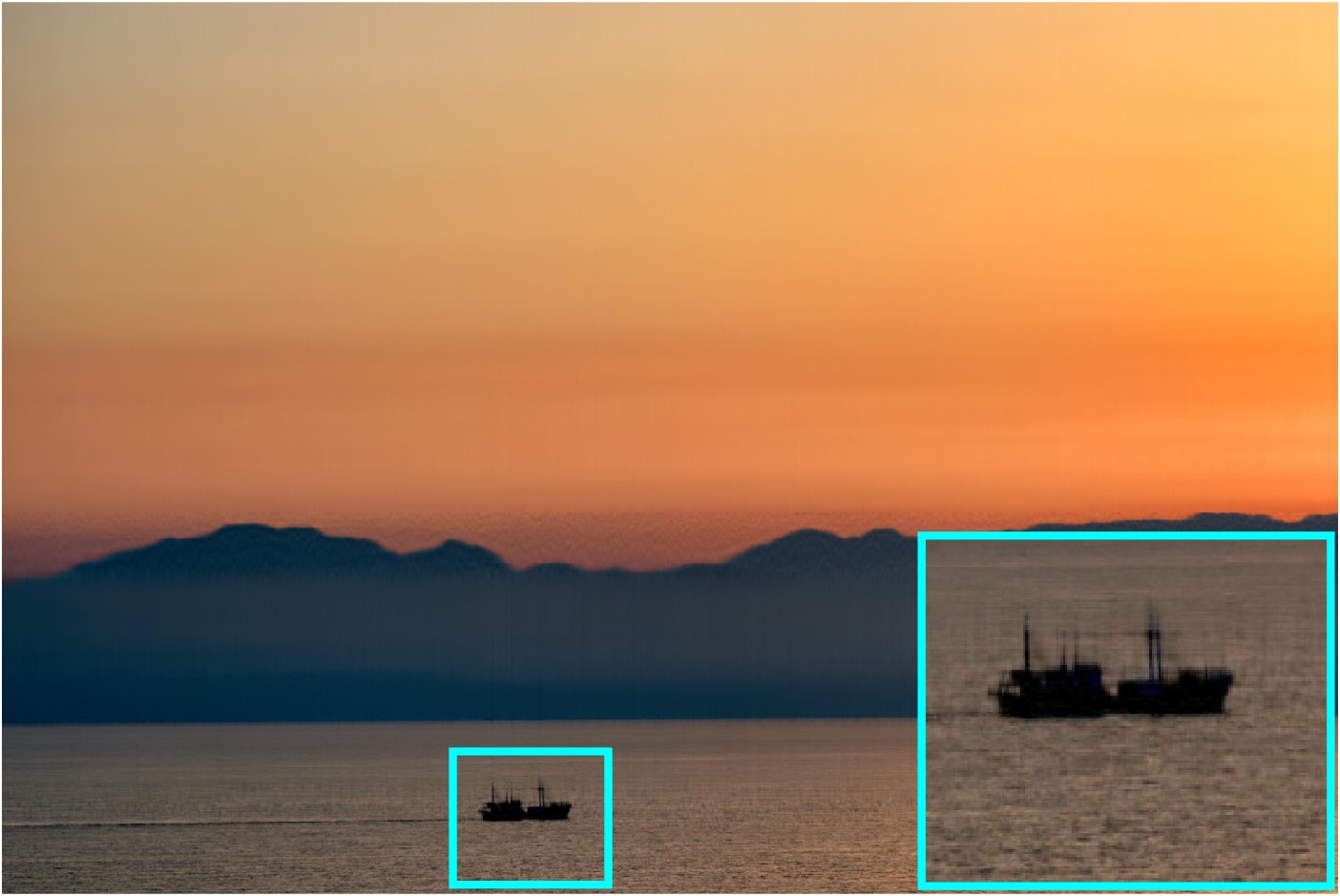} &
\includegraphics[width=0.672in]{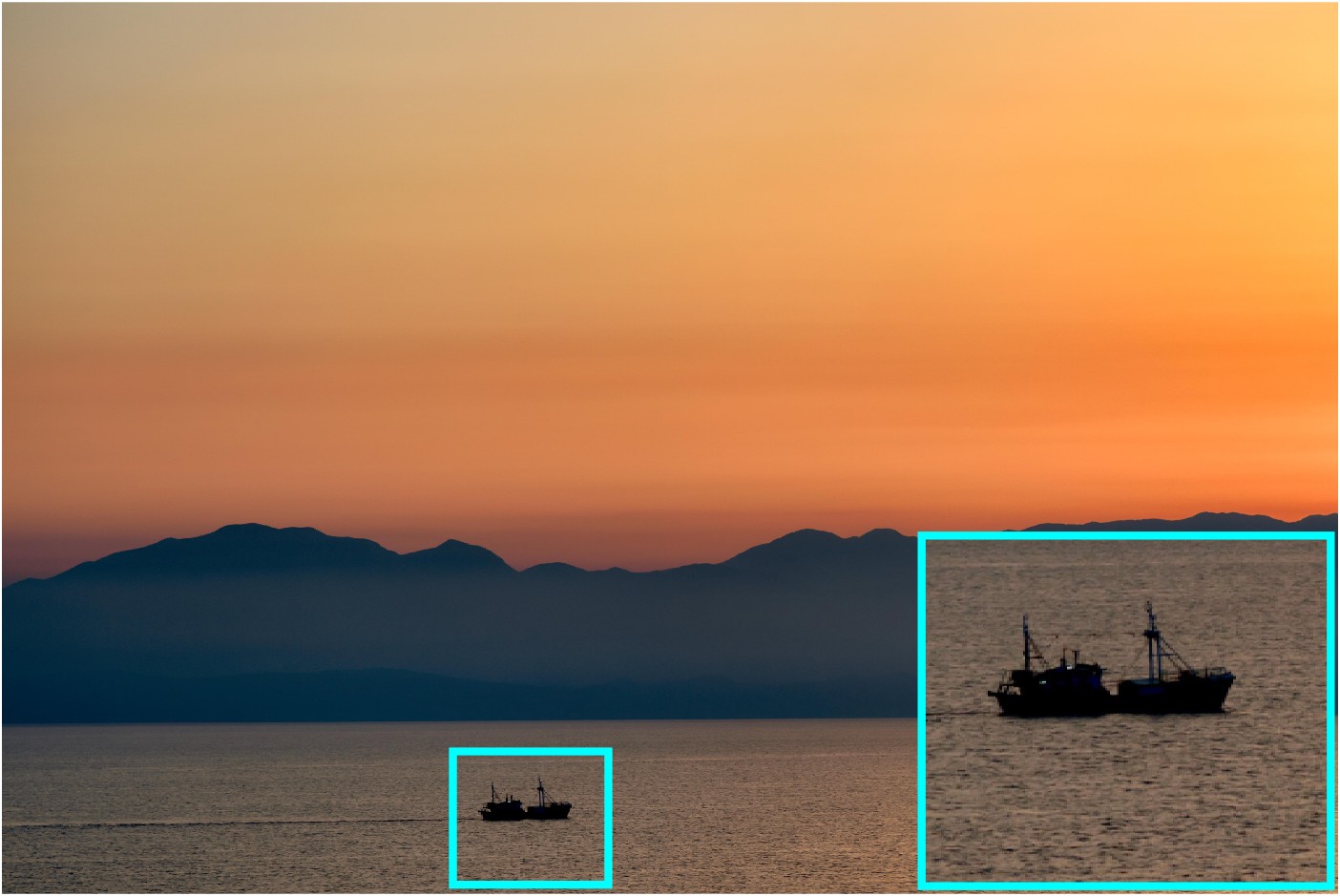} &
\includegraphics[width=0.672in]{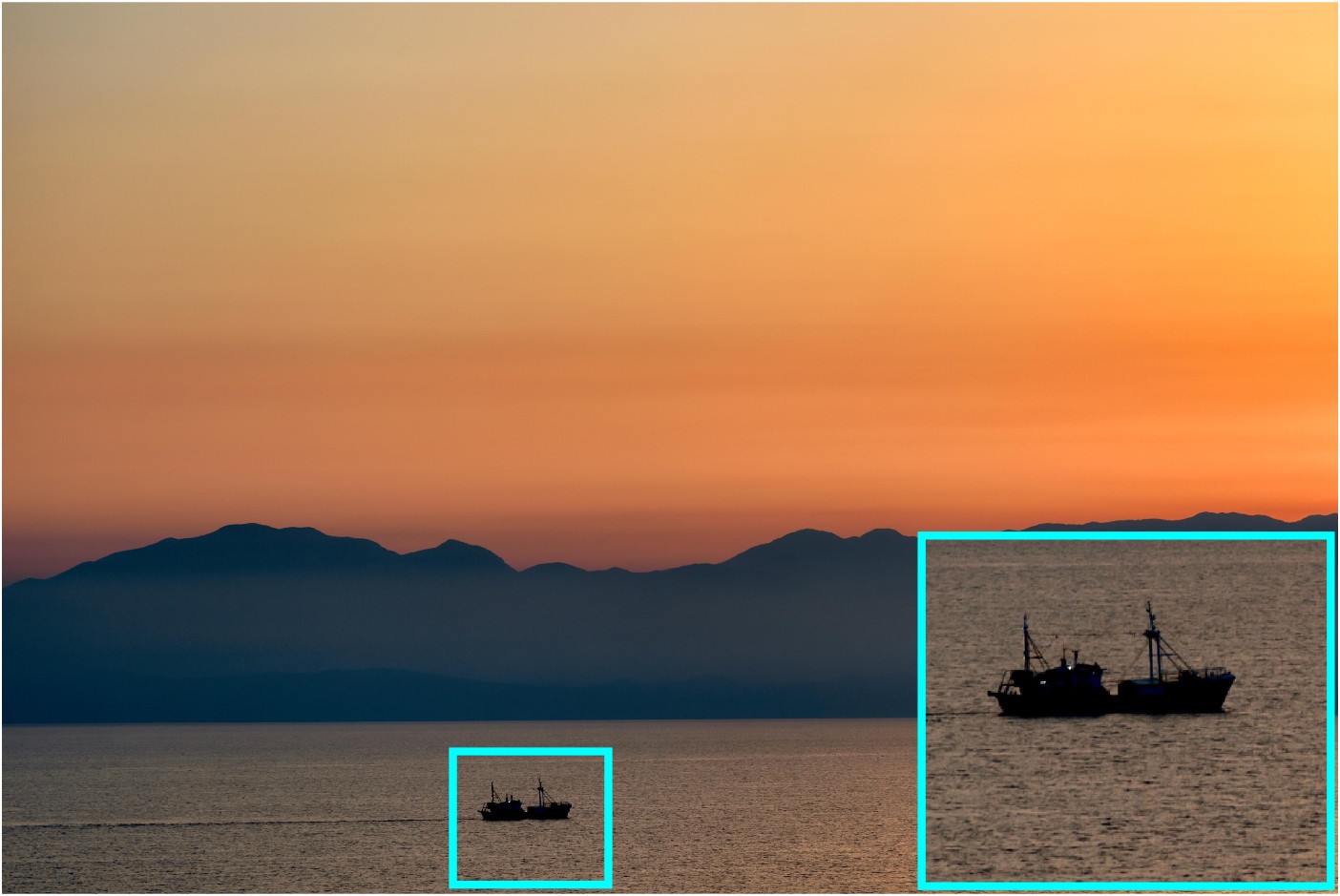} &
\includegraphics[width=0.672in]{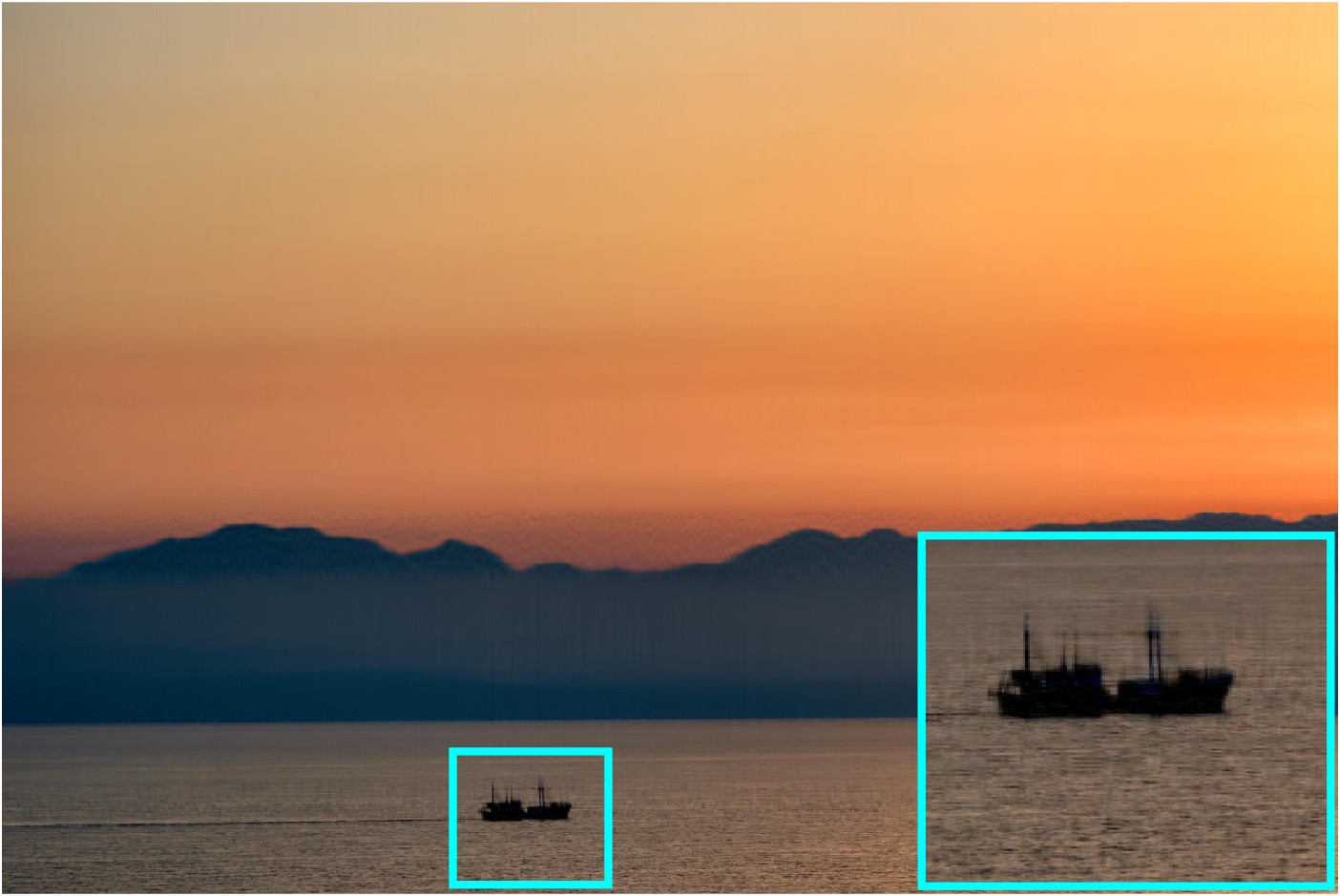} \\
 &\tiny PSNR:34.56 & \tiny PSNR:36.17 & \tiny PSNR:34.06 &\tiny PSNR:38.60 & \tiny PSNR:39.64 &\tiny PSNR:34.86 \\
 &\tiny Time:18.00s & \tiny Time:14.26s & \tiny Time:12.71s &  \tiny Time:16.79s & \tiny Time:22.37s &  \tiny Time:13.55s\\
 \tiny\makecell[c]{TMSTP-SVD\\[-4pt](k=3)} &\tiny\makecell[c]{MRSTP-SVD\\[-4pt](k=1)} & \tiny\makecell[c]{MRSTP-SVD\\[-4pt](k=2)} & \tiny\makecell[c]{MRSTP-SVD\\[-4pt](k=3)}& \tiny\makecell[c]{TMRSTP-SVD\\[-4pt](k=1)} & \tiny\makecell[c]{TMRSTP-SVD\\[-4pt](k=2)} & \tiny\makecell[c]{TMRSTP-SVD\\[-4pt](k=3)}\\
\includegraphics[width=0.672in]{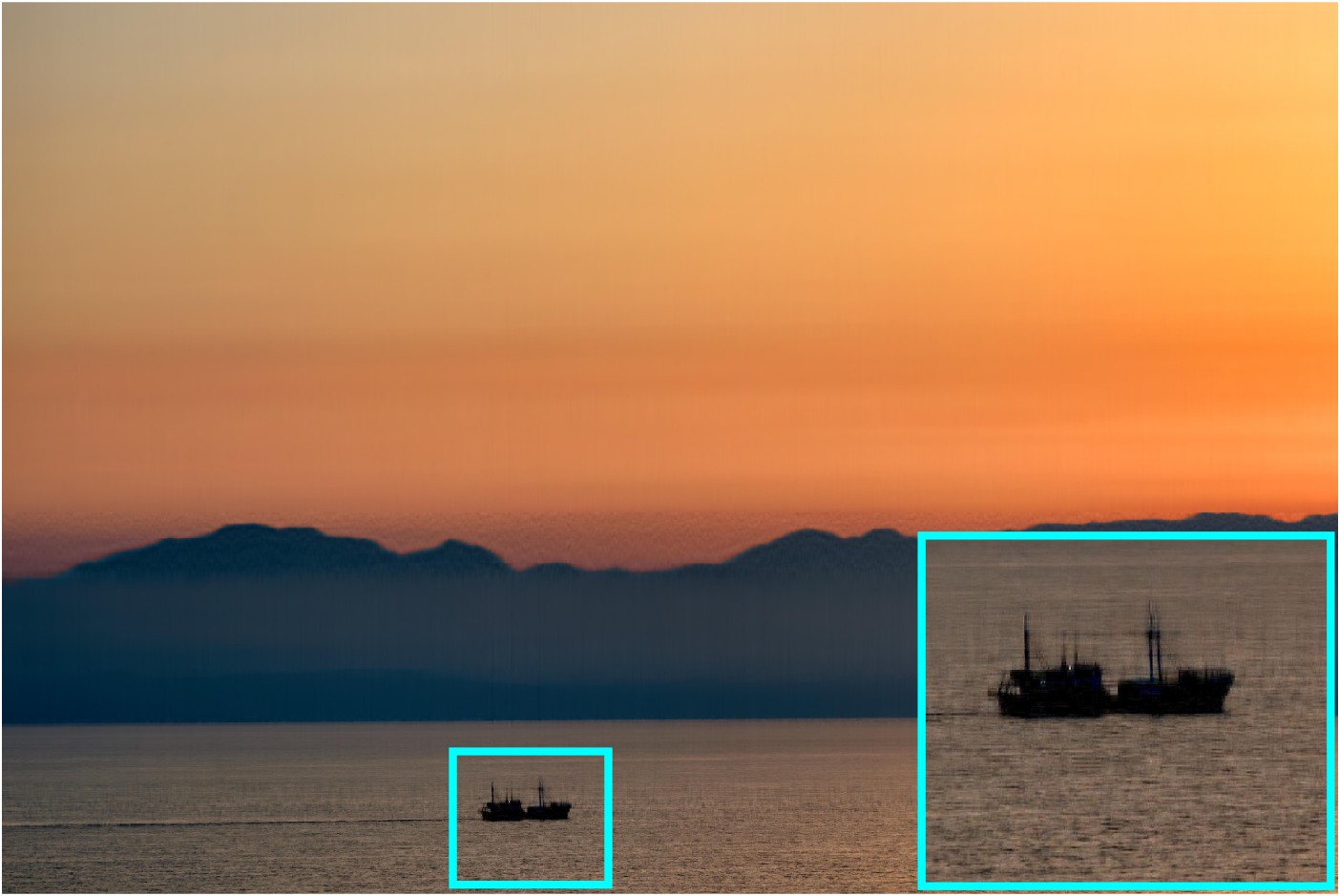} &
\includegraphics[width=0.672in]{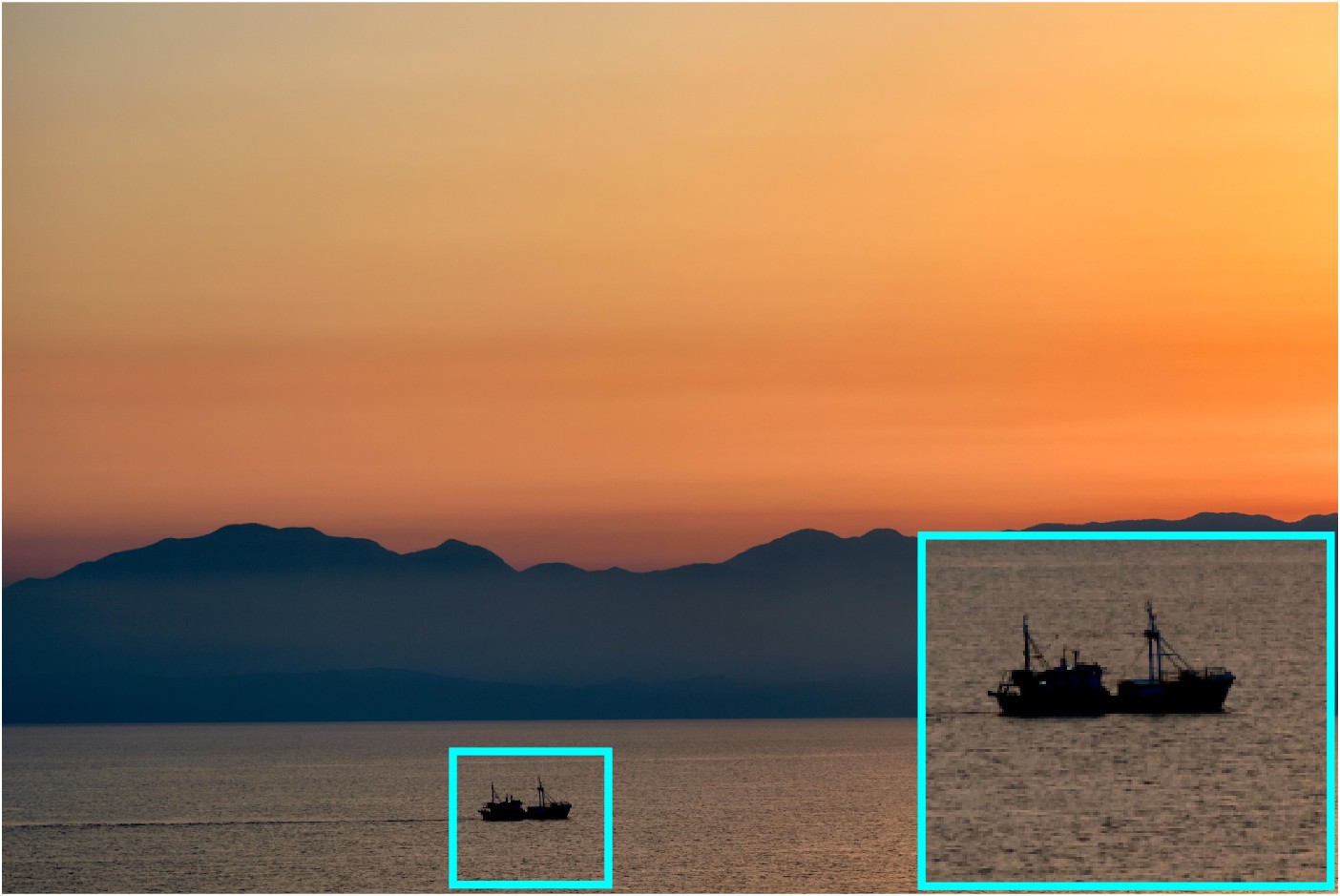} &
\includegraphics[width=0.672in]{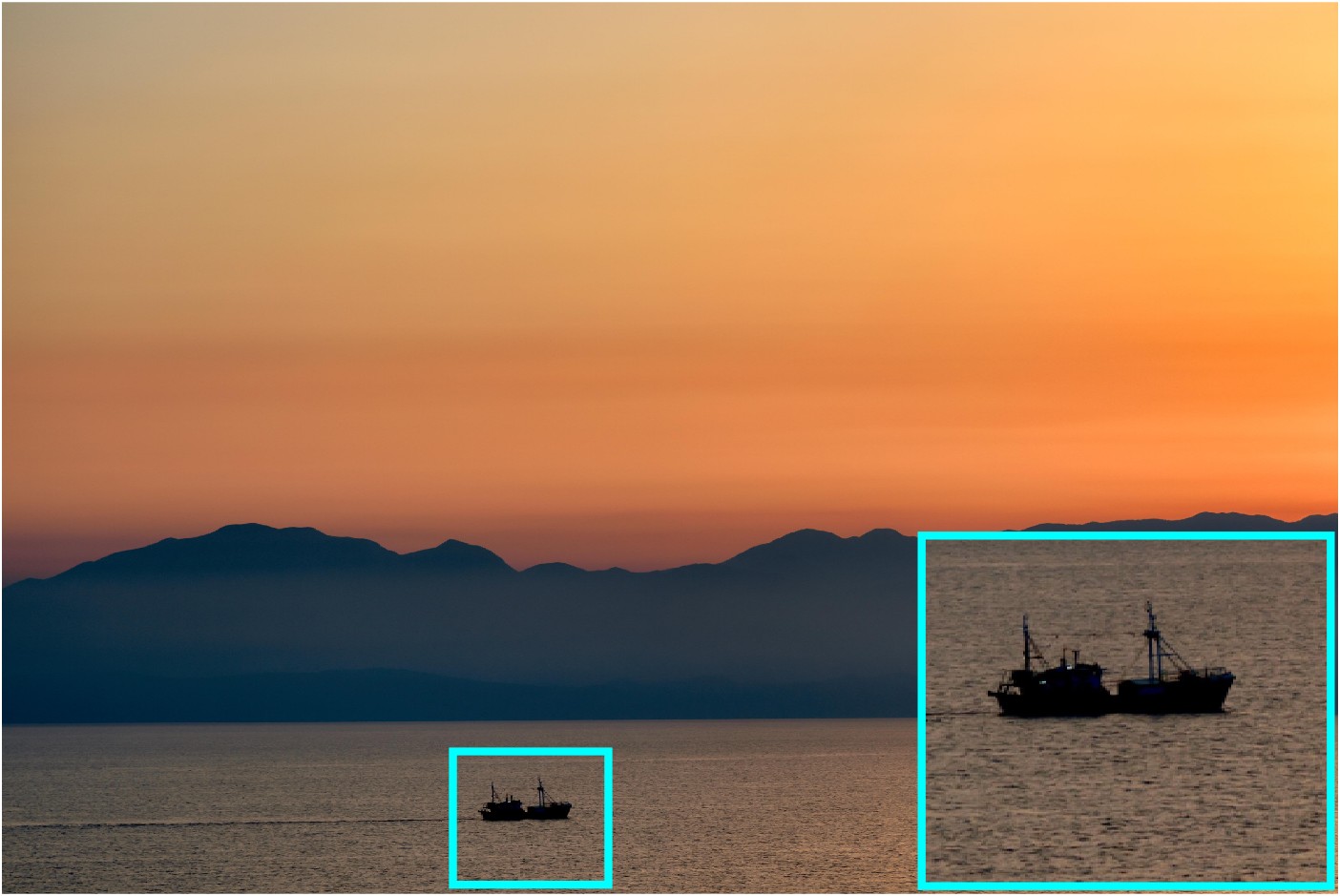} &
\includegraphics[width=0.672in]{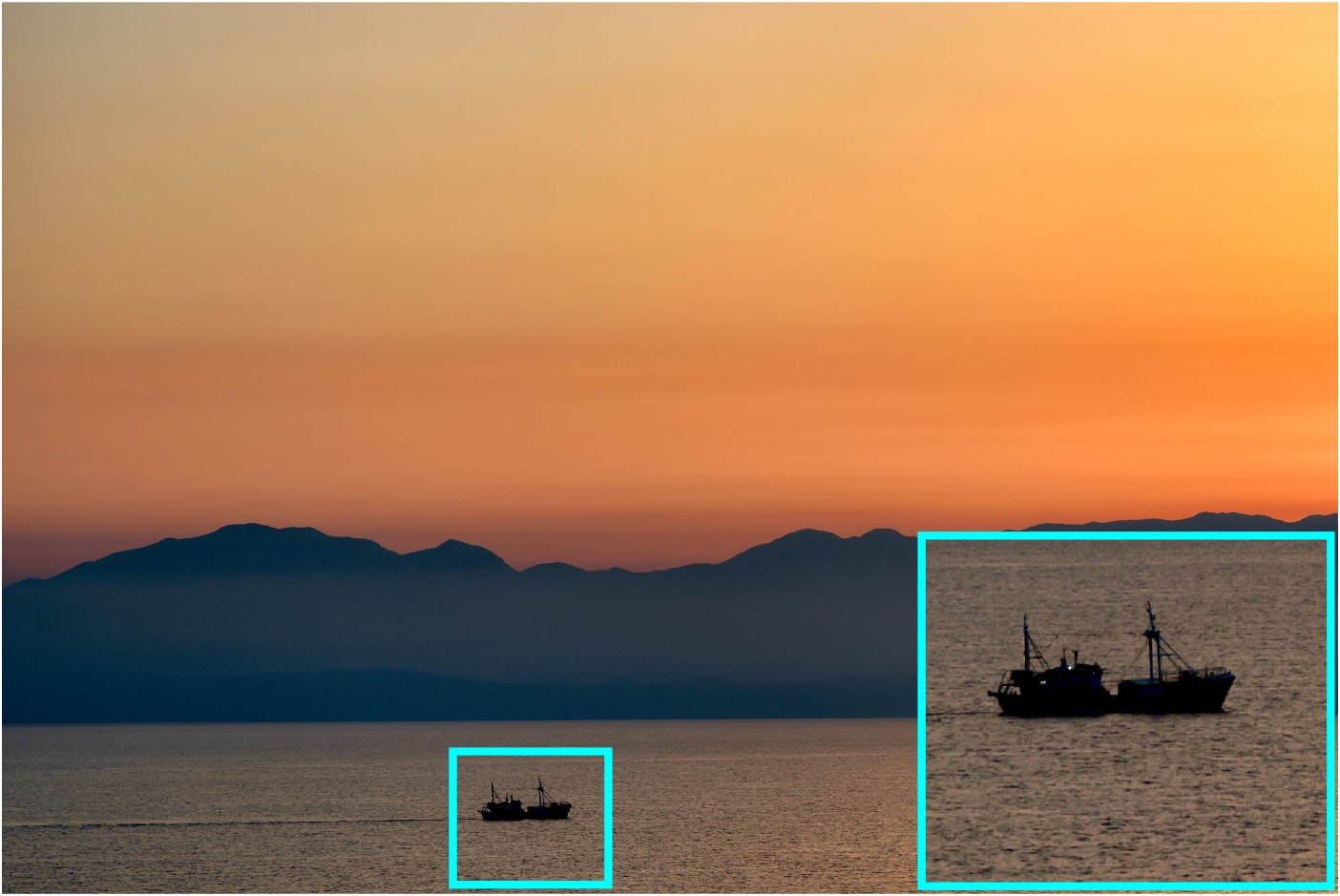}&
\includegraphics[width=0.672in]{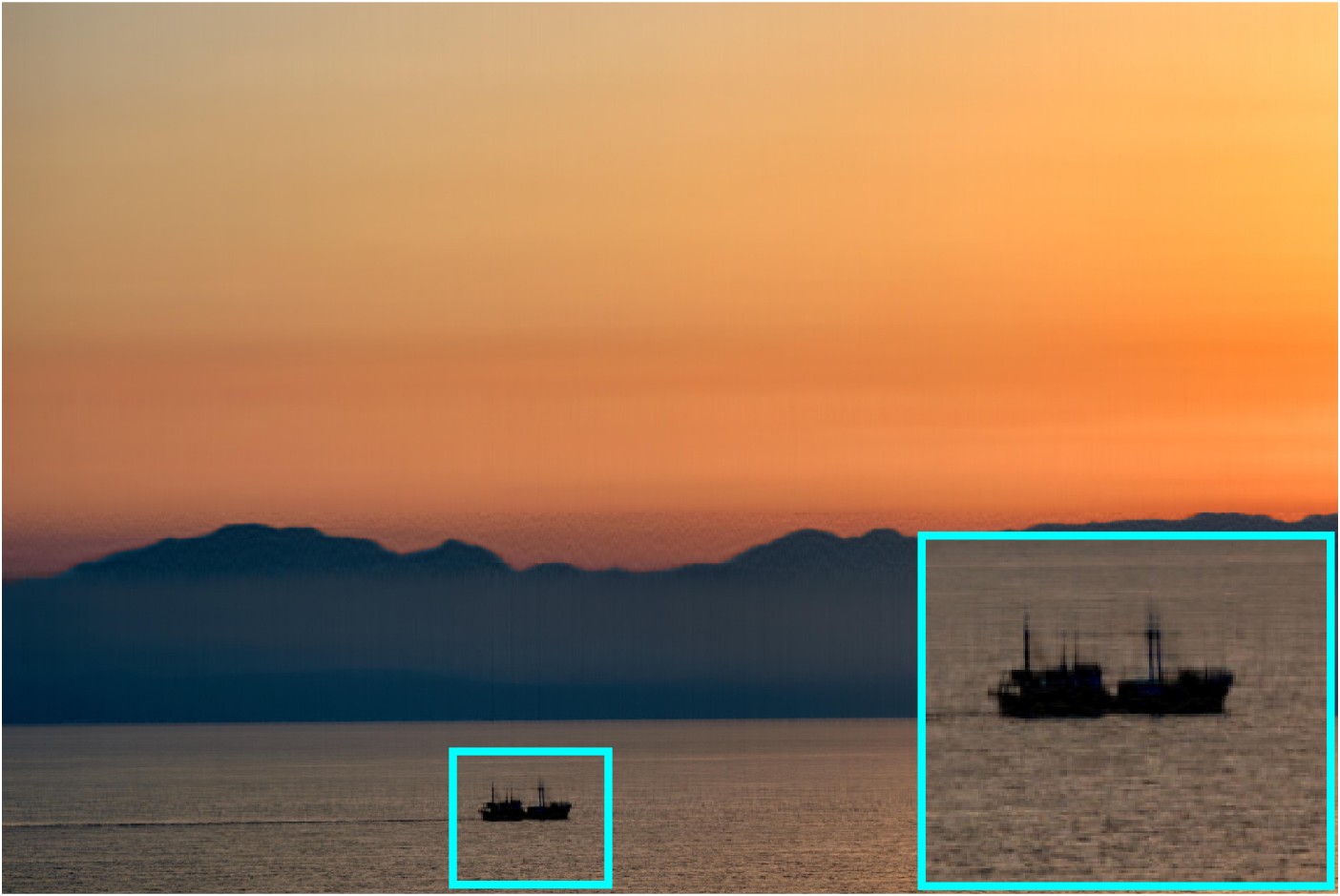} &
\includegraphics[width=0.672in]{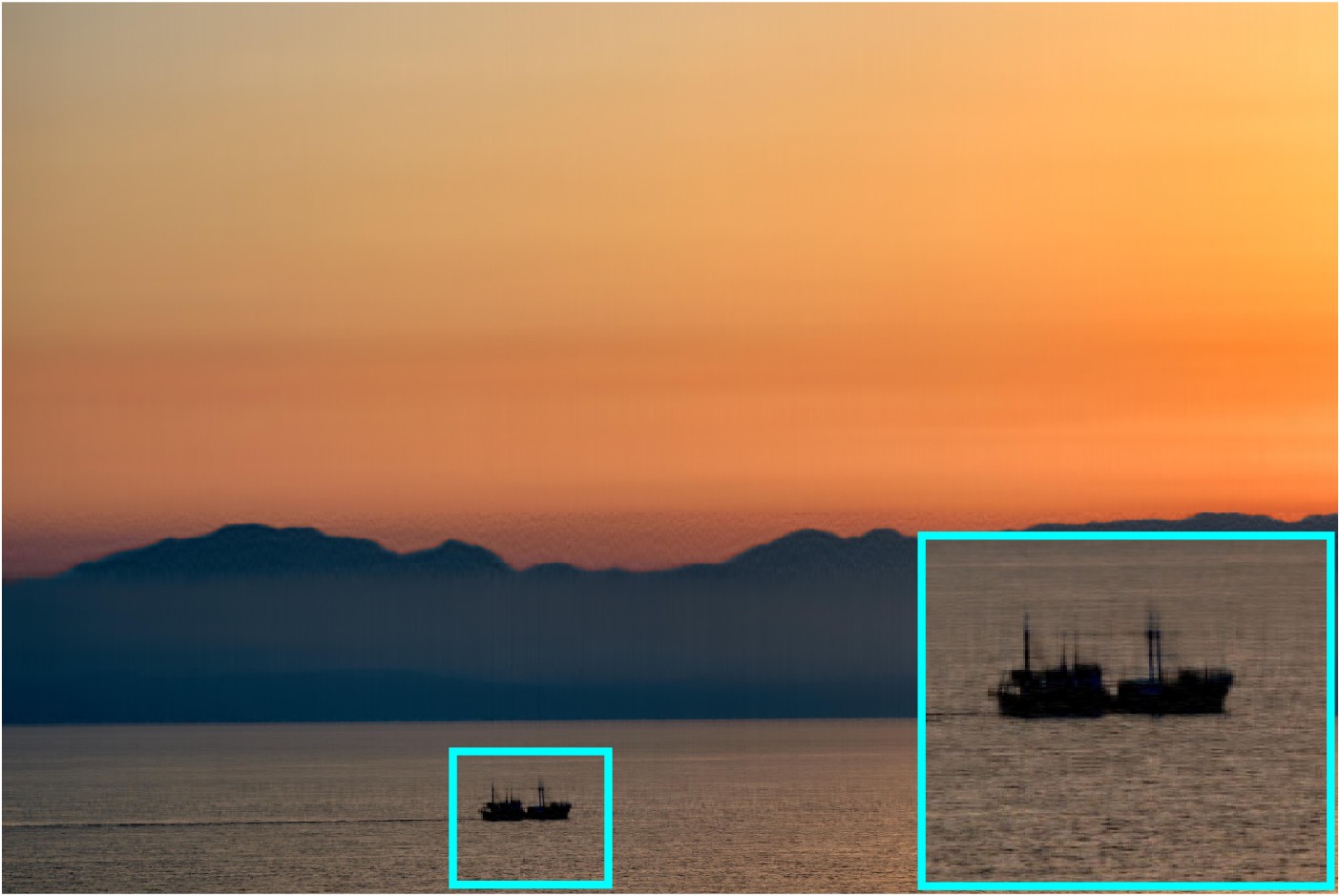} &
\includegraphics[width=0.672in]{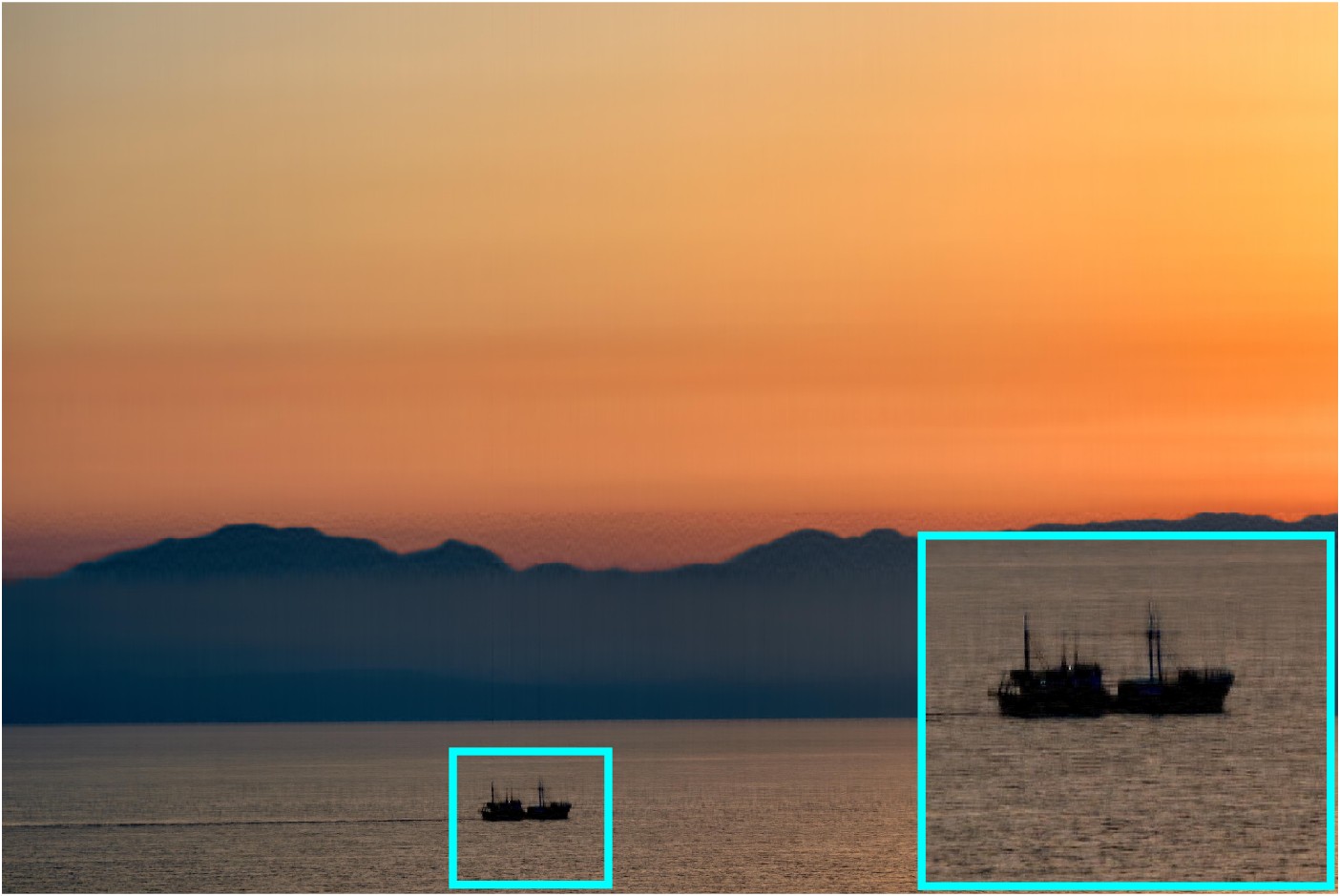} \\

\tiny PSNR:35.02 &\tiny PSNR:36.17 & \tiny PSNR:38.60 & \tiny PSNR:39.62 & \tiny PSNR:34.06 & \tiny PSNR:34.86 & \tiny PSNR:35.02\\
\tiny Time:17.45s & \tiny Time:10.70s & \tiny Time:14.26s & \tiny Time:15.95s & \tiny Time:9.33s & \tiny Time:10.03s& \tiny Time:10.21s\\

\tiny Original &\tiny TT-SVD & \tiny STP-SVD & \tiny TSTP-SVD &\tiny\makecell[c]{MSTP-SVD\\[-4pt](k=2)} & \tiny\makecell[c]{MSTP-SVD\\[-4pt](k=3)} &\tiny\makecell[c]{TMSTP-SVD\\[-4pt](k=2)} \\
\includegraphics[width=0.672in]{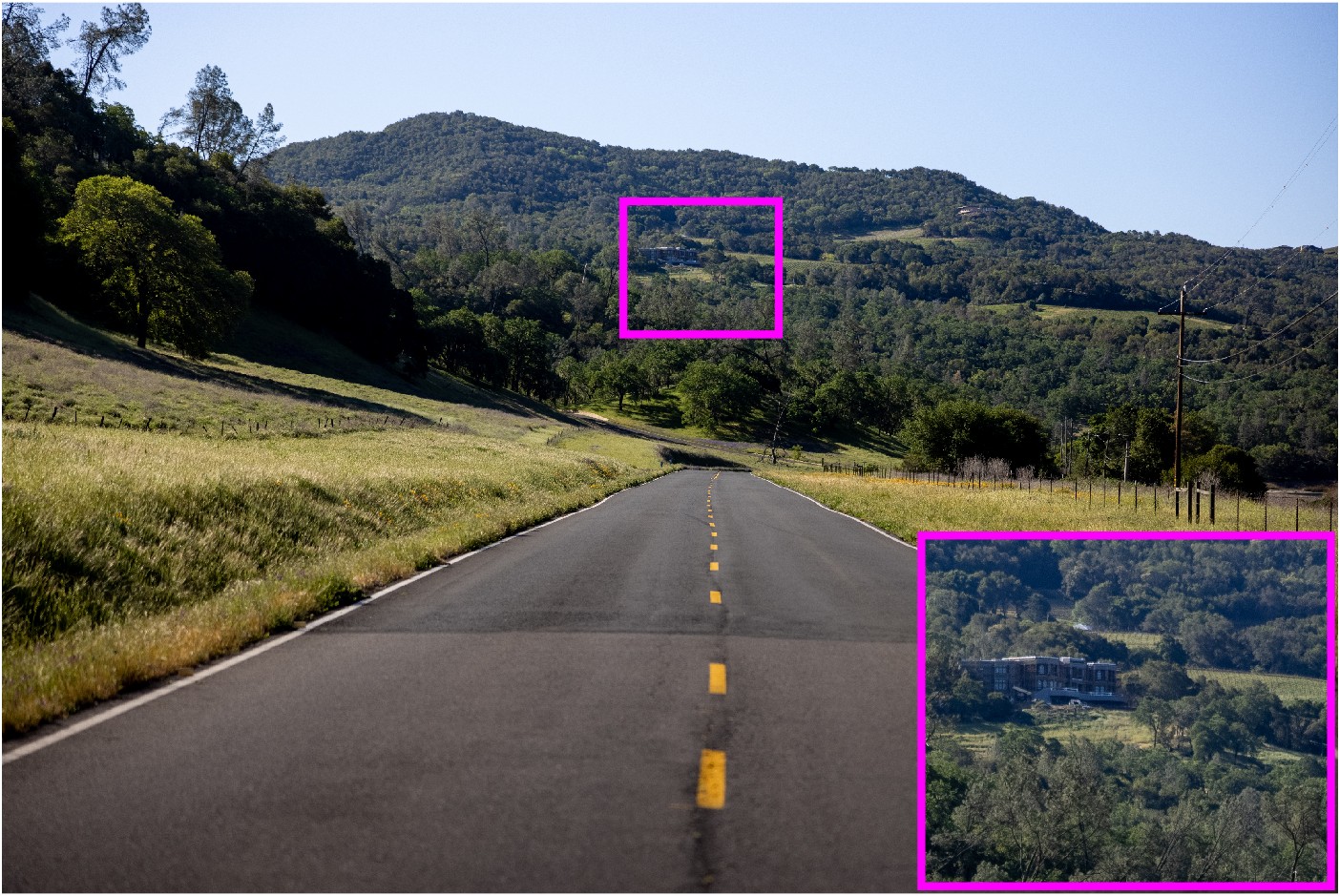} &
\includegraphics[width=0.672in]{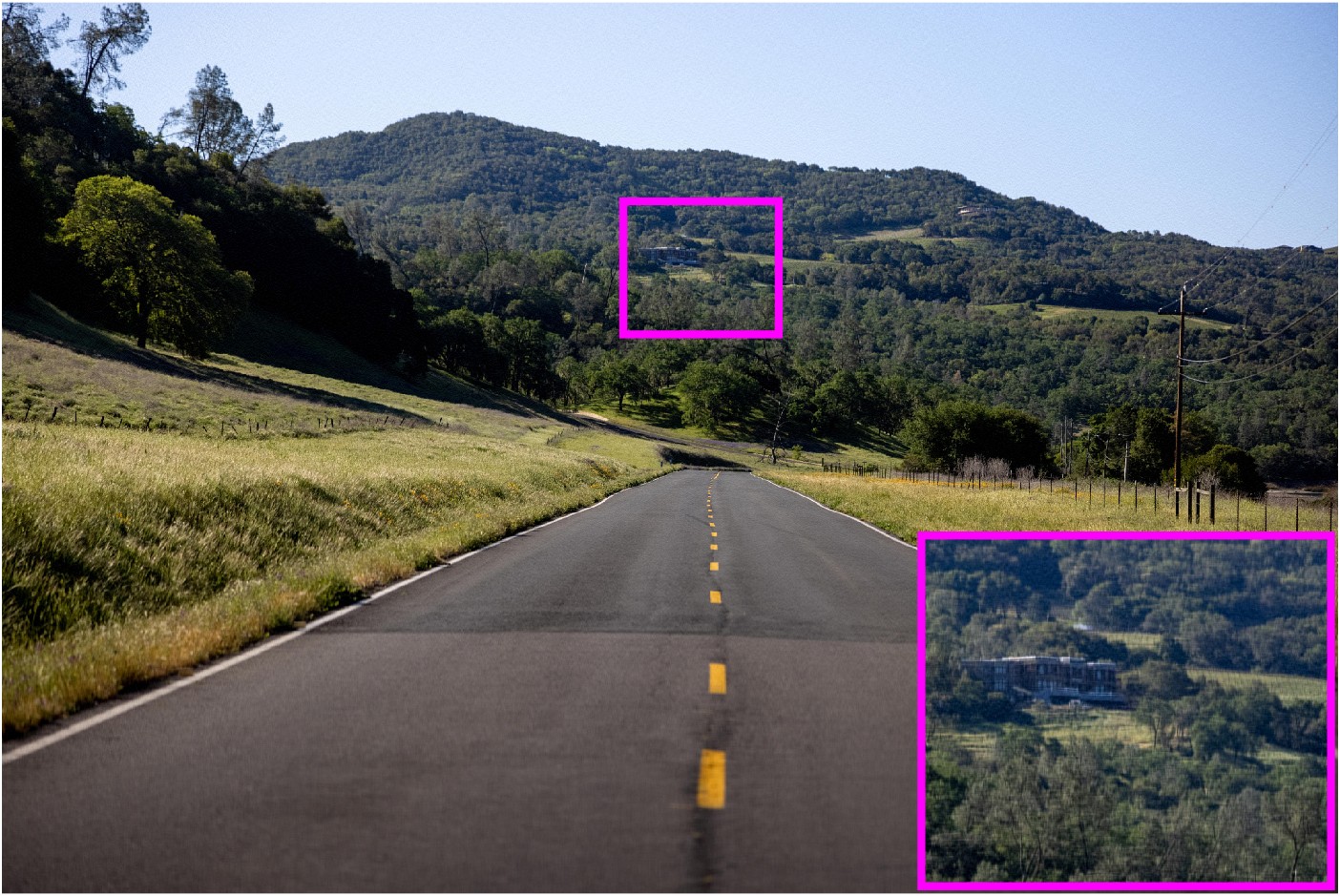} &
\includegraphics[width=0.672in]{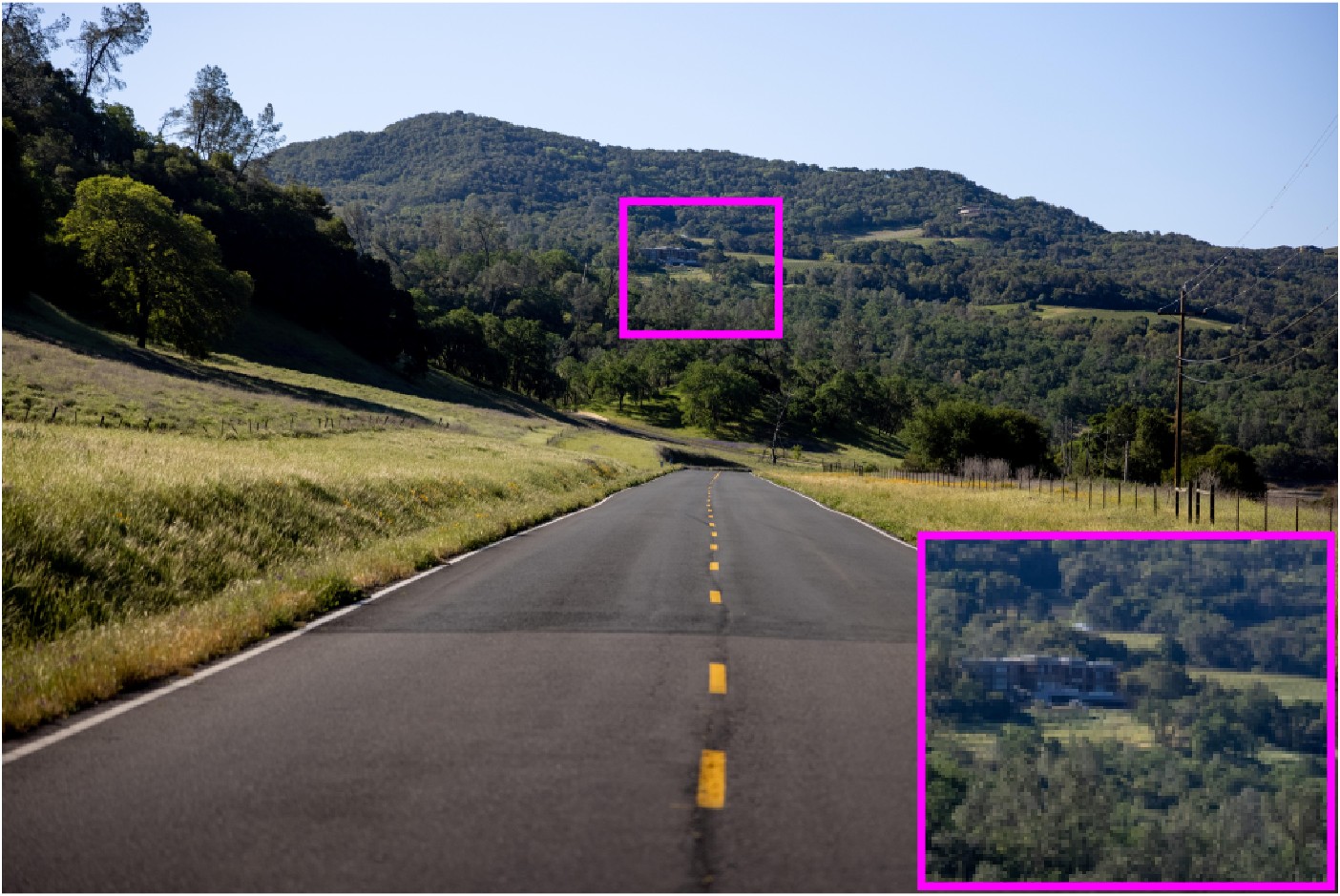} &
\includegraphics[width=0.672in]{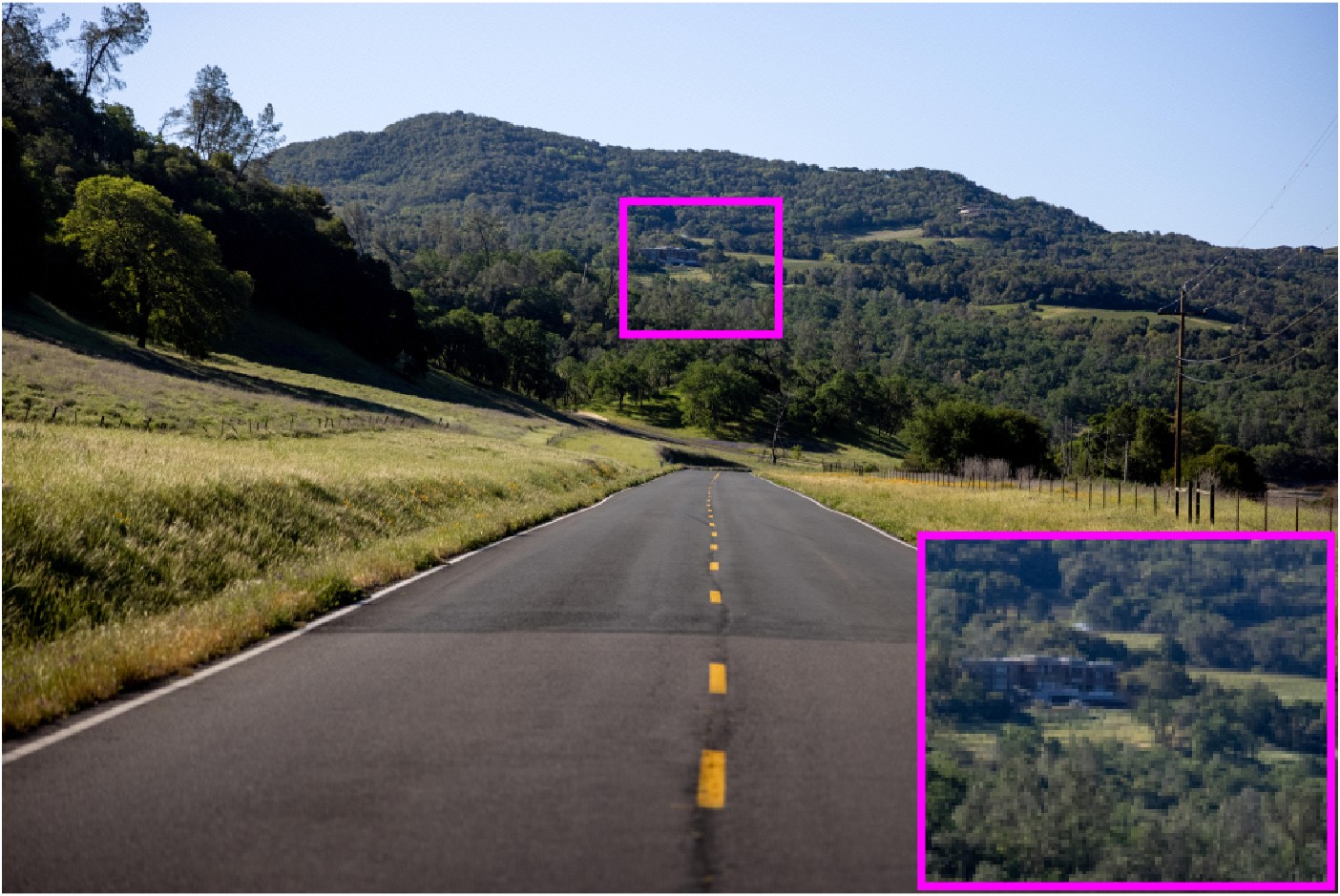} &
\includegraphics[width=0.672in]{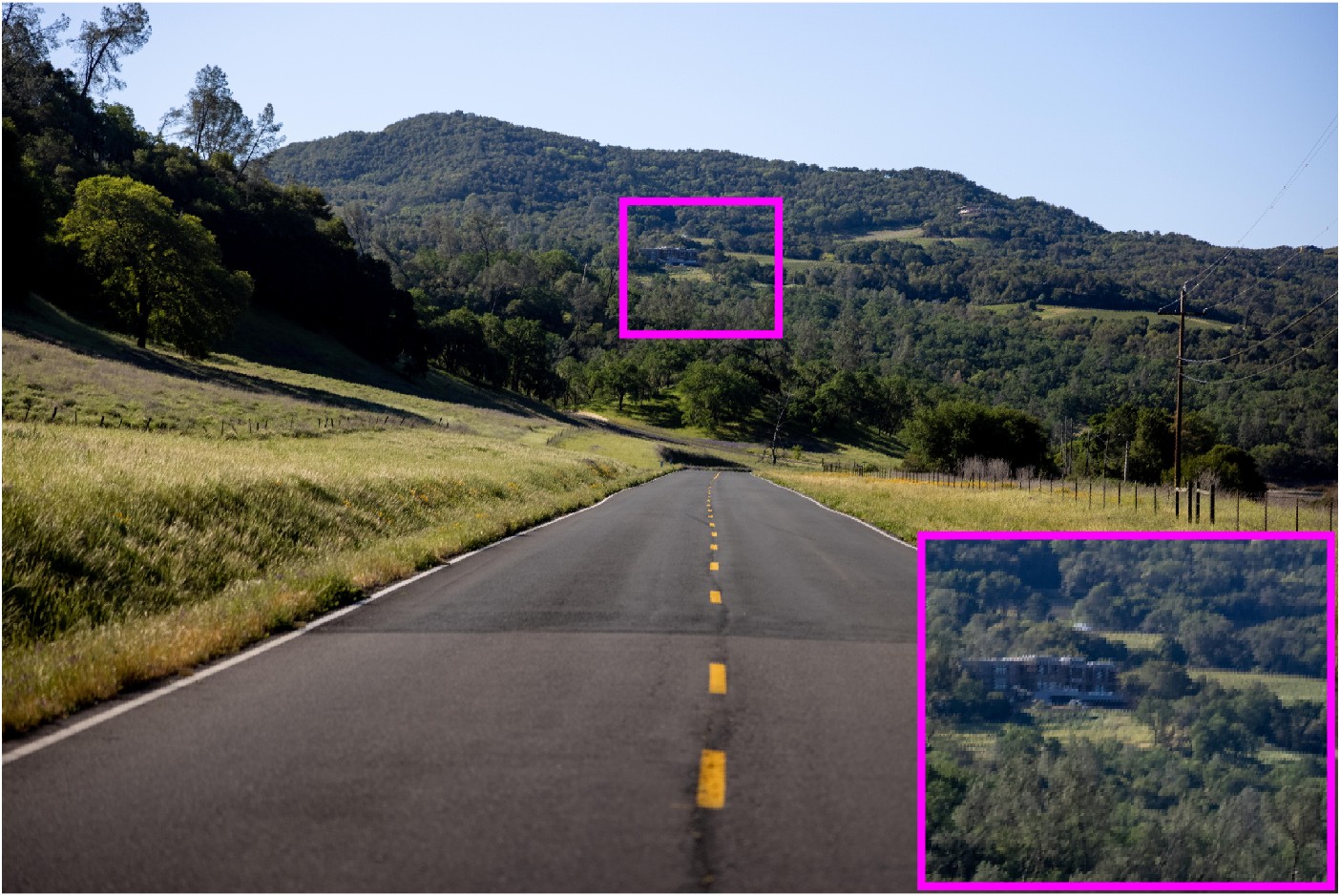} &
\includegraphics[width=0.672in]{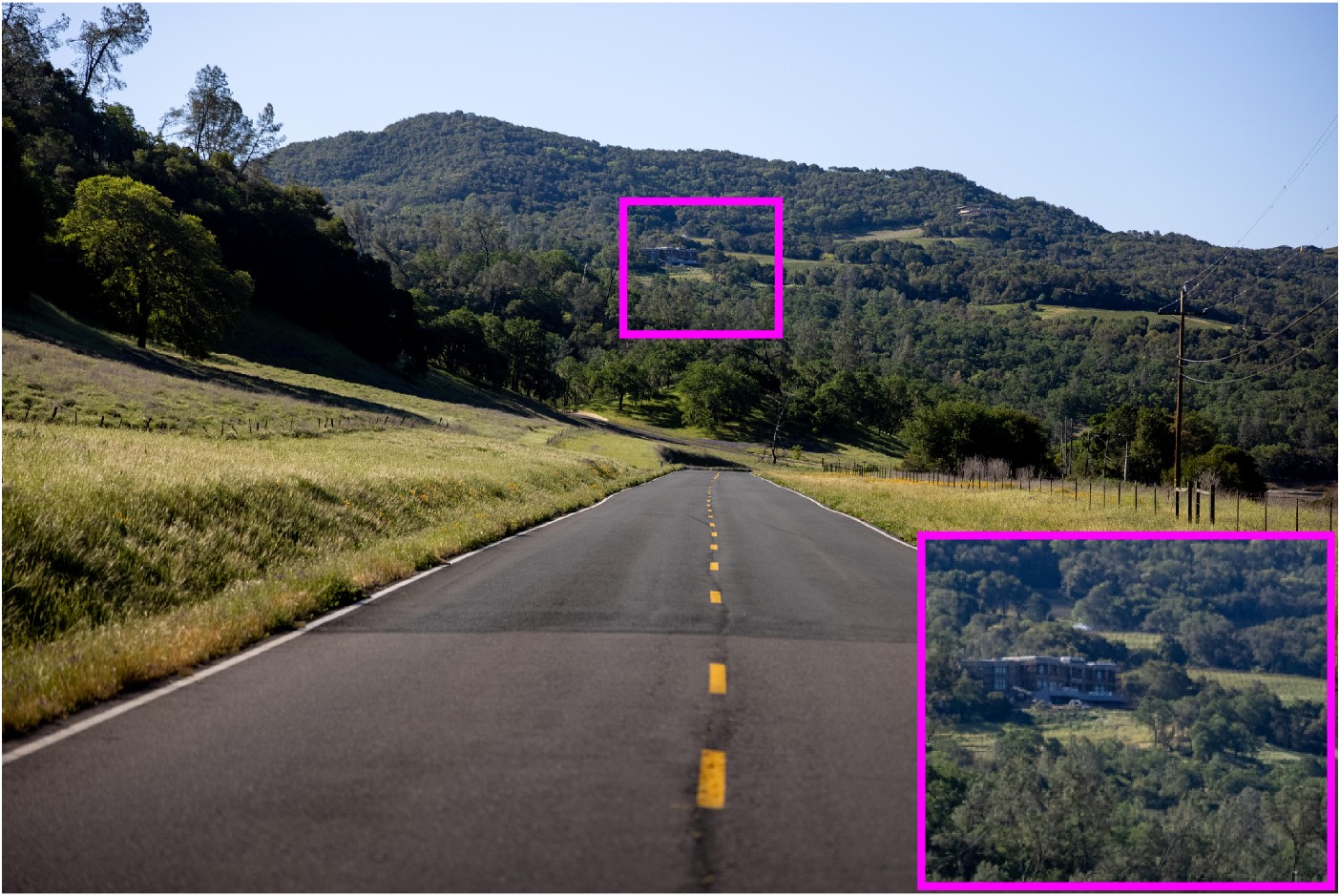} &
\includegraphics[width=0.672in]{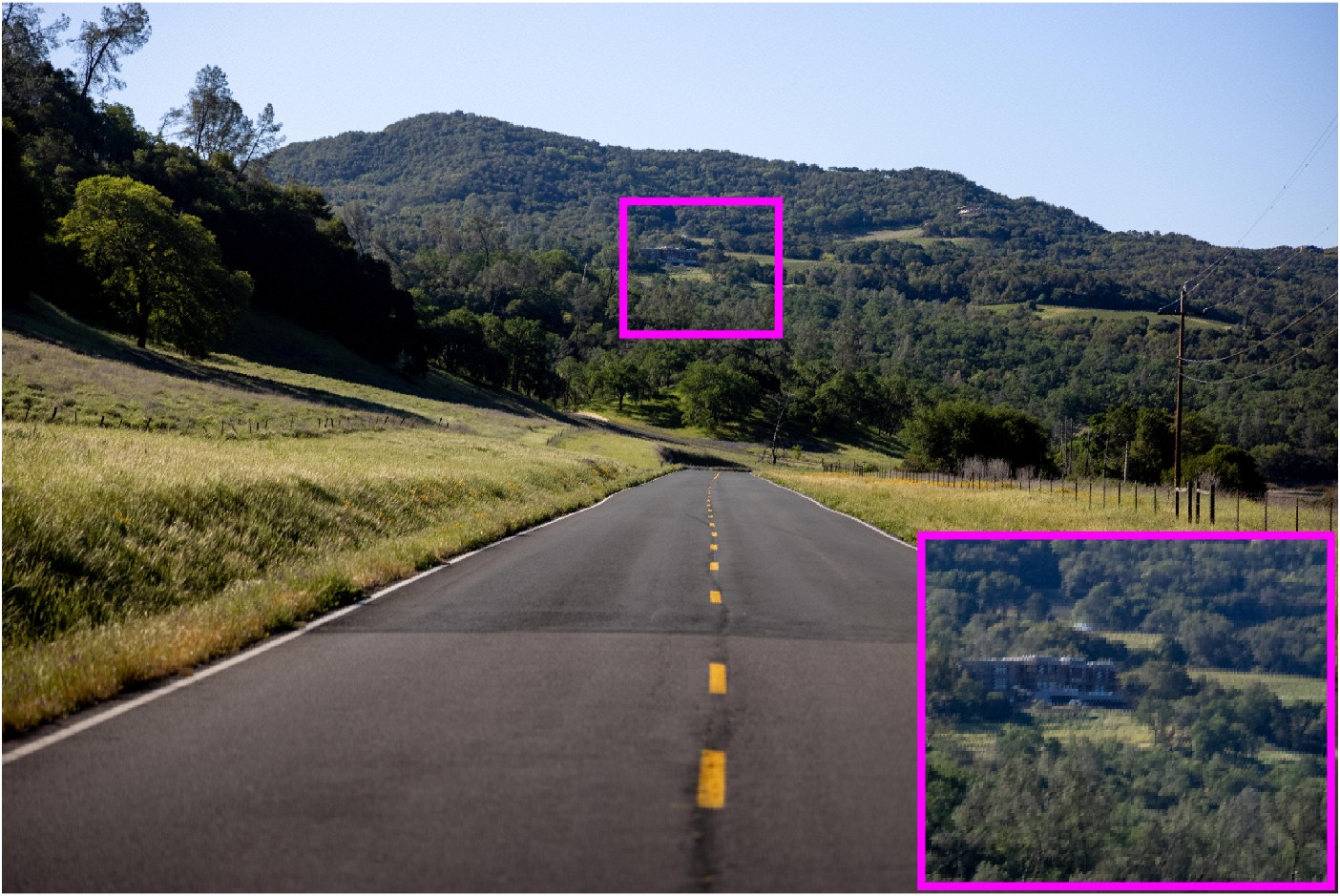} \\
&\tiny PSNR:29.08 & \tiny PSNR:27.40 & \tiny PSNR:26.98 &\tiny PSNR:28.53& \tiny PSNR:29.90 & \tiny PSNR:27.83\\
&\tiny Time:129.61s & \tiny Time:11.93s& \tiny Time:9.71s& \tiny Time:13.43s & \tiny Time:16.15s & \tiny Time:10.54s\\
\tiny\makecell[c]{TMSTP-SVD\\[-4pt](k=3)} &\tiny\makecell[c]{MRSTP-SVD\\[-4pt](k=1)} & \tiny\makecell[c]{MRSTP-SVD\\[-4pt](k=2)} & \tiny\makecell[c]{MRSTP-SVD\\[-4pt](k=3)}& \tiny\makecell[c]{TMRSTP-SVD\\[-4pt](k=1)} & \tiny\makecell[c]{TMRSTP-SVD\\[-4pt](k=2)} & \tiny\makecell[c]{TMRSTP-SVD\\[-4pt](k=3)}\\
\includegraphics[width=0.672in]{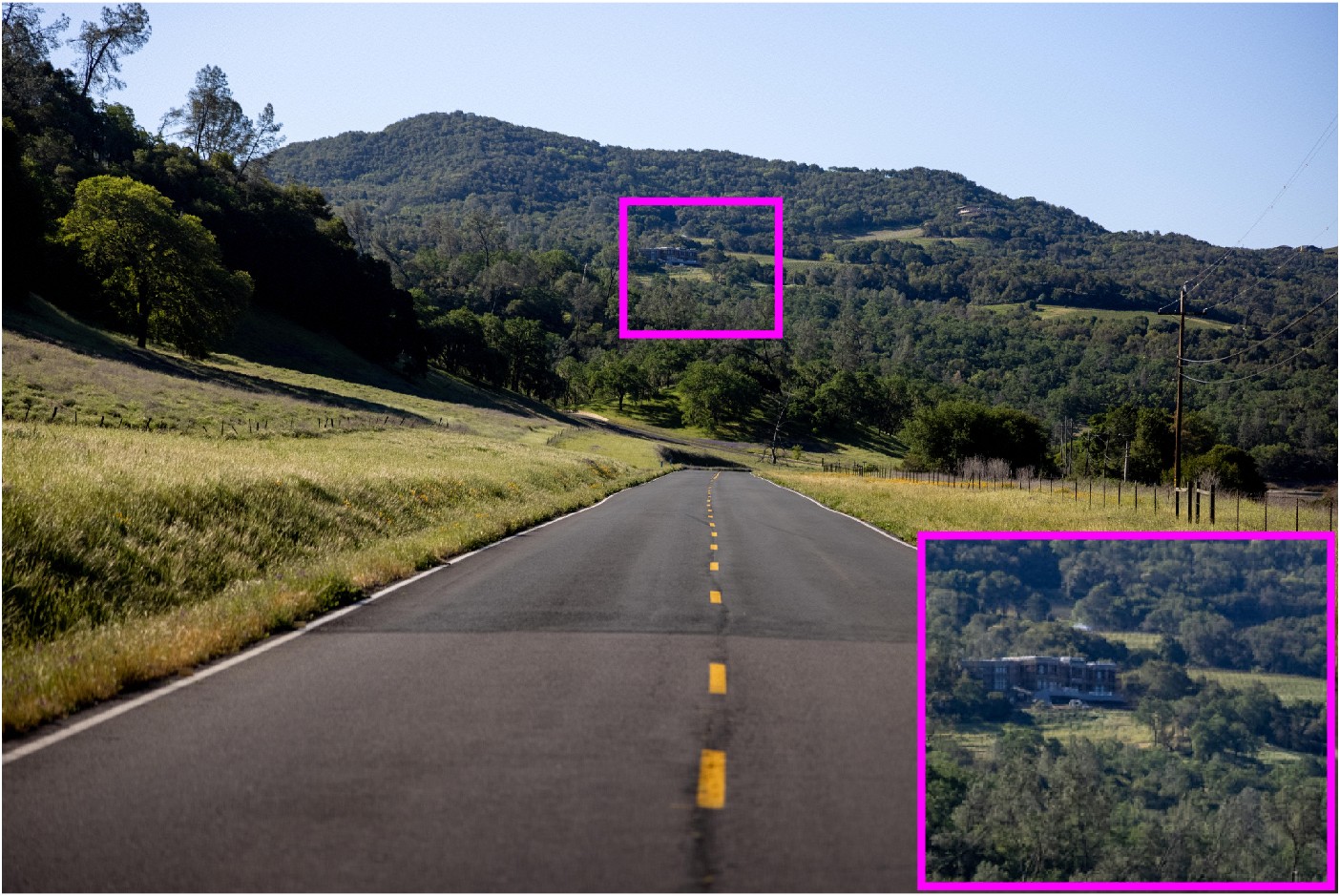} &
\includegraphics[width=0.672in]{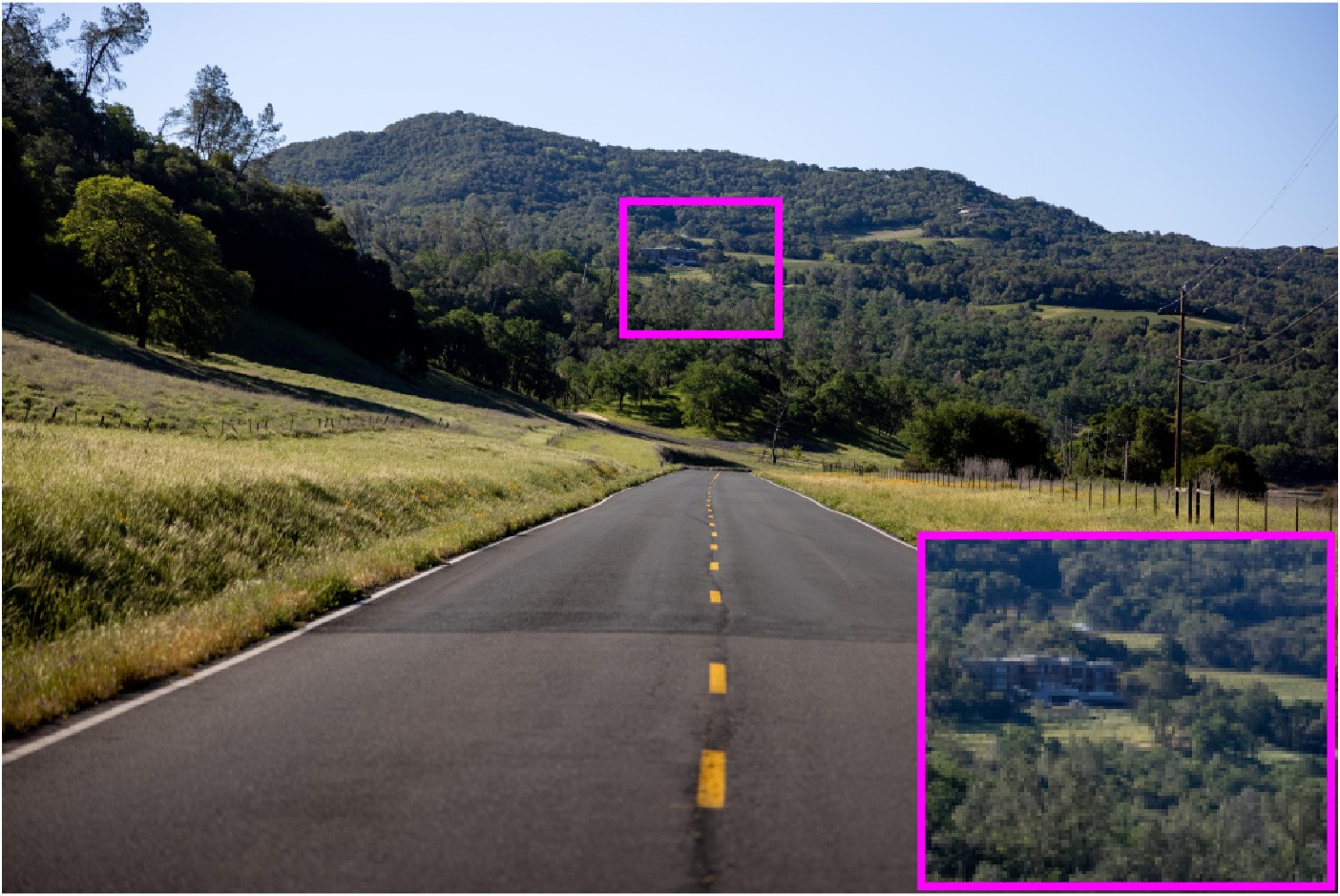} &
\includegraphics[width=0.672in]{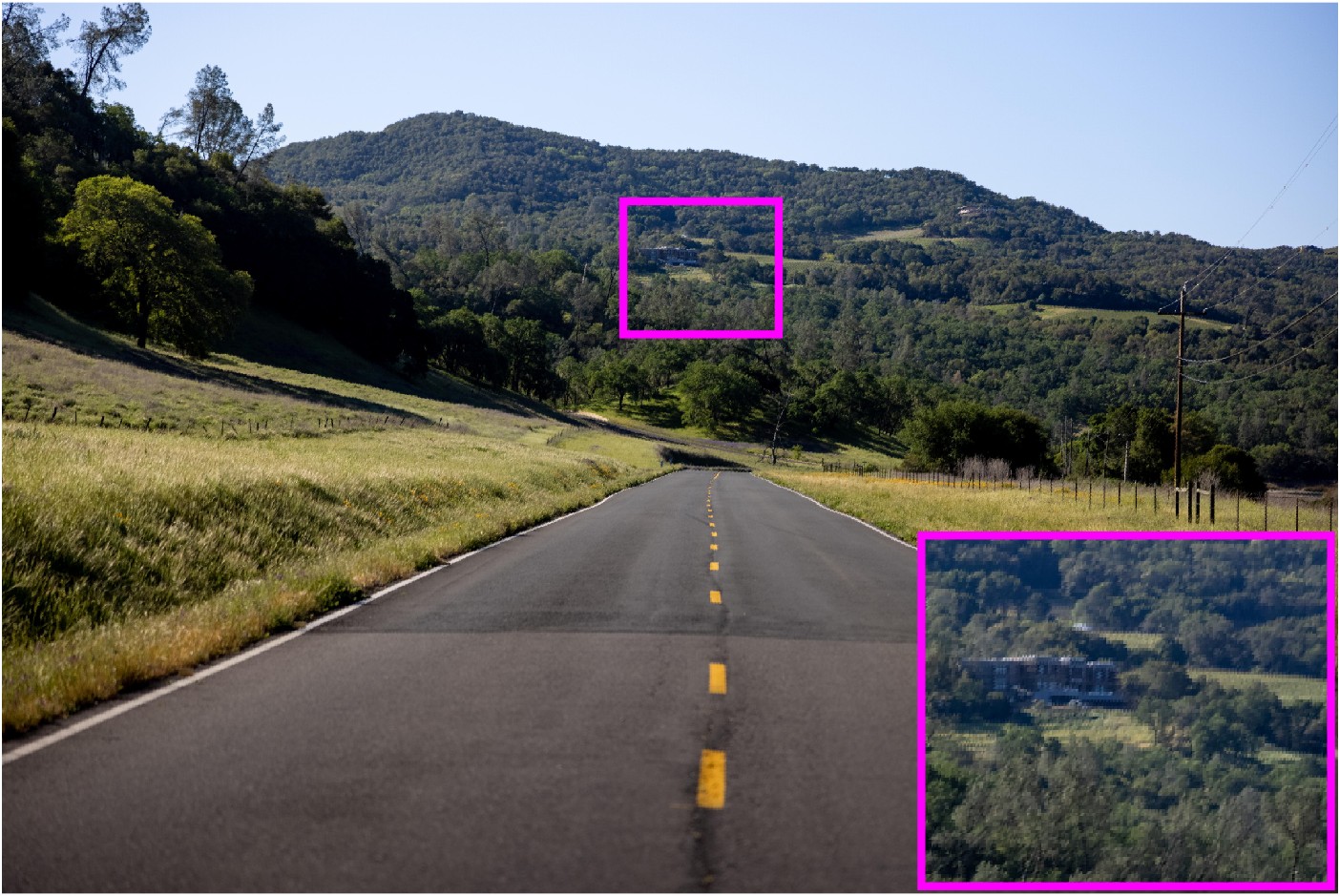} &
\includegraphics[width=0.672in]{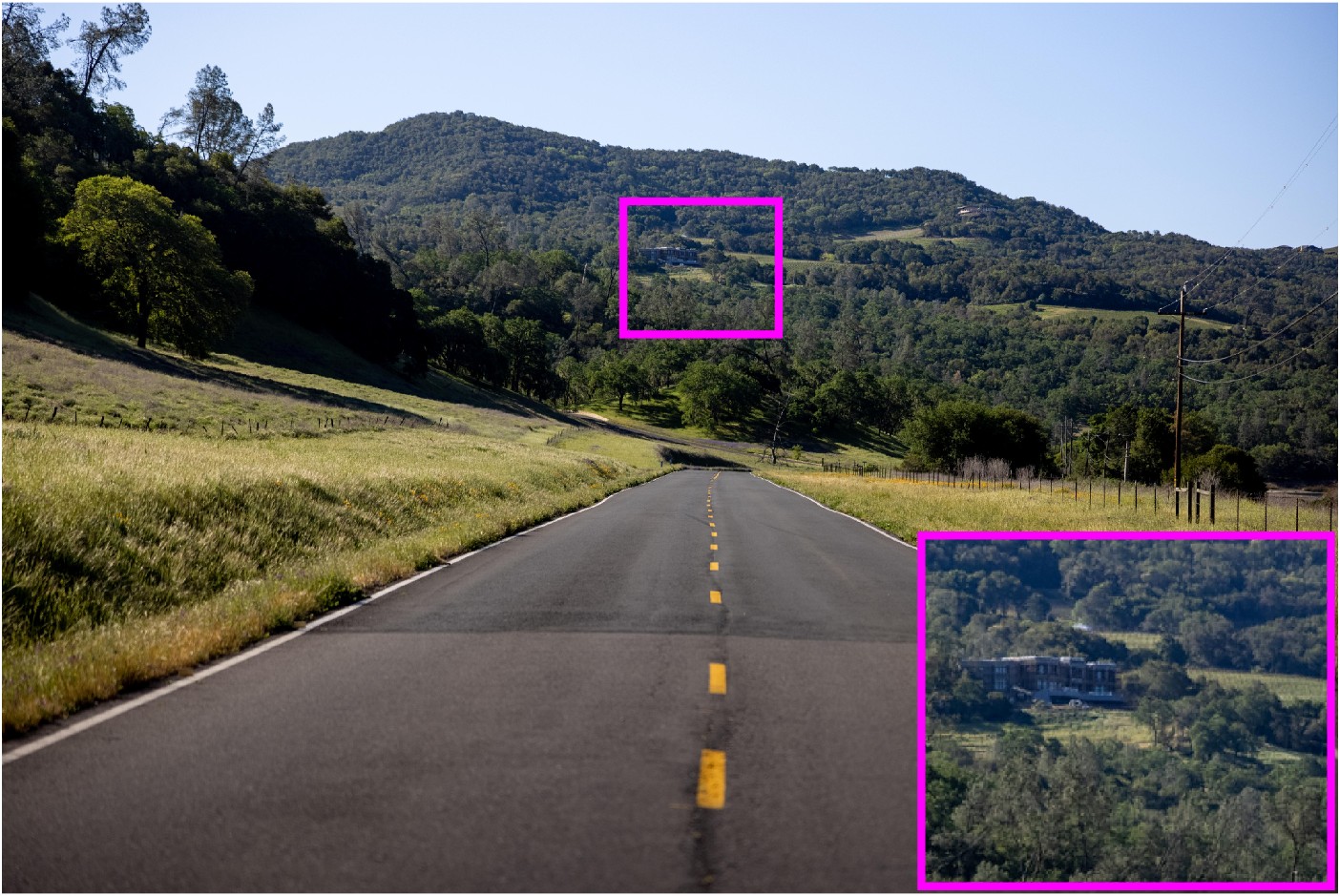}&
\includegraphics[width=0.672in]{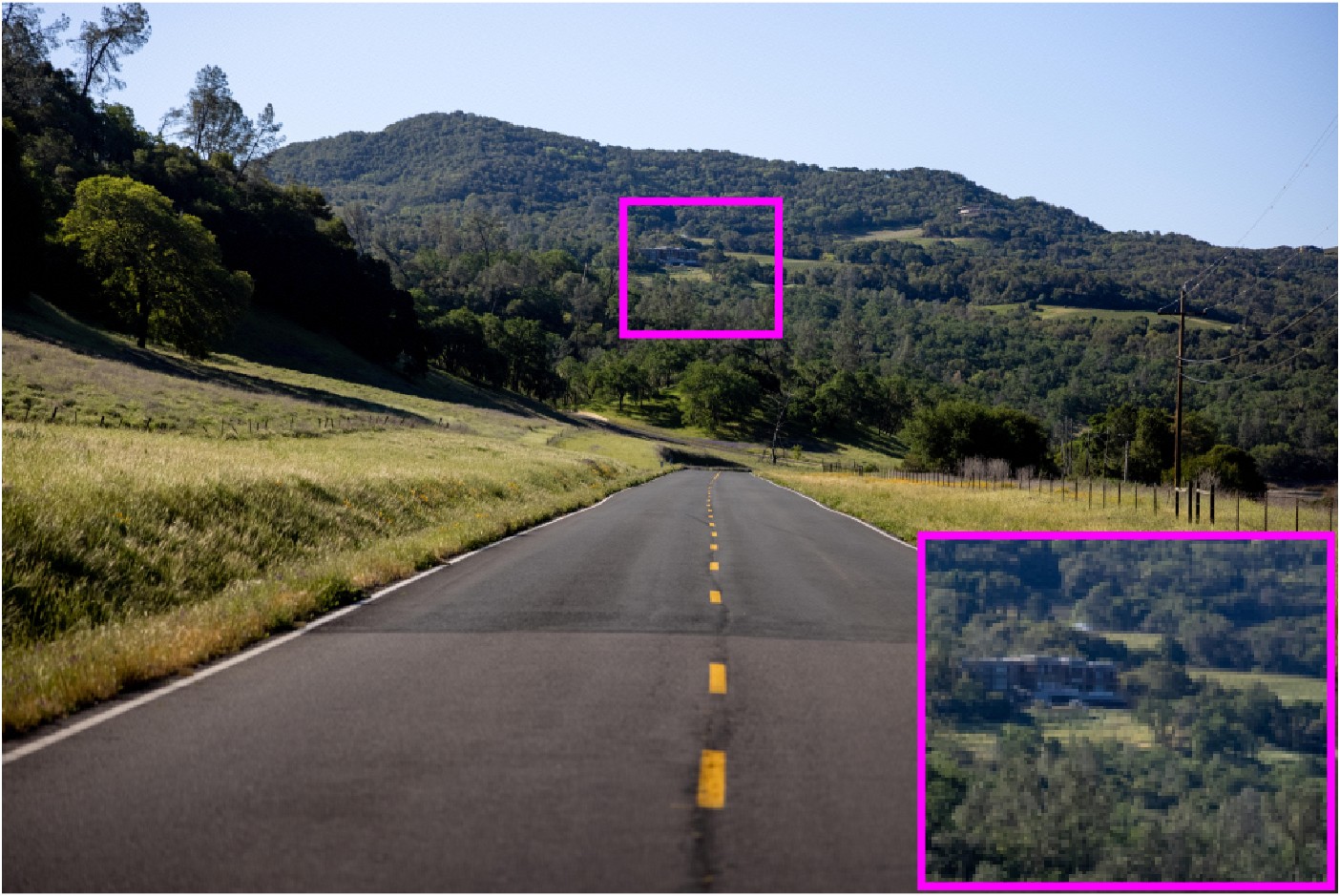} &
\includegraphics[width=0.672in]{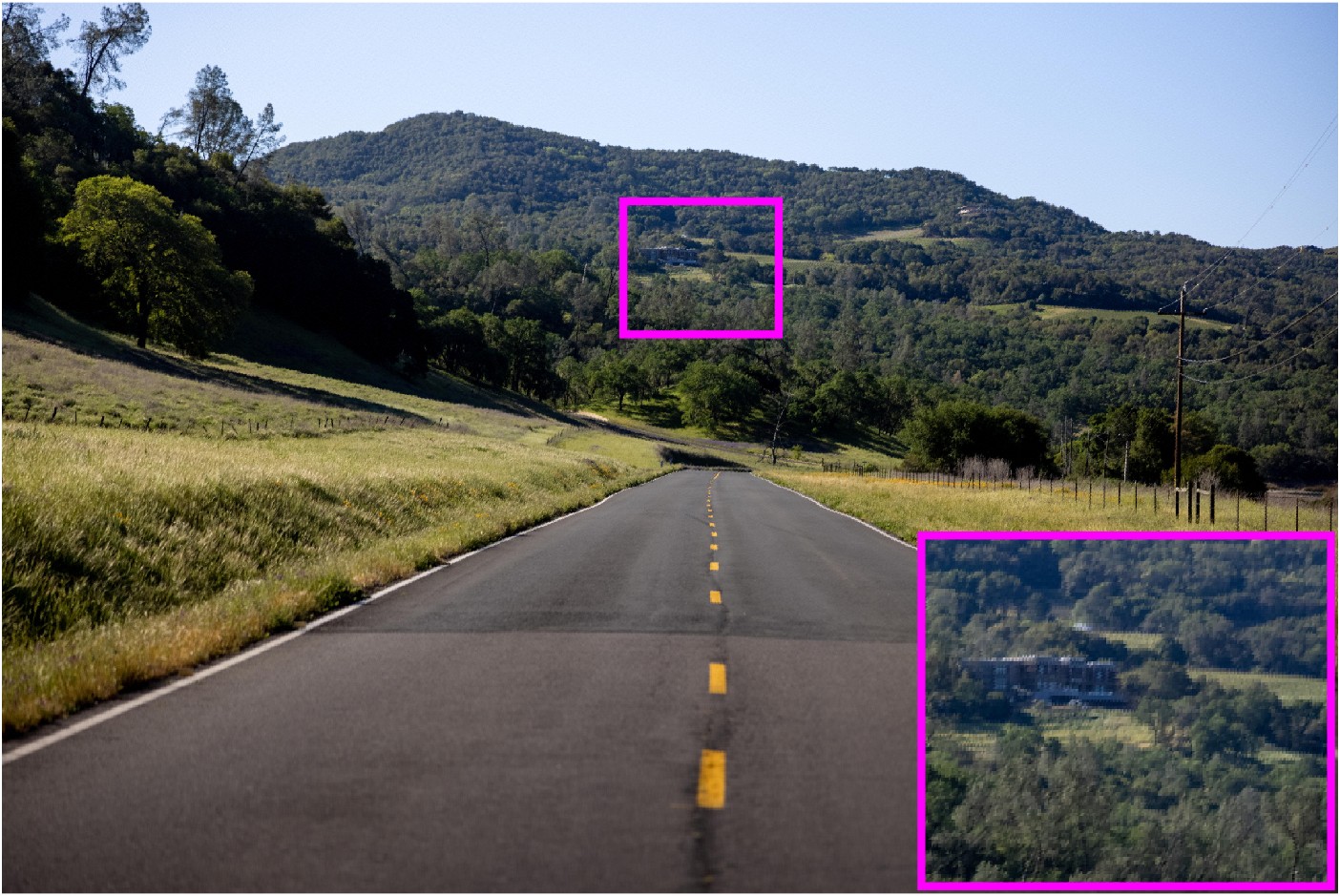} &
\includegraphics[width=0.672in]{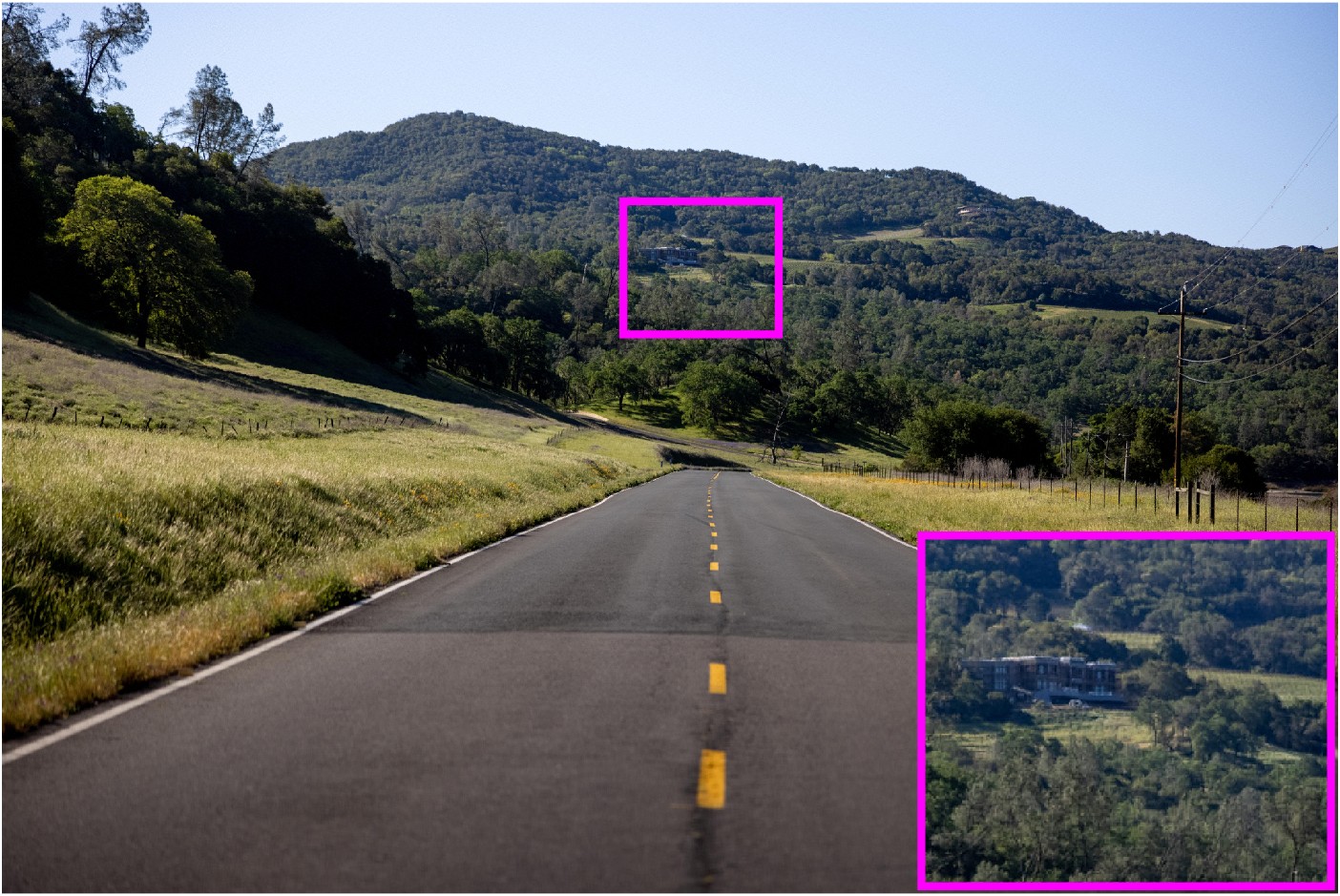}  \\
\tiny PSNR:28.78& \tiny PSNR:27.39 & \tiny PSNR:28.52 & \tiny PSNR:29.88 & \tiny PSNR:26.98 & \tiny PSNR:27.83 & \tiny PSNR:28.76\\
\tiny Time:12.74s & \tiny Time:6.99s & \tiny Time:9.17s & \tiny Time:10.29s & \tiny Time:5.47s & \tiny Time:6.03s & \tiny Time:7.06s\\

\end{tabular}
\caption{Visual reconstruction performance, PSNR and runtime of competing and proposed methods on two representative test images.}
\label{fig:image_compression}
\end{figure}

Visual outputs together with per-image PSNR and runtime measurements for Lake and Road are presented in Fig.~\ref{fig:image_compression}, whereas the corresponding results for Night and Fruit are provided in the Supplementary Material.
As shown in Fig.~\ref{fig:image_compression}, our proposed methods can better preserve local structural details of original images. By contrast, baseline methods tend to produce over-smoothed outputs and lose subtle image contents.
For the four test images, STP block partition sizes $(m_2, n_2)$ are adaptively assigned according to spatial resolution to balance approximation flexibility and computational efficiency. Specifically, $(m_2, n_2)=(4,4)$ for Lake and Night, $(4,6)$ for Fruit, and $(8,8)$ for the higher-resolution Road image.
Frontal-slice truncation ranks of TT-SVD are 50, 100, 100 and 250 for Lake, Night, Fruit and Road, respectively. Within MSTP-SVD and MRSTP-SVD, the truncation rank matrix is $\mathbf{R}=c\mathbf{J}_{k\times l}$, where $c$ denotes a uniform rank shared over all terms and frontal slices, and $\mathbf{J}_{k\times l}$ is the $k\times l$ all-ones matrix. We set $c$ equal to the TT-SVD truncation rank for each image to guarantee fair comparisons under identical rank budgets.
For the randomized MRSTP-SVD variant, we use oversampling $s=5$ and power iterations $q=1$, achieving a desirable trade-off between reconstruction accuracy and computational overhead.

We further conduct quantitative image compression experiments on 20 RGB test images. Fig.~\ref{fig:image20_compare} presents per-image PSNR, SSIM and runtime, and Table~\ref{tab:image compression} summarizes the averaged performance. As shown in Table~\ref{tab:image compression}, deterministic MSTP-SVD (\(k=3\)) achieves 33.67 dB average PSNR and 0.959 average SSIM, outperforming single-term STP-SVD by 4.96 dB and baseline TT-SVD by 8.07 dB in PSNR, validating the superior representation capability of multi-term semi-tensor decomposition. The randomized MRSTP-SVD (\(k=3\)) delivers nearly identical reconstruction quality (33.64 dB PSNR) while cutting average runtime from 7.57 s to 5.31 s, achieving a favorable accuracy-efficiency trade-off.
\begin{figure}[!ht]
  \centering
\includegraphics[width=\textwidth]{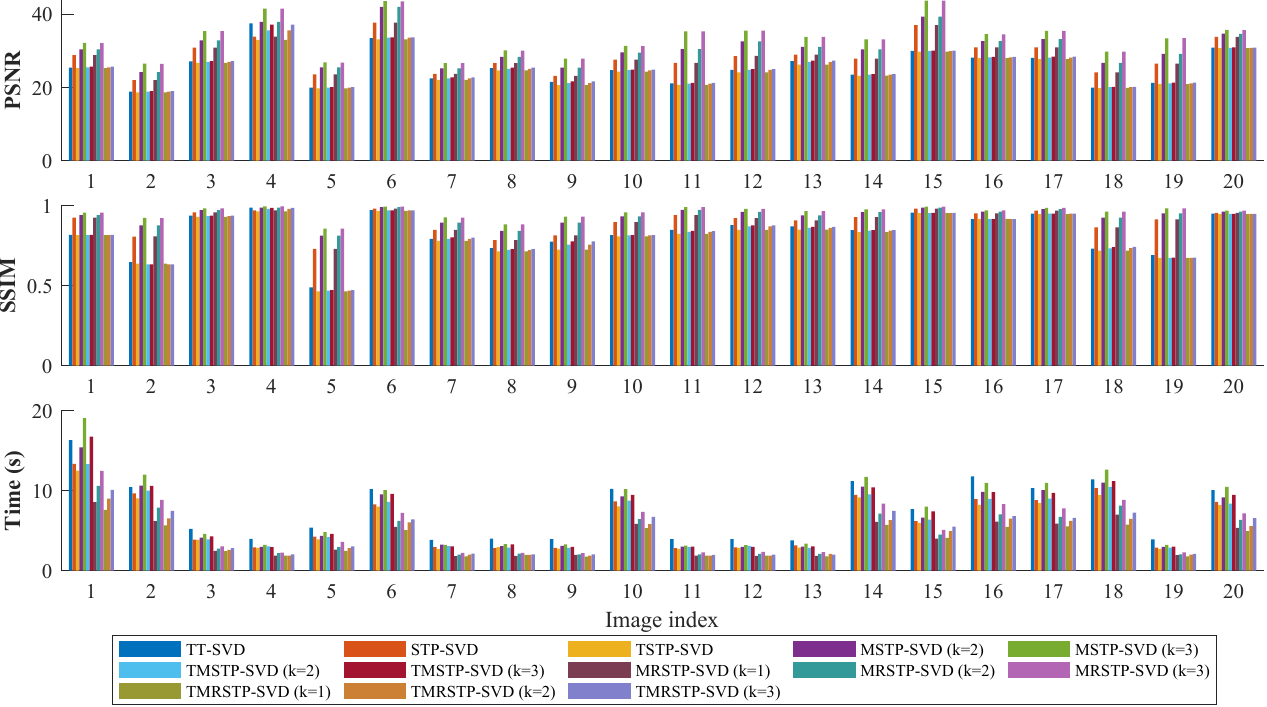}
    \caption{Per-image quantitative comparisons of PSNR, SSIM and runtime over twenty test images.}
\label{fig:image20_compare}  
\end{figure}

\begin{table}[!htbp]
\centering
\caption{Average PSNR, SSIM and runtime comparison of competing and proposed algorithms over twenty test images.}
\label{tab:image compression}
\footnotesize
\setlength{\tabcolsep}{1.0pt}
\renewcommand{\arraystretch}{0.8}
\begin{tabular*}{\linewidth}{l @{\extracolsep{\fill}} c c c *{2}{c} *{2}{c} *{3}{c} *{3}{c}}
\hline
 \multirow{2}{*}{Metric}
& \multirow{2}{*}{\scriptsize TT-SVD}
& \multirow{2}{*}{\scriptsize STP-SVD}
& \multirow{2}{*}{\scriptsize TSTP-SVD}
& \multicolumn{2}{c}{\shortstack{\scriptsize MSTP-SVD}}
& \multicolumn{2}{c}{\shortstack{\scriptsize TMSTP-SVD}}
& \multicolumn{3}{c}{\shortstack{\scriptsize MRSTP-SVD}}
& \multicolumn{3}{c}{\shortstack{\scriptsize TMRSTP-SVD}} \\
\cline{5-6} \cline{7-8} \cline{9-11} \cline{12-14}
& & &
& \scriptsize$k=2$ & \scriptsize$k=3$
& \scriptsize$k=2$ & \scriptsize$k=3$
& \scriptsize$k=1$ & \scriptsize$k=2$ & \scriptsize$k=3$
& \scriptsize$k=1$ & \scriptsize$k=2$ & \scriptsize$k=3$ \\
\hline
PSNR 
& 25.60 & 28.71 & 25.01
& 31.13 & \textbf{33.67}
& 25.47 & 25.74
& 28.71 & 31.13 & 33.64
& 25.01 & 25.47 & 25.74 \\
SSIM
& 0.830 & 0.902 & 0.816
& 0.937 & \textbf{0.959}
& 0.824 & 0.828
& 0.902 & 0.937 & 0.959
& 0.816 & 0.824 & 0.828 \\
Time (s)
& 7.58 & 6.18 & 5.86
& 6.74 & 7.57
& 6.25 & 6.88
& 4.03 & 4.66 & 5.31
& \textbf{3.74} & 4.22 & 4.65 \\
\hline
\end{tabular*}
\end{table}

\subsection{Compression on the video data}\label{video}
For video compression experiments, we validate the proposed algorithm on four representative test sequences from the derf video dataset\footnote{https://media.xiph.org/video/derf/}: Crosswalk, Market, Narrator, and Aerial. DFT is employed for its excellent reconstruction performance.
Owing to computational constraints, we extract the first $40$ frames from each sequence, yielding third-order tensors of size $2160 \times 4096 \times 40$. For the baseline TT-SVD, the frontal-slice truncation rank is fixed at \(r=50\). To ensure a fair comparison, the rank parameter $c$ in our proposed MSTP-SVD and MRSTP-SVD frameworks is set identically, i.e., all entries of the truncation rank matrix $\mathbf{R}$ take the value $50$.
The block partition sizes are  $(m_2,n_2) =(4,4)$ for Market, Narrator, and Aerial, and $(8,8)$ for the more spatially complex Crosswalk sequence. Detailed configurations of the randomized algorithms, including  oversampling parameter $s$ and  power iteration count $q$, are summarized in Table~\ref{tab:algorithm_parameters}.

\begin{table}[!htbp]
\centering
\caption{Hyperparameter configurations (power iteration $q$, oversampling $s$) for MRSTP-SVD and TMRSTP-SVD on video sequences.}
\label{tab:algorithm_parameters}
\setlength{\tabcolsep}{10.5pt}
\renewcommand{\arraystretch}{0.8}
\begin{tabular}{ccc ccc}
\hline
\multirow{2}{*}{Video} & \multirow{2}{*}{Algorithm} & \multirow{2}{*}{$q$} & \multicolumn{3}{c}{$s$} \\
\cline{4-6}
& & & $k=1$ & $k=2$ & $k=3$ \\
\hline
Crosswalk    & (truncated) MRSTP-SVD & 1 & 7 & 6 & 5 \\
Market   & (truncated) MRSTP-SVD & 1 & 5 & 4 & 3 \\
Narrator & (truncated) MRSTP-SVD & 1 & 5 & 4 & 3 \\
Aerial   & (truncated) MRSTP-SVD & 1 & 5 & 4 & 3 \\
\hline
\end{tabular}
\end{table}

\begin{figure}[!ht]
\centering
\includegraphics[width=\textwidth]{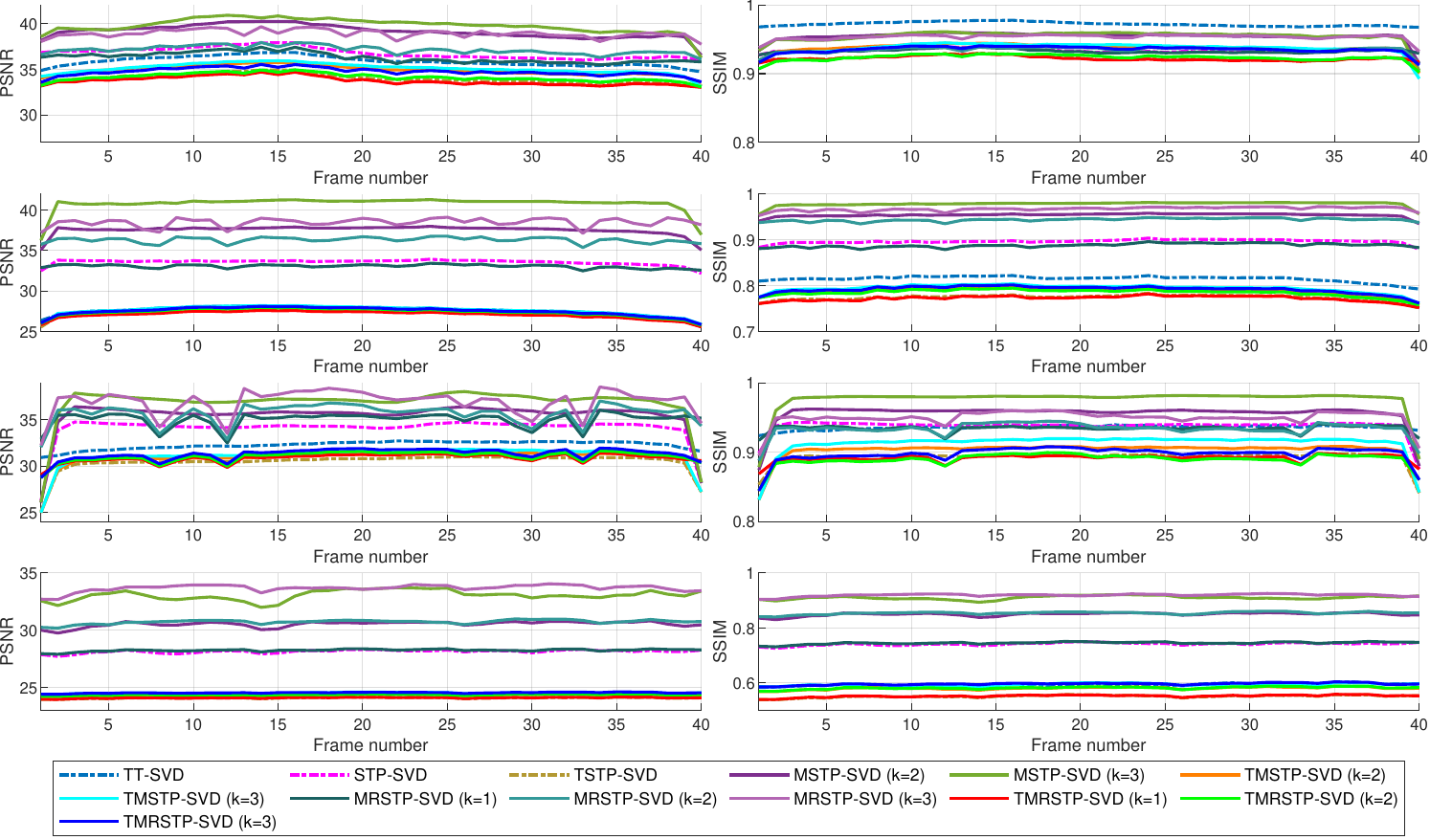}
\caption{ Frame-wise PSNR and SSIM curves over the first 40 frames of four benchmark videos for competing baselines and our approaches (original and truncated variants). From top to bottom: Crosswalk, Market, Narrator, Aerial.}
\label{fig:psnr_ssim_all}
\end{figure}
Fig.~\ref{fig:psnr_ssim_all} plots frame-wise PSNR and SSIM over the first 40 frames for all methods. Our multi-term schemes consistently outperform baselines, and higher $k$ yields steady improvements. The randomized variant achieves accuracy comparable to its deterministic counterpart with substantial acceleration and negligible performance loss.
Fig.~\ref{fig:video compression} presents visual and quantitative comparisons on sampled frames from Crosswalk and Market (results for Narrator and Aerial are in Supplementary Material). Our methods recover richer textures and finer local details, while baselines produce over-smoothed outputs. MRSTP-SVD notably reduces runtime across all sequences with negligible accuracy degradation relative to deterministic MSTP-SVD.
Table \ref{tab:video compression} summarizes average PSNR, SSIM and runtime. MRSTP-SVD (\(k=2,3\)) substantially outperforms TT-SVD and TSTP-SVD on all high-resolution videos, with over 5 dB PSNR gain on Market, nearly 4 dB on Aerial, and over 5 s runtime reduction. TMRSTP-SVD achieves further speedup with only minor acceptable accuracy loss.
Overall, the proposed framework strikes a favorable accuracy-efficiency trade-off, offering a practical solution for high-resolution video compression.

\begin{figure}[!ht]
\centering
\renewcommand{\arraystretch}{0.3}
\setlength\tabcolsep{0.1pt}
\begin{tabular}{@{}ccccccc@{}}

\tiny Original &\tiny TT-SVD & \tiny STP-SVD & \tiny TSTP-SVD &\tiny\makecell[c]{MSTP-SVD\\[-4pt](k=2)} & \tiny\makecell[c]{MSTP-SVD\\[-4pt](k=3)} &\tiny\makecell[c]{TMSTP-SVD\\[-4pt](k=2)} \\
\includegraphics[width=0.672in]{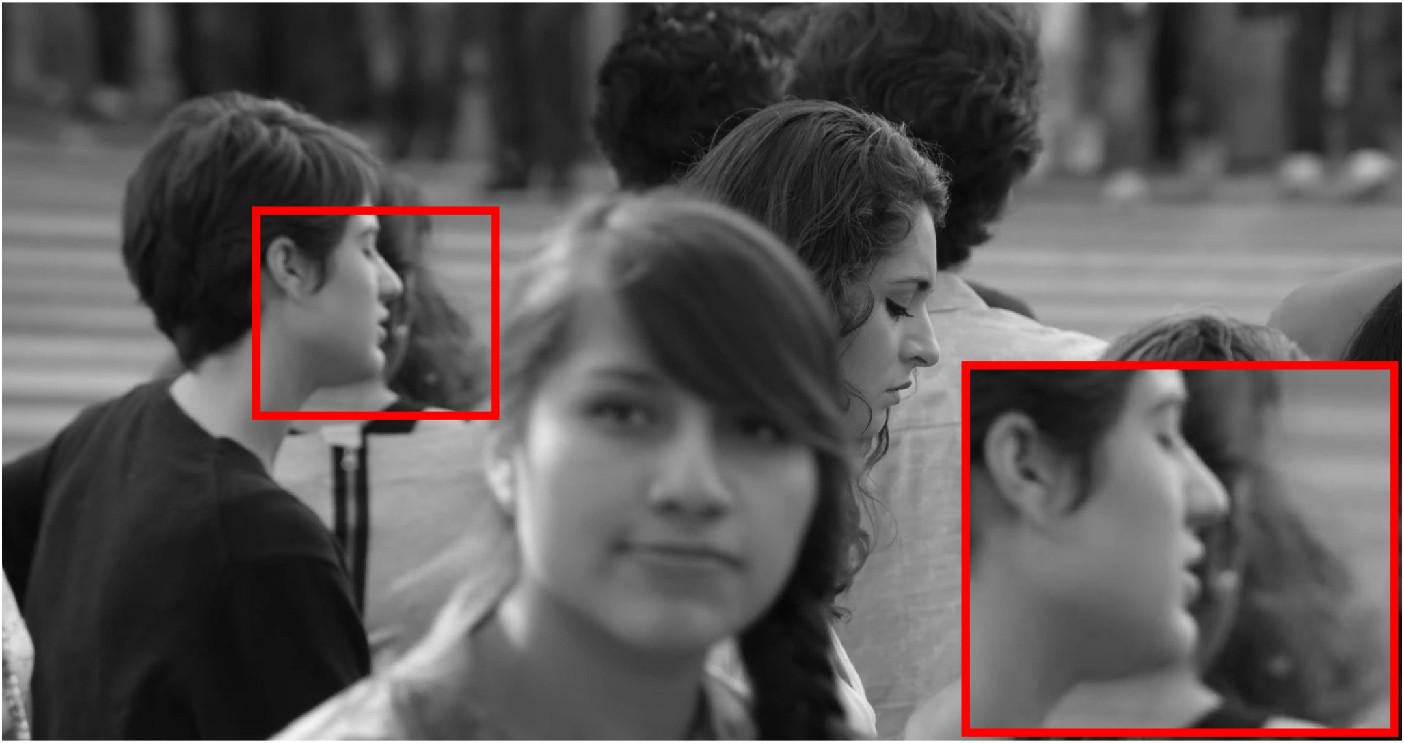} &
\includegraphics[width=0.672in]{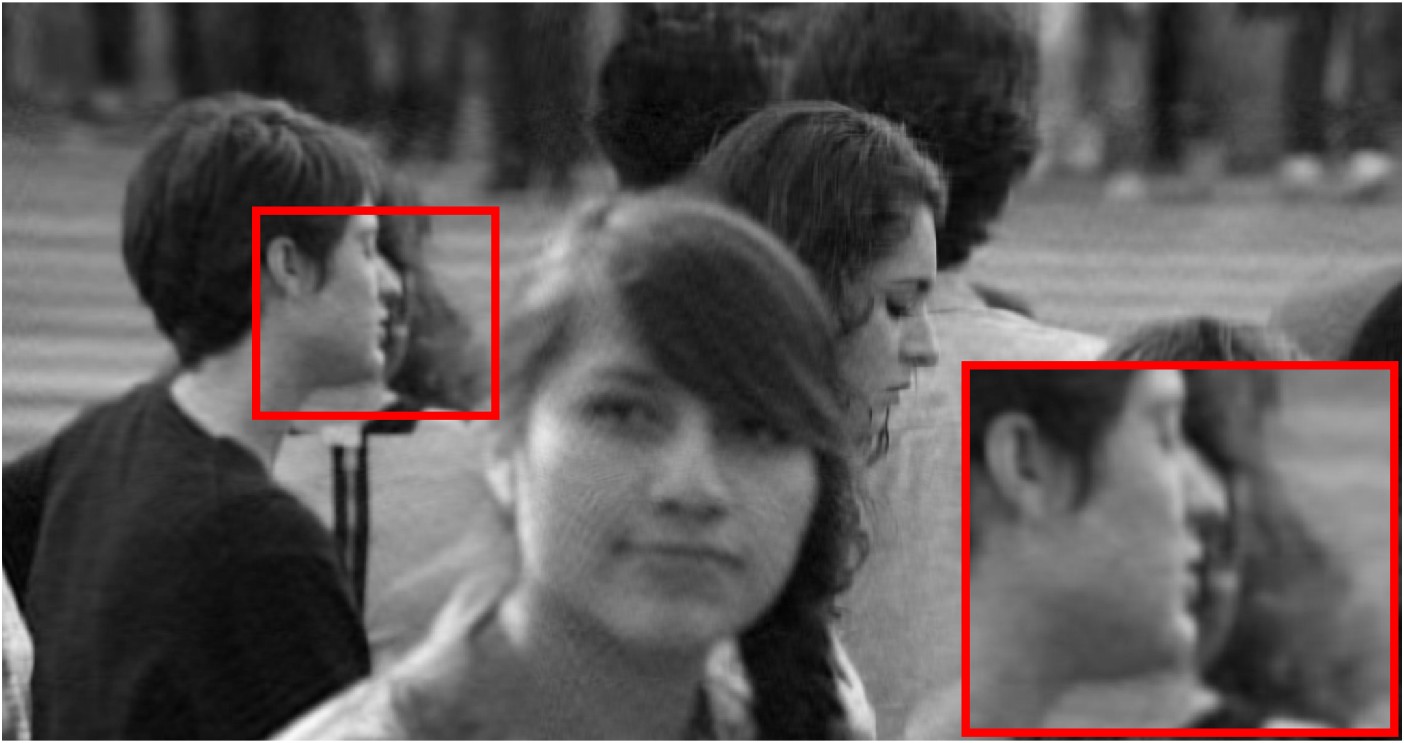} &
\includegraphics[width=0.672in]{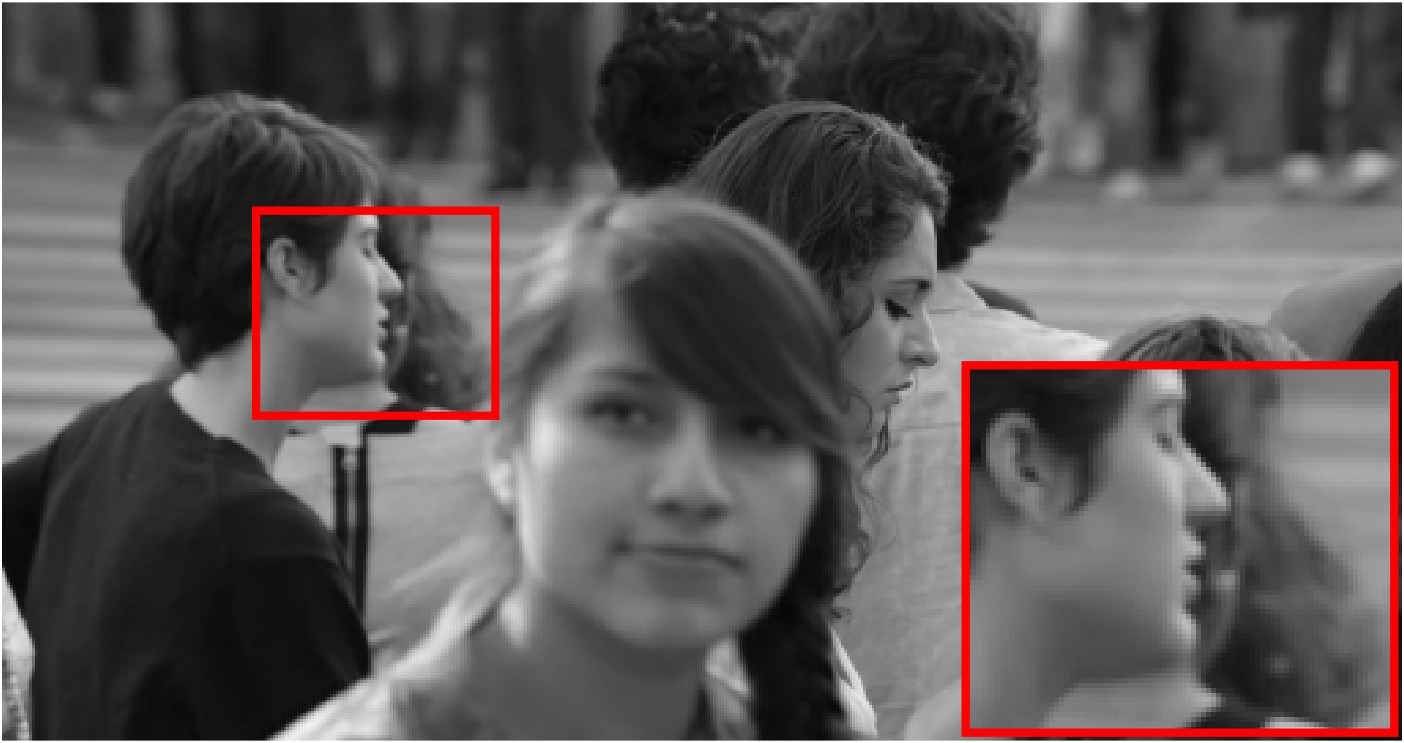} &
\includegraphics[width=0.672in]{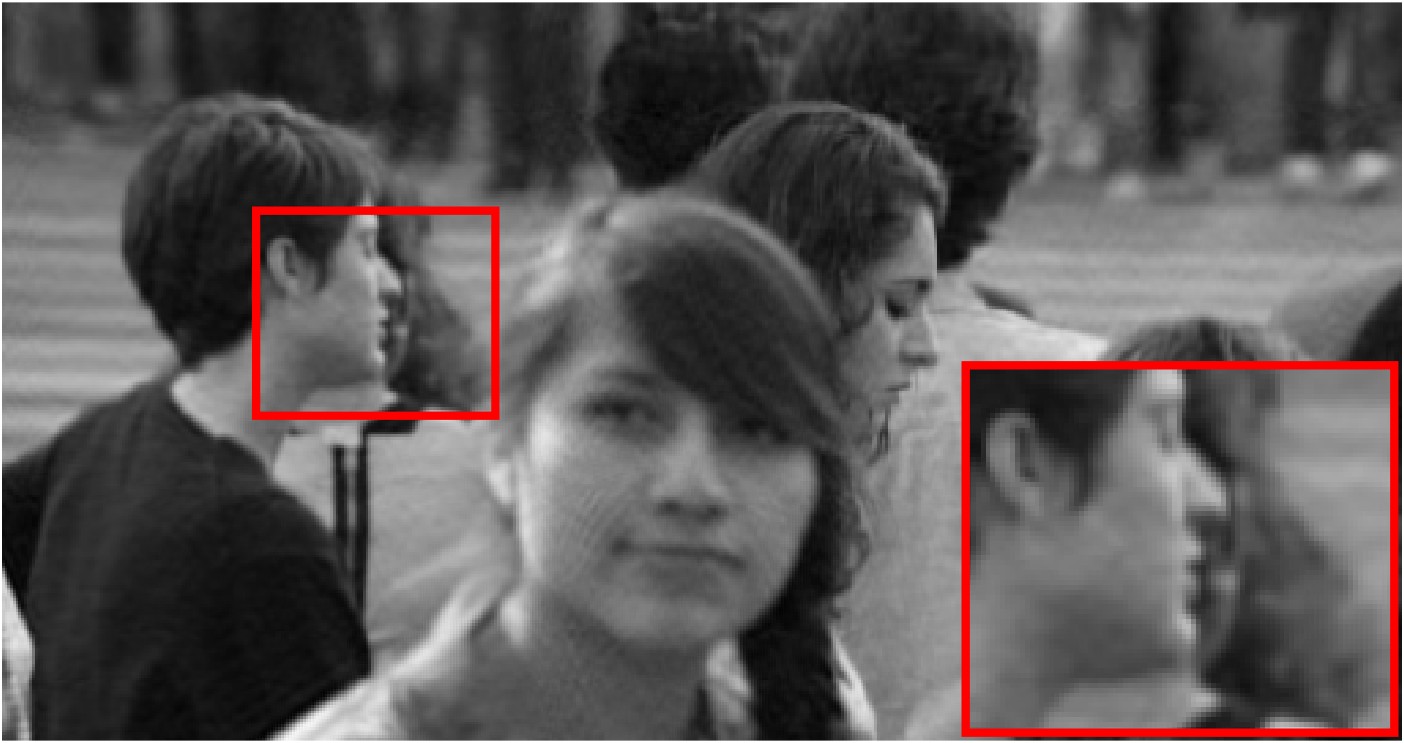} &
\includegraphics[width=0.672in]{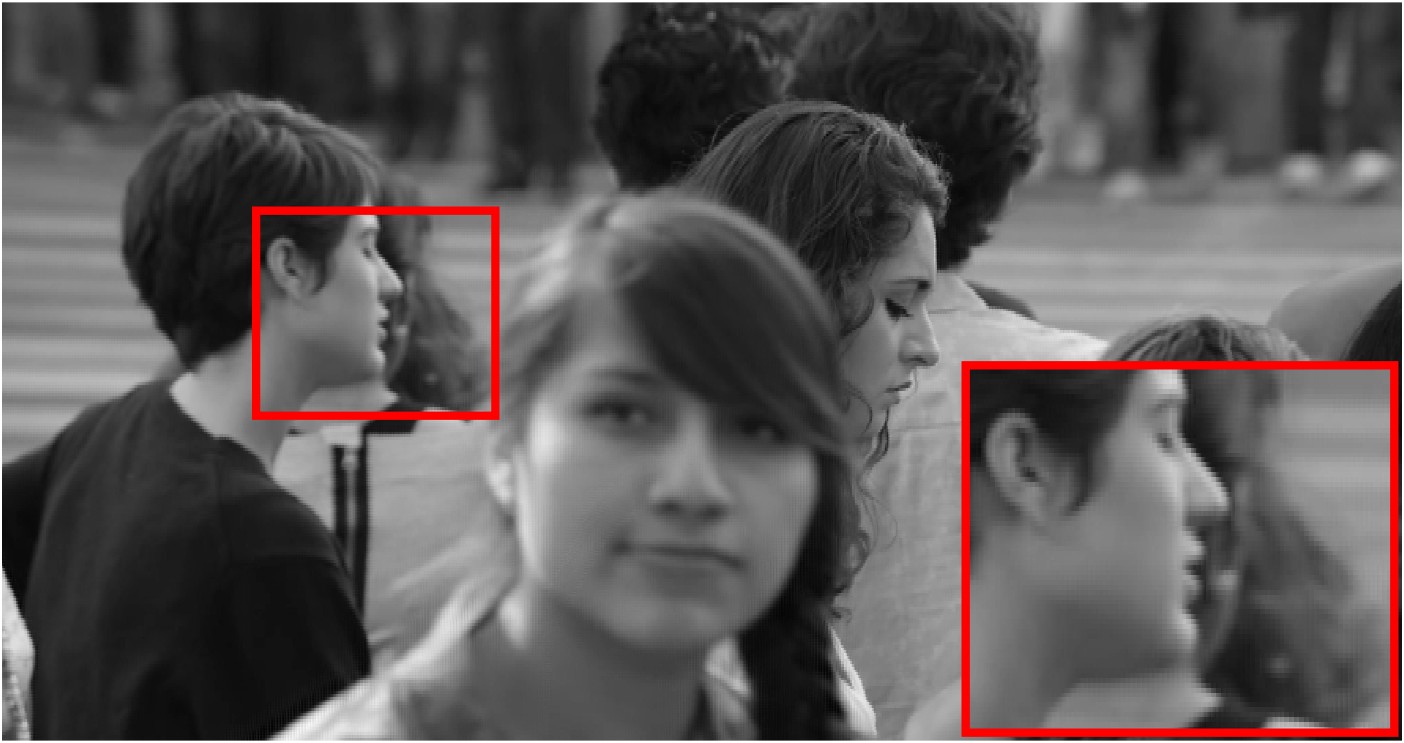} &
\includegraphics[width=0.672in]{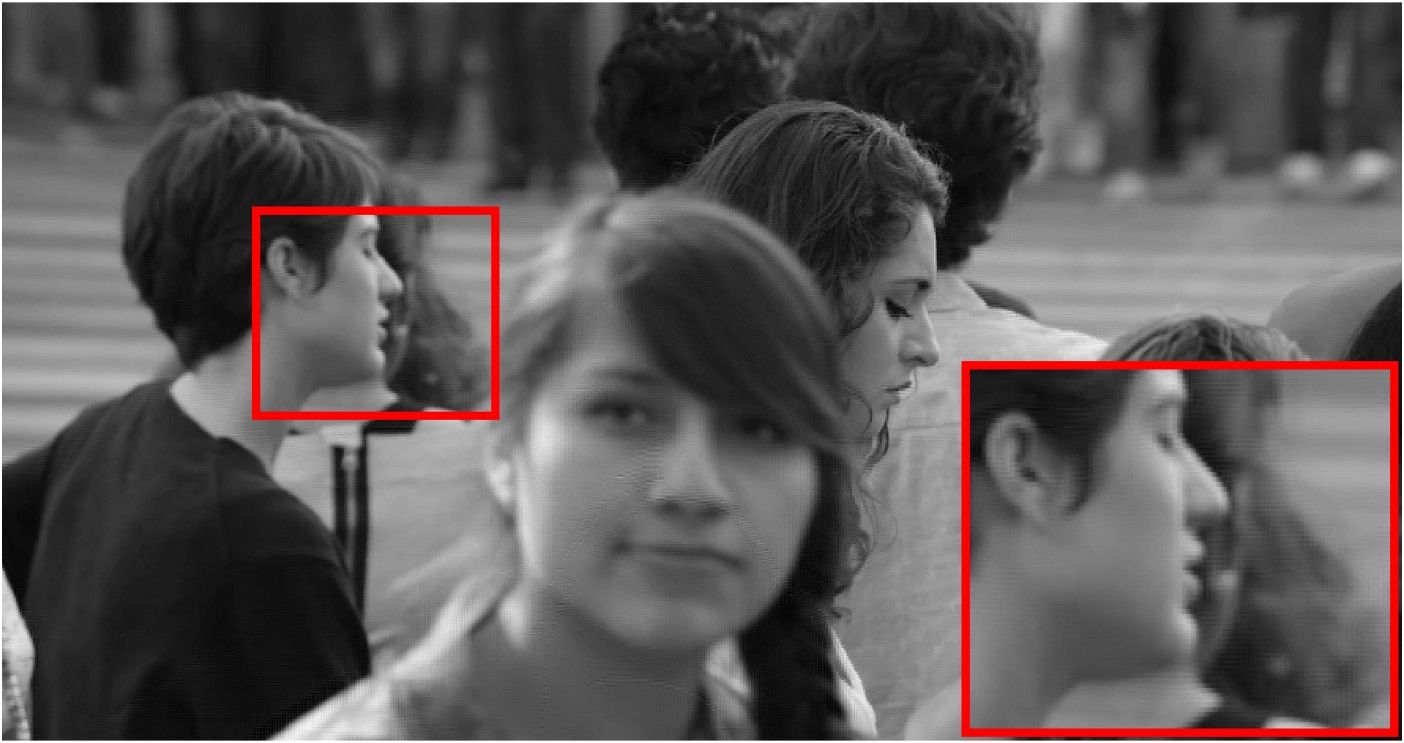} &
\includegraphics[width=0.672in]{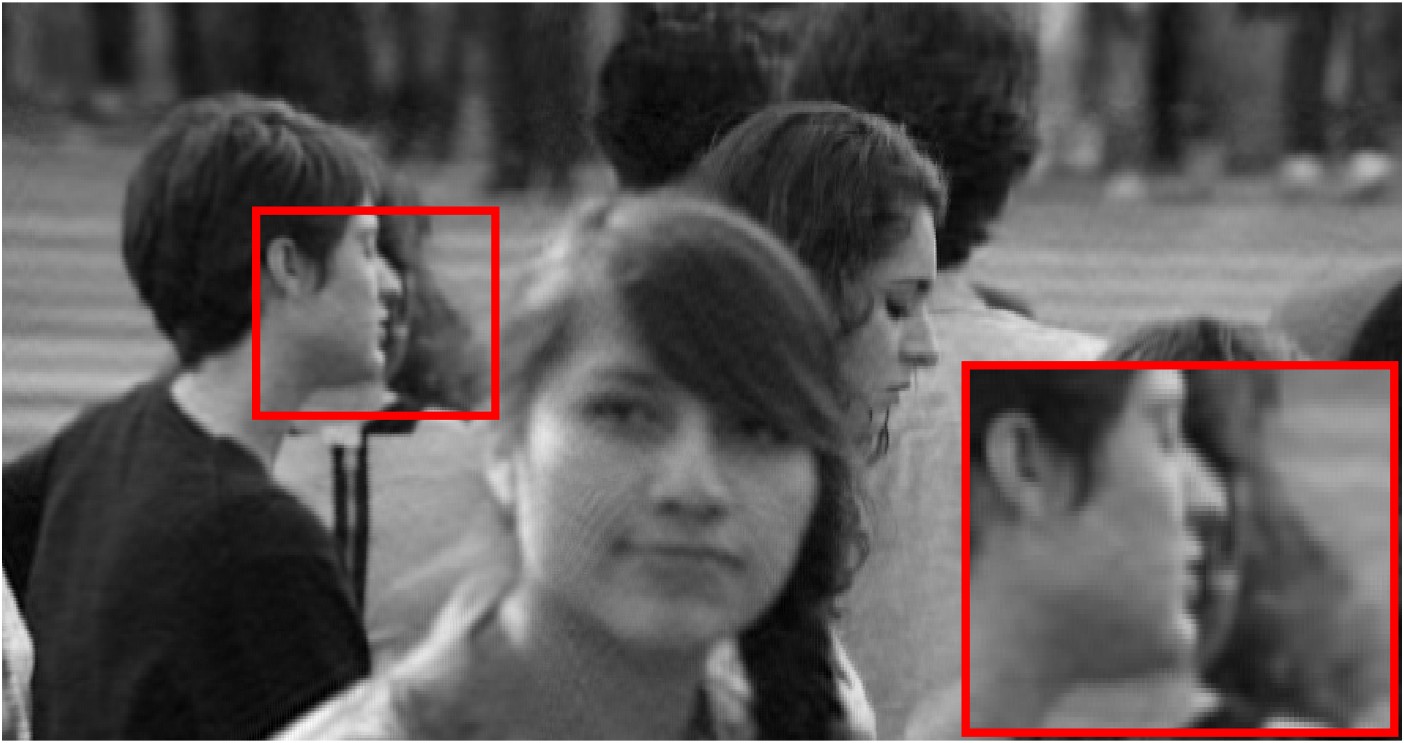} \\
 &\tiny PSNR:36.47 & \tiny PSNR:37.47 & \tiny PSNR:34.58 &\tiny PSNR:39.96 & \tiny PSNR:40.73 &\tiny PSNR:35.44 \\
 &\tiny SSIM:0.976 & \tiny SSIM:0.937 & \tiny SSIM:0.928 &  \tiny SSIM:0.959 & \tiny SSIM:0.960 &  \tiny SSIM:0.943\\
 \tiny\makecell[c]{TMSTP-SVD\\[-4pt](k=3)} &\tiny\makecell[c]{MRSTP-SVD\\[-4pt](k=1)} & \tiny\makecell[c]{MRSTP-SVD\\[-4pt](k=2)} & \tiny\makecell[c]{MRSTP-SVD\\[-4pt](k=3)}& \tiny\makecell[c]{TMRSTP-SVD\\[-4pt](k=1)} & \tiny\makecell[c]{TMRSTP-SVD\\[-4pt](k=2)} & \tiny\makecell[c]{TMRSTP-SVD\\[-4pt](k=3)}\\
\includegraphics[width=0.672in]{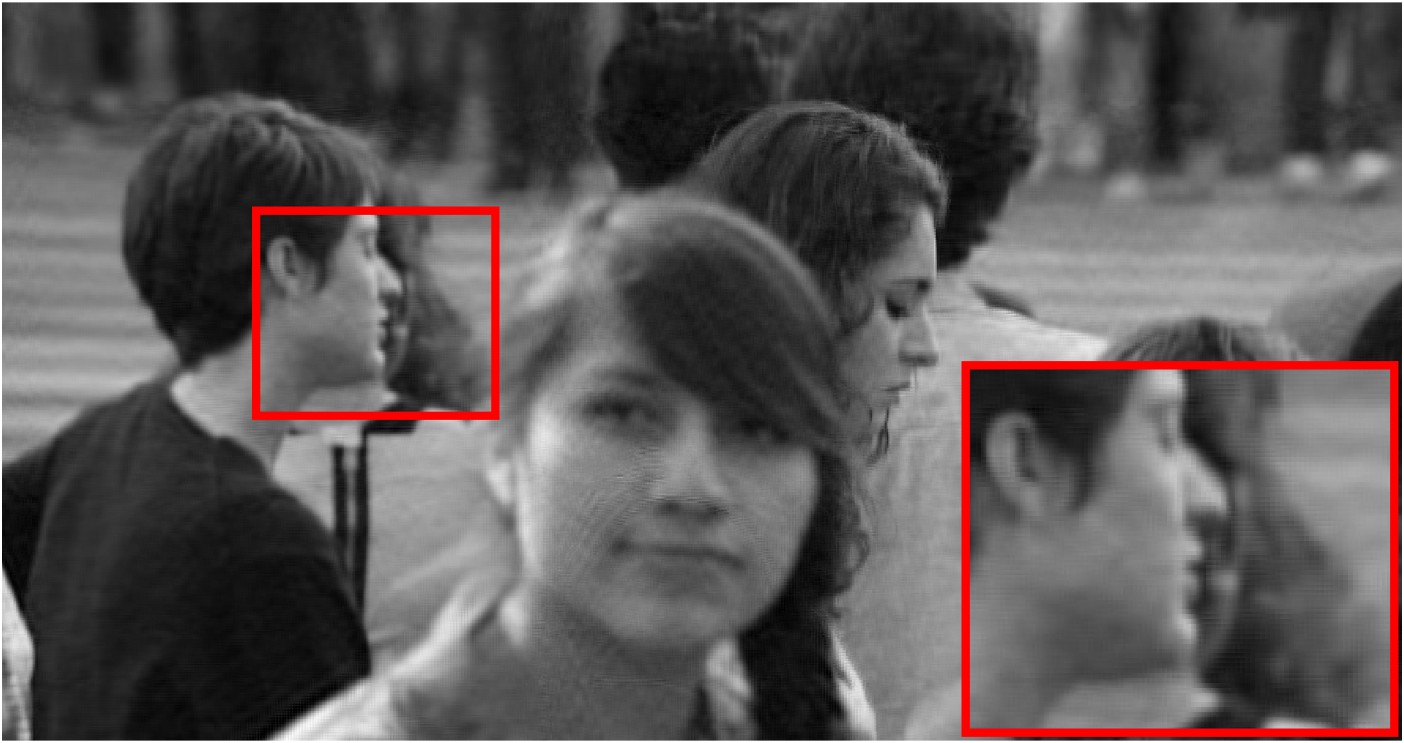} &
\includegraphics[width=0.672in]{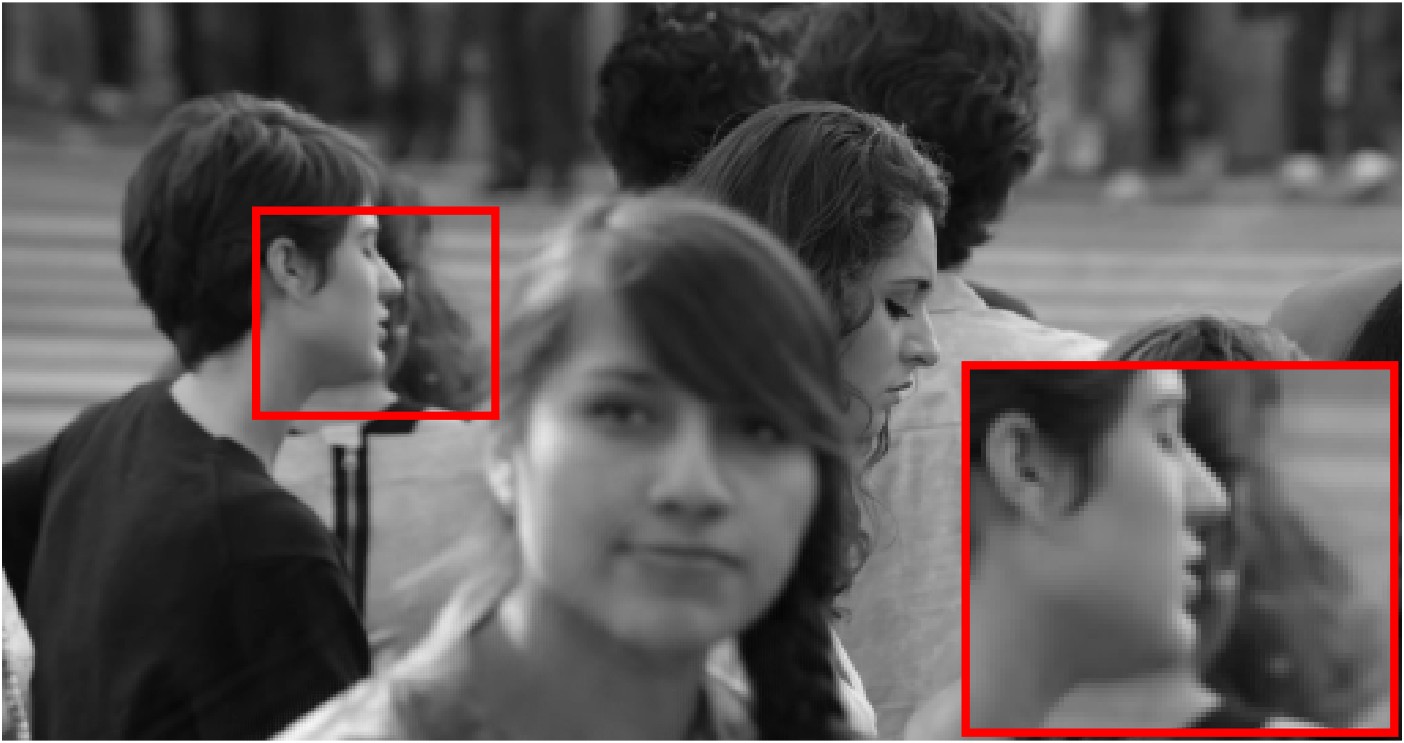} &
\includegraphics[width=0.672in]{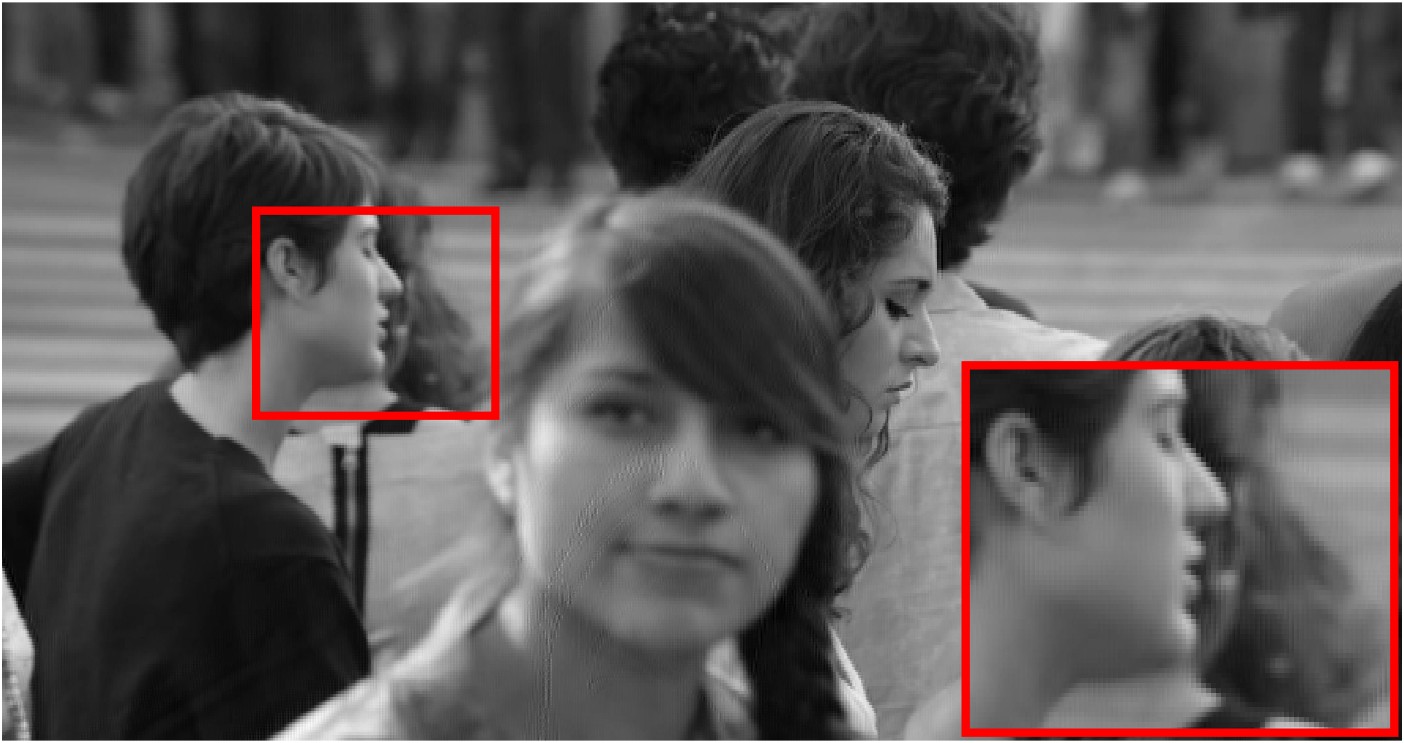} &
\includegraphics[width=0.672in]{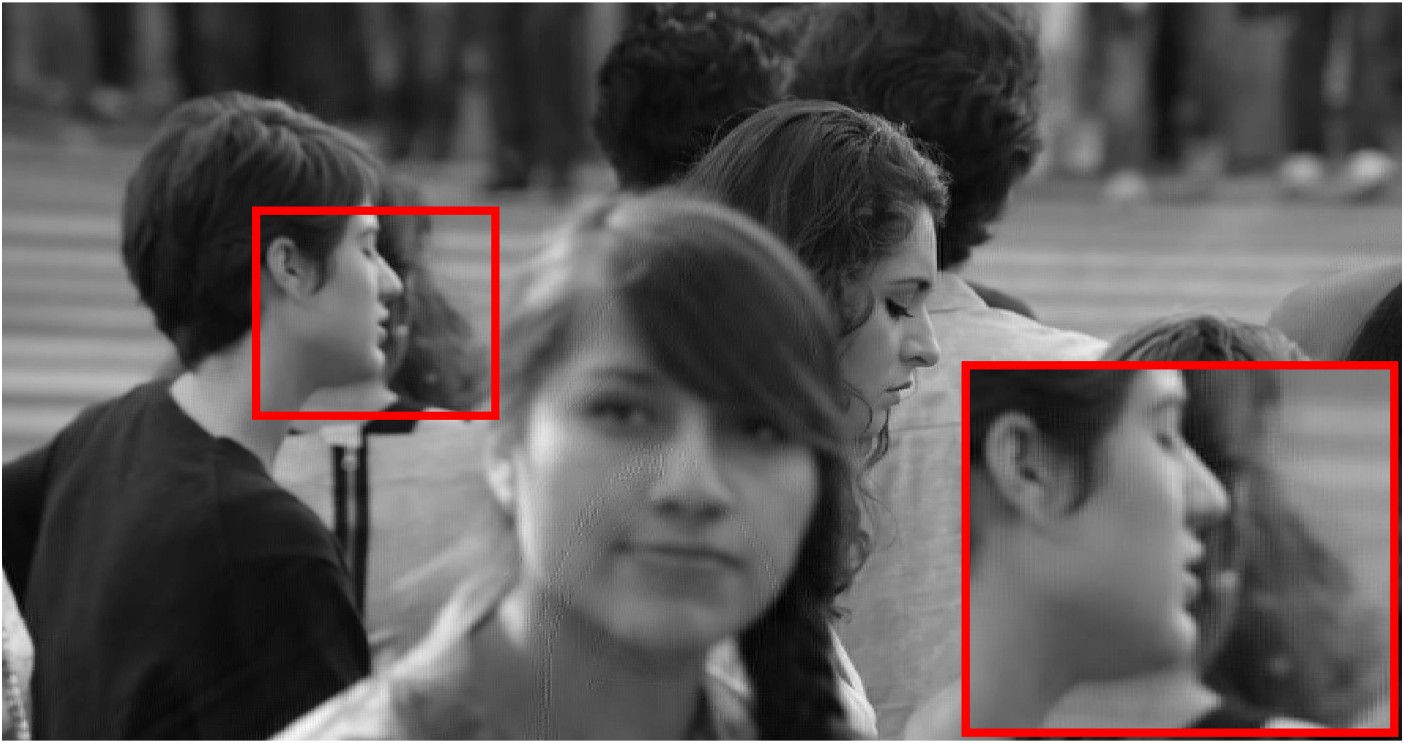}&
\includegraphics[width=0.672in]{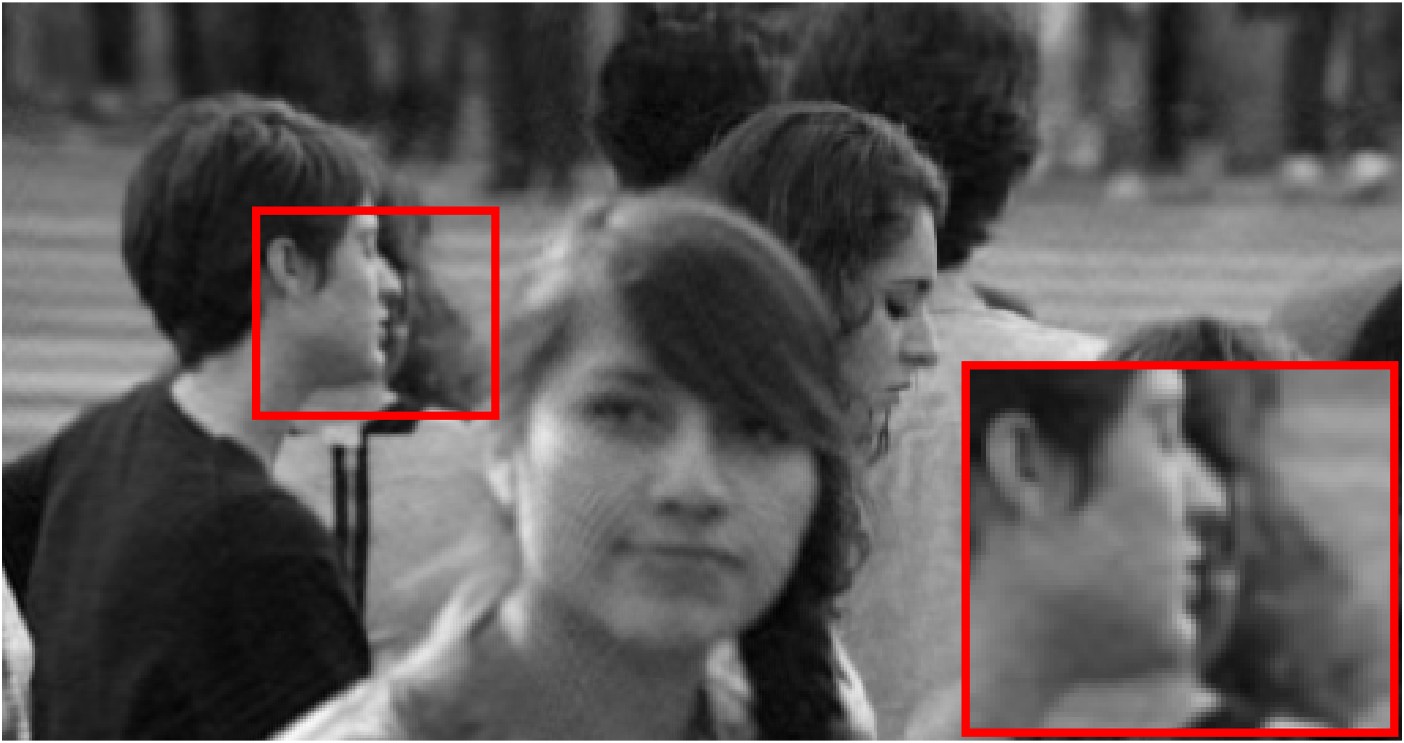} &
\includegraphics[width=0.672in]{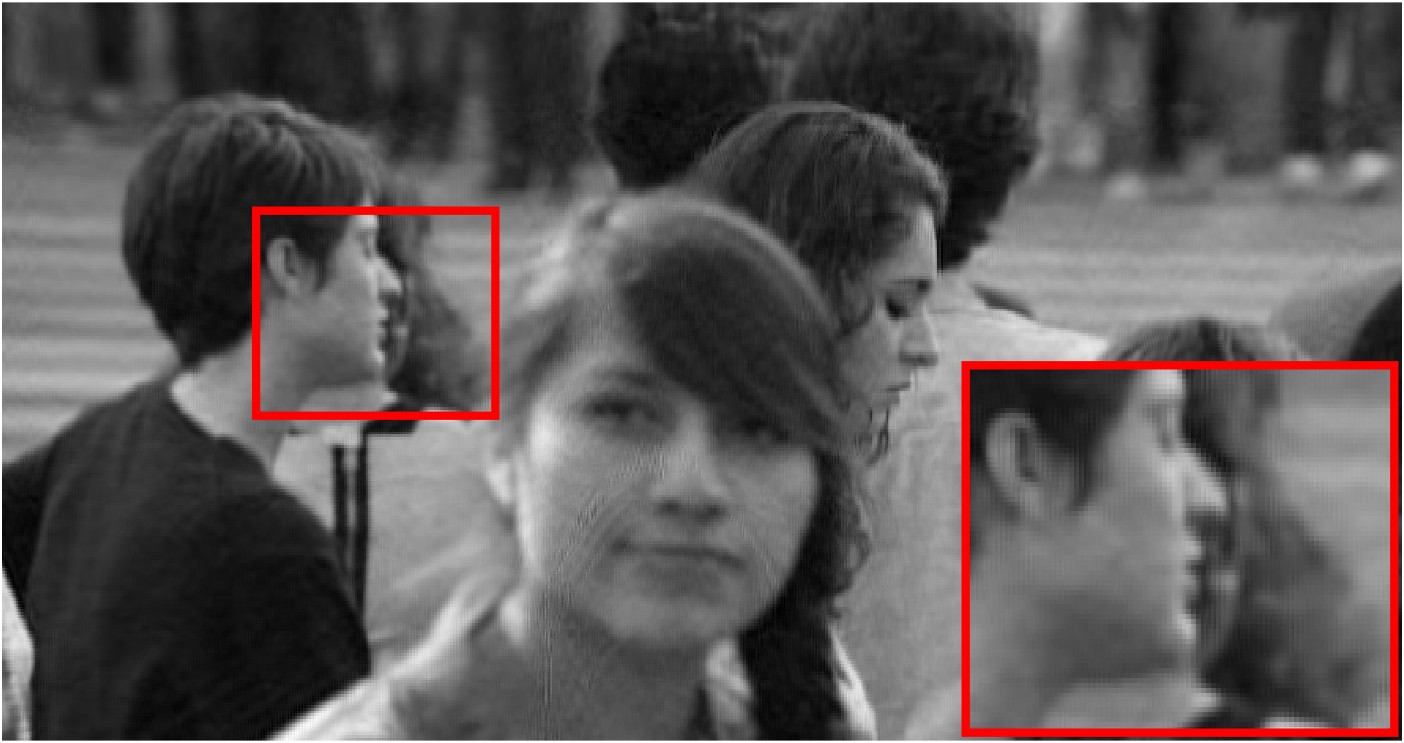} &
\includegraphics[width=0.672in]{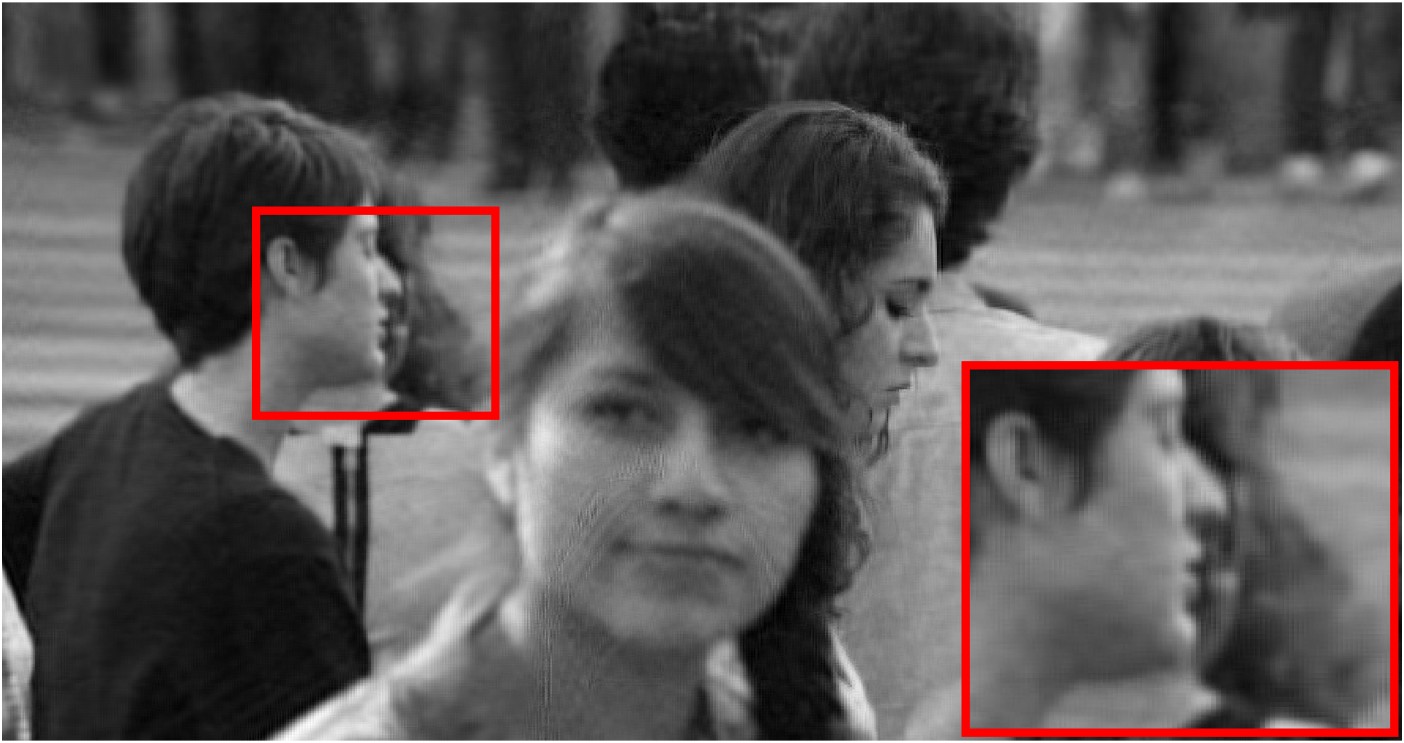} \\

\tiny PSNR:35.66 &\tiny PSNR:37.08 & \tiny PSNR:37.74 & \tiny PSNR:39.54 & \tiny PSNR:34.38 & \tiny PSNR:34.67 & \tiny PSNR:35.29\\
\tiny SSIM:0.944 & \tiny SSIM:0.937 & \tiny SSIM:0.938 & \tiny SSIM:0.956 & \tiny SSIM:0.927 & \tiny SSIM:0.928& \tiny SSIM:0.940\\

\tiny Original& \tiny TT-SVD & \tiny STP-SVD & \tiny TSTP-SVD &\tiny\makecell[c]{MSTP-SVD\\[-4pt](k=2)} & \tiny\makecell[c]{MSTP-SVD\\[-4pt](k=3)} &\tiny\makecell[c]{TMSTP-SVD\\[-4pt](k=2)} \\
\includegraphics[width=0.672in]{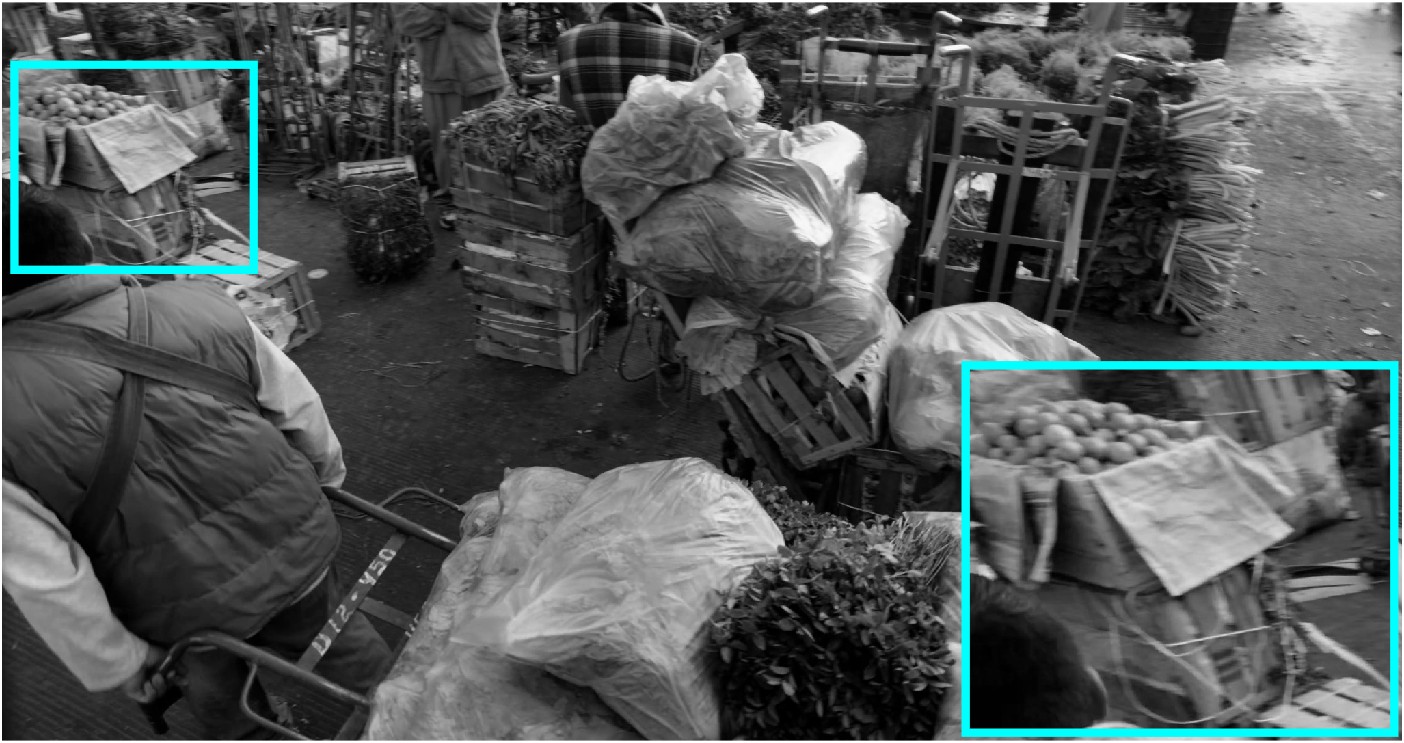} &
\includegraphics[width=0.672in]{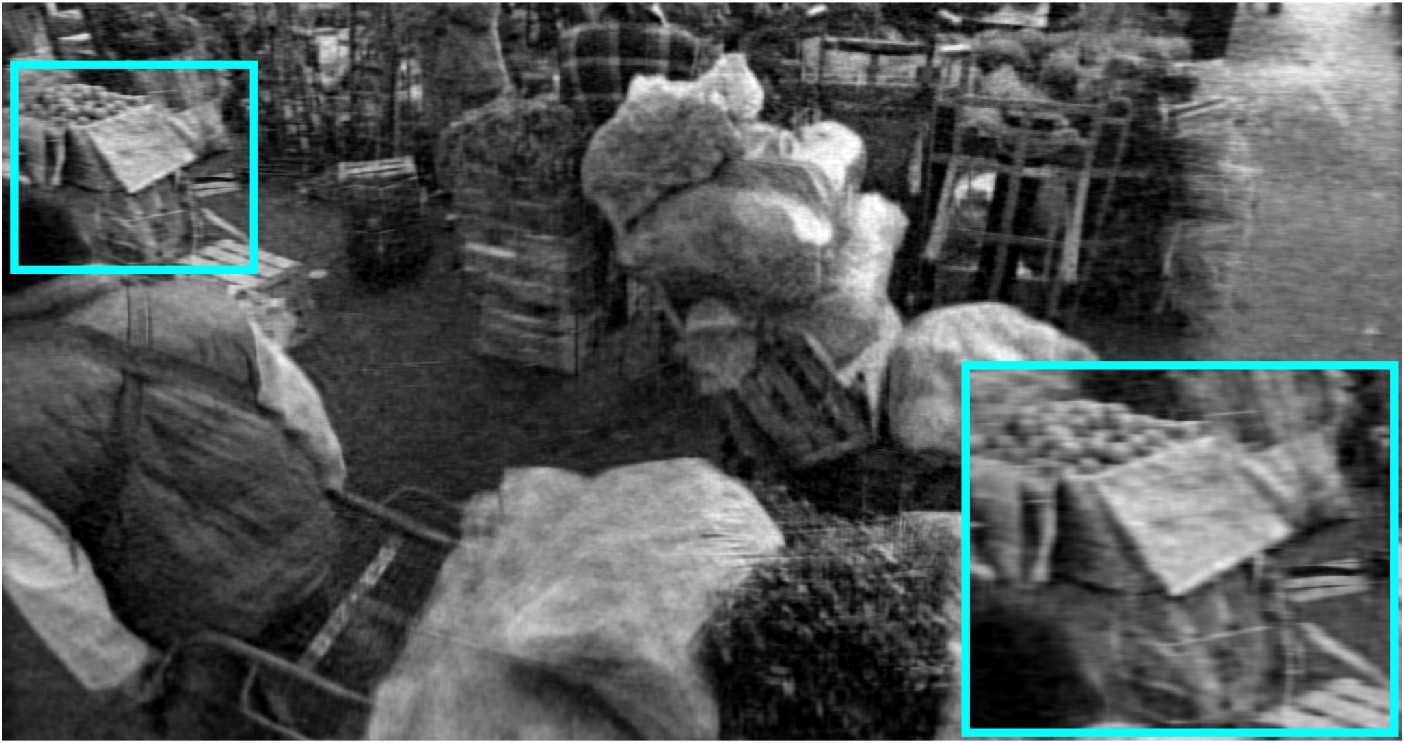} &
\includegraphics[width=0.672in]{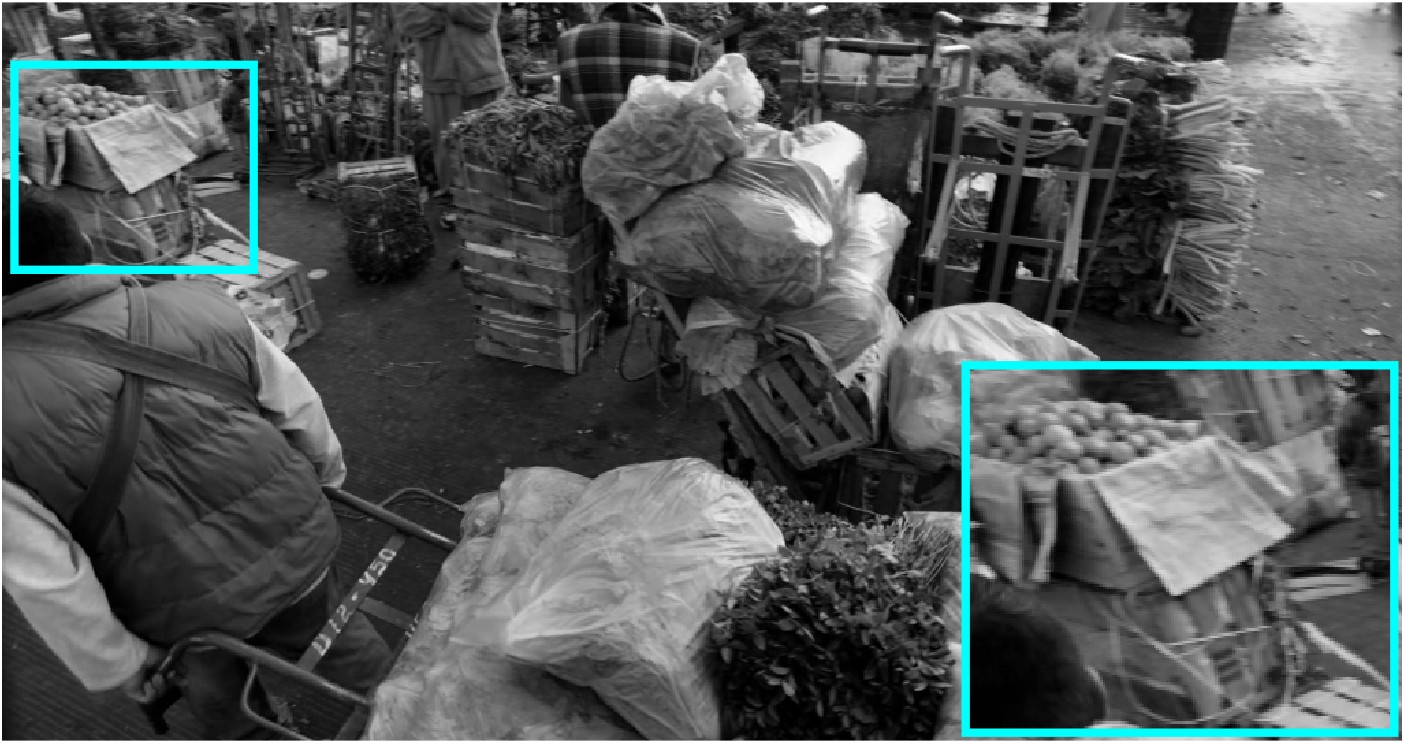} &
\includegraphics[width=0.672in]{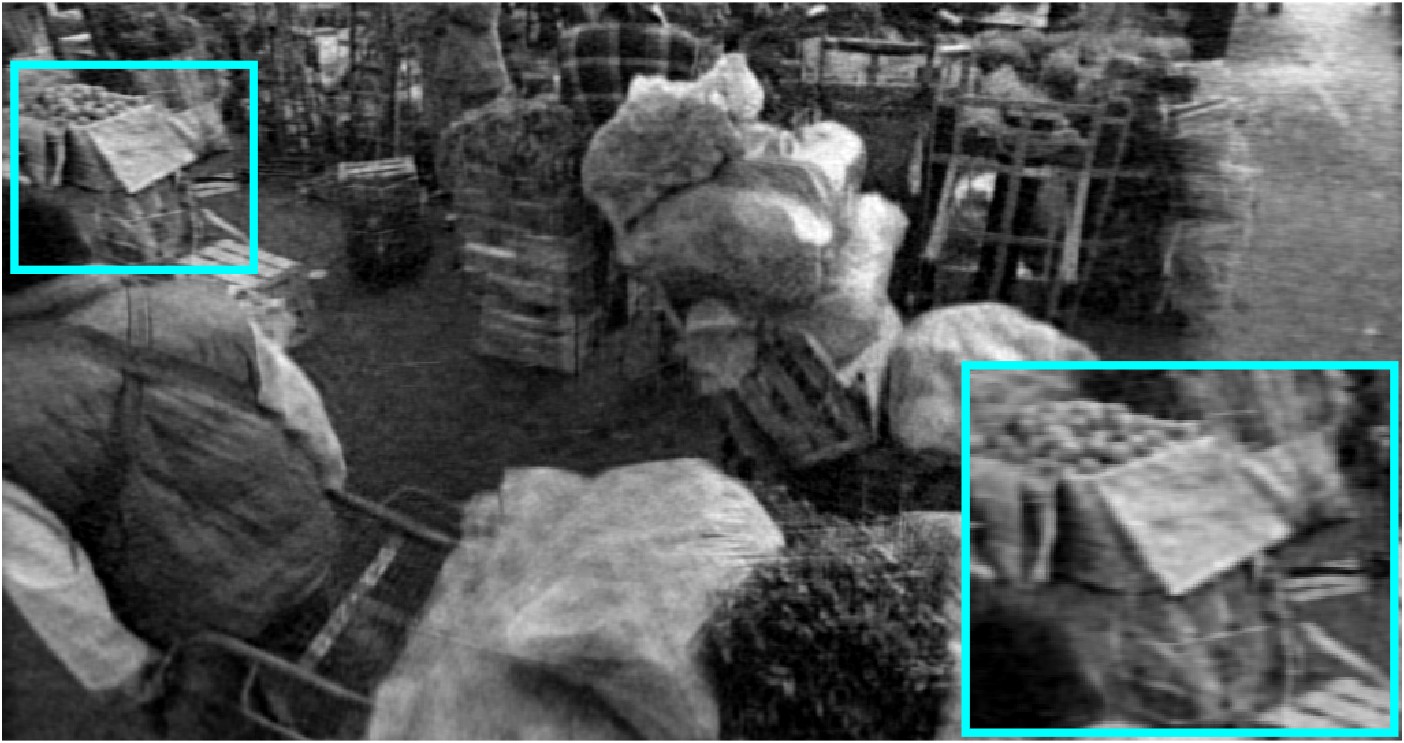} &
\includegraphics[width=0.672in]{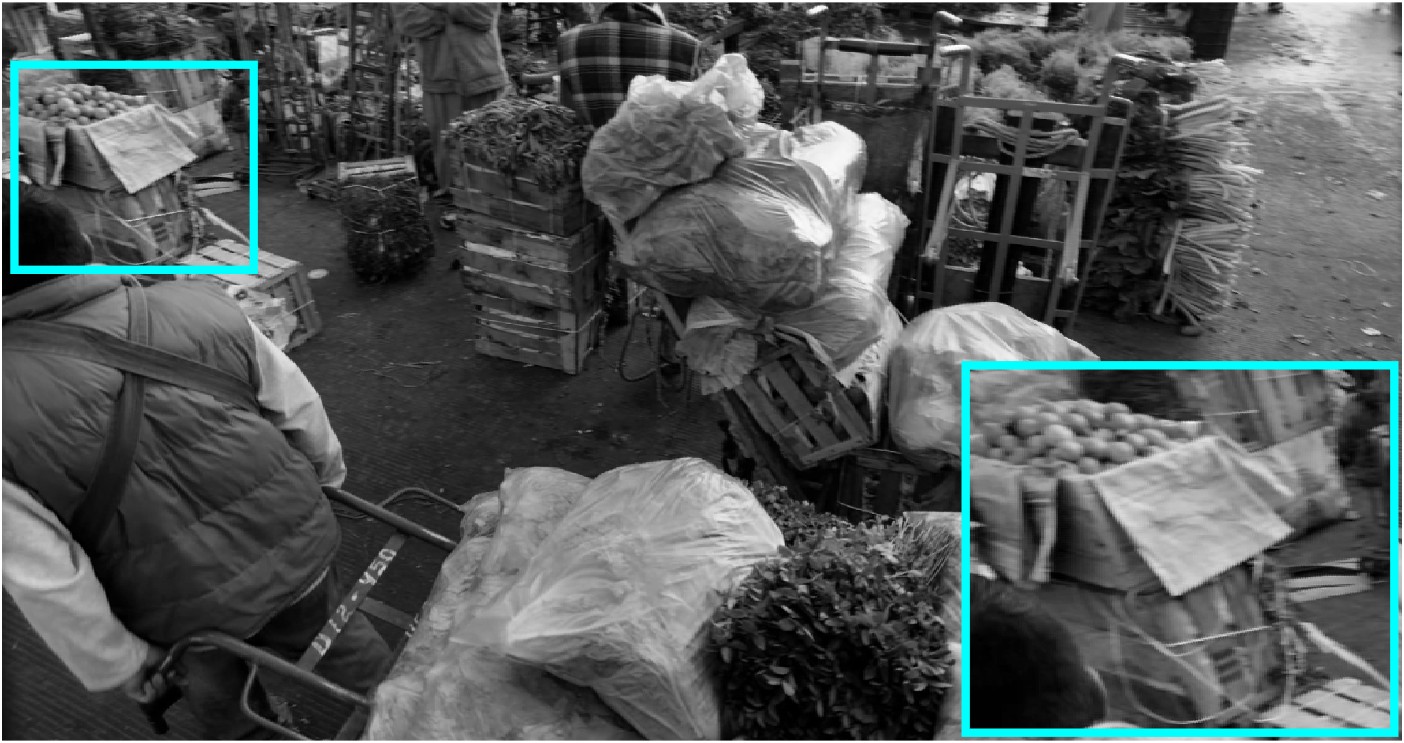} &
\includegraphics[width=0.672in]{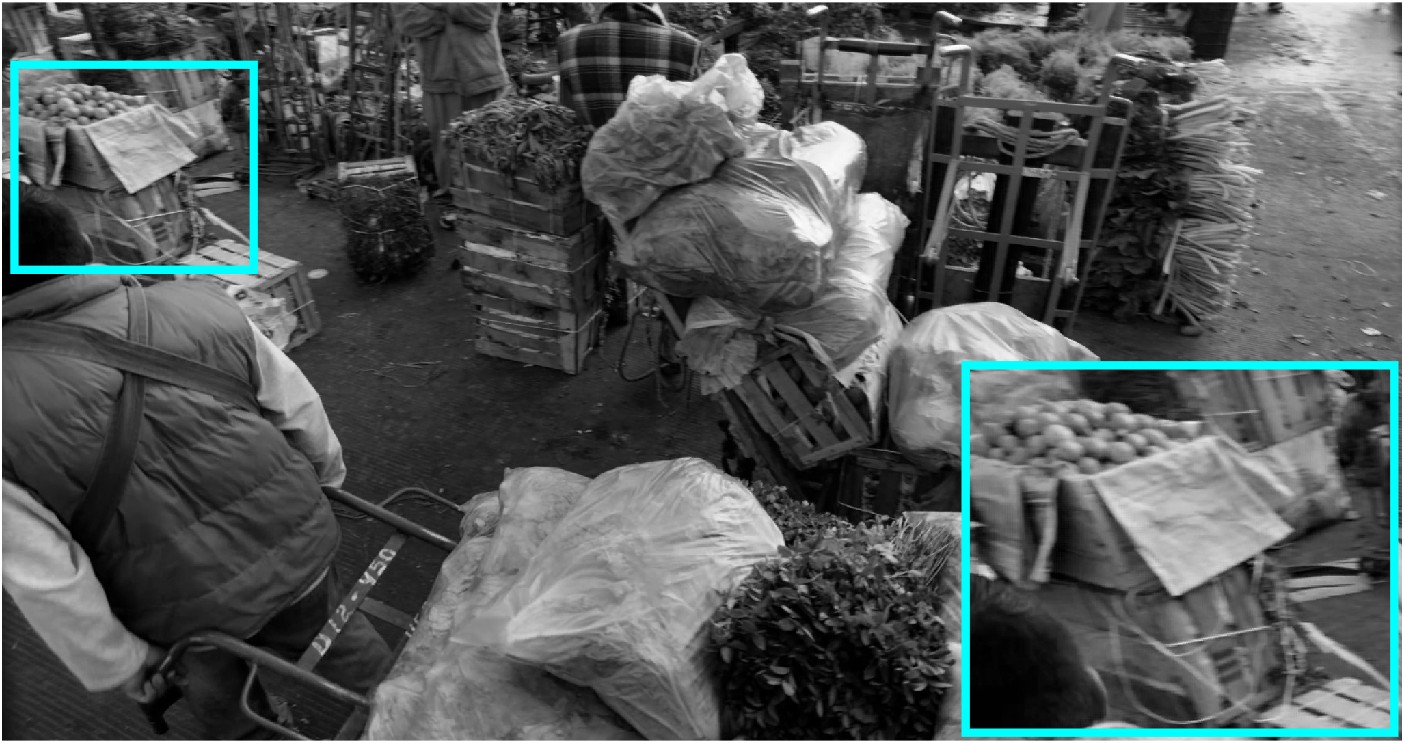} &
\includegraphics[width=0.672in]{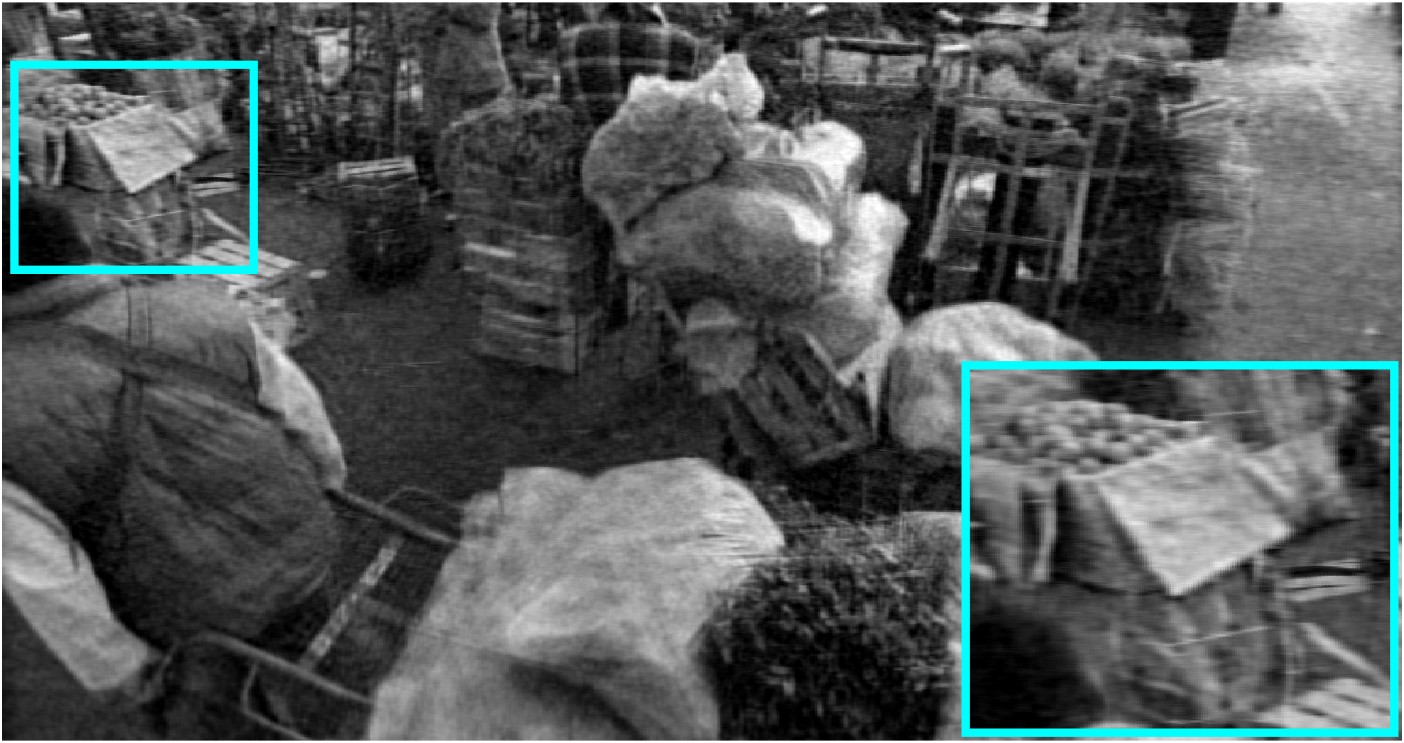} \\
&\tiny PSNR:27.39 & \tiny PSNR:33.42 & \tiny PSNR:26.96& \tiny PSNR:37.49 & \tiny PSNR:40.92 & \tiny PSNR:27.27\\ 
&\tiny SSIM:0.814 & \tiny SSIM:0.898 & \tiny SSIM:0.775 & \tiny SSIM:0.955 & \tiny SSIM:0.981 & \tiny SSIM:0.785\\ 
\tiny\makecell[c]{TMSTP-SVD\\[-4pt](k=3)} &\tiny\makecell[c]{MRSTP-SVD\\[-4pt](k=1)} & \tiny\makecell[c]{MRSTP-SVD\\[-4pt](k=2)} & \tiny\makecell[c]{MRSTP-SVD\\[-4pt](k=3)}& \tiny\makecell[c]{TMRSTP-SVD\\[-4pt](k=1)} & \tiny\makecell[c]{TMRSTP-SVD\\[-4pt](k=2)} & \tiny\makecell[c]{TMRSTP-SVD\\[-4pt](k=3)}\\
\includegraphics[width=0.672in]{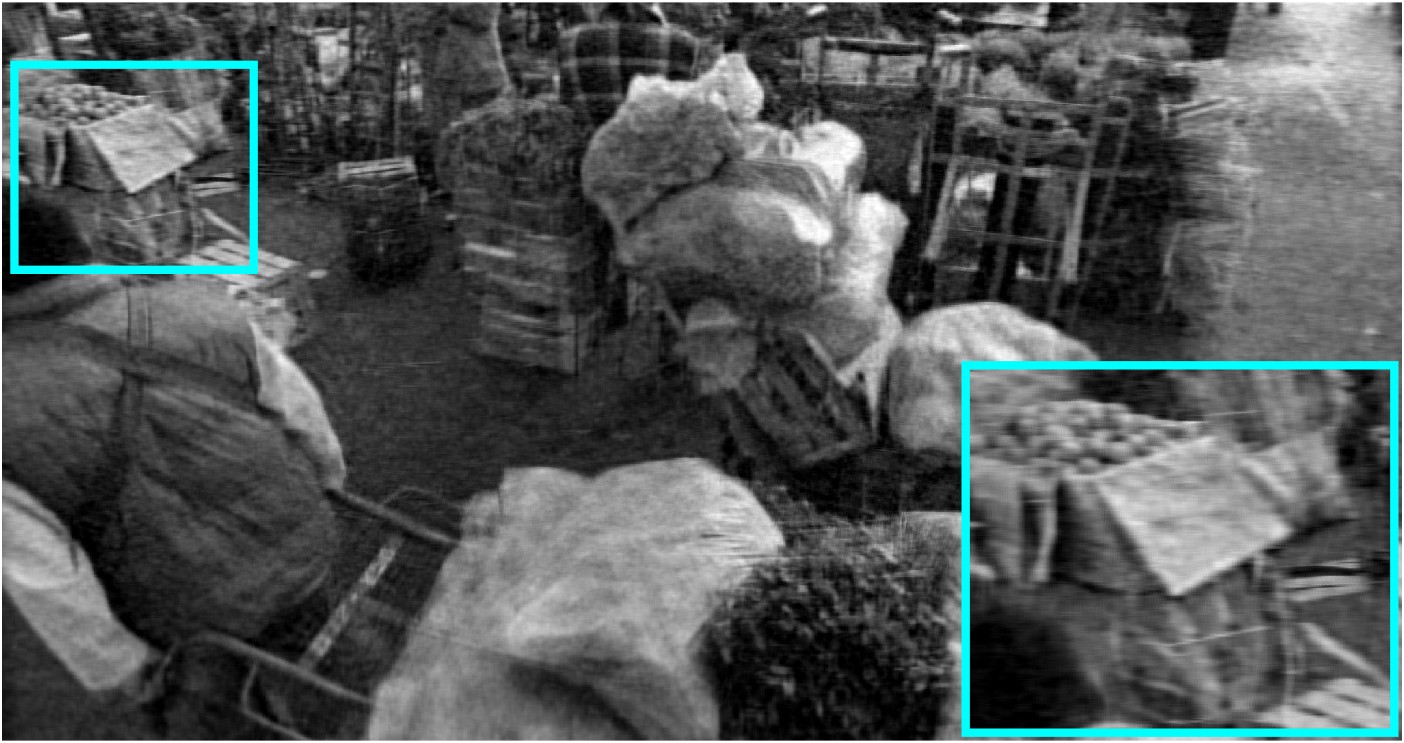} &
\includegraphics[width=0.672in]{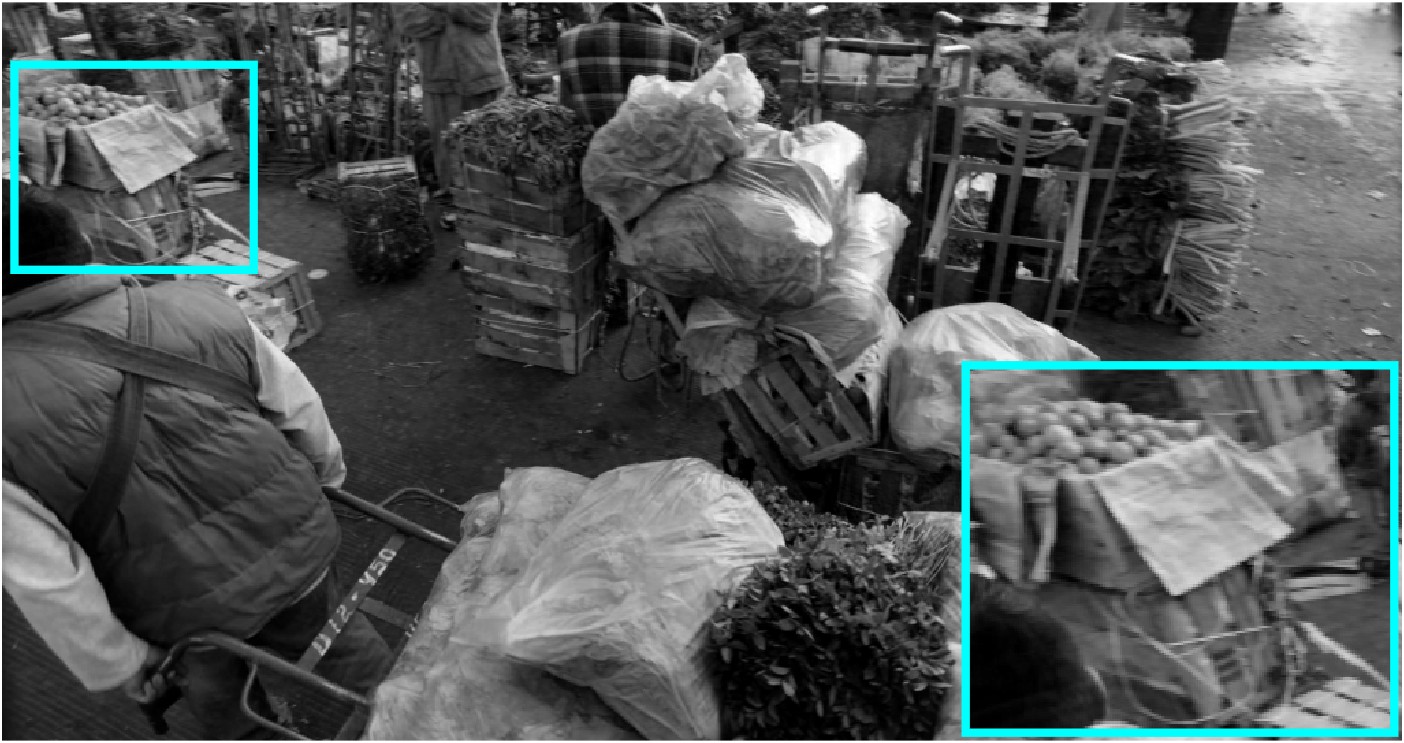} &
\includegraphics[width=0.672in]{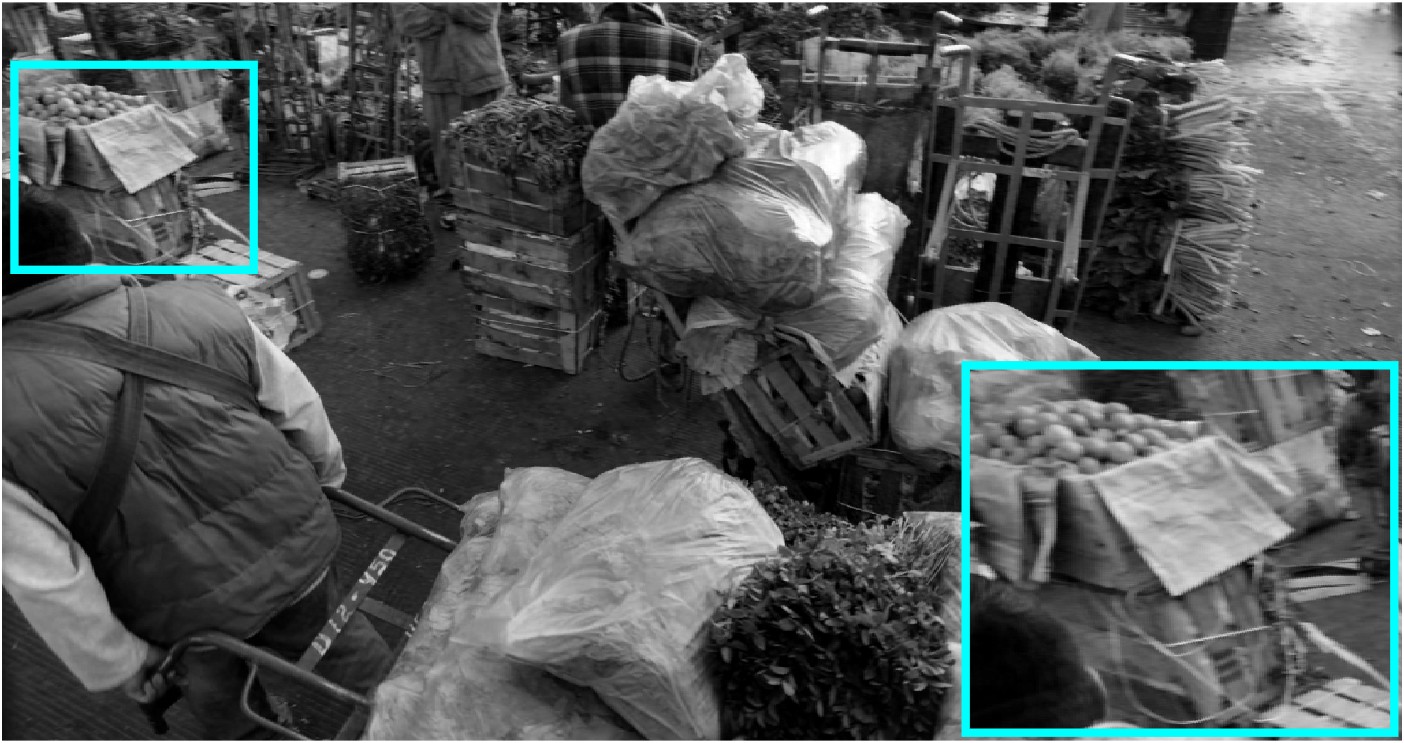} &
\includegraphics[width=0.672in]{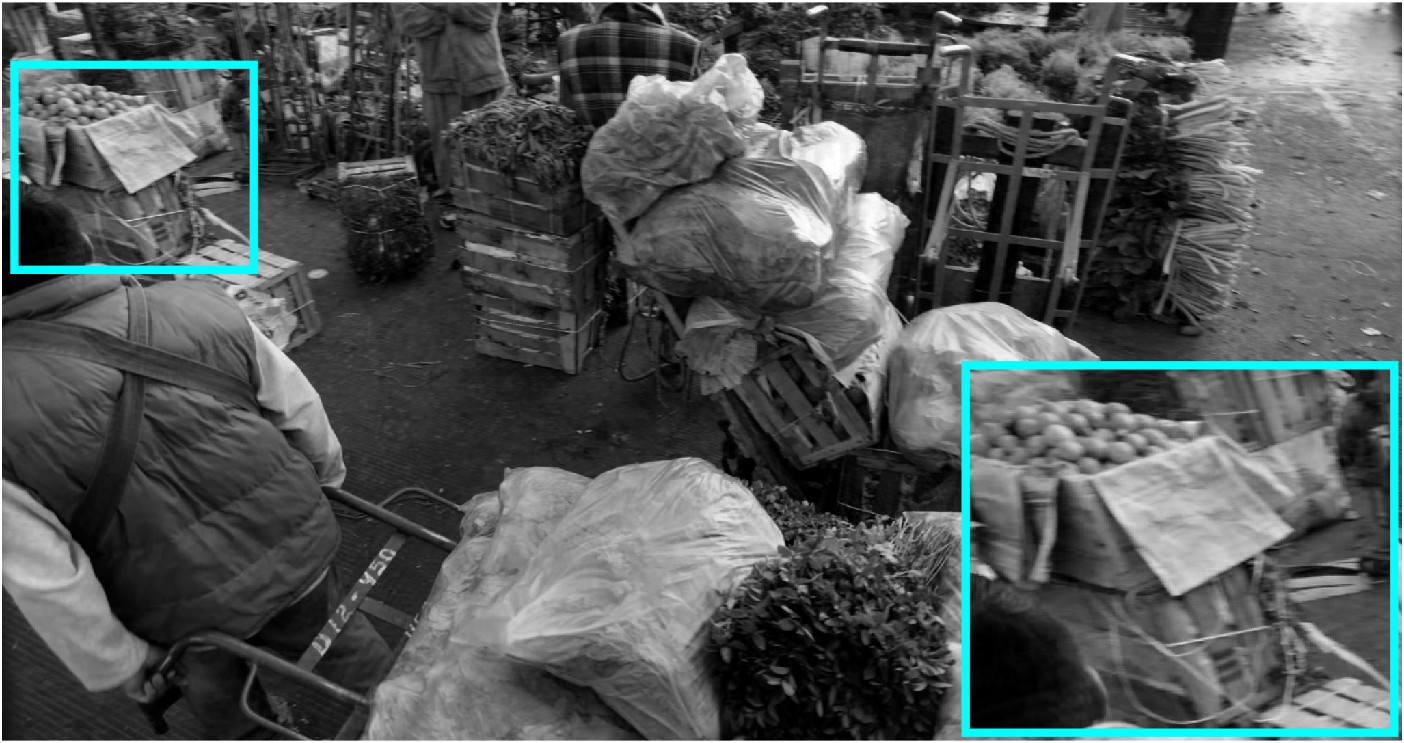}&
\includegraphics[width=0.672in]{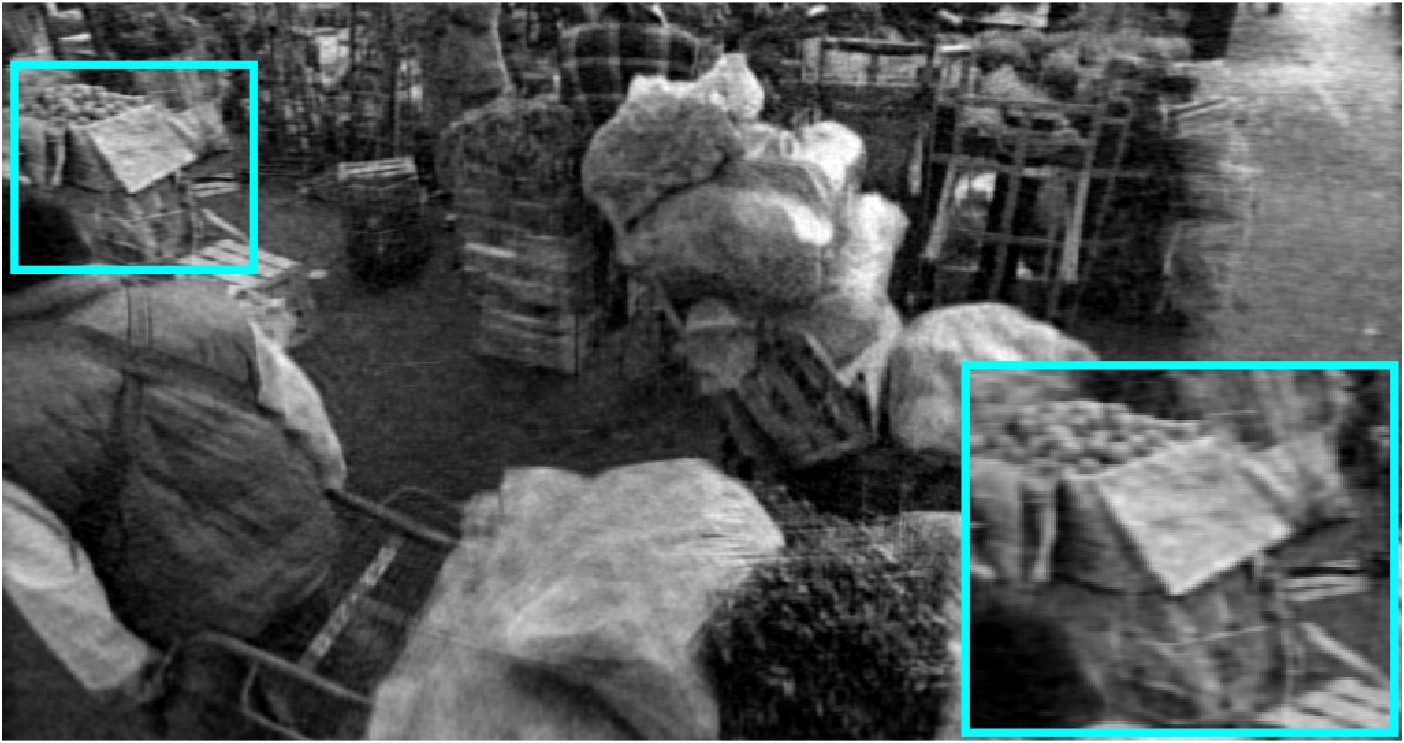} &
\includegraphics[width=0.672in]{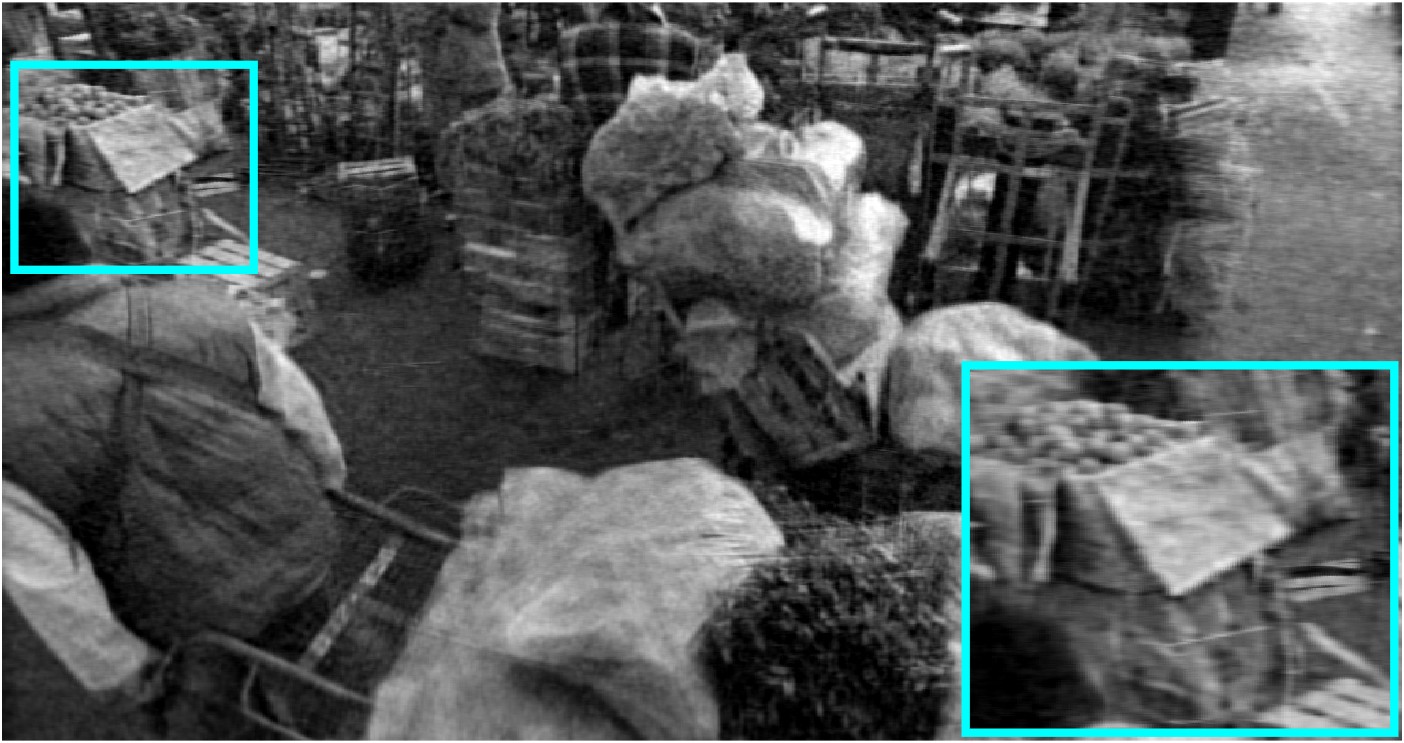} &
\includegraphics[width=0.672in]{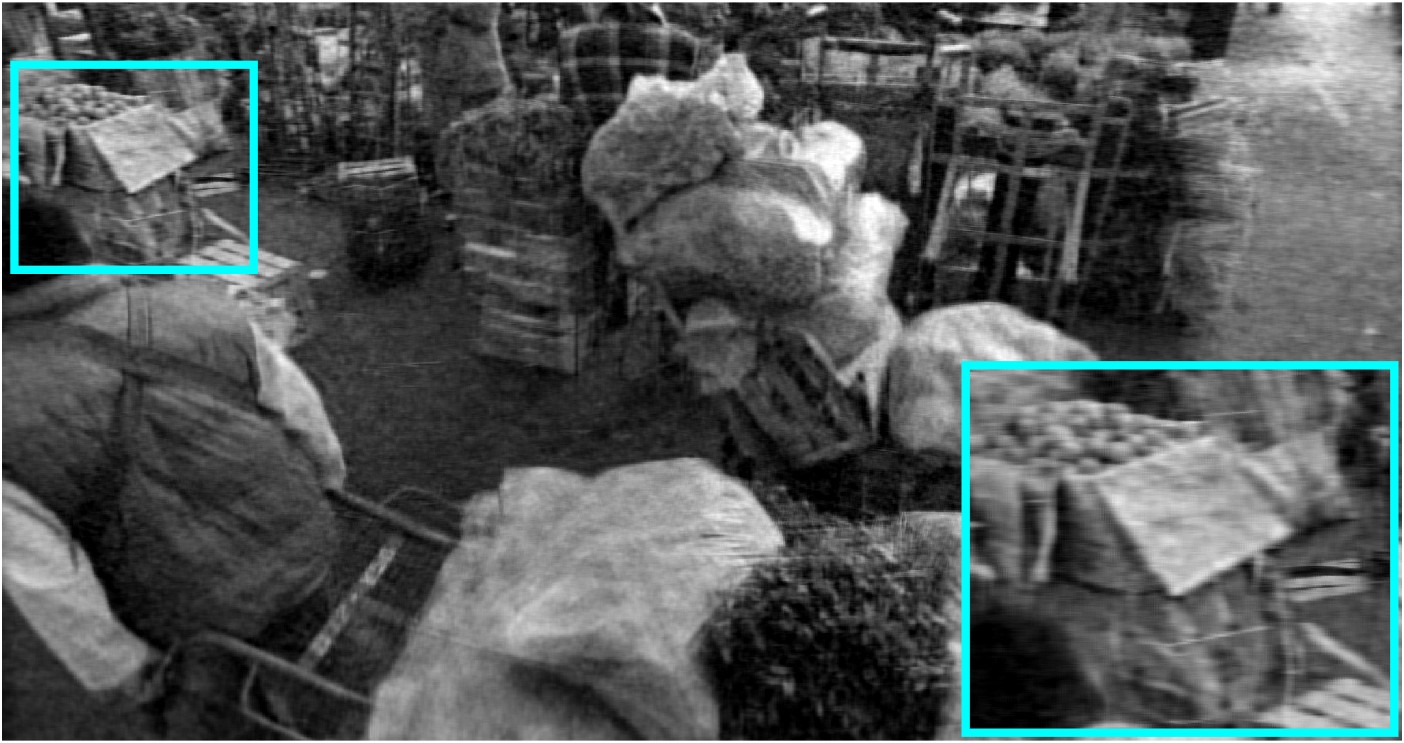} \\
\tiny PSNR:27.4& \tiny PSNR:32.48 & \tiny PSNR:35.37 & \tiny PSNR:37.13 & \tiny PSNR:26.78 & \tiny PSNR:27.08 & \tiny PSNR:33.79\\
\tiny SSIM: 0.793 & \tiny SSIM:0.890 & \tiny SSIM:0.942 & \tiny SSIM:0.966 & \tiny SSIM:0.771 & \tiny SSIM:0.781 & \tiny SSIM:9.01s \\

\end{tabular}

\caption{Visual reconstruction examples and quantitative PSNR-SSIM comparisons of competing baselines and our approaches (original and truncated variants) on randomly sampled frames from two test video sequences.}
\label{fig:video compression}
\end{figure}

\begin{table}[!htbp]
\footnotesize
\setlength{\tabcolsep}{1.0pt}
\renewcommand{\arraystretch}{0.8}
\caption{Average PSNR, SSIM and runtime comparison of competing baselines and our approaches (original and truncated multi-term variants) over all frames of four test videos.}
\centering

\begin{tabular*}{\linewidth}{l @{\extracolsep{\fill}} c c c *{2}{c} *{2}{c} *{3}{c} *{3}{c}}
\hline
 \multirow{2}{*}{Metric}
& \multirow{2}{*}{\scriptsize TT-SVD}
& \multirow{2}{*}{\scriptsize STP-SVD}
& \multirow{2}{*}{\scriptsize TSTP-SVD}
& \multicolumn{2}{c}{\scriptsize MSTP-SVD}
& \multicolumn{2}{c}{\scriptsize TMSTP-SVD}
& \multicolumn{3}{c}{\scriptsize MRSTP-SVD}
& \multicolumn{3}{c}{\scriptsize TMRSTP-SVD} \\
\cline{5-6} \cline{7-8} \cline{9-11} \cline{12-14}
& & &
& \scriptsize$k=2$ & \scriptsize$k=3$
& \scriptsize$k=2$ & \scriptsize$k=3$
& \scriptsize$k=1$ & \scriptsize$k=2$ & \scriptsize$k=3$
& \scriptsize$k=1$ & \scriptsize$k=2$ & \scriptsize$k=3$ \\
\hline\hline
\multicolumn{14}{c}{\textbf{Crosswalk}} \\ \hline\hline
PSNR 
& 35.84& 36.76 & 33.98
& 39.10 & \textbf{39.73}
& 34.86 & 35.06
& 36.27 & 37.09 & 38.87
& 33.72 & 34.10 & 34.70\\
SSIM
& 0.972 & 0.934 & 0.923
& \textbf{0.954} & \textbf{0.954}
& 0.937 & 0.937
& 0.933& 0.935 & 0.953
& 0.922 & 0.923 & 0.935 \\
Time (s)
& 63.45 & 22.26 & 21.05
& 26.78 & 30.86
& 22.42 & 26.18
& 13.42 & 16.42 & 18.52
& \textbf{12.52} & 13.51 & 14.73 \\
\hline\hline
\multicolumn{14}{c}{\textbf{Market}} \\ \hline\hline
PSNR 
& 27.58 & 33.55 & 27.14
& 37.52 & \textbf{40.77}
& 27.46 & 27.58
& 33.05 & 36.30 & 38.48
& 27.05 & 27.36 & 27.48 \\
SSIM
& 0.815 & 0.896 & 0.774
& 0.954 & \textbf{0.978}
& 0.787 & 0.794
& 0.888 & 0.943 & 0.967
& 0.772 & 0.784 & 0.791 \\
Time (s)
& 63.36 & 61.84 & 54.77
& 68.49 & 76.43
& 60.41 & 64.48
& 40.49 & 48.63 & 61.84
& \textbf{36.55} & 43.06 & 47.79 \\
\hline\hline
\multicolumn{14}{c}{\textbf{Narrator}} \\ \hline\hline
PSNR 
& 32.26 & 34.01 & 30.43
& 35.39 & 36.72
& 30.82& 31.10
&35.02 & 35.66 & \textbf{36.90}
& 30.86 & 31.03 & 31.29 \\
SSIM
& 0.936 & 0.937 & 0.893
& 0.955 & \textbf{0.975}
& 0.902 & 0.912
& 0.934 & 0.933 & 0.949
& 0.892 & 0.889 & 0.897 \\
Time (s)
& 63.12 & 57.38 & 53.62
& 70.87 & 78.73
& 60.85 & 63.68
& 40.25 & 47.08 & 60.53
& \textbf{38.84} & 42.41 & 47.78 \\
\hline\hline
\multicolumn{14}{c}{\textbf{Aerial}} \\ \hline\hline
PSNR 
& 24.39 & 28.14 & 24.09
& 30.54 &33.06
& 24.36 & 24.52
& 28.25 & 30.73 & \textbf{33.68}
& 24.11 & 24.37 & 24.54 \\
SSIM
& 0.594 & 0.741 & 0.551
& 0.850 & 0.910
& 0.581 & 0.594
& 0.745 & 0.854 & \textbf{0.919}
& 0.552 & 0.582 & 0.596\\
Time (s)
& 87.37 & 62.53 & 58.48
& 70.81 & 81.27
& 67.46 & 74.90
& 40.05 & 53.00 & 59.49
& \textbf{39.56} & 48.05 & 52.54 \\
\hline
\end{tabular*}
\label{tab:video compression}
\end{table}

\subsection{Image and video completion}
For image and video completion tasks, we adopt DFT as the invertible linear transform for its best reconstruction performance.
Consider $\mathcal{X} \in \mathbb{R}^{n_1 \times n_2\times n_3}$ that contains missing entries, and $\mathrm{\Omega}$
the index set of observed elements. The tensor completion problem is formulated as 
\begin{equation}\label{eq:tensor completion1}
      \min_{\mathcal{X}}  \big\|\boldsymbol{P}_{\mathrm{\Omega}} {(\mathcal{X})} -\boldsymbol{P}_{\mathrm{\Omega}} {(\mathcal{M})}  \big\|_F^2,\ \mathrm{s.t.} \  \mathrm{Rank}(\mathcal{X}) = \mathbf{R},
\end{equation}
where $\mathcal{M}$ is the observed  tensor and $\mathbf{R}$ controls truncation ranks across terms and slices. Following \cite{AHMADIASL2023109121}, we introduce an auxiliary variable \(\mathcal{Z}\) to reformulate the  problem \eqref{eq:tensor completion1} as
\begin{equation}\label{eq:tensor completion2}
      \min_{\mathcal{X}} \ \big\| \mathcal{X} -\mathcal{Z} \big\|_F^2,\ \mathrm{s.t.}  \ \mathrm{Rank}(\mathcal{X}) = \mathbf{R}, \ \boldsymbol{P}_{\mathrm{\Omega}}{(\mathcal{Z})}=\boldsymbol{P}_{\mathrm{\Omega}}{(\mathcal{M})}.
\end{equation}
The solution is approximated iteratively by
\begin{equation}\label{eq: lr_appro}
        \mathcal{X}^{(n)} \leftarrow \pi_r (\mathcal{Z}^{(n)}),
\end{equation}
\begin{equation}\label{eq:solve_Z}
        \mathcal{Z}^{(n+1)} \leftarrow   \mathcal{M}_{\mathrm{\Omega}} + \mathcal{X}^{(n)}_{\mathrm{\Omega}^c},
\end{equation}
where \(\pi_r(\cdot)\) returns the rank-$r$ low-rank approximation.
We initialize with the incomplete tensor \(\mathcal{X}^{(0)}\) and iterate until convergence. Note that \(\mathcal{M}_{\mathrm{\Omega}}\) equals \(\mathcal{X}^{(0)}\) and need not be recomputed each iteration.
To further boost performance, we apply smoothing to \(\mathcal{Z}^{(n+1)}\) before the low-rank approximation step. Since low-rank approximation dominates computational cost for large-scale data and many iterations, we replace the TMSTP-SVD  with TMRSTP-SVD in each iterative, fixing the term parameter \(k=2\) throughout all completion experiments. This randomized substitution yields nearly equivalent reconstruction quality with substantially lower per-iteration overhead.
\begin{figure}[!ht]
\centering
\renewcommand{\arraystretch}{0.5} 
\setlength{\tabcolsep}{0.3pt}       
\begin{tabular}{@{}cccccc@{}}
\scriptsize\makecell{ Original image } & \scriptsize\makecell{Observed} & \scriptsize\makecell{TT-SVD}&\scriptsize\makecell{TSTP-SVD}&\scriptsize\makecell[c]{TMSTP-SVD\\[-4pt] (k=2)} & \scriptsize\makecell[c]{TMRSTP-SVD\\[-4pt] (k=2)} \\
\includegraphics[width=0.77in]{image/lake/fig_output/lake.jpg} &
\includegraphics[width=0.77in]{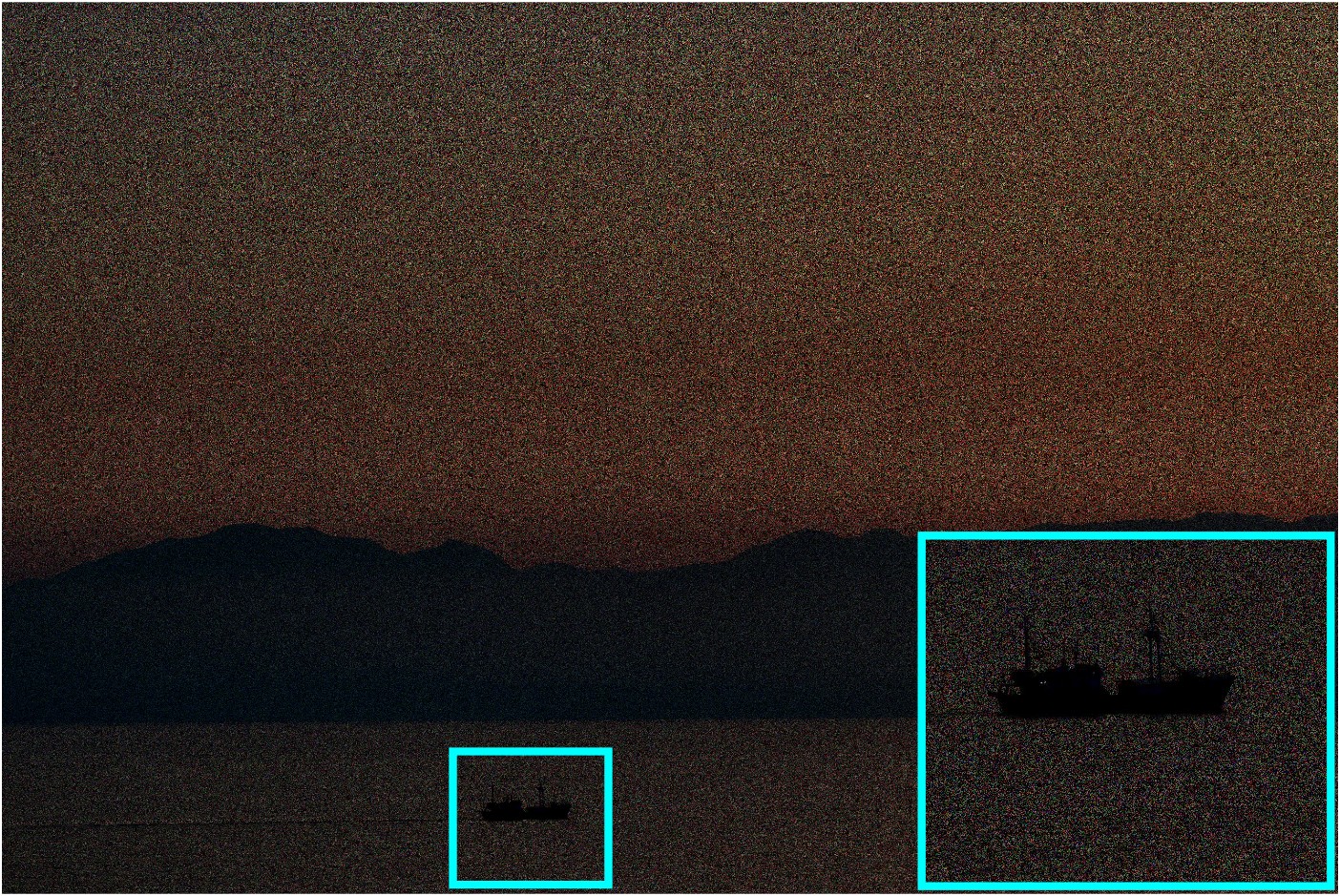} &
\includegraphics[width=0.77in]{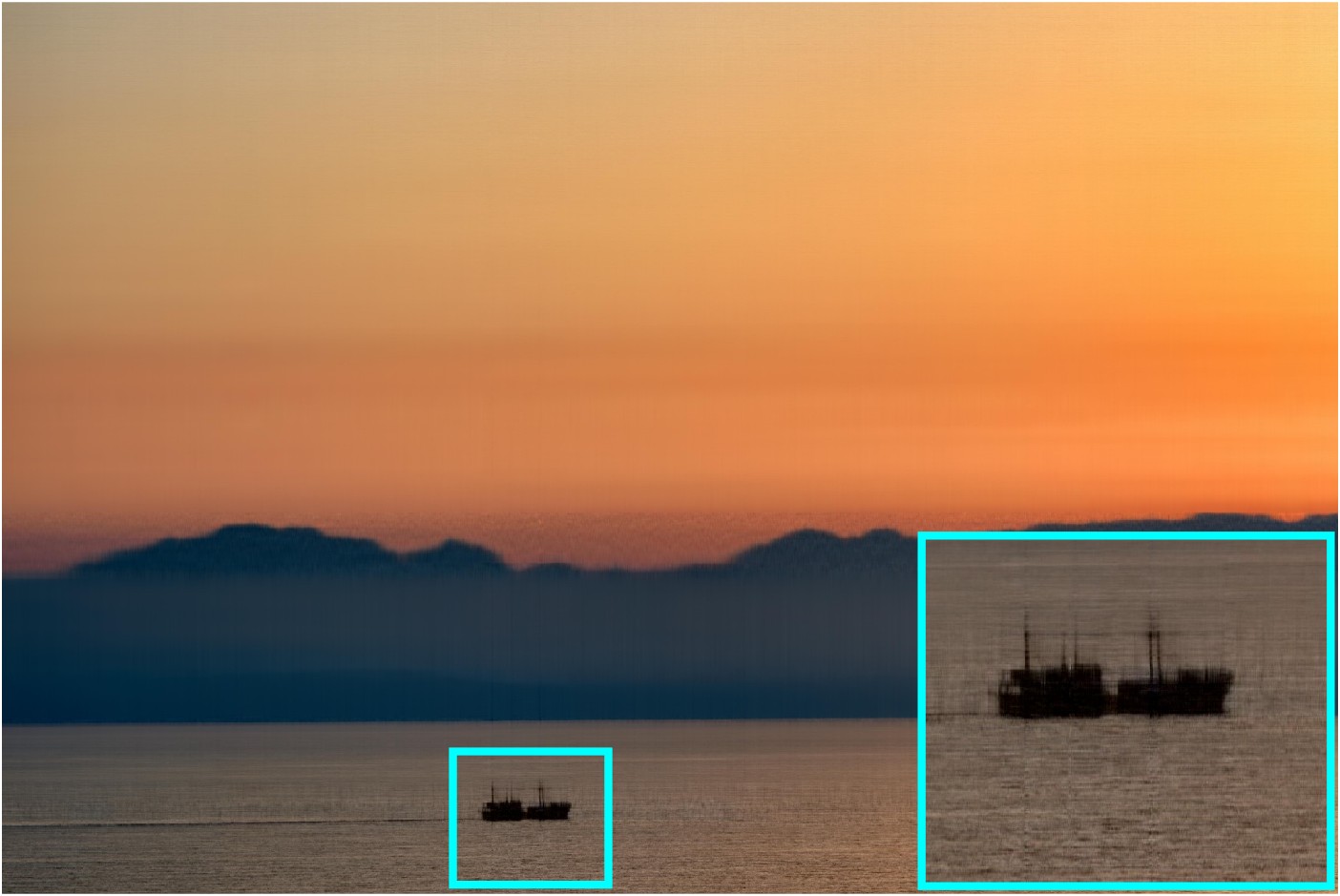} &
\includegraphics[width=0.77in]{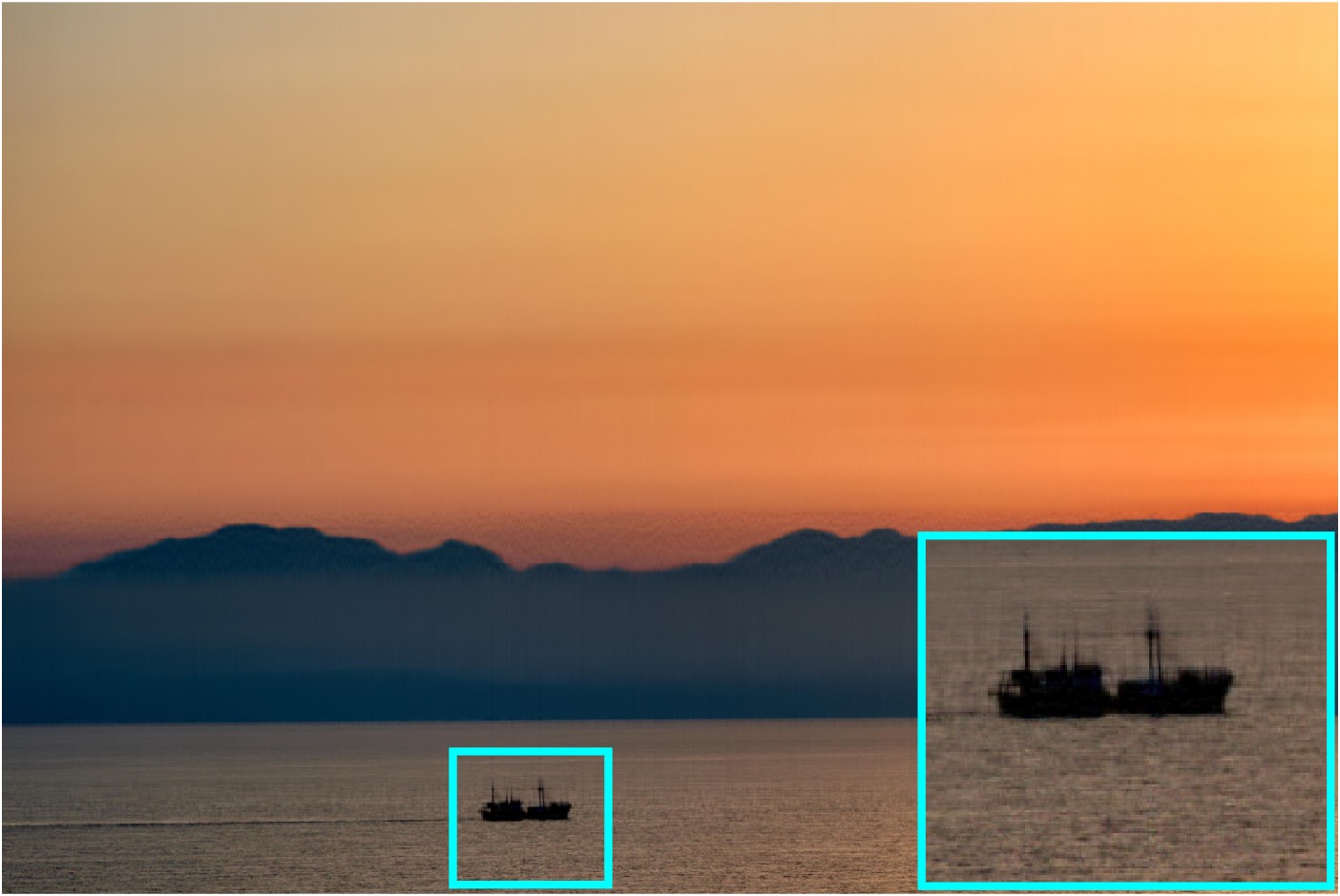} &
\includegraphics[width=0.77in]{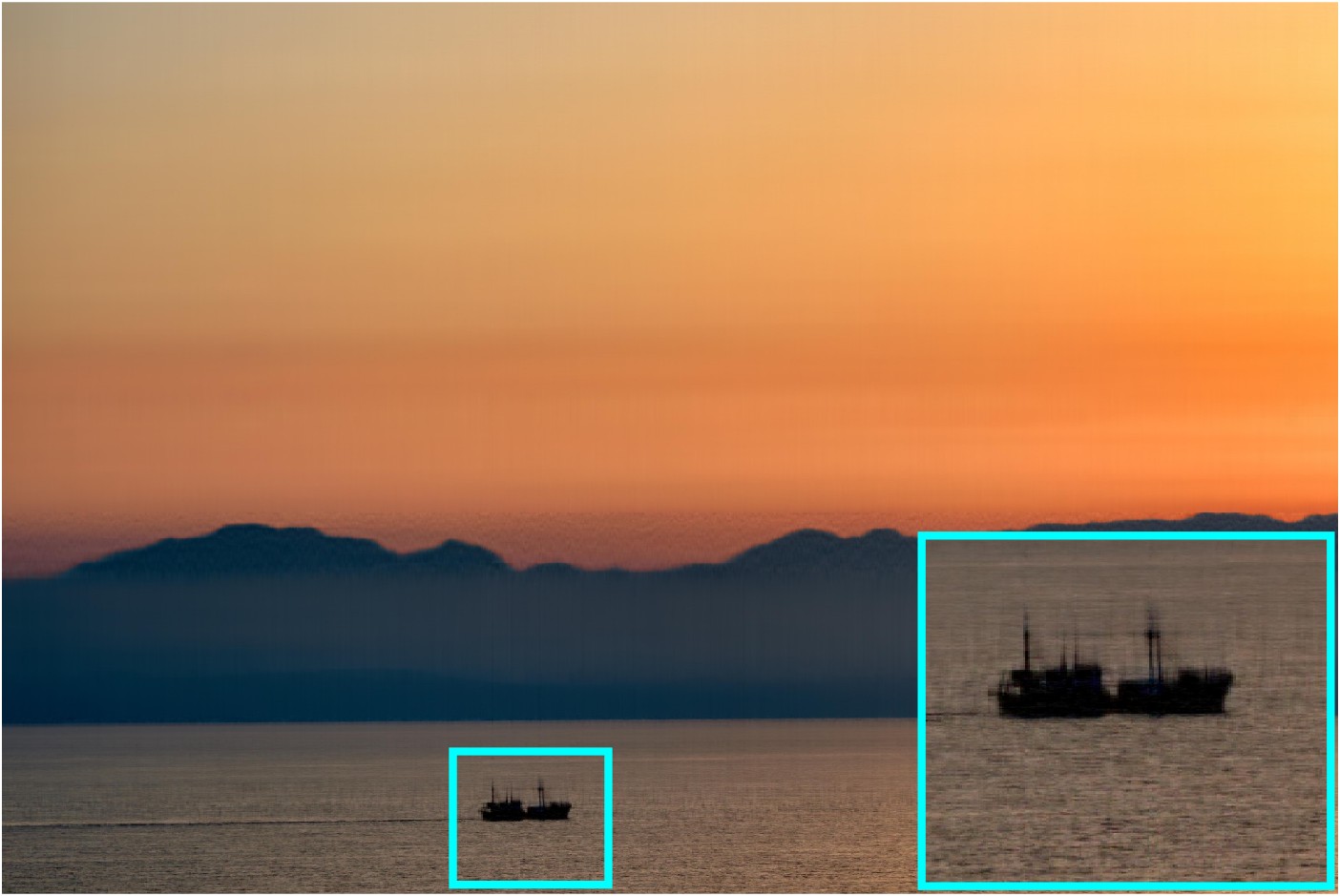} &
\includegraphics[width=0.77in]{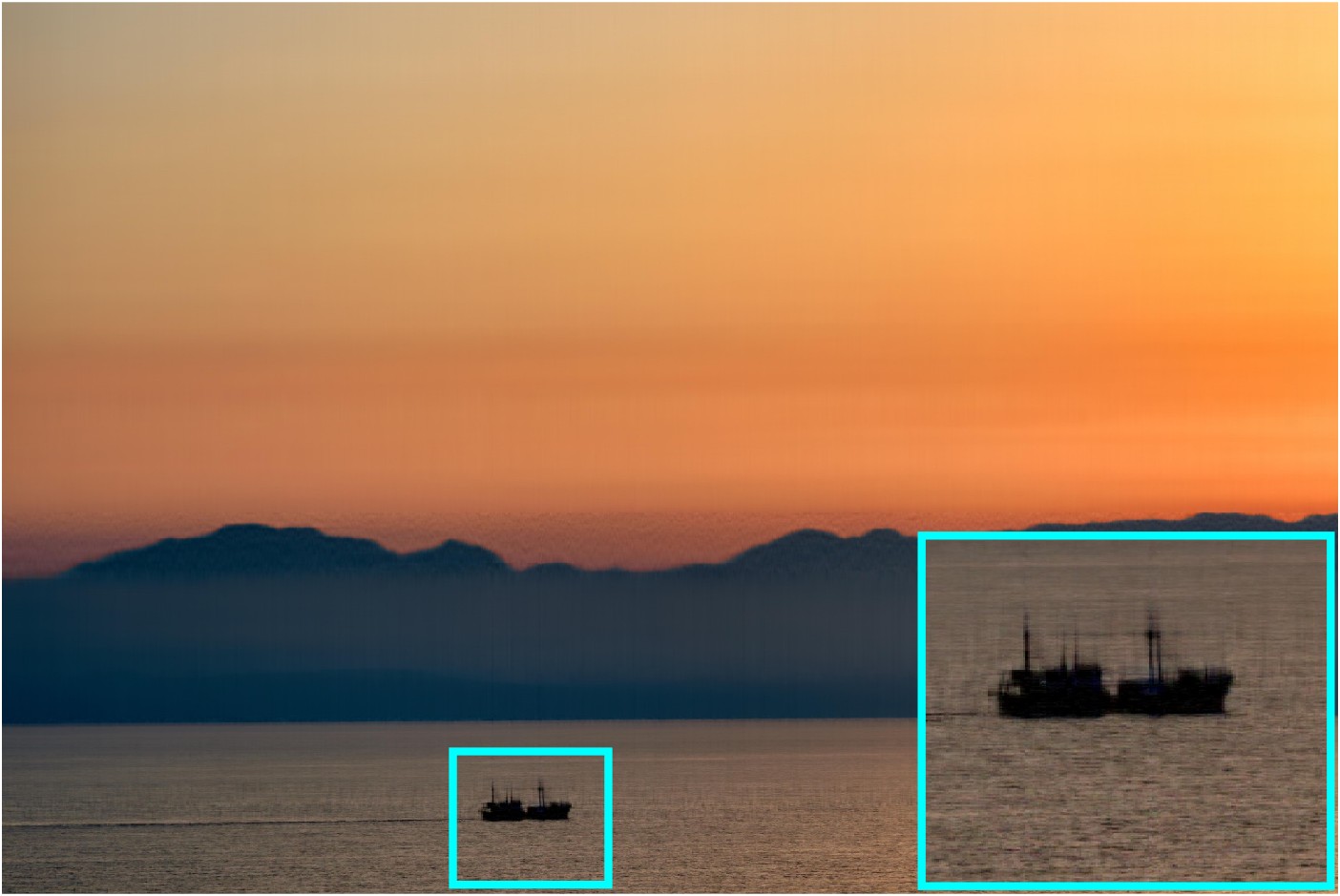} \\

& &\scriptsize PSNR:33.05 & \scriptsize PSNR:33.82&\scriptsize PSNR:34.35 & \scriptsize PSNR:34.45  \\
& &\scriptsize Time:331.92s& \scriptsize Time:308.55s&\scriptsize Time:345.20s& \scriptsize Time:246.06s \\

\includegraphics[width=0.77in]{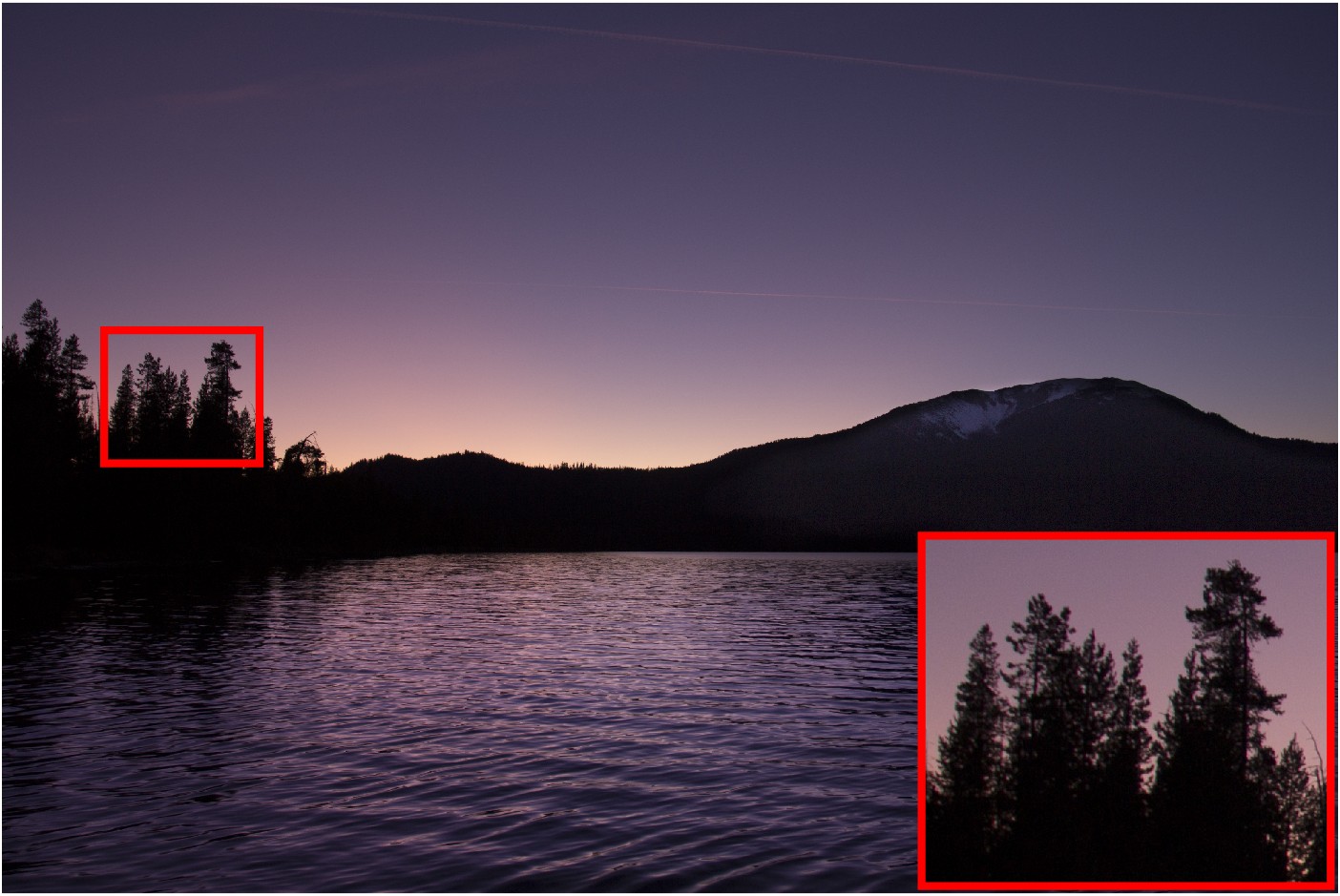} &
\includegraphics[width=0.77in]{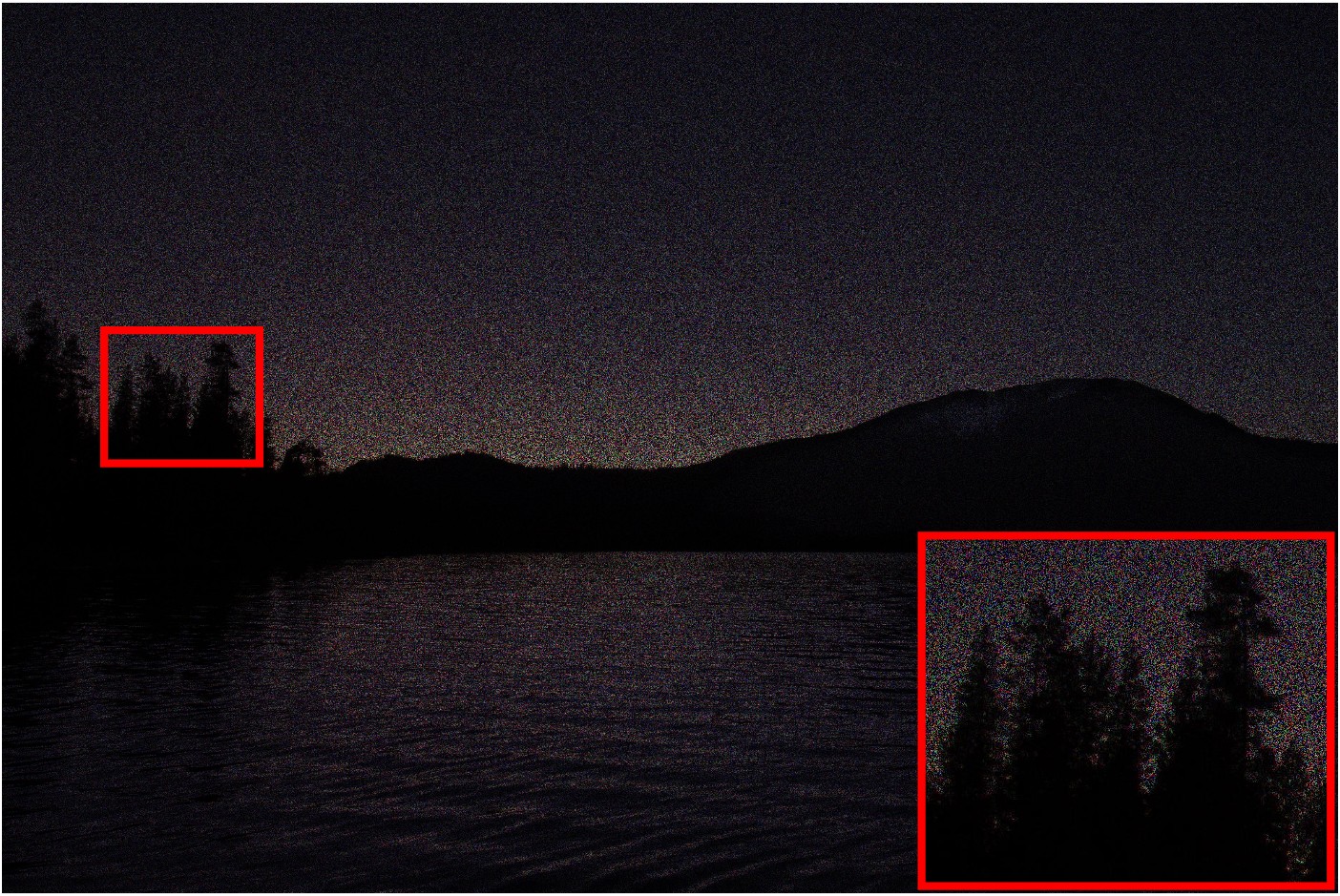} &
\includegraphics[width=0.77in]{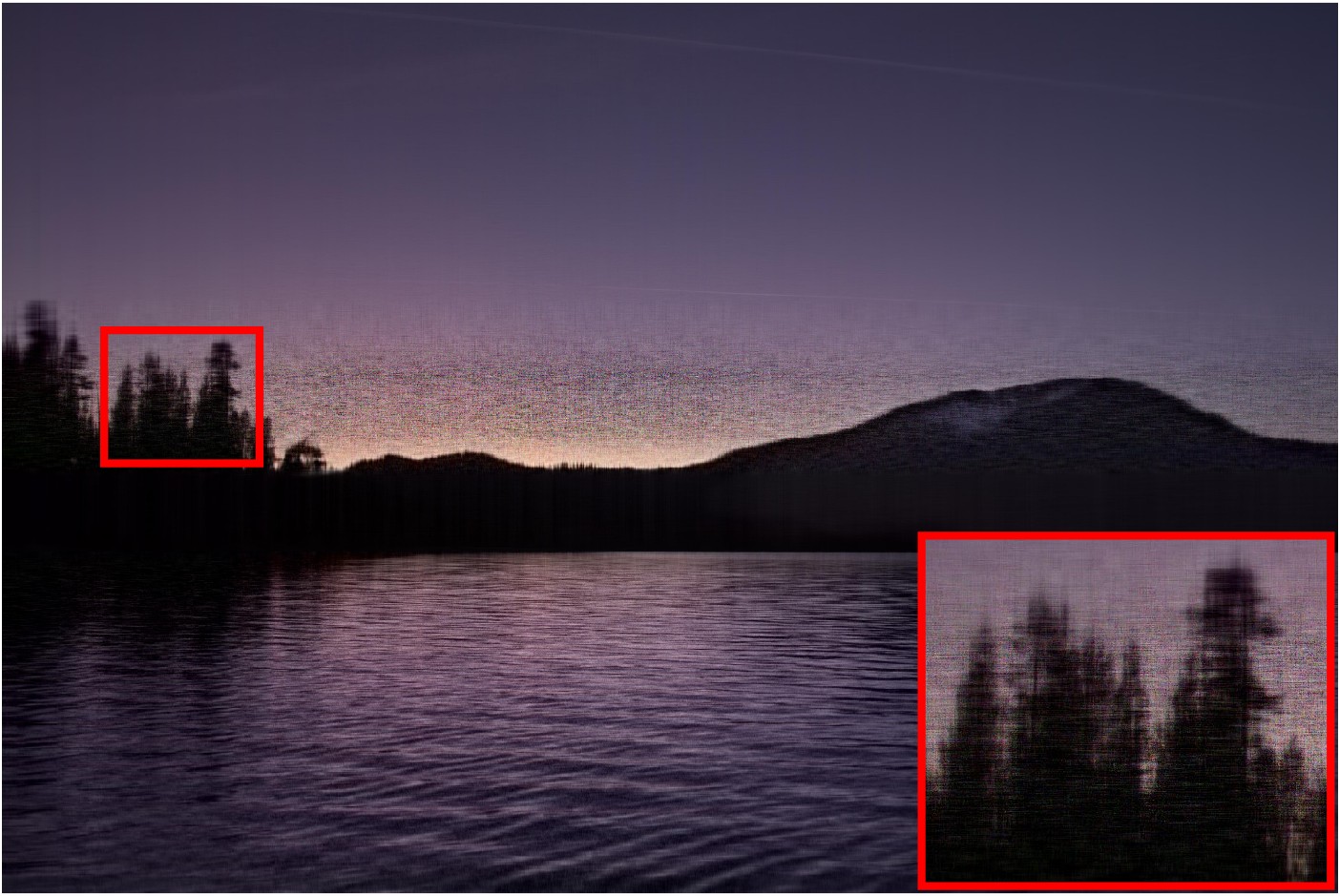} &
\includegraphics[width=0.77in]{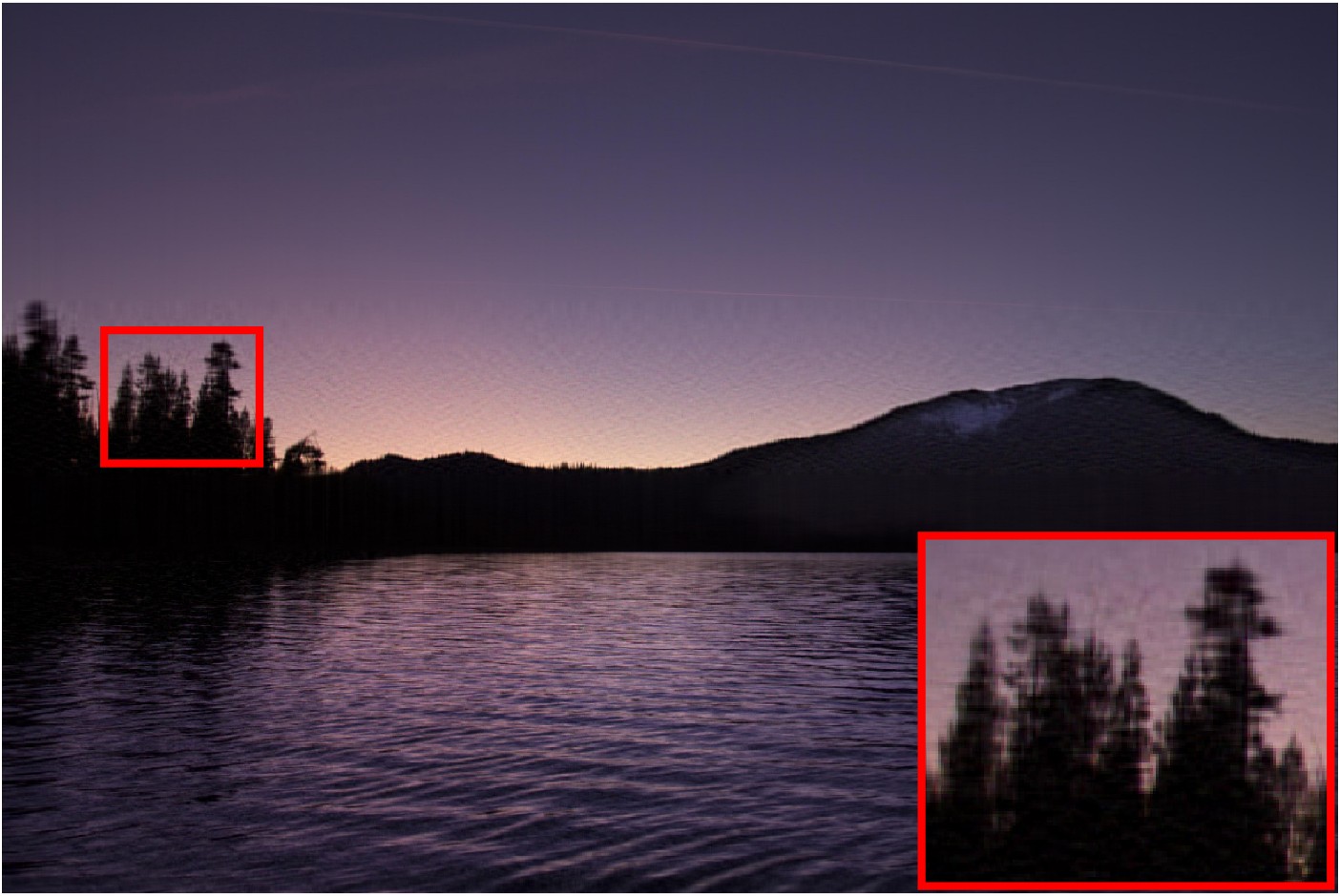} &
\includegraphics[width=0.77in]{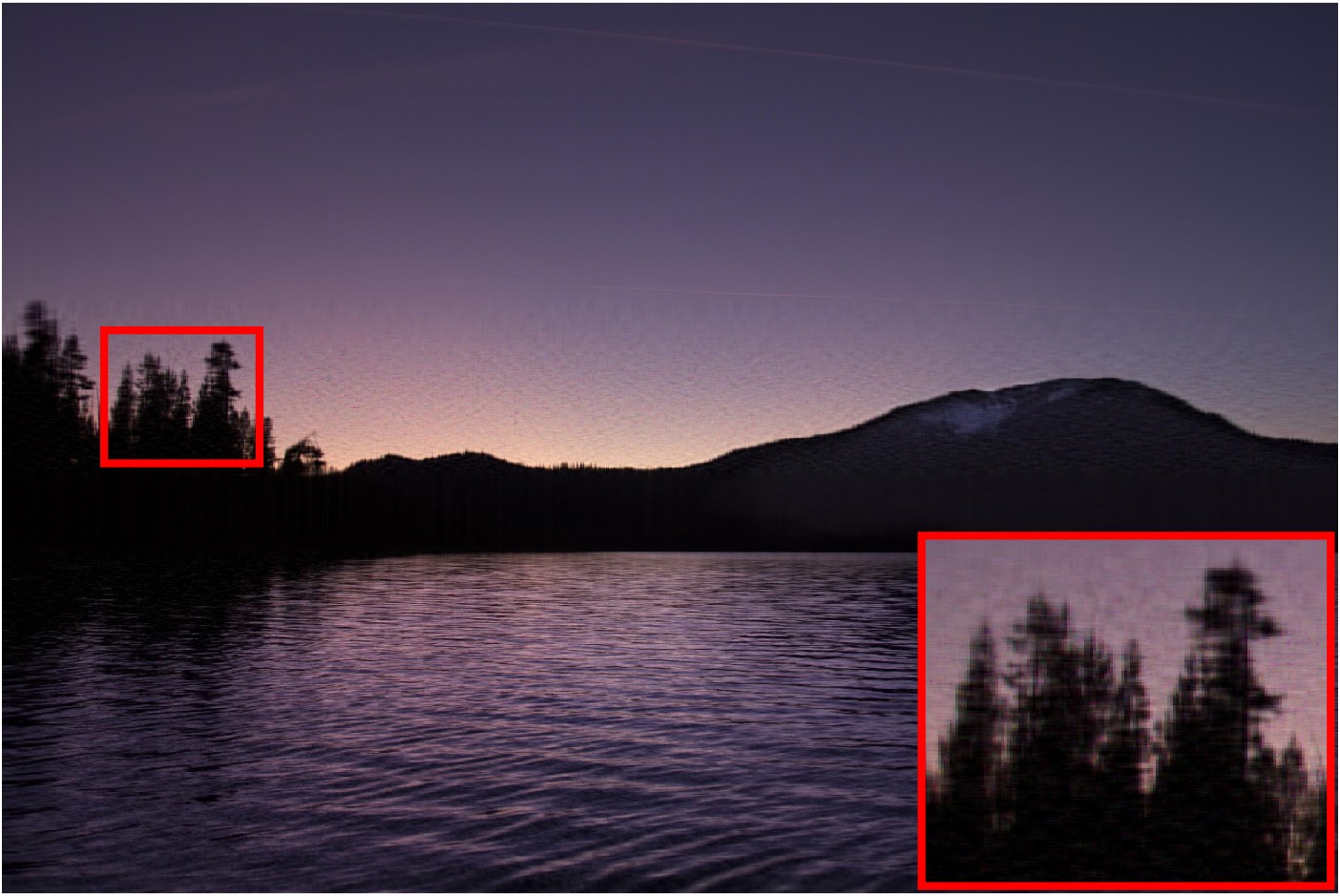} &
\includegraphics[width=0.77in]{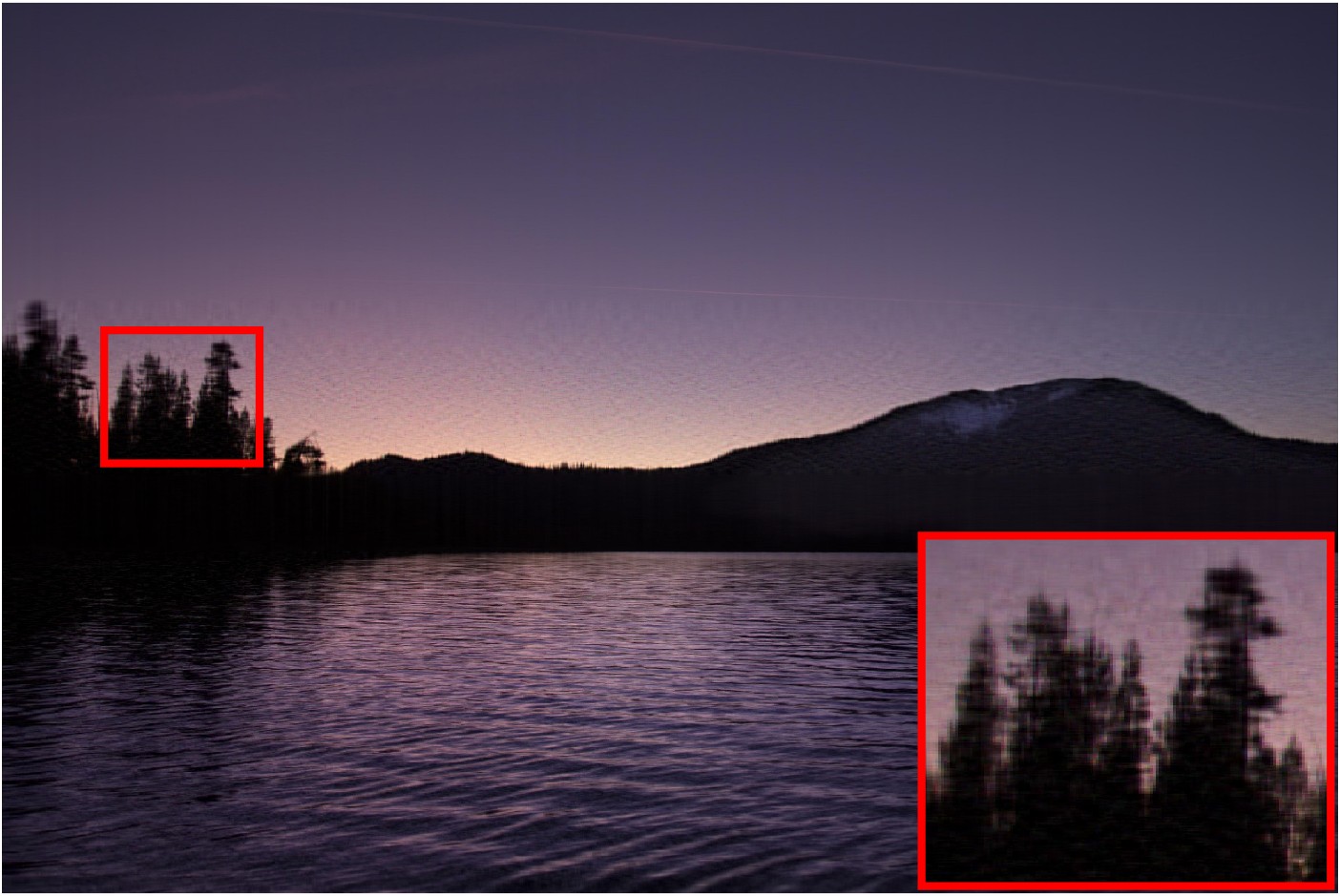} \\

& &\scriptsize PSNR:25.06 & \scriptsize PSNR:30.56&\scriptsize PSNR:31.79 & \scriptsize PSNR:32.06 \\
& &\scriptsize Time:387.35s & \scriptsize Time:233.50s&\scriptsize Time:264.79s & \scriptsize Time:198.24s \\

\includegraphics[width=0.77in]{image/road/fig_output/road.jpg} &
\includegraphics[width=0.77in]{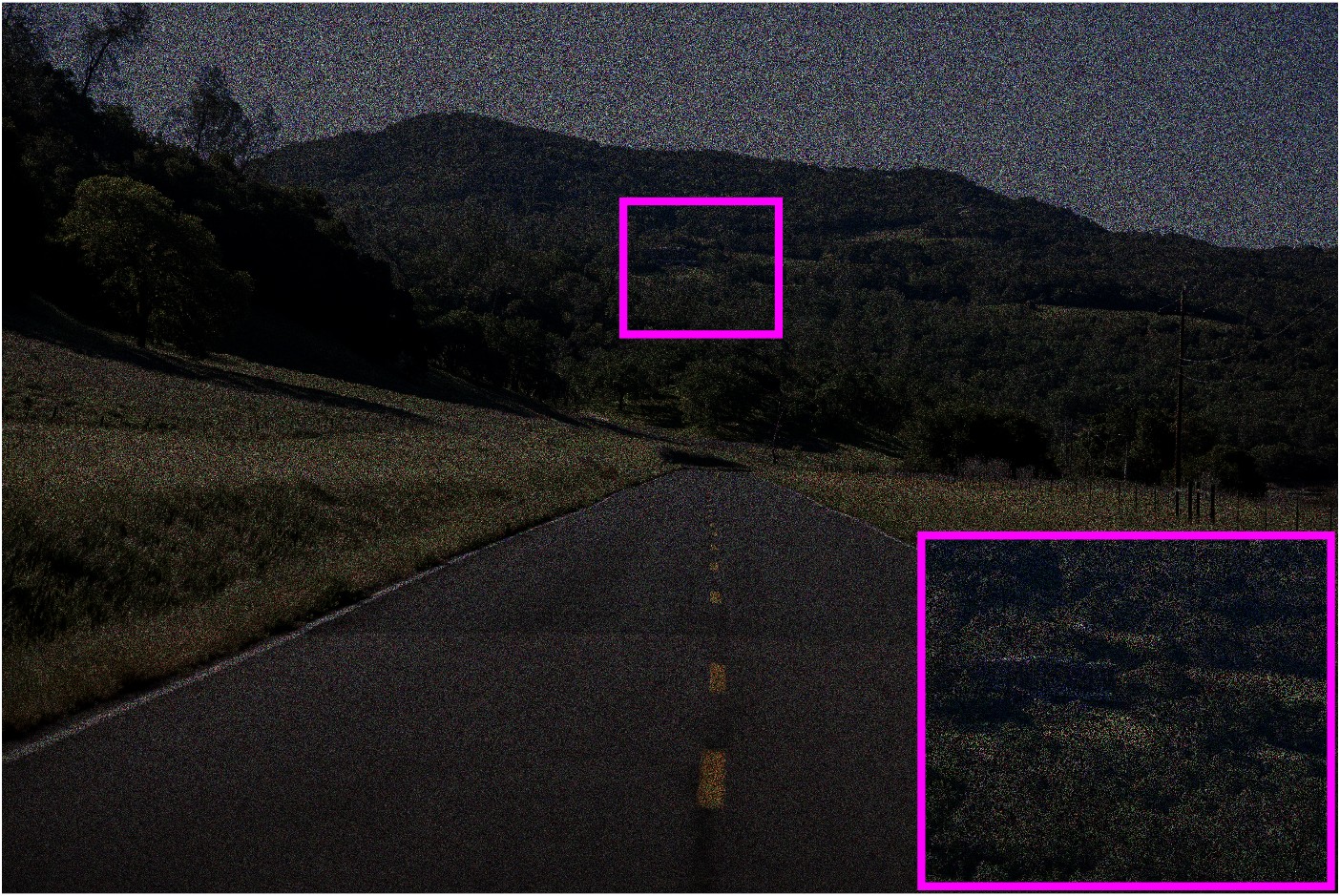} &
\includegraphics[width=0.77in]{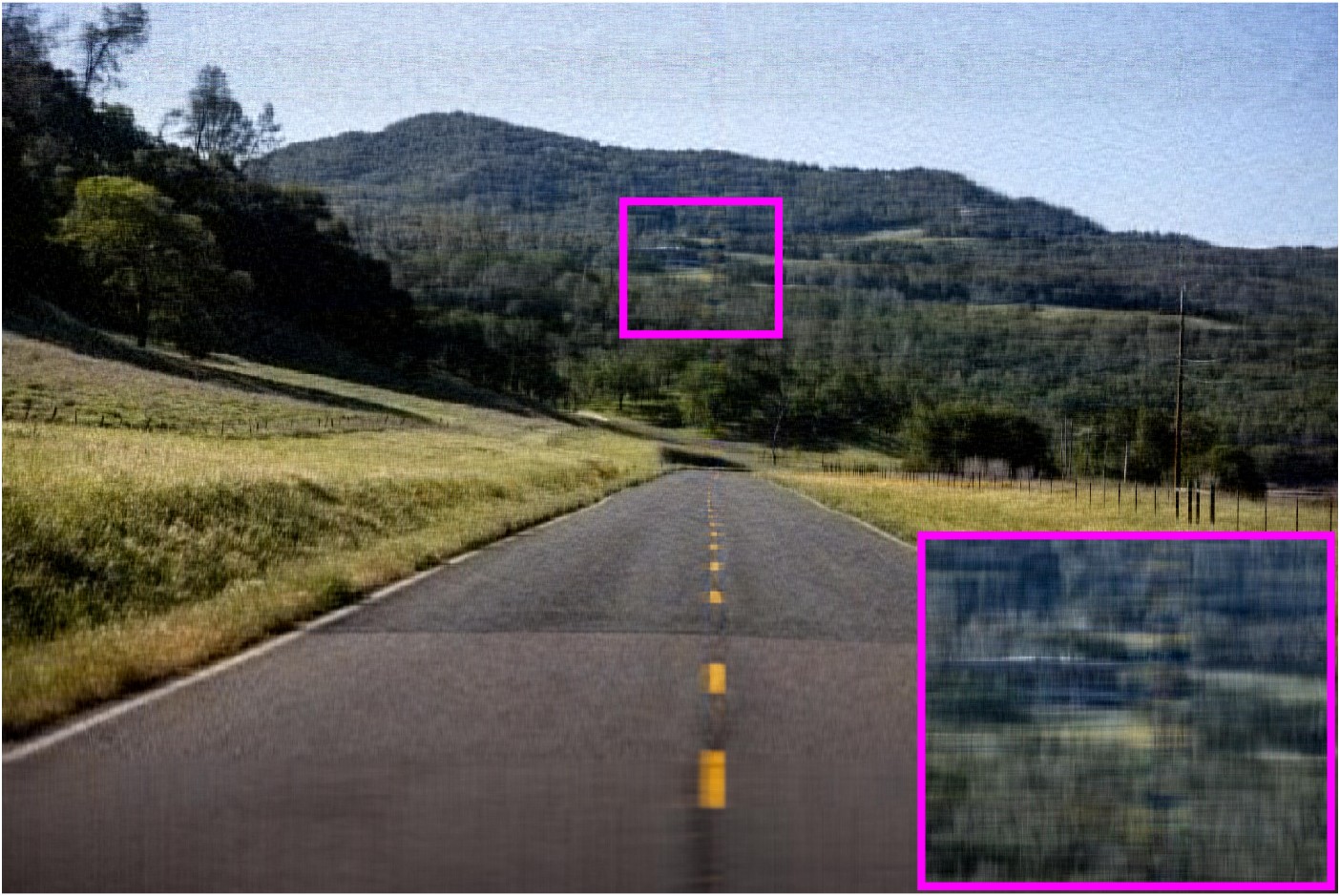} &
\includegraphics[width=0.77in]{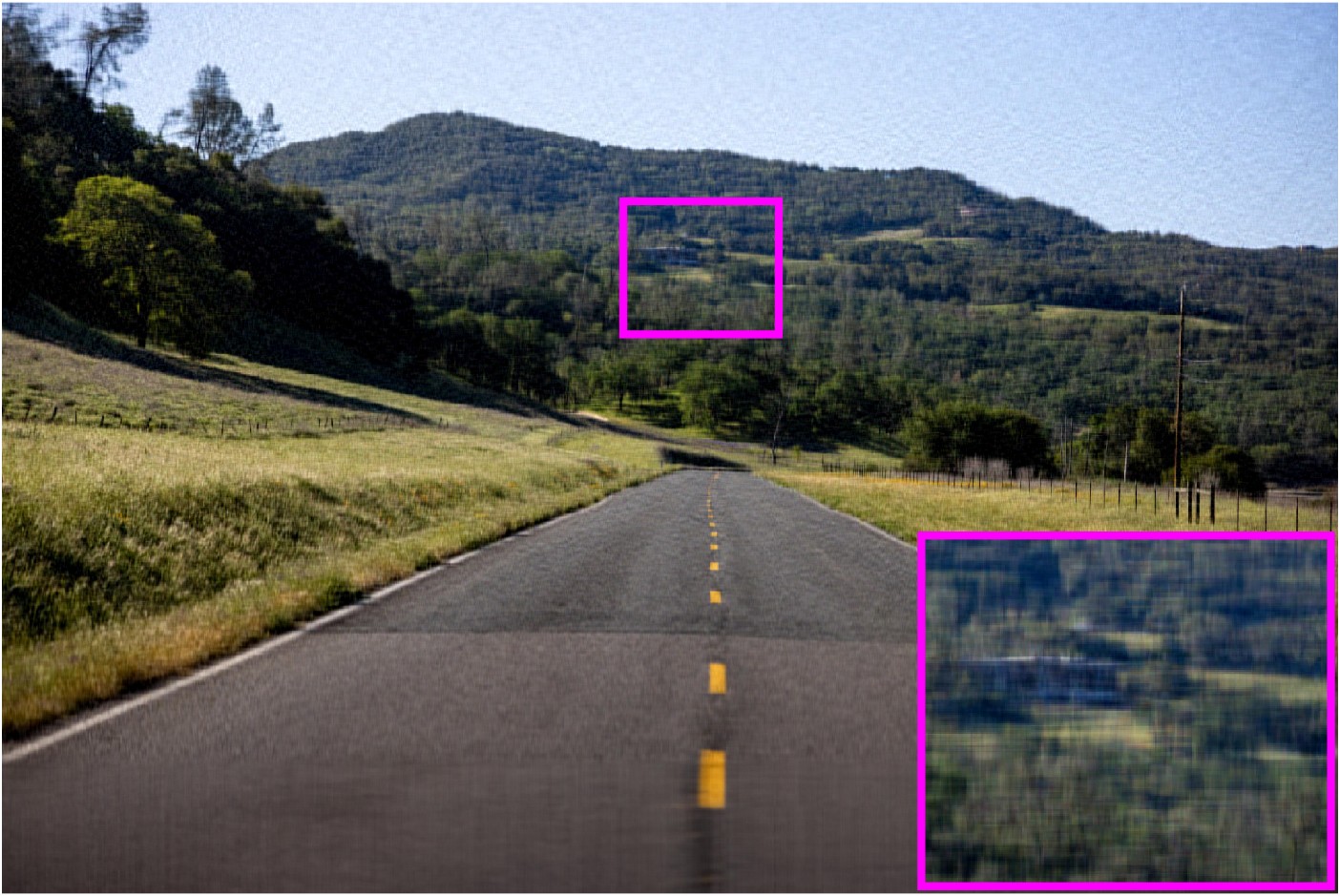} &
\includegraphics[width=0.77in]{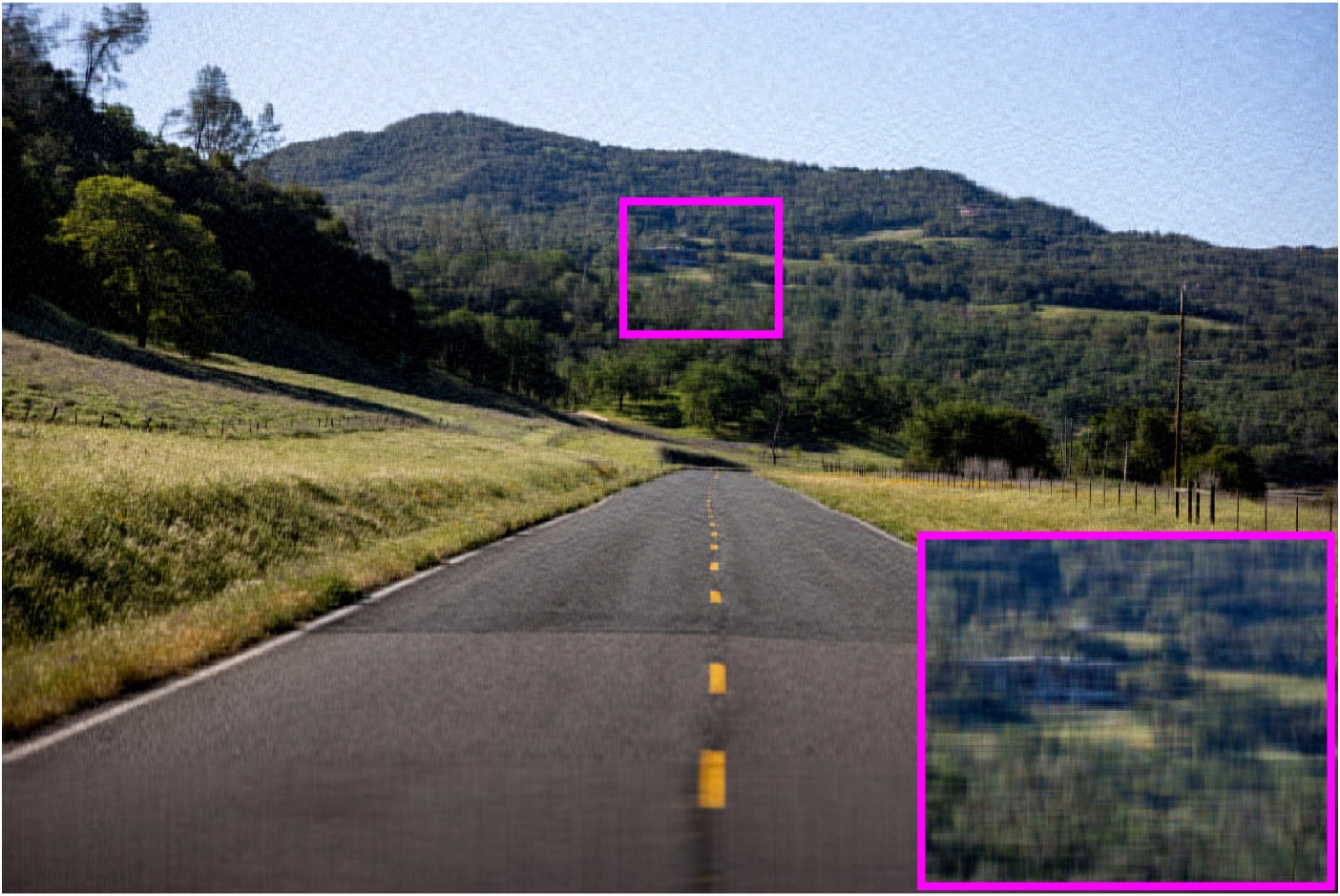} &
\includegraphics[width=0.77in]{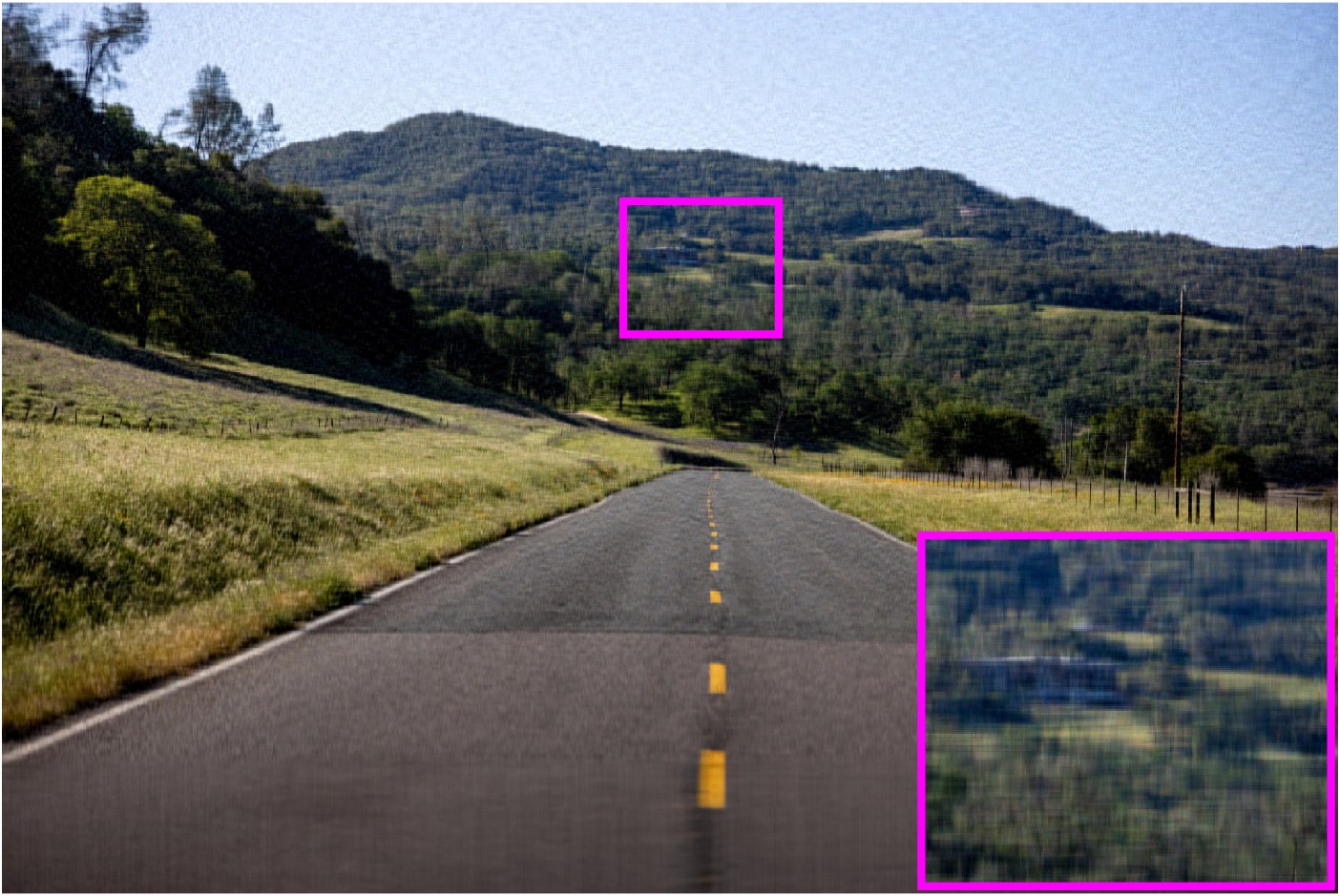} \\

& &\scriptsize PSNR:24.08 & \scriptsize PSNR:25.85 &\scriptsize PSNR:25.89 & \scriptsize PSNR:25.93  \\
& &\scriptsize Time:1028.57s & \scriptsize Time:607.52s&\scriptsize Time:687.87s & \scriptsize Time:533.98s \\

\includegraphics[width=0.77in]{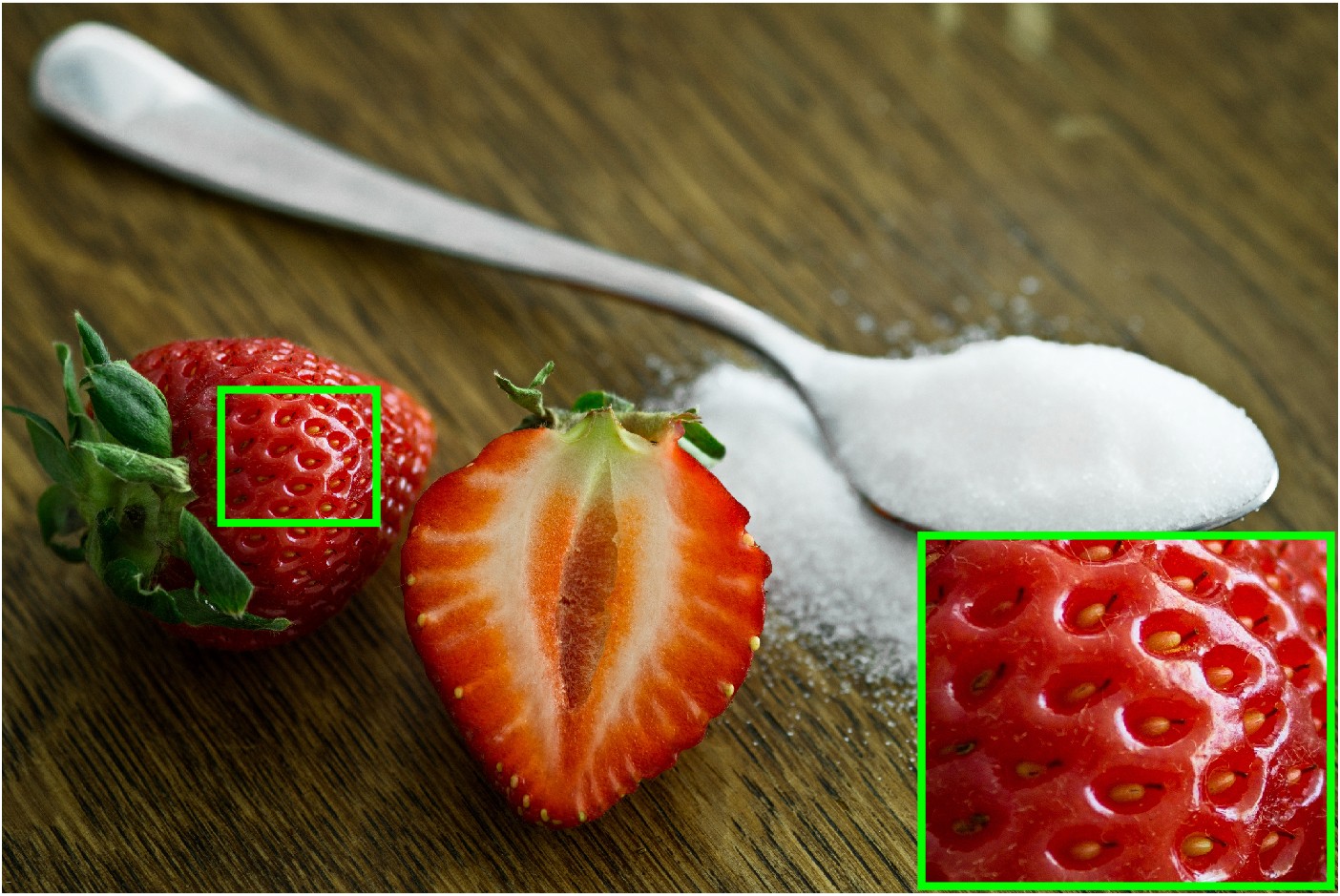} &
\includegraphics[width=0.77in]{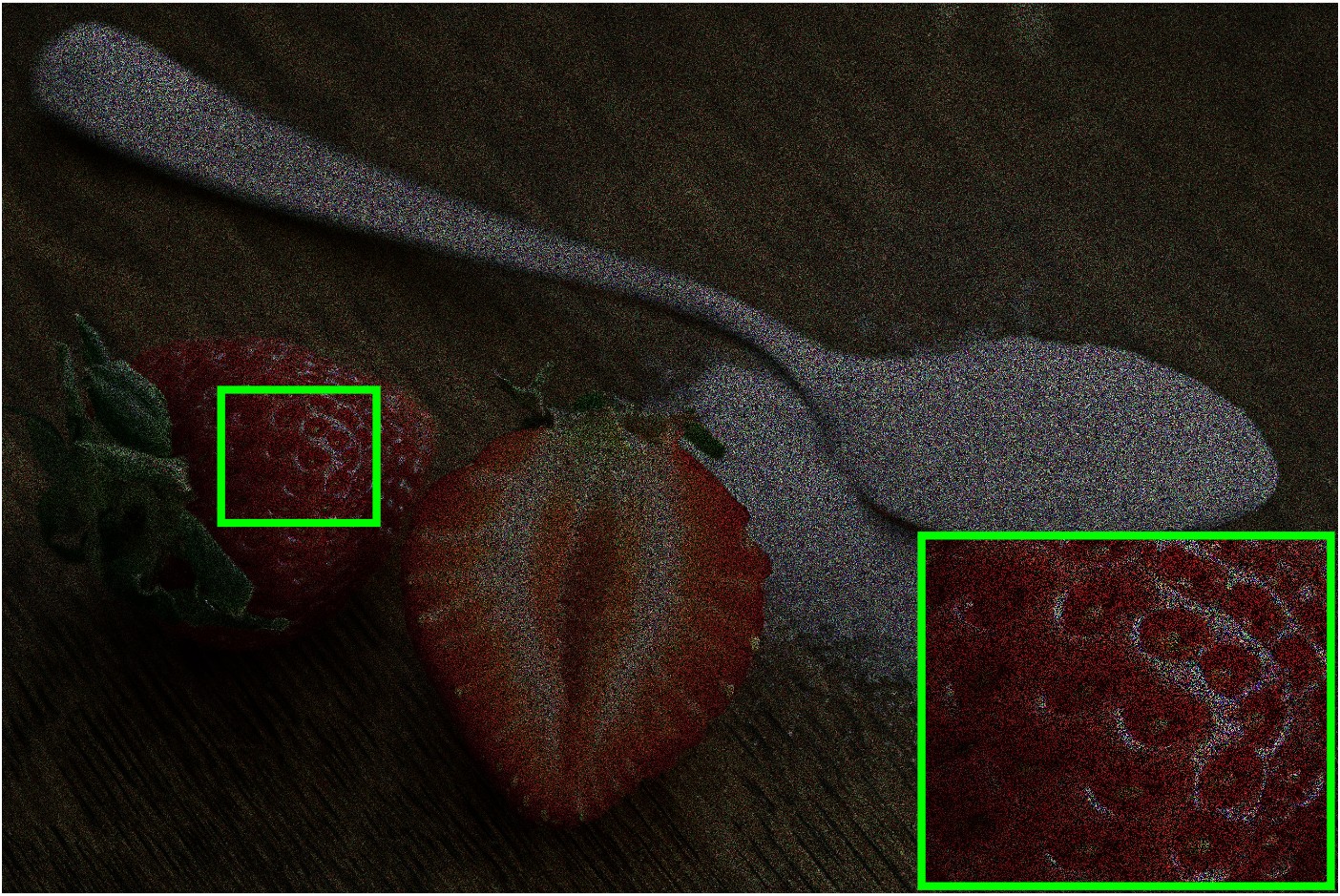} &
\includegraphics[width=0.77in]{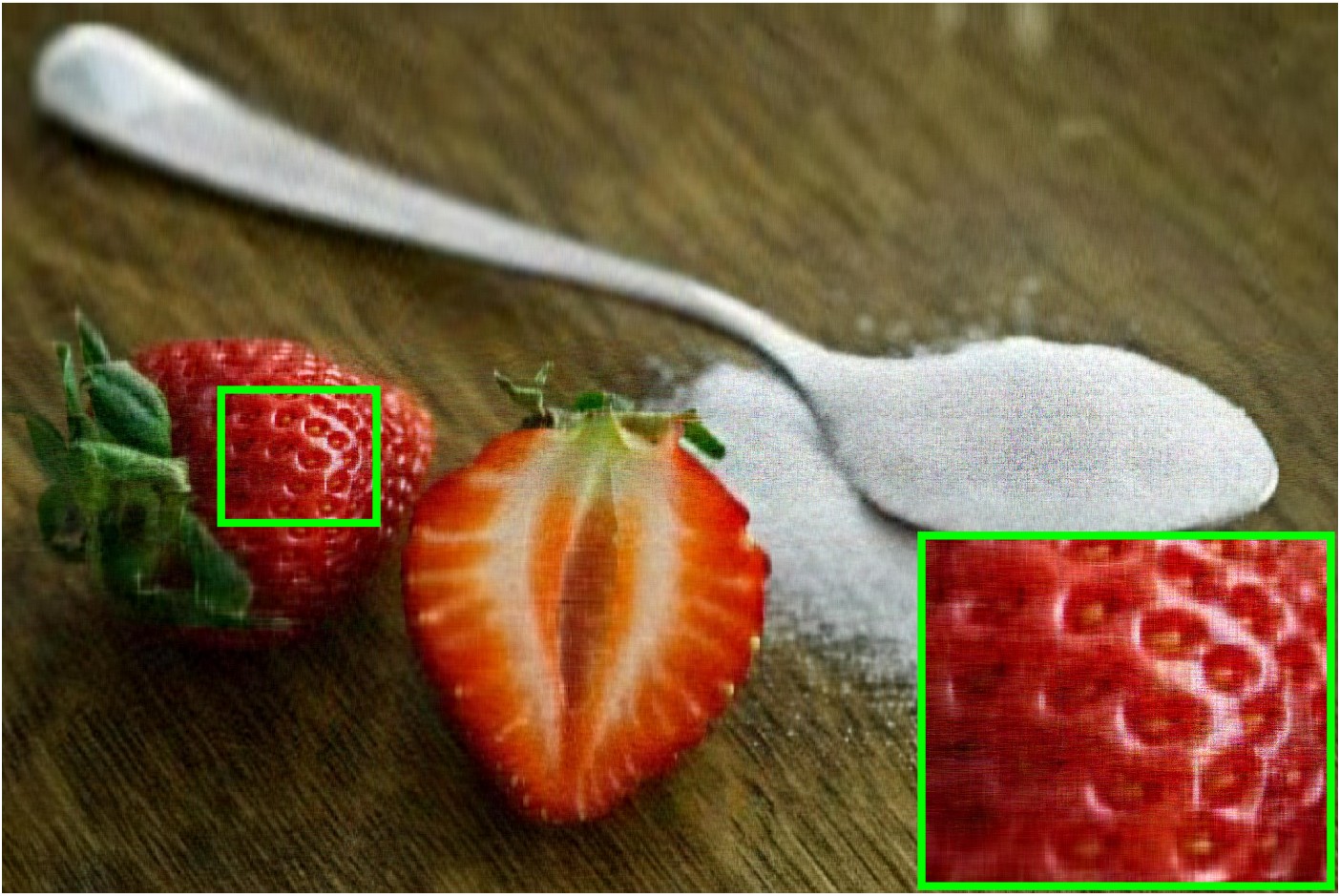} &
\includegraphics[width=0.77in]{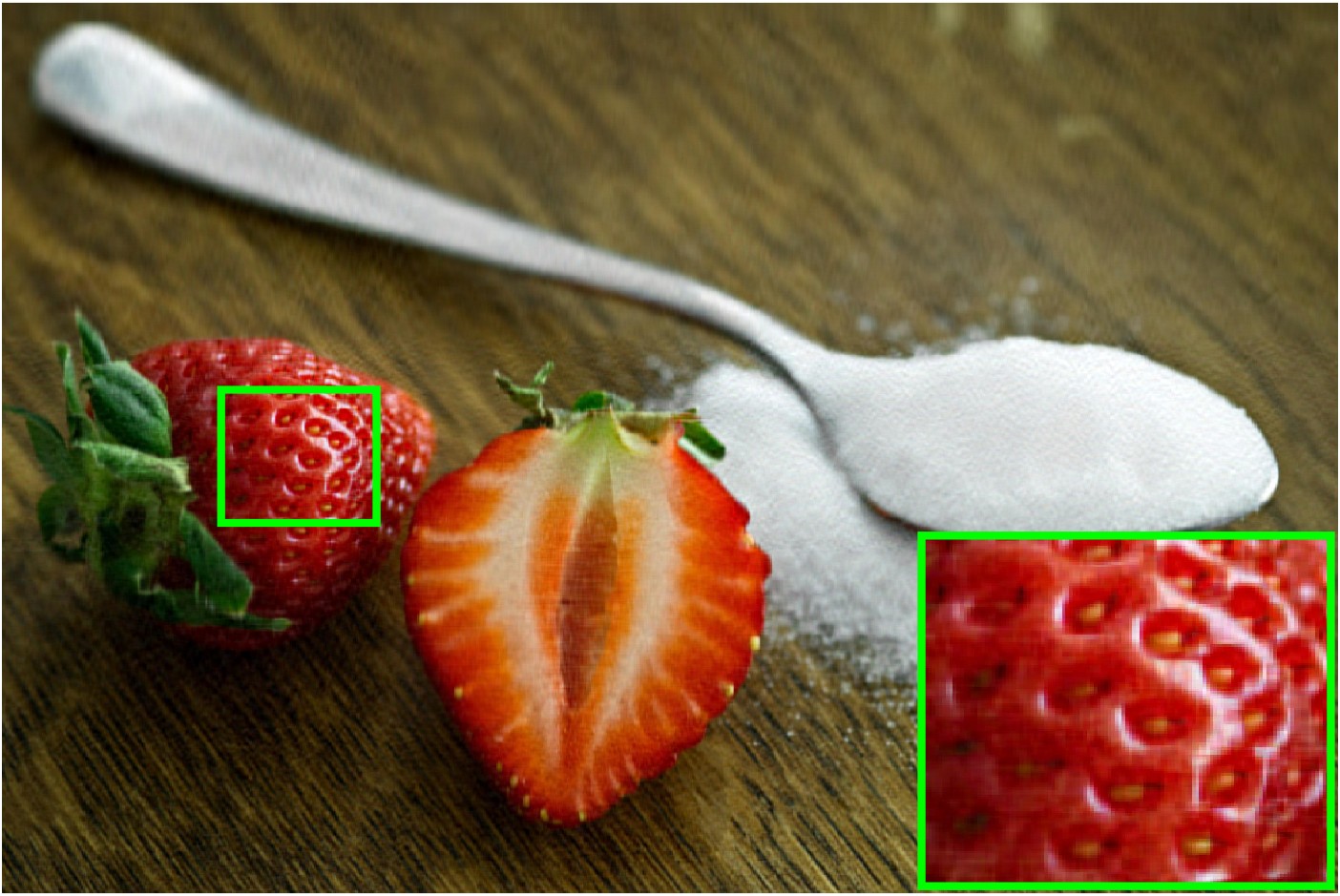} &
\includegraphics[width=0.77in]{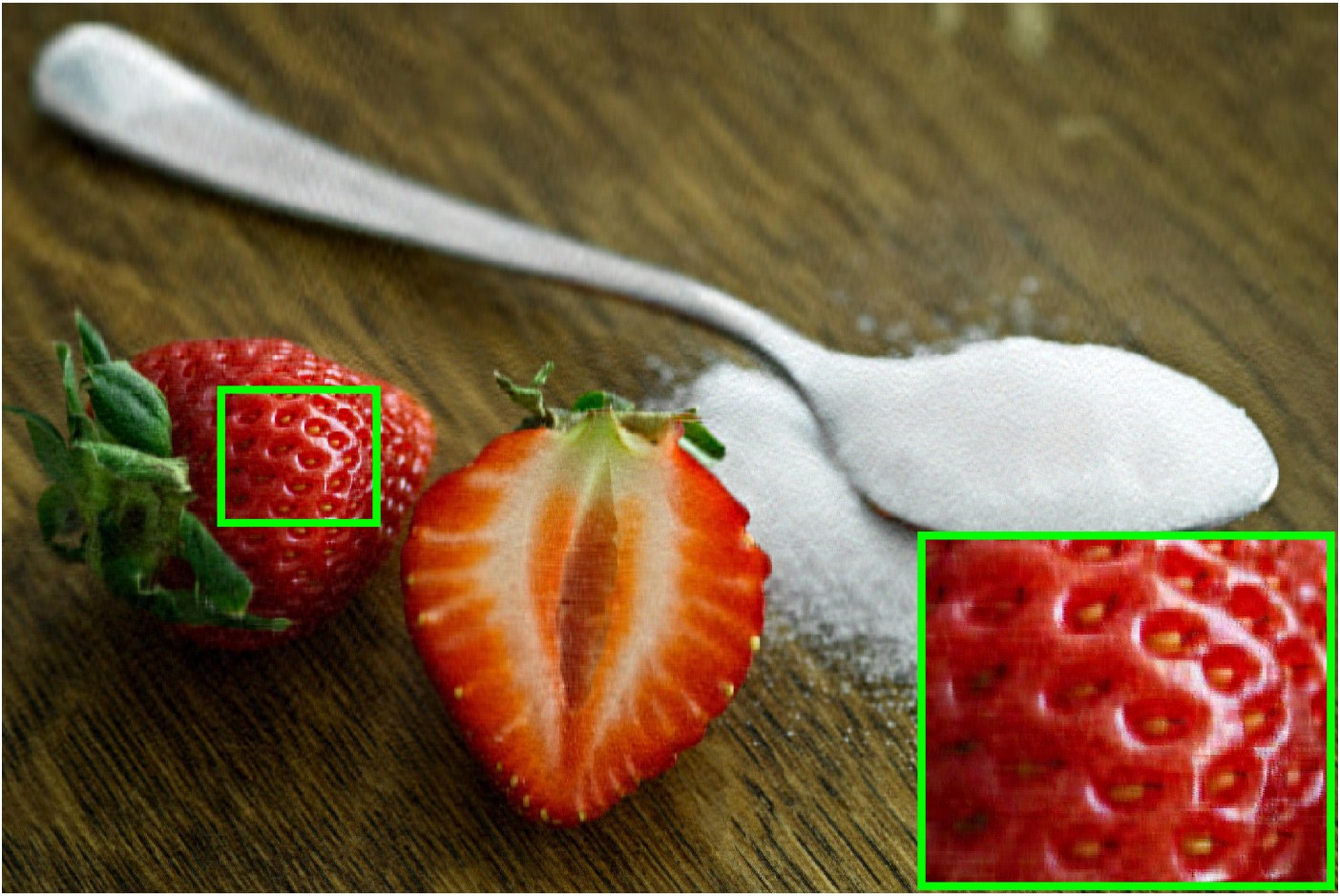} &
\includegraphics[width=0.77in]{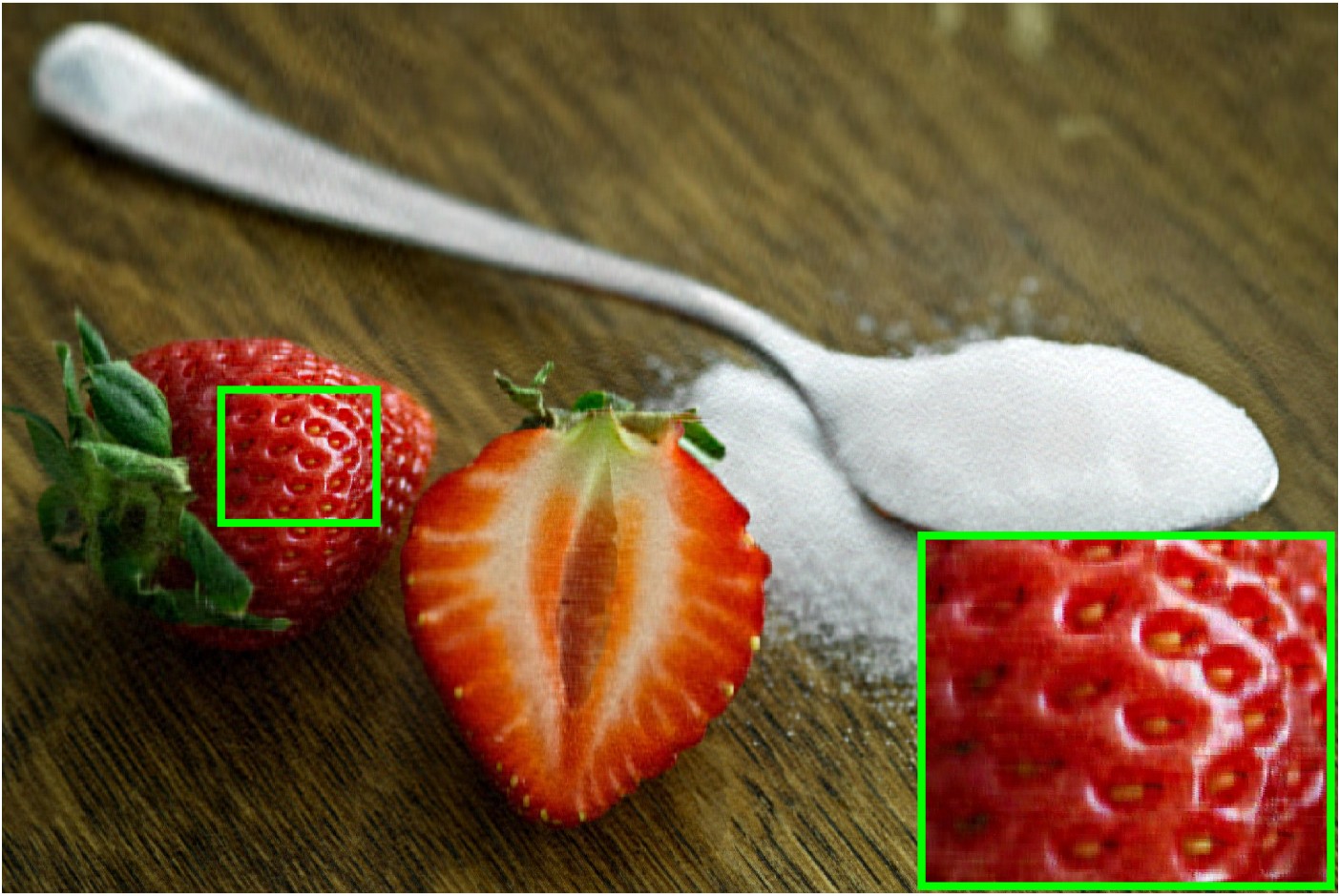} \\

& &\scriptsize PSNR:25.18 & \scriptsize PSNR:29.78&\scriptsize PSNR:29.94 & \scriptsize PSNR:30.05  \\
& &\scriptsize Time:605.71s & \scriptsize Time:276.99s&\scriptsize Time:336.44s & \scriptsize Time:239.84s \\

\end{tabular}

\caption{Visual reconstruction examples, PSNR and runtime comparisons of competing baselines and our proposed approaches on four representative test images for image completion.}
\label{fig:images completion}
\end{figure}

\begin{figure}[!ht]
  \centering
\includegraphics[width=\textwidth]{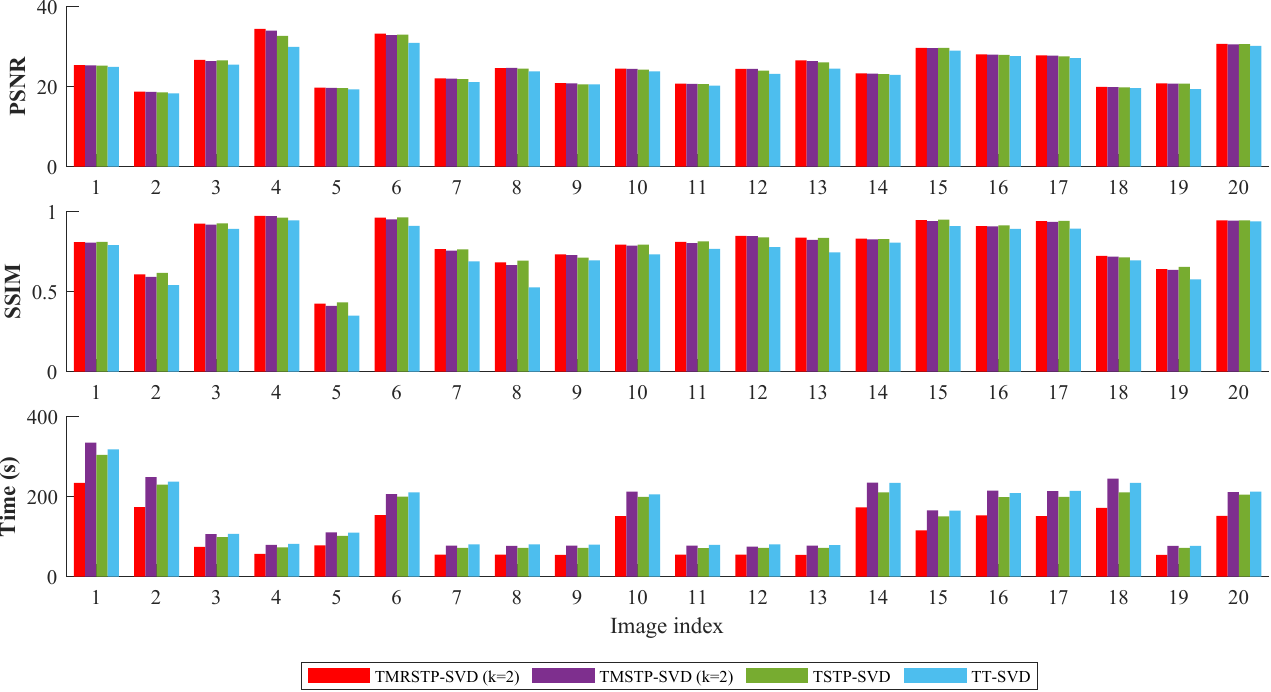}
\caption{Per-image PSNR, SSIM and runtime comparisons of competing baselines and our proposed approaches over twenty test images for image completion.}
\label{fig:image20_complete}  
\end{figure}

We evaluate image completion on 20 test images (four selected for visual comparison), with 70\% of pixels randomly removed. The oversampling parameter is set to $3$, and all other settings follow Subsection \ref{subsec:image}.
Fig.~\ref{fig:images completion} presents visual reconstruction results along with PSNR and runtime metrics obtained by competing baselines and our proposed approaches on these four representative images. Visually, our approaches recover richer textures and finer local structural details. Baseline methods, in contrast, tend to leave visible artifacts in the recovered regions. In terms of quantitative metrics, our approaches yield higher PSNR values. The randomized scheme maintains competitive reconstruction quality. It greatly reduces computational burden for iterative completion procedures.
Fig.~\ref{fig:image20_complete} summarizes per-image PSNR, SSIM and runtime comparisons across all twenty test images. The overall results further confirm the superiority of our framework over competing baselines in both reconstruction fidelity and computational efficiency. It renders our approach well-suited for large-scale image completion tasks requiring iterative optimization.

\begin{figure}[!ht]  
  \centering

  \includegraphics[width=0.49\linewidth]{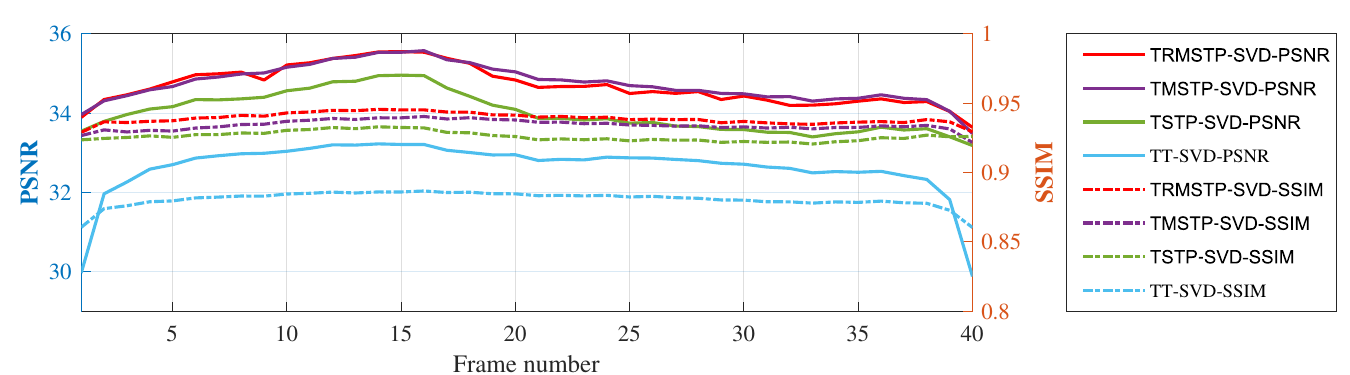}
\hfill
  \includegraphics[width=0.49\linewidth]{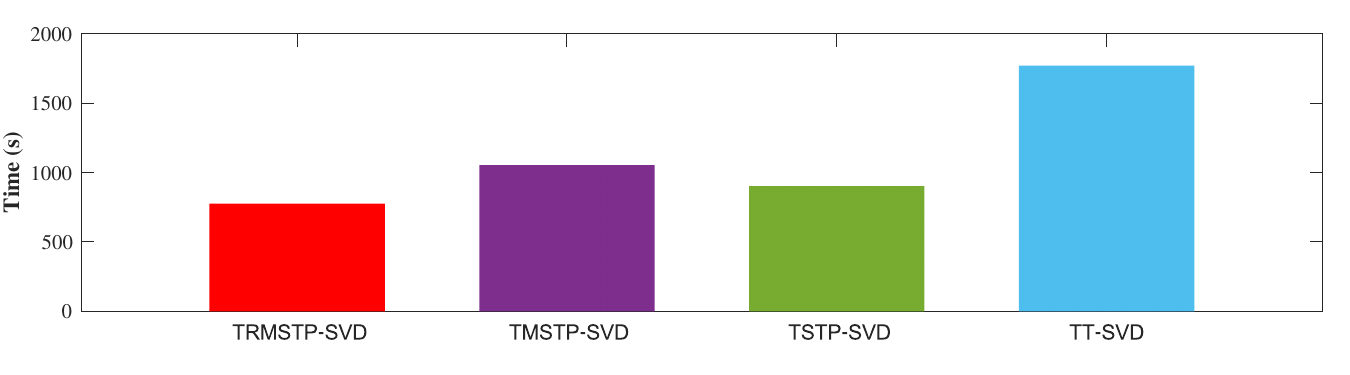}

  \renewcommand{\arraystretch}{0.5}
  \setlength{\tabcolsep}{0.03pt}       

  \begin{tabular}{@{}cccccc@{}}

    \includegraphics[width=0.77in]{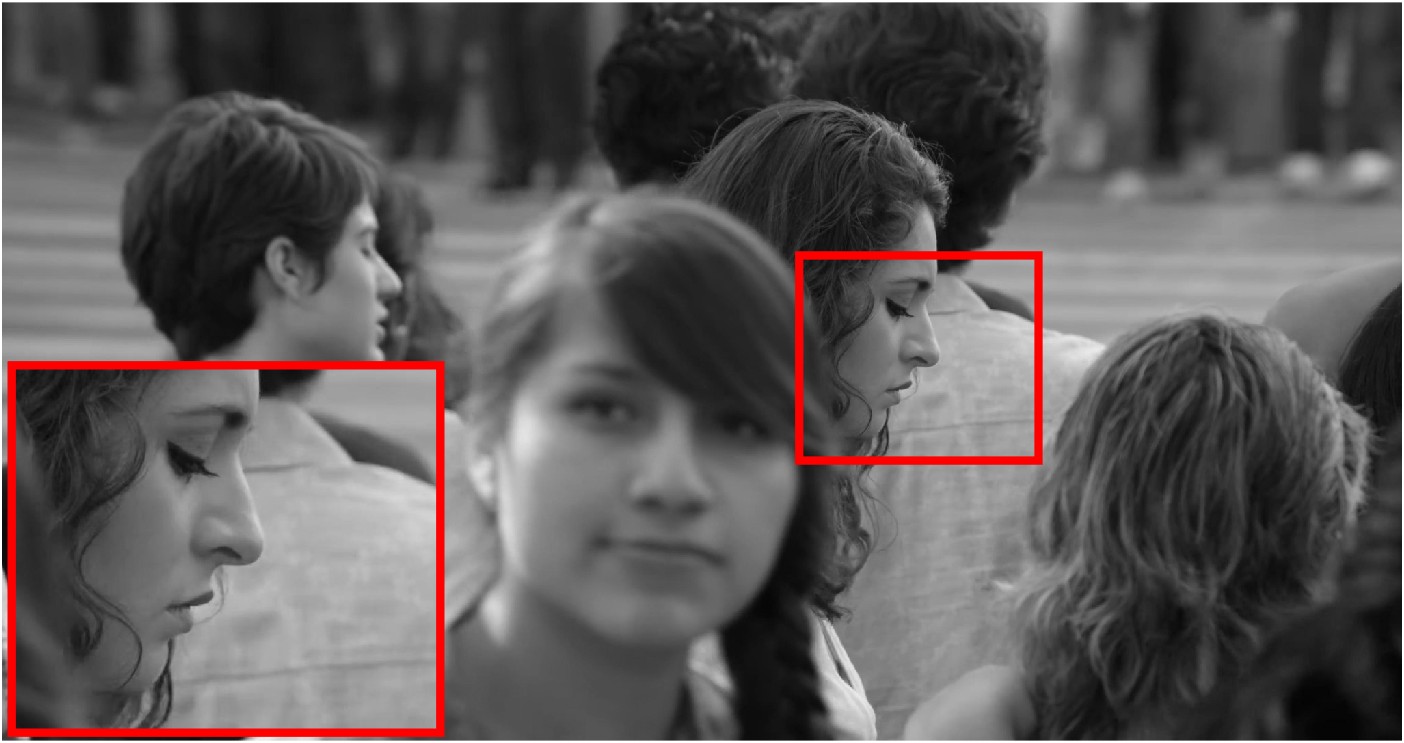} &
    \includegraphics[width=0.77in]{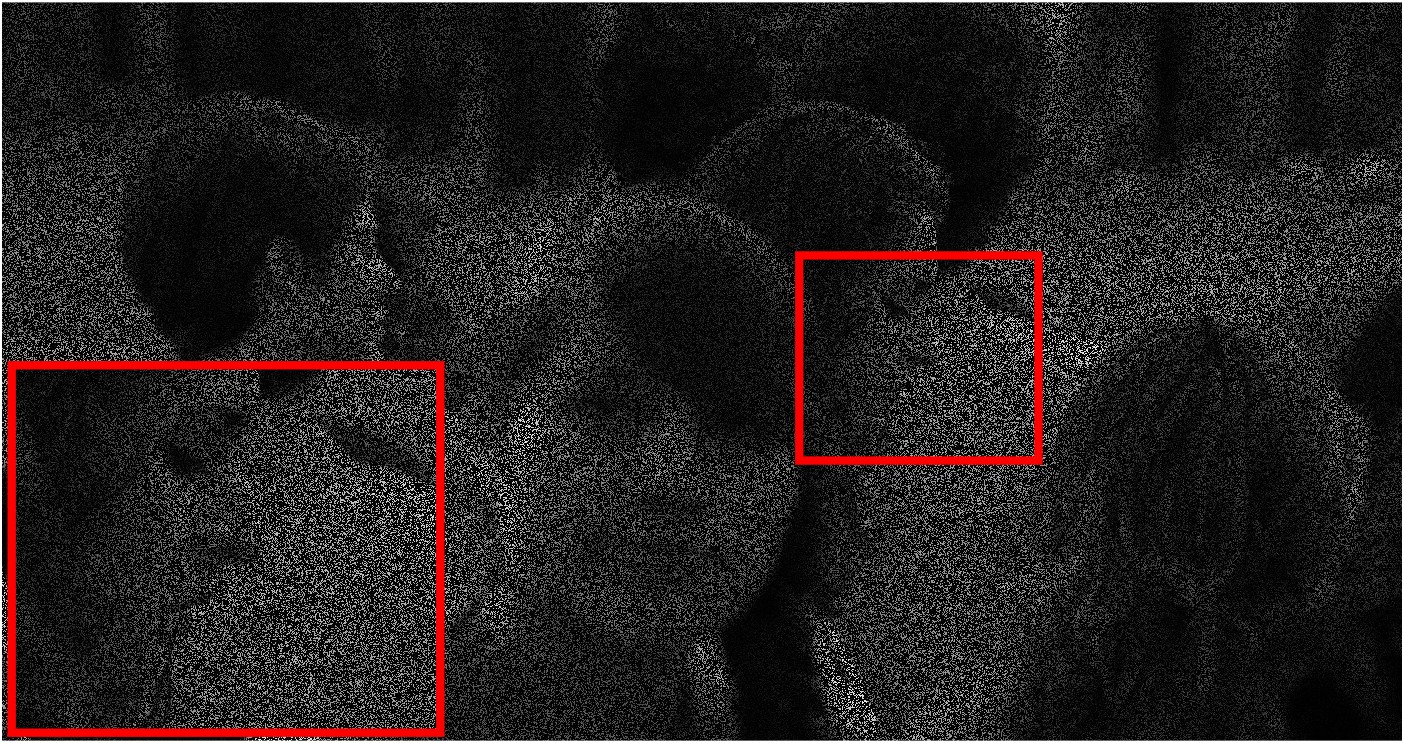} &
    \includegraphics[width=0.77in]{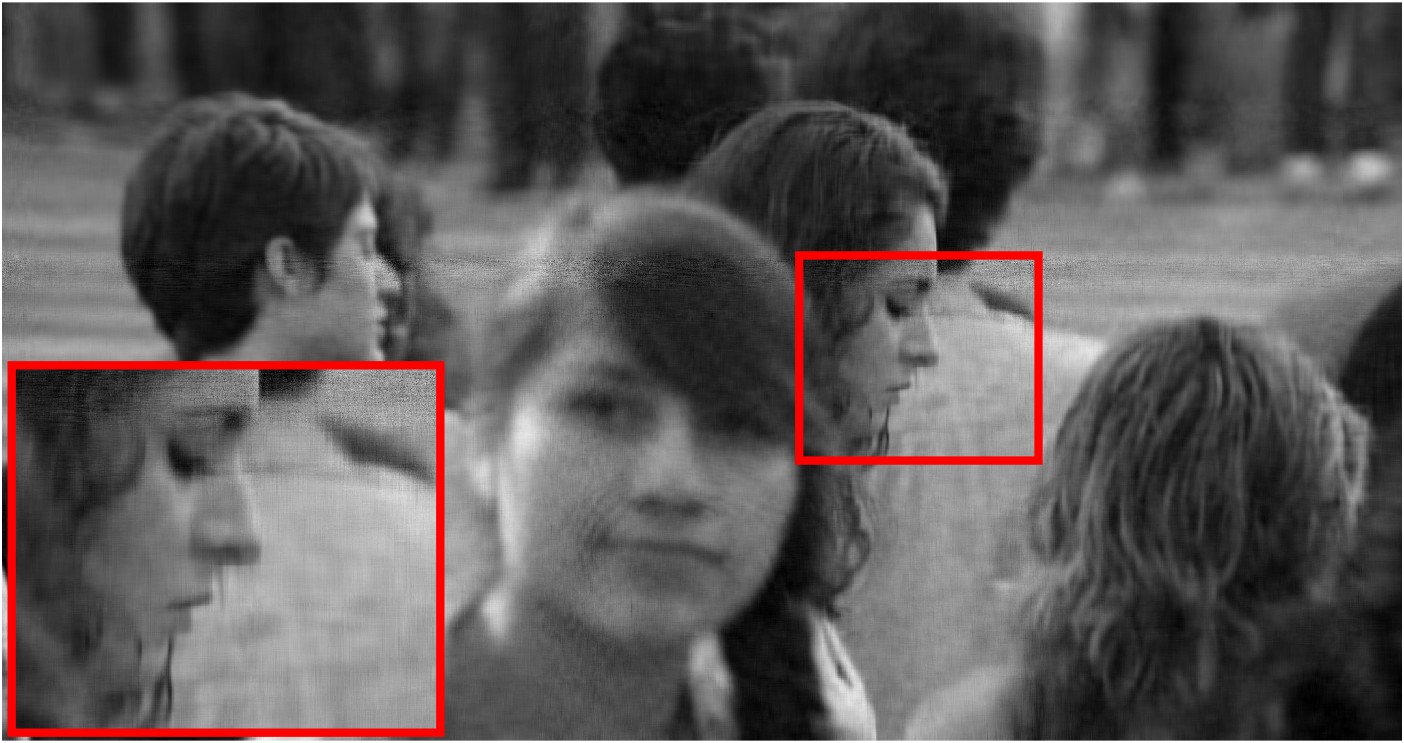} &
    \includegraphics[width=0.77in]{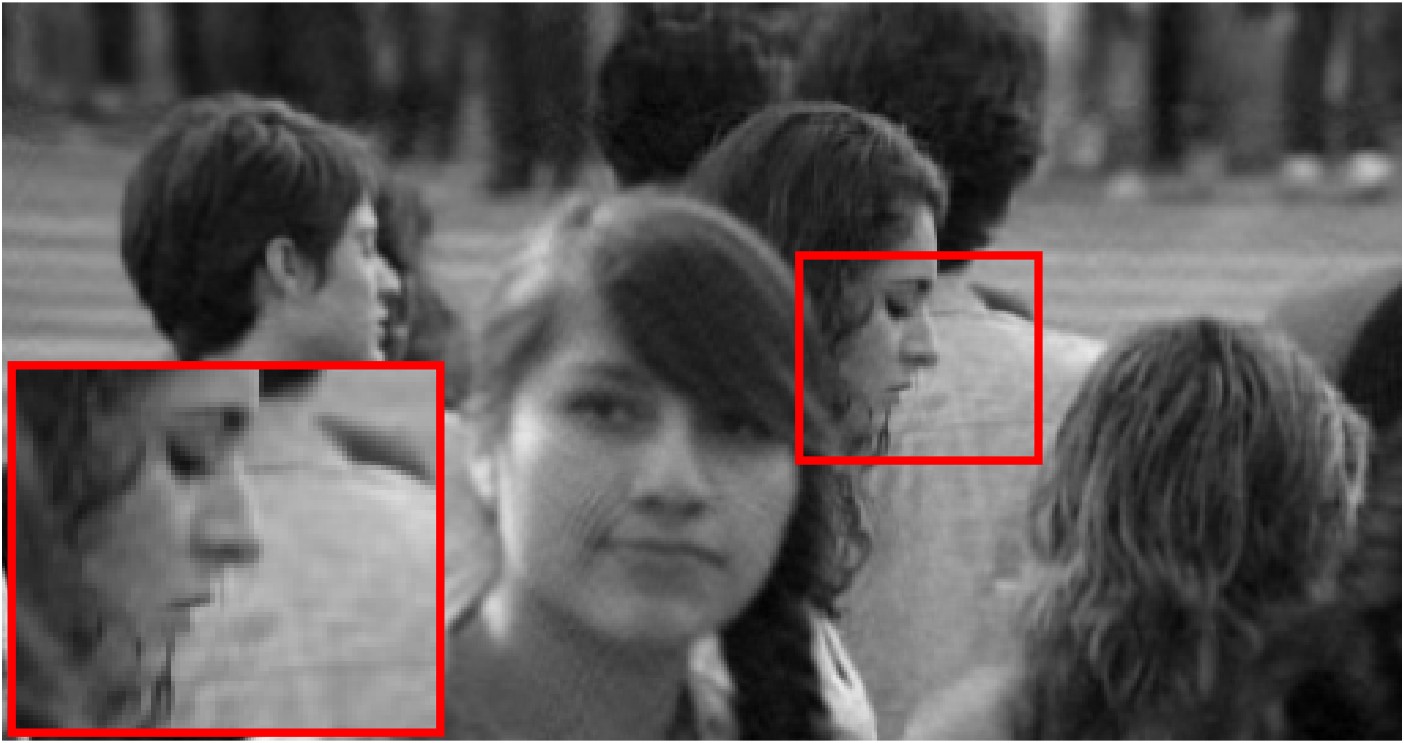}&
    \includegraphics[width=0.77in]{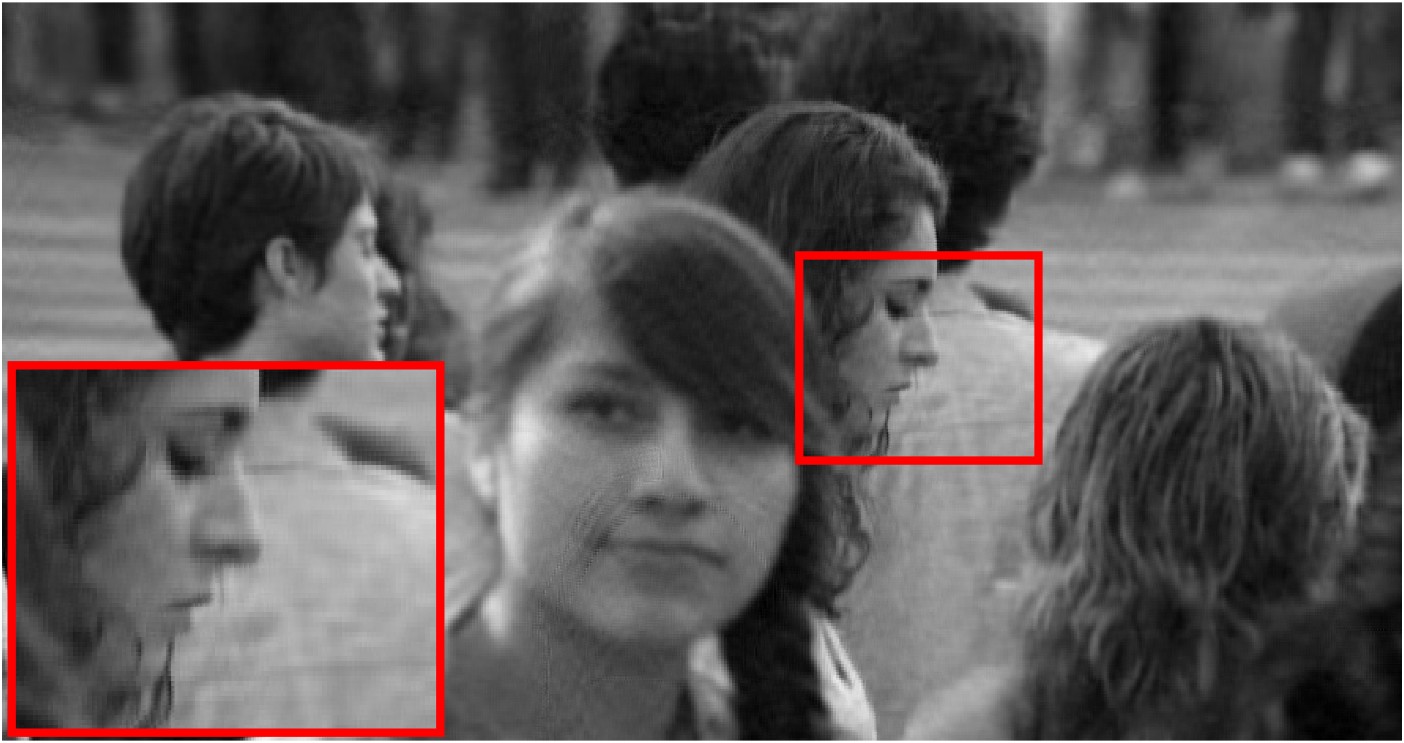} &
    \includegraphics[width=0.77in]{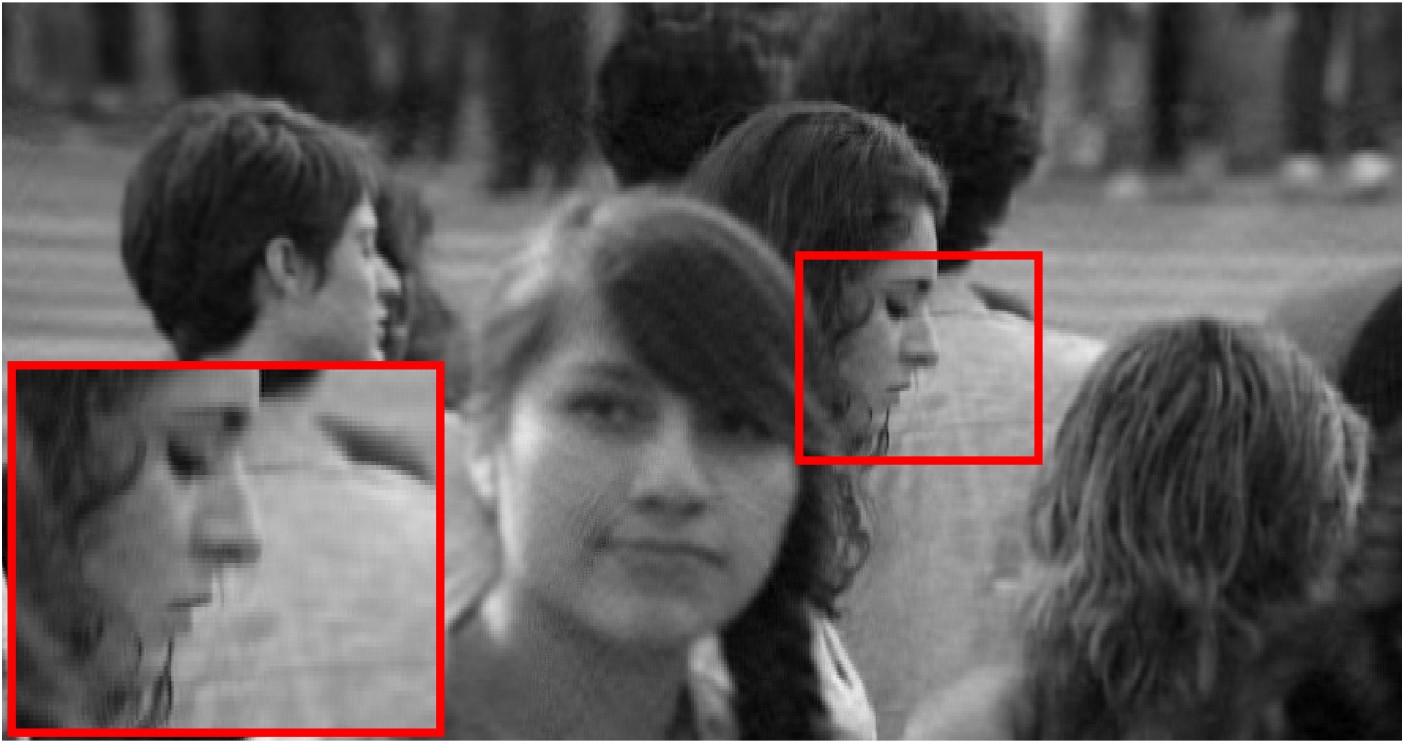} \\
    
    \includegraphics[width=0.77in]{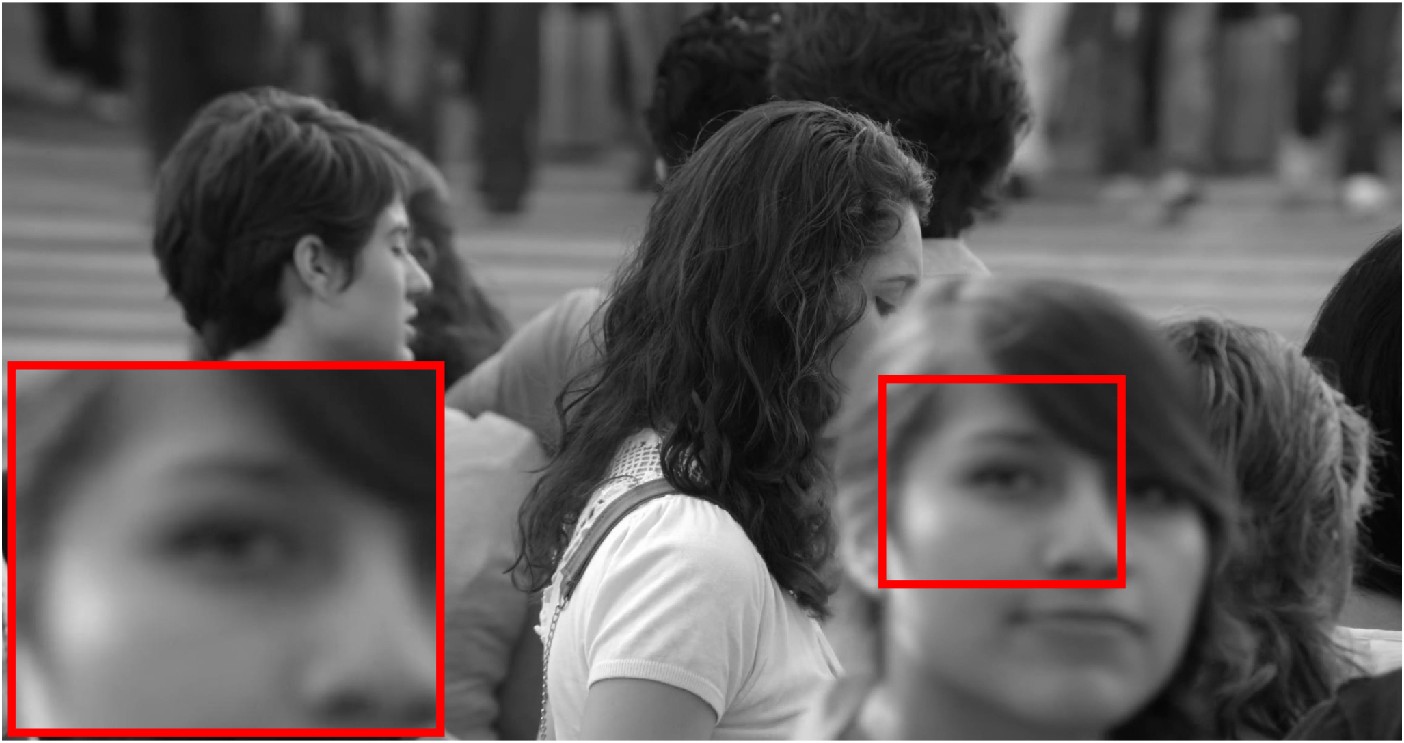} &
    \includegraphics[width=0.77in]{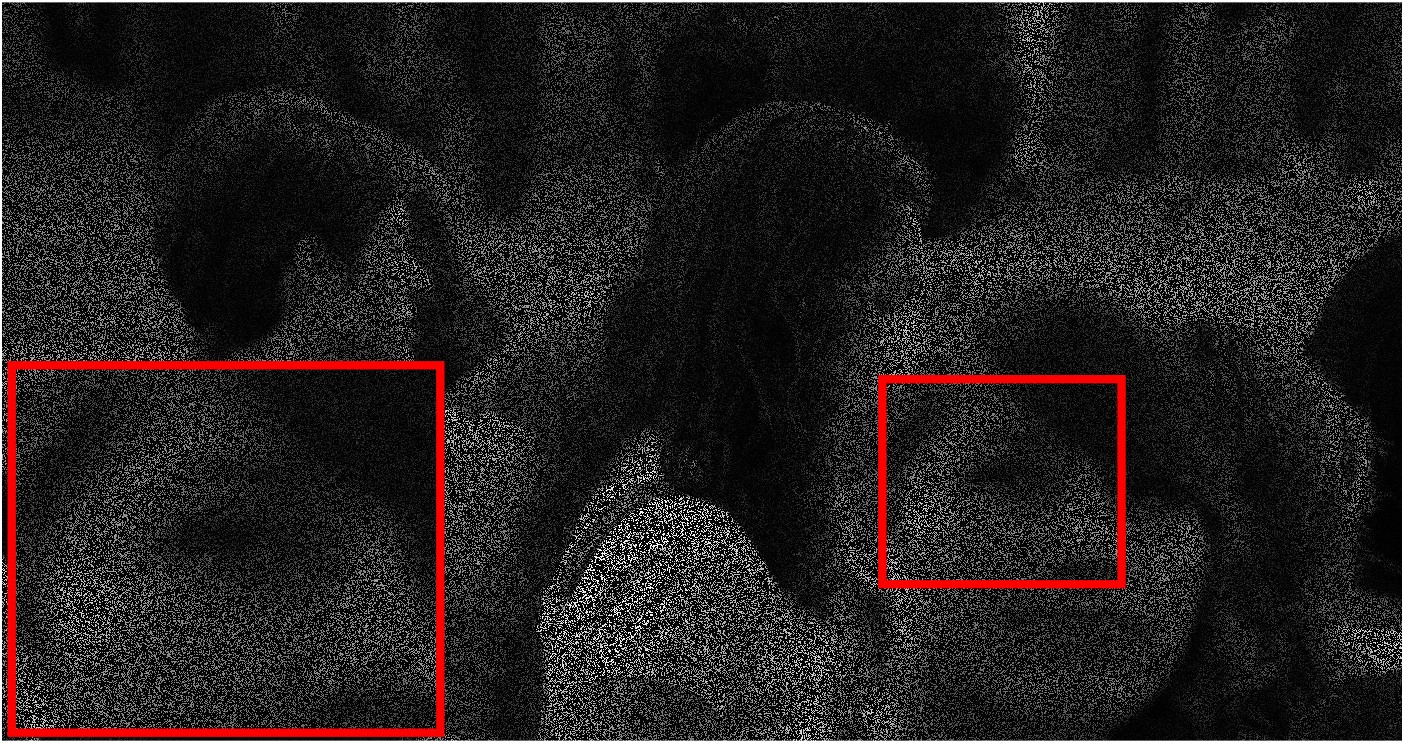} &
    \includegraphics[width=0.77in]{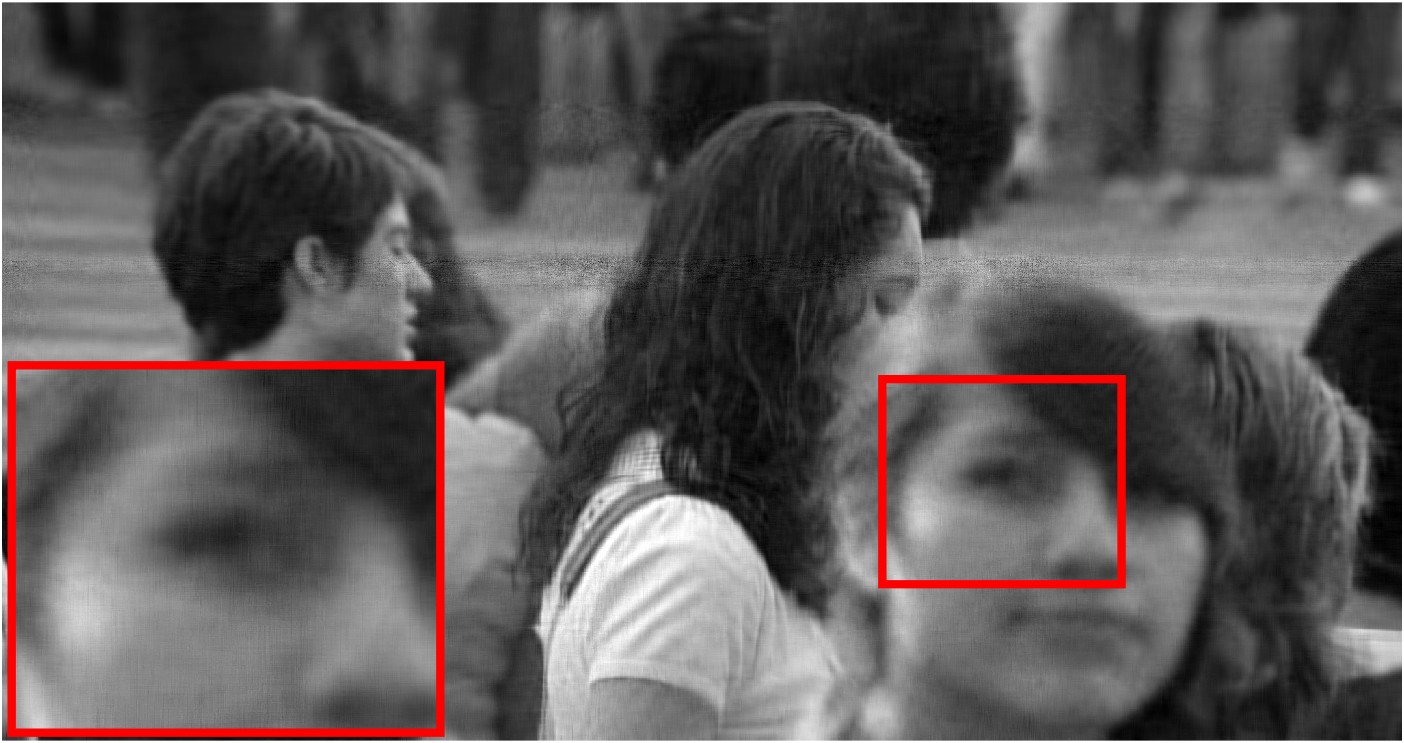} &
    \includegraphics[width=0.77in]{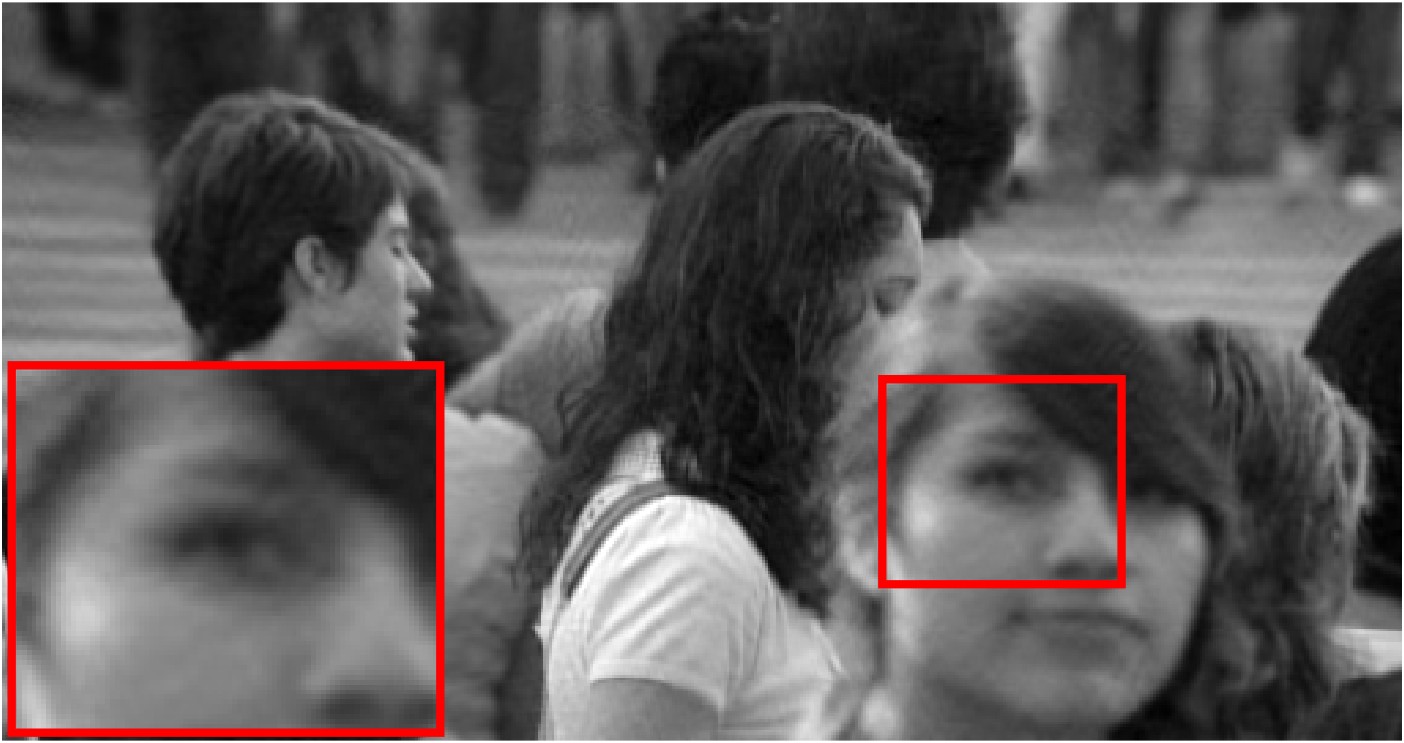}&
    \includegraphics[width=0.77in]{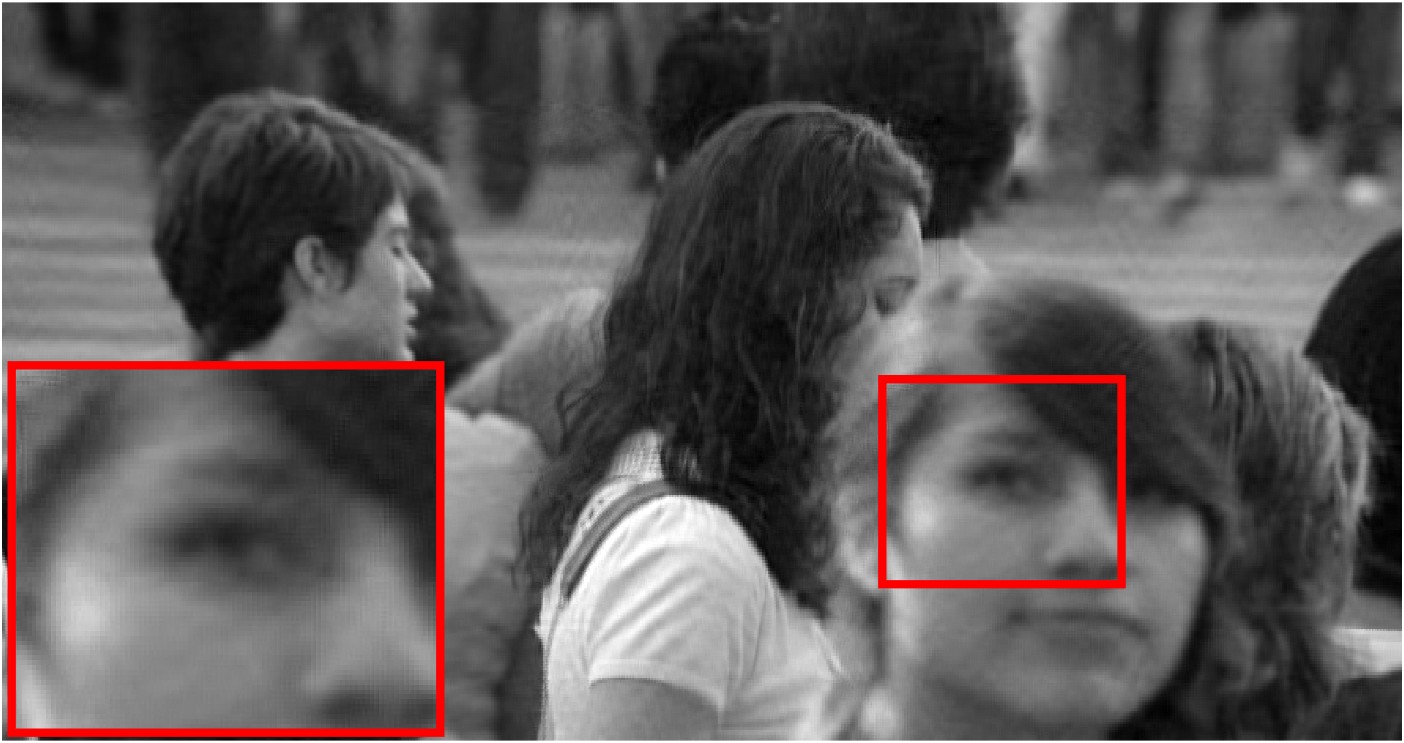} &
    \includegraphics[width=0.77in]{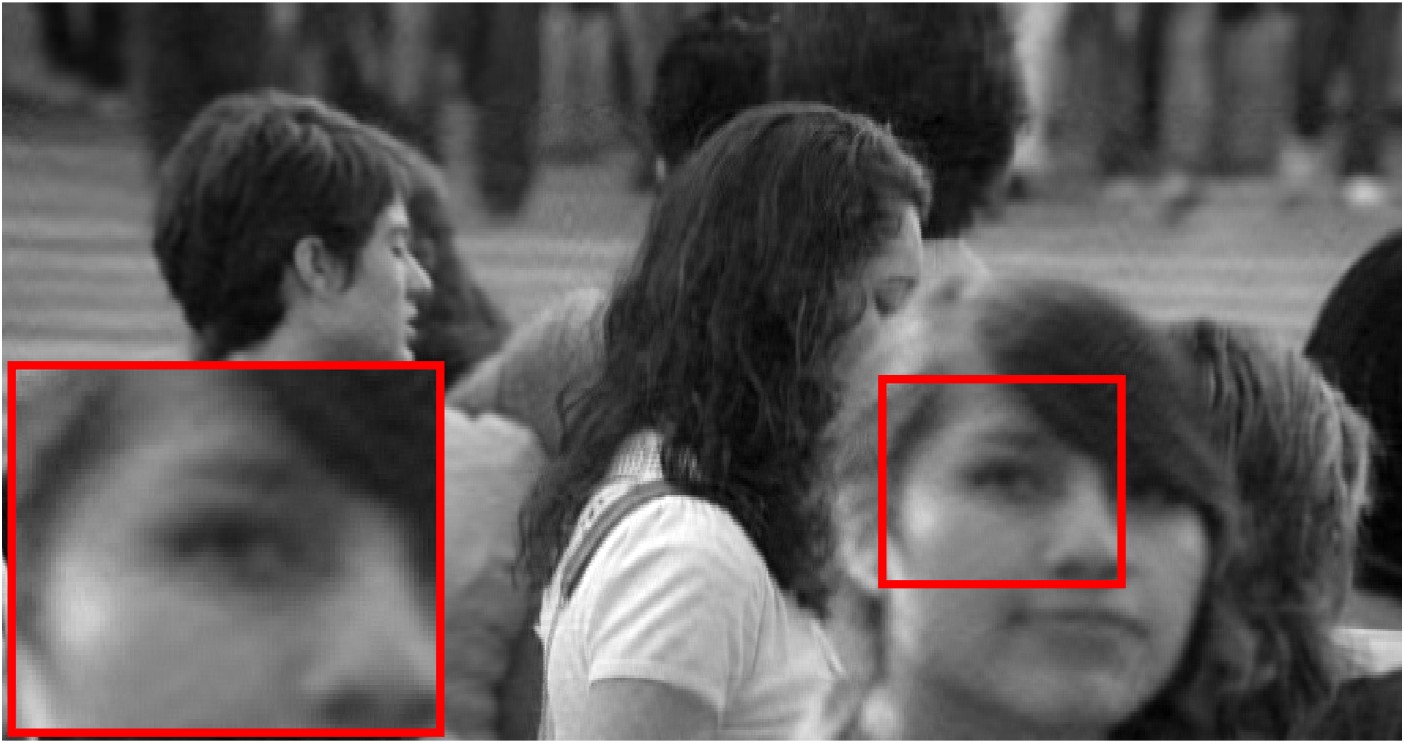} \\
    \scriptsize Original  &
    \scriptsize 70\% missing &
    \scriptsize  TT-SVD &
    \scriptsize  TSTP-SVD&
    \scriptsize  TMSTP-SVD &
    \scriptsize  TMRSTP-SVD
  \end{tabular}
  \includegraphics[width=0.49\linewidth]{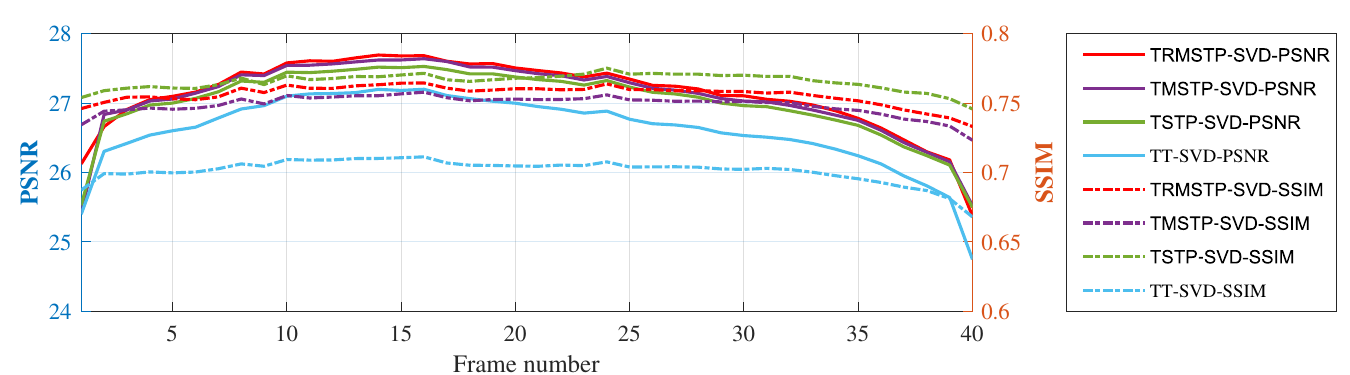}
  \hfill
  \includegraphics[width=0.49\linewidth]{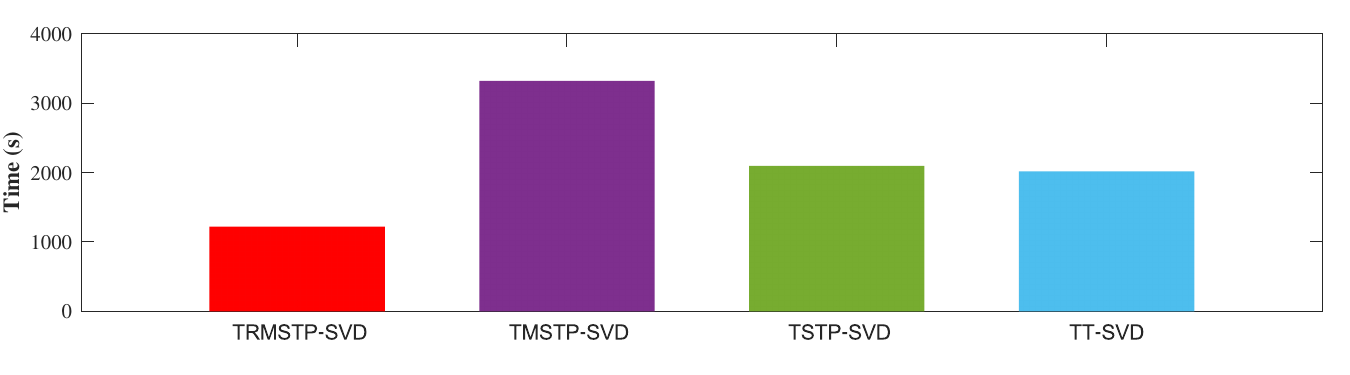}
\renewcommand{\arraystretch}{0.5} 
  \setlength{\tabcolsep}{0.3pt}      

  \begin{tabular}{@{}cccccc@{}}

    \includegraphics[width=0.77in]{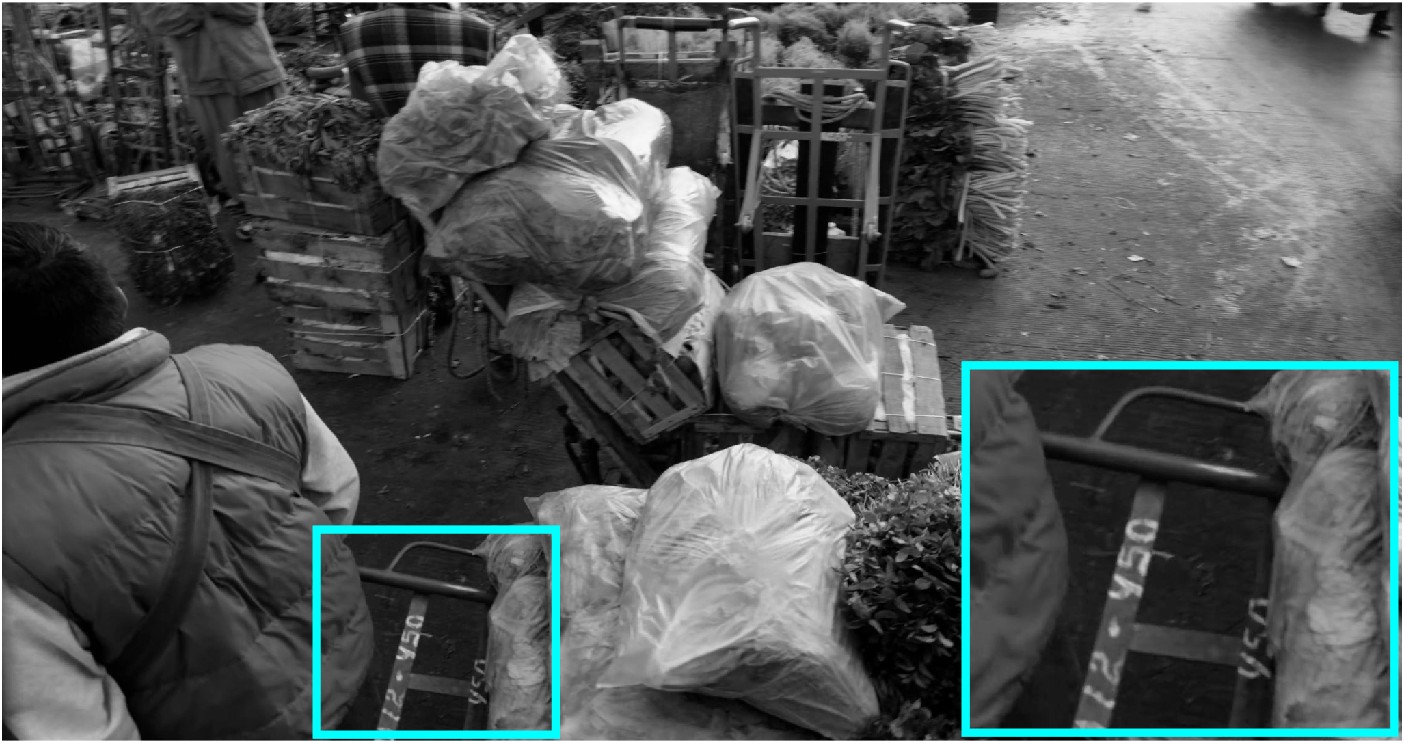} &
    \includegraphics[width=0.77in]{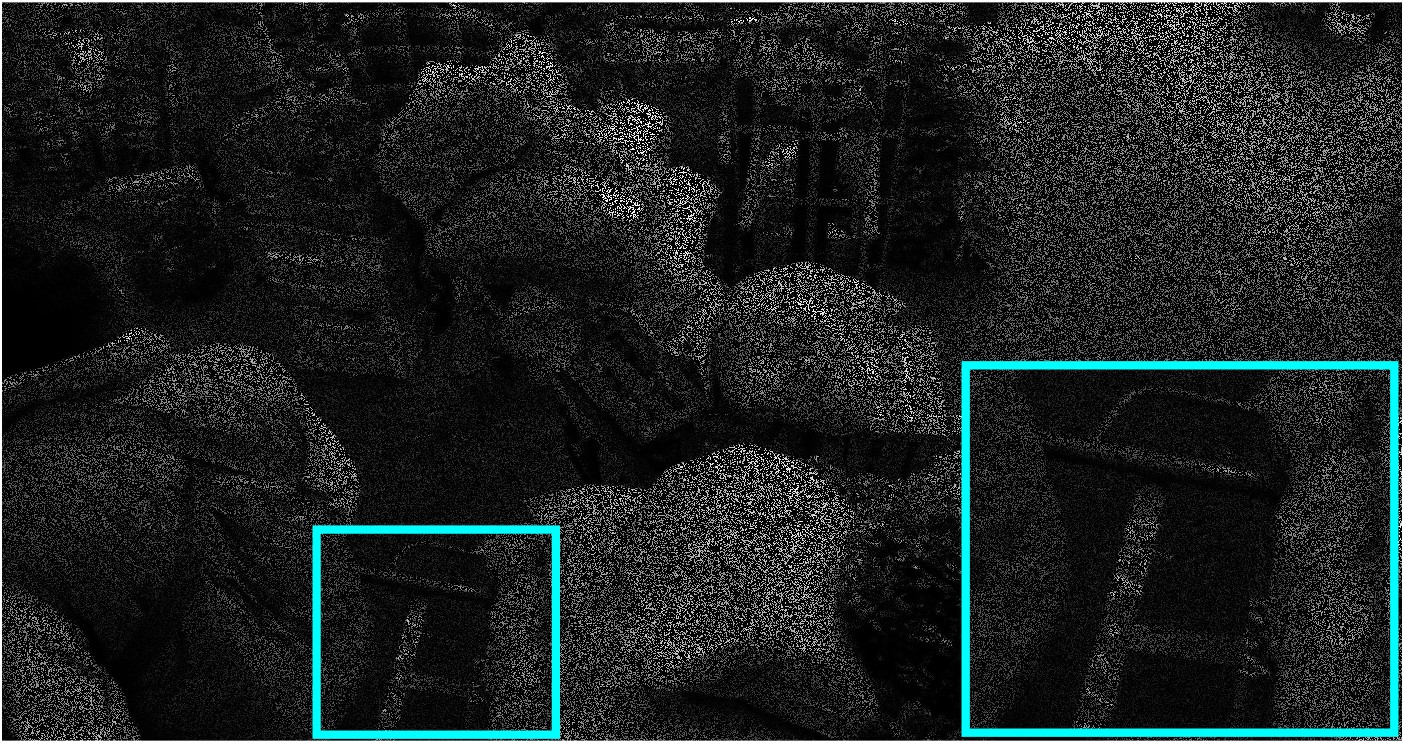} &
    \includegraphics[width=0.77in]{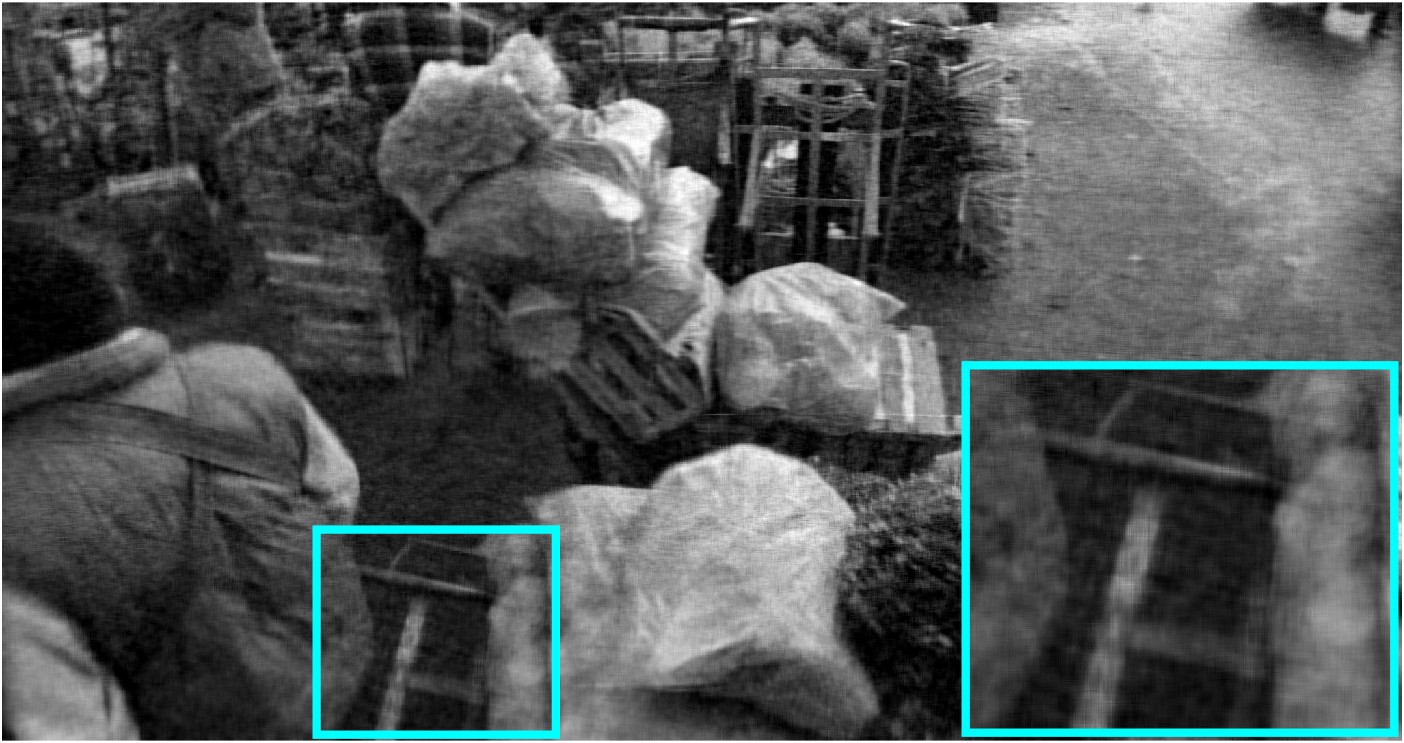} &
    \includegraphics[width=0.77in]{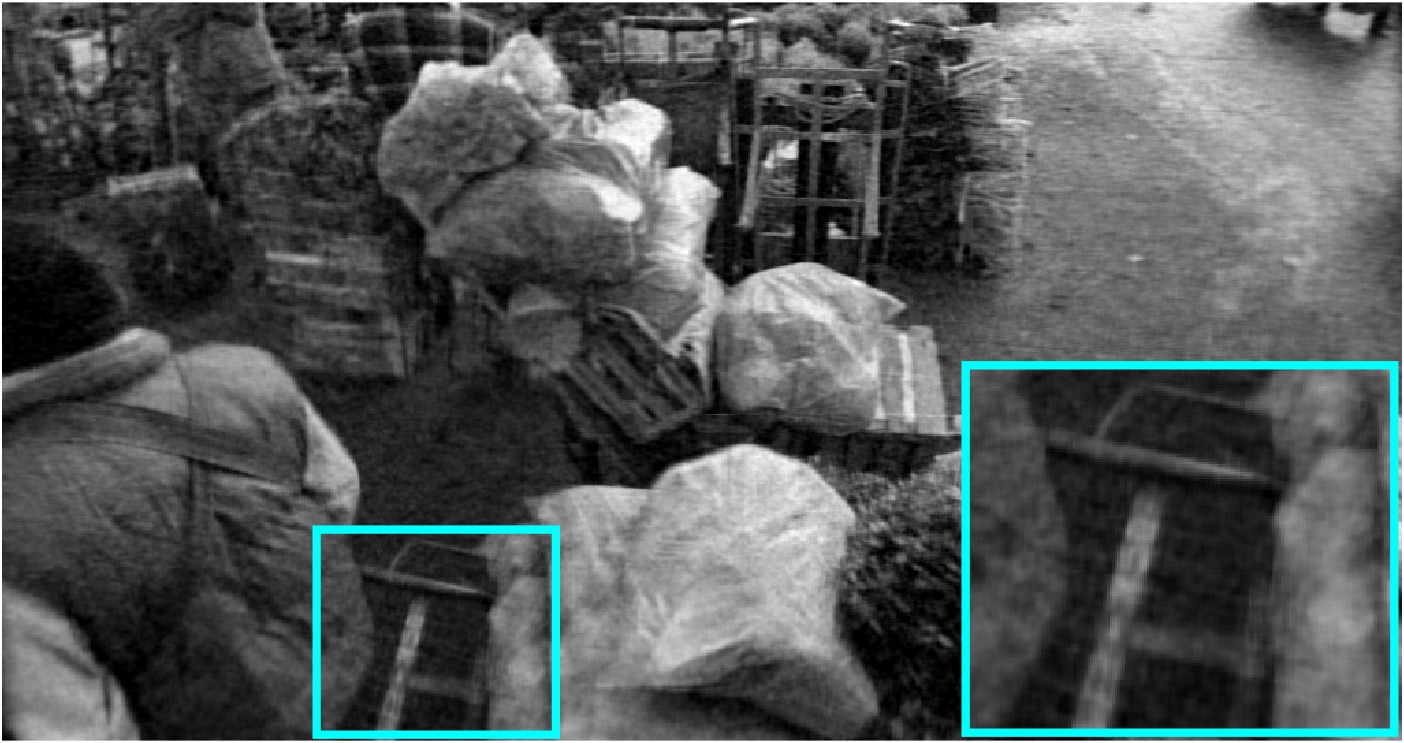}&
    \includegraphics[width=0.77in]{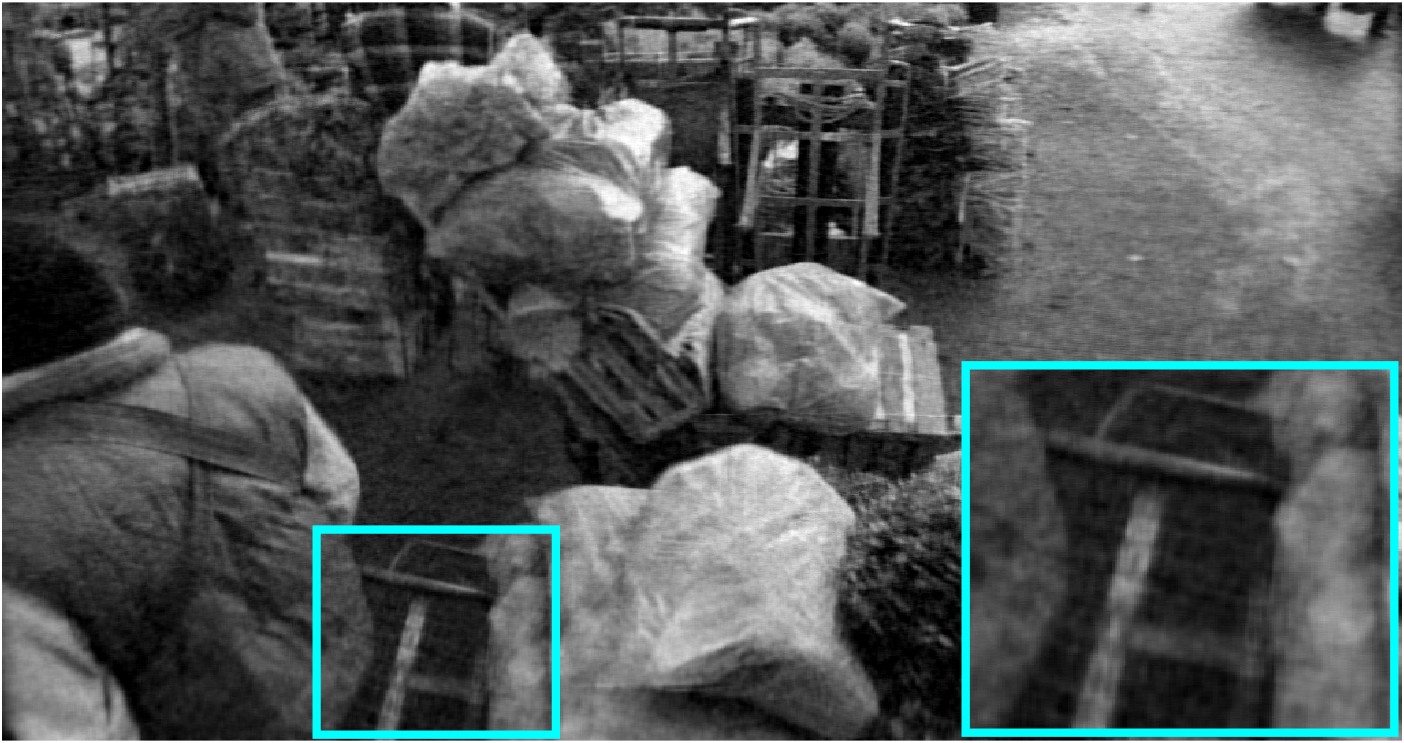} &
    \includegraphics[width=0.77in]{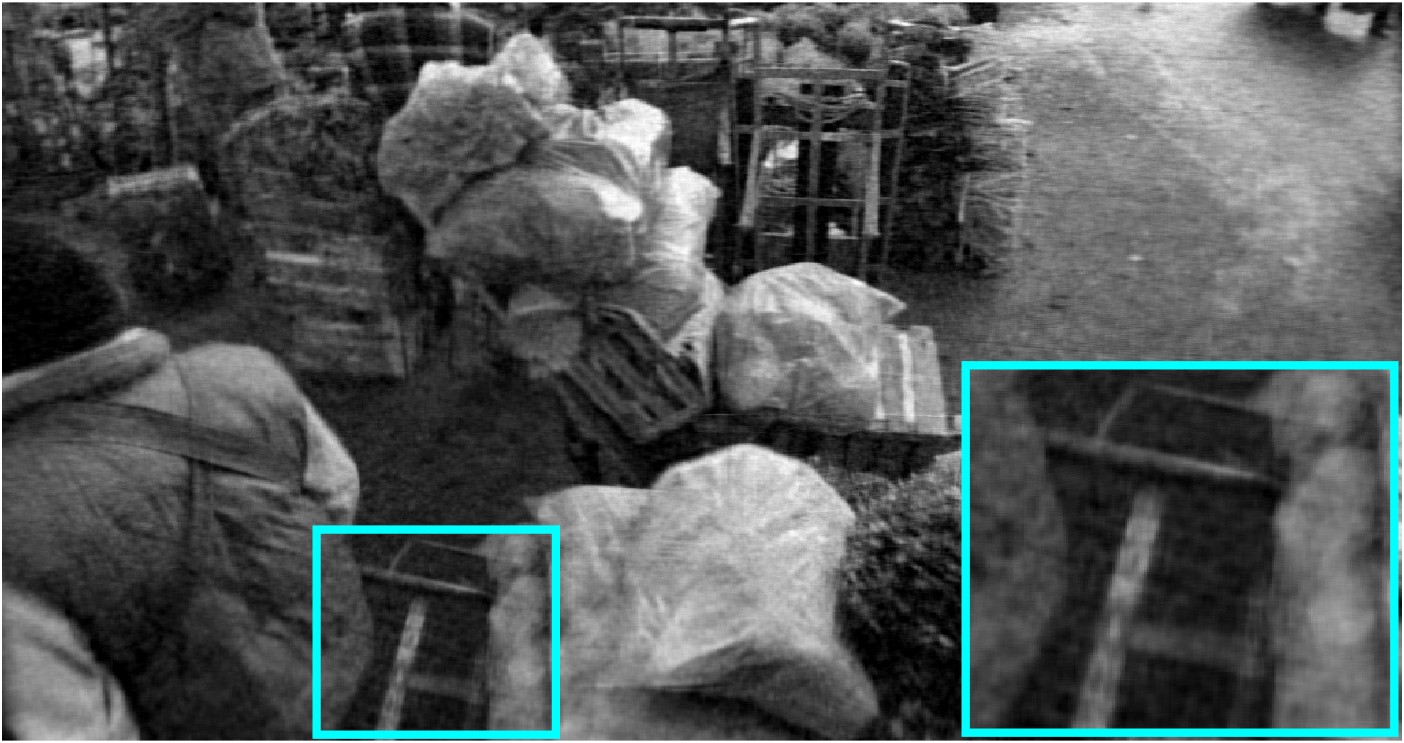} \\

    \includegraphics[width=0.77in]{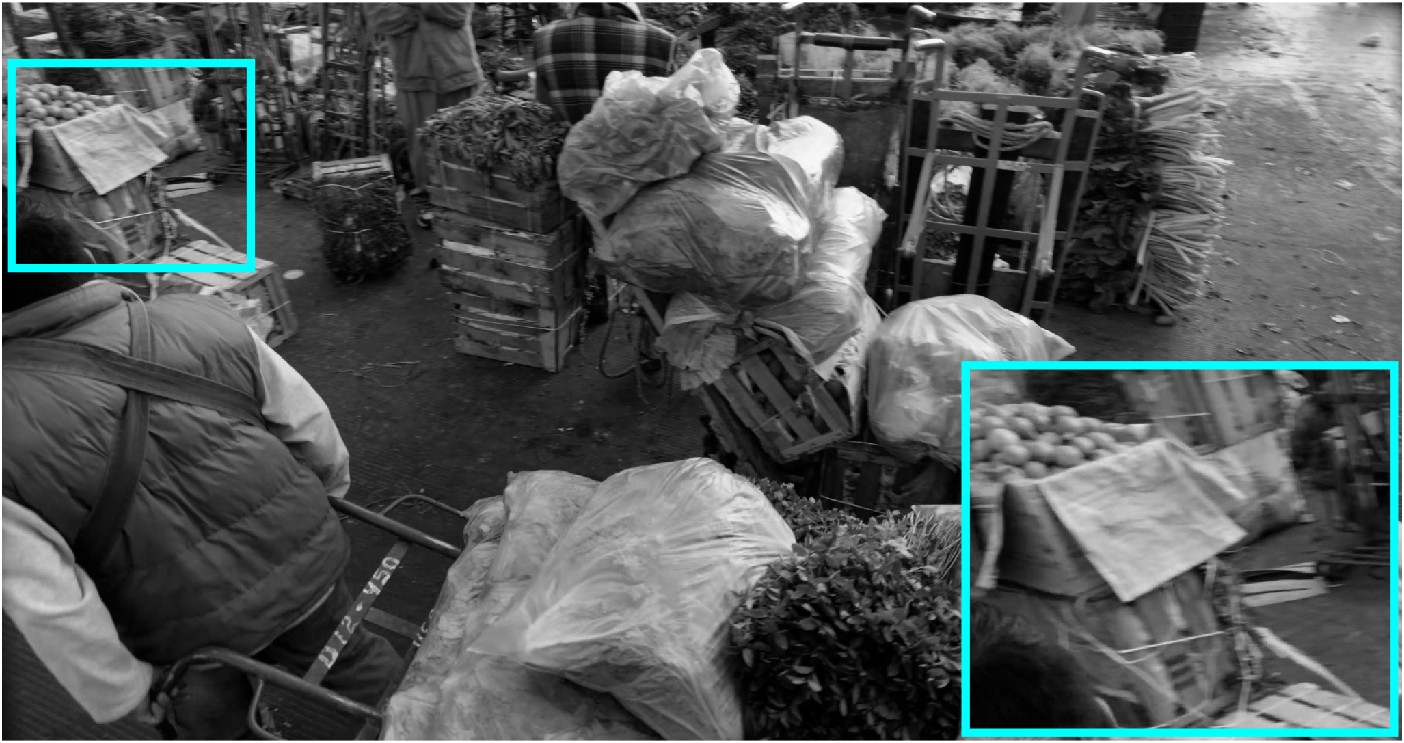} &
    \includegraphics[width=0.77in]{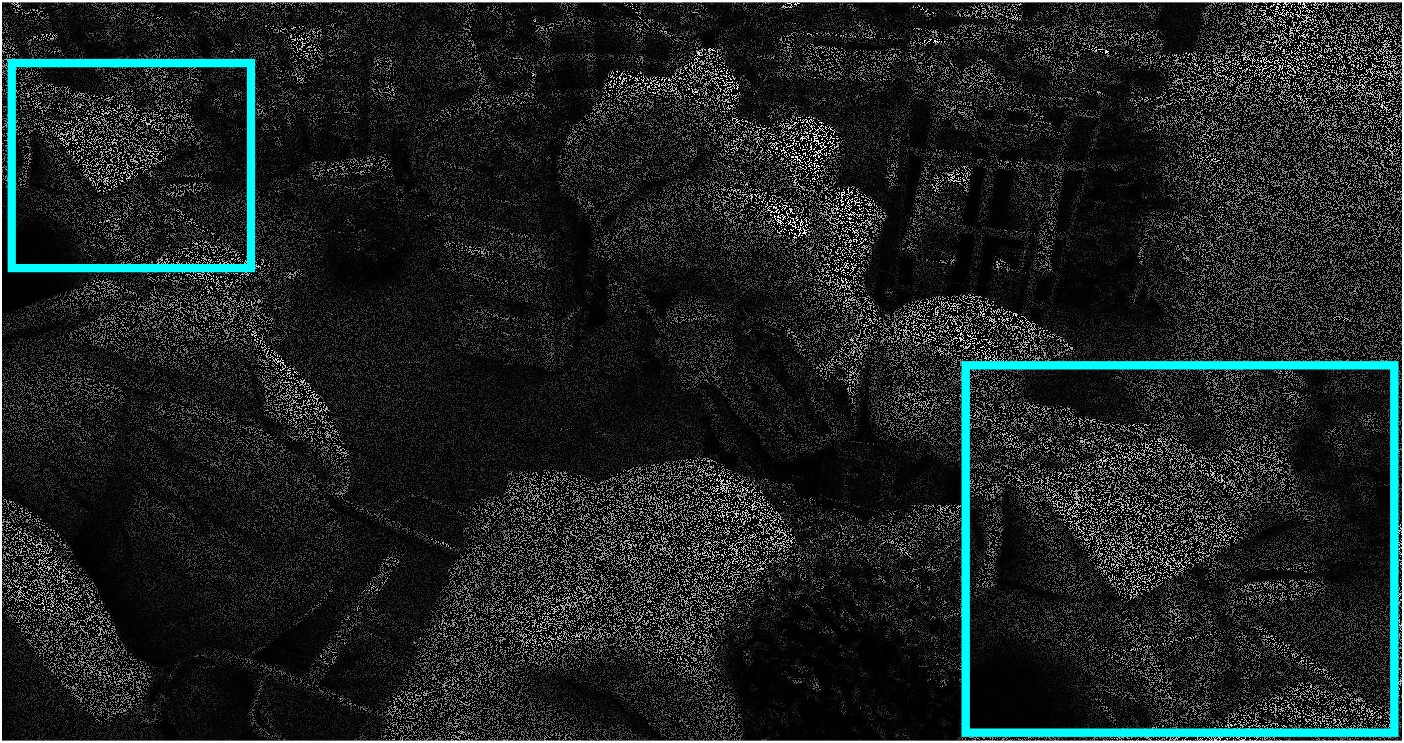} &
    \includegraphics[width=0.77in]{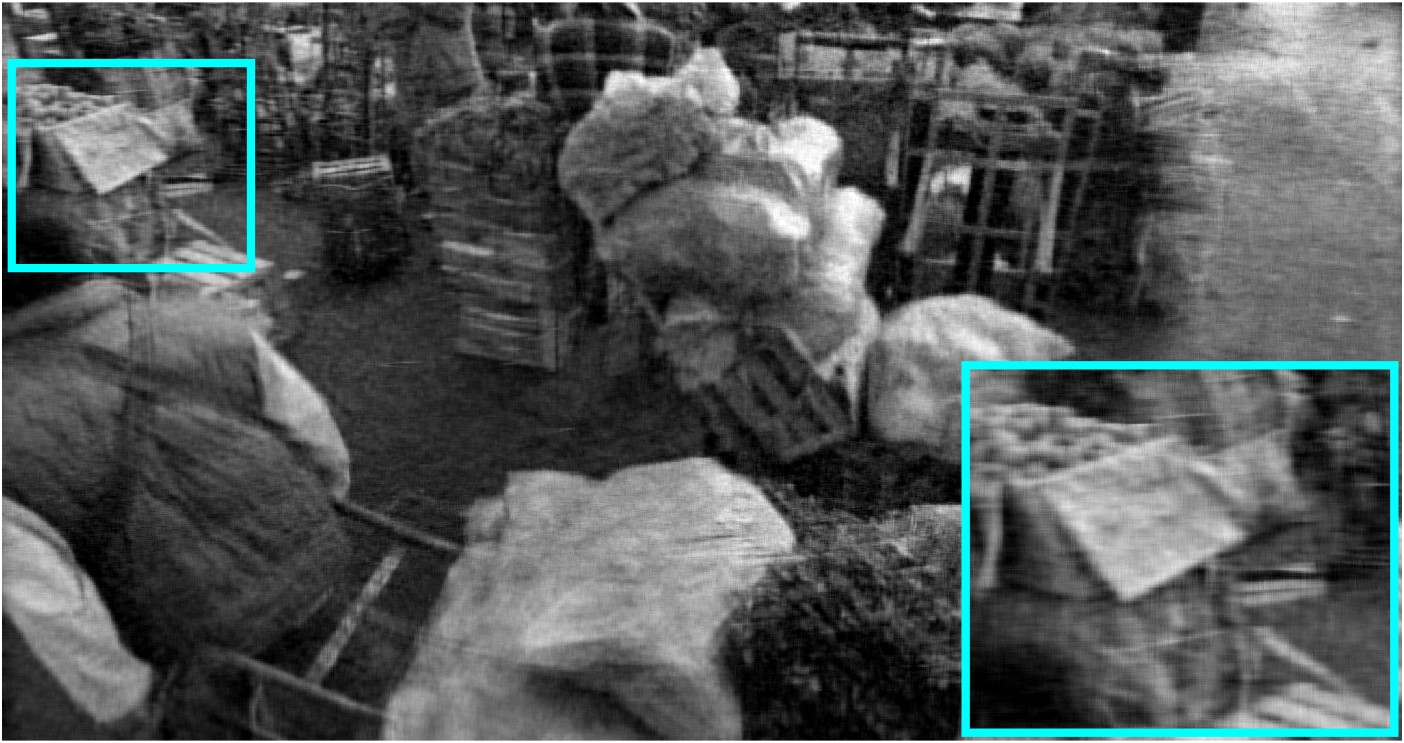} &
    \includegraphics[width=0.77in]{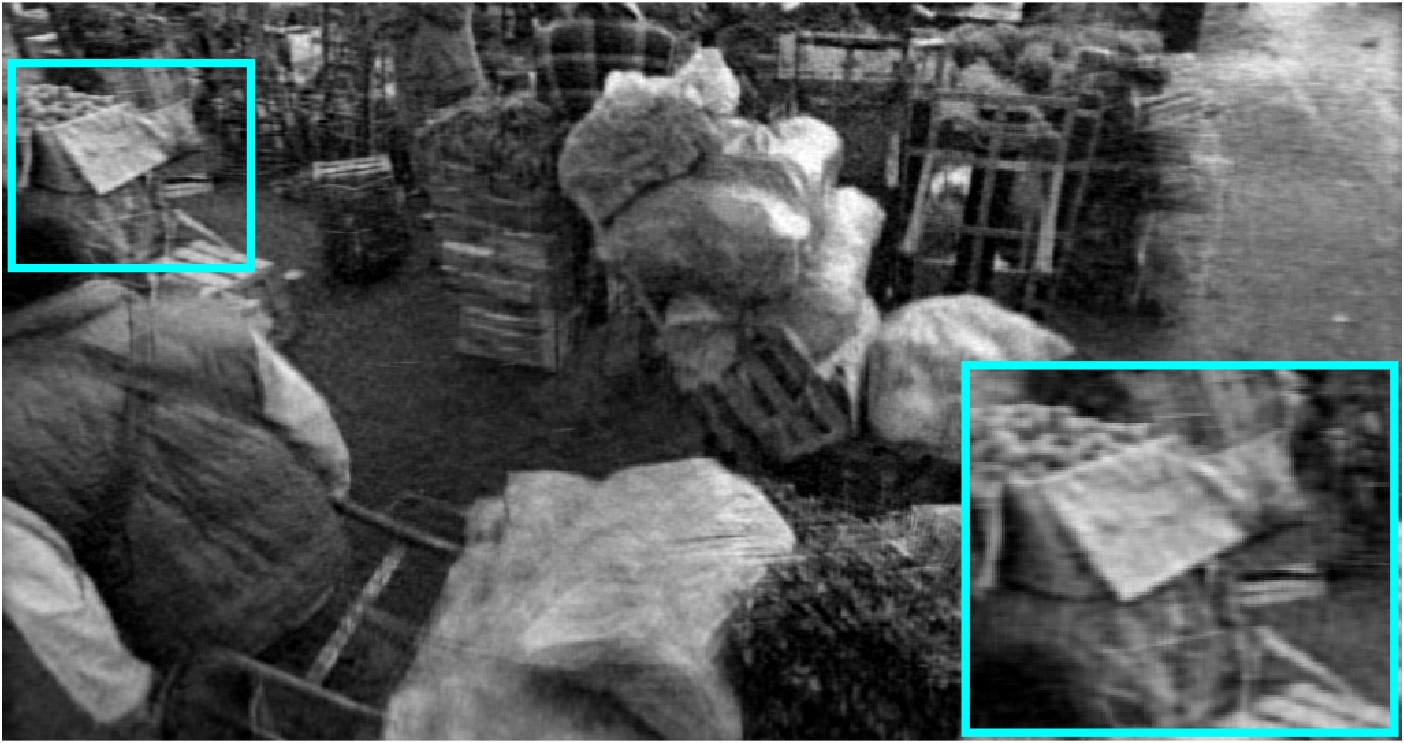}&
    \includegraphics[width=0.77in]{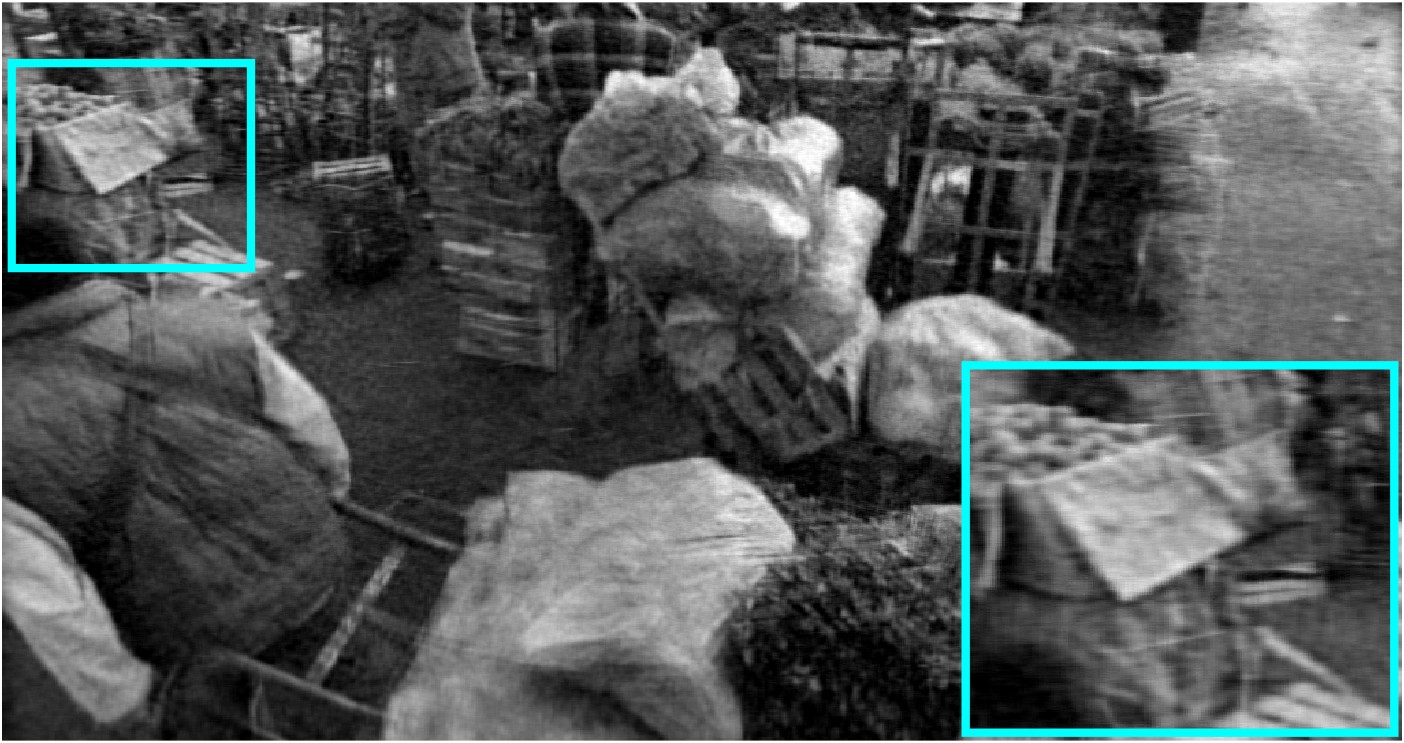} &
    \includegraphics[width=0.77in]{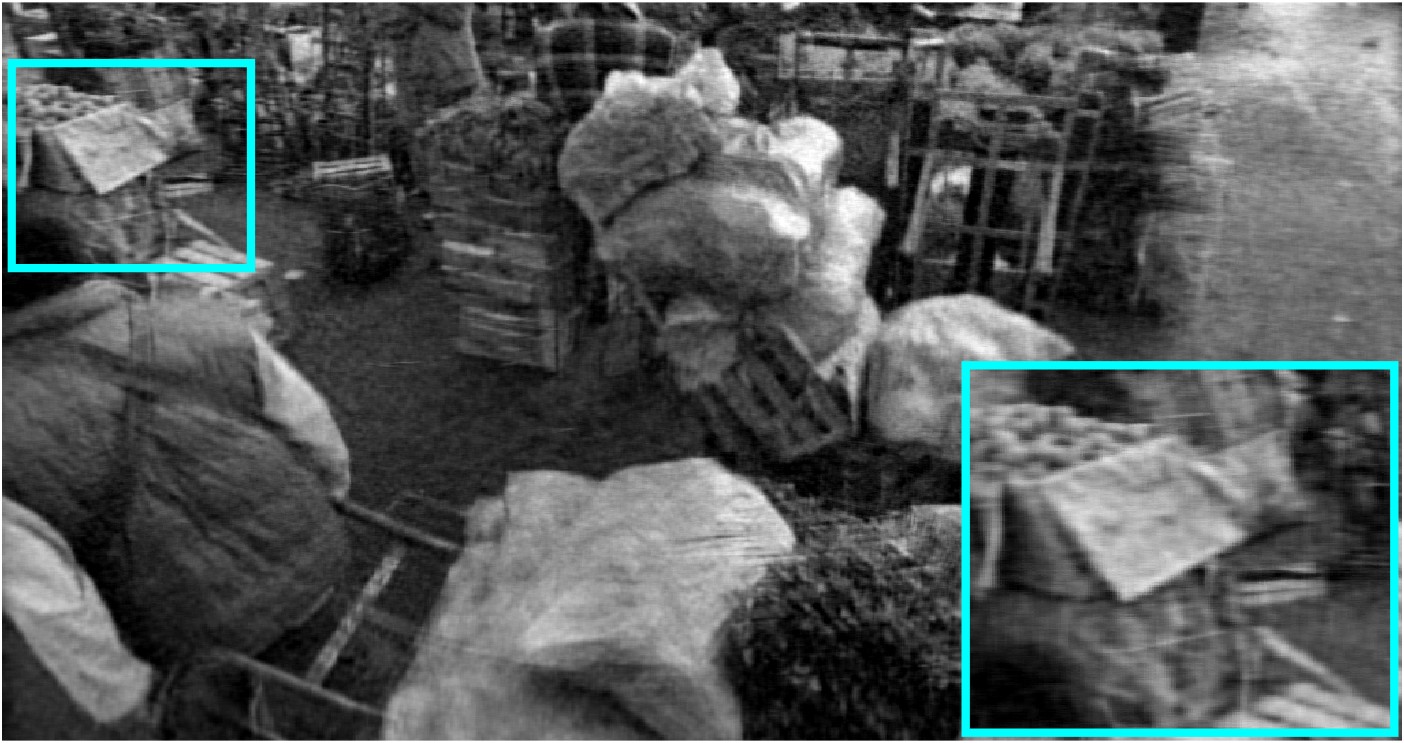} \\
    \scriptsize Original  &
    \scriptsize 70\% missing &
    \scriptsize  TT-SVD &
    \scriptsize  TSTP-SVD&
    \scriptsize  TMSTP-SVD &
    \scriptsize  TMRSTP-SVD
  \end{tabular}

\caption{Objective and visual comparisons of video recovery methods on two test sequences, including PSNR-SSIM curves, runtime, and reconstruction results under 70\% missing pixels.}
  \label{fig:video_complete}
\end{figure}
Video completion experiments are conducted on the four sequences from Subsection from Subsection \ref{video}, with 70\% of pixels randomly masked in each video tensor. All experimental parameters follow the settings in Subsection \ref{video}.
Fig.~\ref{fig:video_complete} shows frame-wise PSNR/SSIM curves, runtime statistics and visual reconstructions for Crosswalk and Market; results for Narrator and Aerial are provided in Supplementary Material. Our TMSTP-SVD and TMRSTP-SVD outperform baselines in reconstruction fidelity. The randomized TMRSTP-SVD achieves comparable PSNR and SSIM with negligible quality loss, and reduces total completion runtime by around 25\% relative to its deterministic counterpart. Visually, our methods recover finer textures in missing regions while baselines produce noticeable artifacts. Consistent performance gains can be observed for the two evaluated sequences, which are further validated by the supplementary results.
Overall, the video completion results corroborate the findings from image completion. The proposed randomized algorithm maintains reconstruction quality comparable to the deterministic version while achieving substantial computational savings, making it well suited for iterative completion on large-scale video data.
\section{Conclusion}
\label{sec:conclusion}
This paper proposes a novel semi-tensor product for third-order tensors under a generalized t-product framework with arbitrary invertible linear transforms. Unlike fixed-transform alternatives, our construction preserves the closed-form T-SVD structure while accommodating any unitary transform for enhanced flexibility. On this basis, we develop a multi-term decomposition model (MSTP-SVD) with multiple orthogonal components, which notably improves low-rank approximation accuracy over single-term schemes.
To address the computational bottleneck in large-scale applications, we further introduce a randomized variant (MRSTP-SVD) combining projection and power iteration, achieving a practical balance between reconstruction fidelity and efficiency. Theoretical error bounds are derived for both deterministic and randomized formulations to clarify the roles of key parameters. Image/video compression experiments validate the proposed method, and tensor completion experiments confirm its effectiveness as a low-rank prior, with the randomized variant delivering substantial acceleration at negligible accuracy cost. This work opens avenues for further algorithmic acceleration, adaptive parameter tuning, and extensions to higher-order tensors and broader algebraic structures.
\section*{CRediT authorship contribution statement}
\textbf{Xingchen Xiao:} Writing -- original draft, Visualization, Methodology, Software, Validation, Conceptualization.
\textbf{Feng Zhang}:  Writing -- review \& editing, Supervision, Resources, Funding acquisition, Conceptualization.
\textbf{Wenjin Qin}: Writing -- review \& editing, Formal analysis, Investigation, Data curation.
\textbf{Jianjun Wang}: Writing -- review \& editing, Supervision,  Project administration.
\section*{Declaration of competing interest}
The authors declare that they have no known competing financial interests or personal relationships that could have appeared to influence the work reported in this paper.
\section*{Acknowledgment}
This work was supported in part by the National Key Research and
Development Program of China under Grant 2023YFA1008502; in part by Fundamental Research Funds for the Central Universities under Grant SWU-KR25013; and in part by National Natural Science Foundation of China under Grant 12101512.
\section*{Data availability}
The datasets used in this study are publicly available from the sources cited in the numerical experiments section.


%
%


\FloatBarrier
\clearpage
\bibliographystyle{elsarticle-num}
\bibliography{refcopy} 

\clearpage
\addcontentsline{toc}{section}{Supplementary Material}

{\centering
\fontsize{14}{17}\selectfont
Supplementary Material of Semi-Tensor Product-Based
Multi-Term Randomized T-SVD and Its Visual Applications\par
}

\vspace{2em}
\appendix
This supplementary material accompanies the main paper by providing supporting mathematical preliminaries, full proofs of all theoretical results, complete algorithm pseudocodes, and additional experimental validations. All results presented here are included for completeness and do not affect the core contributions of the main work.
\section{Auxiliary definitions and properties of Kronecker product}
\label{app:aux}
\setcounter{equation}{0}
\setcounter{table}{0}
\setcounter{figure}{0}
\setcounter{definition}{0}
\setcounter{theorem}{0}
\setcounter{lemma}{0}

\renewcommand{\theequation}{S1.\arabic{equation}}
\renewcommand{\thetable}{S1.\arabic{table}}
\renewcommand{\thedefinition}{S1.\arabic{definition}}
\renewcommand{\thefigure}{S1.\arabic{figure}}
\renewcommand{\thetheorem}{S1.\arabic{theorem}}
\renewcommand{\thelemma}{S1.\arabic{lemma}}

We first review key definitions and properties of the Kronecker product for matrices and tensors. These are standard results from the literature, reproduced here to keep the main text concise and to support the theoretical developments in later sections.
\begin{definition}\cite{bellman1997introduction}
        If $\mathbf{A} \in \mathbb{R}^{m_1 \times m_2}$ and $\mathbf{B} \in \mathbb{R}^{n_1 \times n_2}$, the Kronecker product between them  is defined as 
        \[ \mathbf{A} \otimes \mathbf{B}=\begin{bmatrix} \mathbf{A}_{11}&\cdots&\mathbf{A}_{1 m_2}\\\mathbf{A}_{21}&\cdots&\mathbf{A}_{2 m_2}\\\vdots&\ddots&\vdots\\\mathbf{A}_{m_1 1}&\cdots&\mathbf{A}_{m_1 m_2} \end{bmatrix}\otimes\mathbf{B}
        =\begin{bmatrix} \mathbf{A}_{11}\mathbf{B}&\cdots&\mathbf{A}_{1 m_2}\mathbf{B}\\\mathbf{A}_{21}\mathbf{B}&\cdots&\mathbf{A}_{2 m_2}\mathbf{B}\\\vdots&\ddots&\vdots\\\mathbf{A}_{m_1 1}\mathbf{B}&\cdots&\mathbf{A}_{m_1 m_2}\mathbf{B} \end{bmatrix} \in \mathbb{R}^{m_1 n_1 \times m_2 n_2}.
        \]
\end{definition}

\begin{lemma}\label{lem:kp property}\cite{cheng2012introduction}
Let $\mathbf{A}, \mathbf{B}, \mathbf{C}, \mathbf{D}$ be matrices of compatible dimensions, and let $\alpha$ denote a scalar. The following properties of the Kronecker product hold:
    \\(i) $\mathbf{A}\mathbf{B} \otimes \mathbf{C}\mathbf{D} = (\mathbf{A} \otimes \mathbf{C})(\mathbf{B} \otimes \mathbf{D}).$
    \\(ii) $\mathbf{A} \otimes (\mathbf{B} \pm \mathbf{C}) = (\mathbf{A} \otimes \mathbf{B}) \pm (\mathbf{A} \otimes \mathbf{C})$ and $(\mathbf{B} \pm \mathbf{C}) \otimes \mathbf{A} = \mathbf{B} \otimes \mathbf{A} \pm \mathbf{C} \otimes \mathbf{A}.$
    \\(iii) $(\mathbf{A} \otimes \mathbf{B})^\top = \mathbf{A}^\top \otimes \mathbf{B}^\top.$
    \\(iv) $(\mathbf{A} \otimes \mathbf{B})^{-1} = \mathbf{A}^{-1} \otimes \mathbf{B}^{-1}$, provided $\mathbf{A}$ and $\mathbf{B}$ are invertible.
    \\(v) $(\mathbf{A} \otimes \mathbf{B}) \otimes \mathbf{C} = \mathbf{A} \otimes (\mathbf{B} \otimes \mathbf{C}).$
    \\(vi) $(\alpha \mathbf{A}) \otimes \mathbf{B} = \mathbf{A} \otimes (\alpha \mathbf{B}) = \alpha (\mathbf{A} \otimes \mathbf{B}).$
\end{lemma}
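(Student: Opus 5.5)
The plan is to verify each of (i)--(vi) directly from the block definition of the Kronecker product given above. Throughout, write $\mathbf{A}\otimes\mathbf{B}$ as the block matrix whose $(i,j)$ block is $a_{ij}\mathbf{B}$. Then compare the two sides block by block, or entrywise via the index map $(\mathbf{A}\otimes\mathbf{B})_{(i-1)n_1+k,\,(j-1)n_2+l}=a_{ij}b_{kl}$. Most items are immediate consequences of this description. The only one requiring a genuine computation is the mixed-product rule (i), and (iv) will be derived from (i).

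\textbf{Step 1 (linearity items (ii) and (vi)).} For (ii), the $(i,j)$ block of $\mathbf{A}\otimes(\mathbf{B}\pm\mathbf{C})$ is $a_{ij}(\mathbf{B}\pm\mathbf{C})=a_{ij}\mathbf{B}\pm a_{ij}\mathbf{C}$, which is the $(i,j)$ block of $\mathbf{A}\otimes\mathbf{B}\pm\mathbf{A}\otimes\mathbf{C}$. For the second identity in (ii), the $(i,j)$ block of $(\mathbf{B}\pm\mathbf{C})\otimes\mathbf{A}$ is $(b_{ij}\pm c_{ij})\mathbf{A}$, and the same argument applies. For (vi), every block equals $\alpha a_{ij}\mathbf{B}$ in all three expressions.

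\textbf{Step 2 (transpose (iii) and associativity (v)).} For (iii), transposing a block matrix transposes the block arrangement and each block. The $(i,j)$ block of $(\mathbf{A}\otimes\mathbf{B})^\top$ is therefore $(a_{ji}\mathbf{B})^\top=a_{ji}\mathbf{B}^\top$, which is exactly the $(i,j)$ block of $\mathbf{A}^\top\otimes\mathbf{B}^\top$. For (v), I would use the entrywise index formula. Both sides have the entry $a_{ij}b_{kl}c_{pq}$ at the position obtained by nesting the lexicographic index maps, and the two nestings coincide because mixed-radix indexing is associative: $((i-1)n+k-1)r+p=(i-1)nr+(k-1)r+p$.

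\textbf{Step 3 (mixed product (i)), the main computation.} Let $\mathbf{A}$ have size $m\times n$ with entries $a_{ij}$, let $\mathbf{B}$ have size $n\times p$ with entries $b_{jk}$, and let $\mathbf{C},\mathbf{D}$ be compatible. Multiply $(\mathbf{A}\otimes\mathbf{C})(\mathbf{B}\otimes\mathbf{D})$ as block matrices; this is legitimate because the column partition of the first factor (blocks of width equal to the number of columns of $\mathbf{C}$) matches the row partition of the second factor (blocks of height equal to the number of rows of $\mathbf{D}$). The $(i,k)$ block of the product is
\[
\sum_{j}(a_{ij}\mathbf{C})(b_{jk}\mathbf{D})=\Bigl(\sum_j a_{ij}b_{jk}\Bigr)\mathbf{C}\mathbf{D}=(\mathbf{A}\mathbf{B})_{ik}\,\mathbf{C}\mathbf{D},
\]
which is the $(i,k)$ block of $\mathbf{A}\mathbf{B}\otimes\mathbf{C}\mathbf{D}$. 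The only care needed here is the bookkeeping of block sizes.

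\textbf{Step 4 (inverse (iv)).} Apply (i) with $\mathbf{B}=\mathbf{A}^{-1}$ and $\mathbf{D}=\mathbf{B}^{-1}$ (renaming as needed):
\[
(\mathbf{A}\otimes\mathbf{B})(\mathbf{A}^{-1}\otimes\mathbf{B}^{-1})=\mathbf{I}\otimes\mathbf{I}=\mathbf{I},
\]
using that $\mathbf{I}_a\otimes\mathbf{I}_b=\mathbf{I}_{ab}$, which is evident from the block form. The product in the reverse order is handled the same way, so $\mathbf{A}^{-1}\otimes\mathbf{B}^{-1}$ is the two-sided inverse. I expect no real obstacle beyond the dimension bookkeeping in Step 3.
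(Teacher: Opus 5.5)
Your proposal is correct. Each of (i)--(vi) follows as you describe from the block description of the Kronecker product in the Supplementary Material, and your index bookkeeping matches the paper's convention $\mathbf{A}\in\mathbb{R}^{m_1\times m_2}$, $\mathbf{B}\in\mathbb{R}^{n_1\times n_2}$.

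The paper gives no proof of this lemma. It quotes the properties as standard facts with a citation to Cheng et al.'s monograph. Your self-contained block-wise verification, with the mixed-product rule as the only real computation and (iv) deduced from it, is the standard textbook argument that the citation stands in for. The citation keeps the paper short, while your version makes the supplement independent of the reference.

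One cosmetic point: in Step 4 the letters clash. State the substitution explicitly as applying (i) to the quadruple $(\mathbf{A},\mathbf{A}^{-1},\mathbf{B},\mathbf{B}^{-1})$ in place of $(\mathbf{A},\mathbf{B},\mathbf{C},\mathbf{D})$. For the reverse product, use $(\mathbf{A}^{-1},\mathbf{A},\mathbf{B}^{-1},\mathbf{B})$.
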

\begin{definition}[Kronecker product of tensors \cite{batselier2017constructive}]\label{de:tkp} 
Consider two $p$-order tensors  $\mathcal{A}\in\mathbb{R}^{n_1\times n_2\times \dots \times n_p}$ and 
$\mathcal{B}\in\mathbb{R}^{m_1\times m_2\times \dots \times m_p}$.
Their tensor Kronecker product $\mathcal{C}=\mathcal{A}\otimes\mathcal{B}\in\mathbb{R}^{n_1m_1\times \cdots \times n_pm_p}$ is defined entrywise by
\[
\mathcal{C}_{[i_1 j_1]\cdots [i_p j_p]} =
\mathcal{A}_{i_1\cdots i_p}\,\mathcal{B}_{j_1\cdots j_p}.
\]
\end{definition}
\par Several fundamental properties of the tensor Kronecker product are summarized below without elaborate derivations.
\begin{lemma}\cite{batselier2017constructive}
Let $\mathcal{A}$, $\mathcal{B}$, $\mathcal{C}$ be $p$-order tensors and let $\alpha$ is a scalar. The following identities hold:
 \begin{enumerate}[(i)]
 \item $\mathcal{A}\otimes (\mathcal{B}\pm\mathcal{C})=\mathcal{A}\otimes \mathcal{B}\pm\mathcal{A}\otimes\mathcal{C}$;
 \item $(\mathcal{B}\pm\mathcal{C})\otimes\mathcal{A}=\mathcal{B}\otimes\mathcal{A}\pm \mathcal{C}\otimes\mathcal{A}$;
 \item $(\mathcal{A}\otimes\mathcal{B})\otimes\mathcal{C}=\mathcal{A}\otimes(\mathcal{B}\otimes\mathcal{C})$;
 \item $(\alpha\mathcal{A})\otimes\mathcal{B}=\mathcal{A}\otimes(\alpha\mathcal{B})=\alpha(\mathcal{A}\otimes\mathcal{B})$.
 \end{enumerate}
\end{lemma}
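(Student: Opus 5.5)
The statement is a list of four elementary identities for the tensor Kronecker product of Definition~\ref{de:tkp}. The plan is to verify each one entrywise, using the defining formula
\[
\mathcal{C}_{[i_1 j_1]\cdots [i_p j_p]}=\mathcal{A}_{i_1\cdots i_p}\,\mathcal{B}_{j_1\cdots j_p},
\]
where $[i_r j_r]$ denotes the merged index $(i_r-1)m_r+j_r$. This merging is a bijection between pairs $(i_r,j_r)$ and $\{1,\dots,n_rm_r\}$. Two tensors of the same size are equal exactly when all their entries agree, so it suffices to compare entries at an arbitrary multi-index written in merged form.

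\textbf{Items (i), (ii) and (iv).} These are immediate consequences of bilinearity of scalar multiplication. For (i), the $([i_1j_1],\dots,[i_pj_p])$ entry of $\mathcal{A}\otimes(\mathcal{B}\pm\mathcal{C})$ is
\[
\mathcal{A}_{i_1\cdots i_p}\bigl(\mathcal{B}_{j_1\cdots j_p}\pm\mathcal{C}_{j_1\cdots j_p}\bigr).
\]
Distributing the product, this equals the corresponding entry of $\mathcal{A}\otimes\mathcal{B}\pm\mathcal{A}\otimes\mathcal{C}$; here $\mathcal{B}$ and $\mathcal{C}$ must have the same size for the sum to be defined. Item (ii) follows by the symmetric argument. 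Item (iv) follows from commutativity and associativity of scalar multiplication:
\[
(\alpha\mathcal{A}_{i})\mathcal{B}_{j}=\mathcal{A}_{i}(\alpha\mathcal{B}_{j})=\alpha(\mathcal{A}_{i}\mathcal{B}_{j}).
\]

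\textbf{Item (iii), associativity.} This is the only step needing care, because it depends on the index bookkeeping. Let $\mathcal{C}$ have sizes $l_r$. I will show that, for each mode $r$, the nested merged indices agree:
\[
\bigl[[i_rj_r]k_r\bigr]=\bigl[i_r[j_rk_r]\bigr].
\]
Using the merging rule, the left side equals
\[
\bigl((i_r-1)m_r+j_r-1\bigr)l_r+k_r=(i_r-1)m_rl_r+(j_r-1)l_r+k_r.
\]
The right side equals
\[
(i_r-1)m_rl_r+\bigl((j_r-1)l_r+k_r\bigr).
\]
These two expressions coincide. Both sides of (iii) therefore have, at this common index, the entry $\mathcal{A}_{i}\mathcal{B}_{j}\mathcal{C}_{k}$, and associativity of real multiplication finishes the proof. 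The main (minor) obstacle is stating the index convention precisely so that this mixed-radix identity is transparent. Alternatively, one may note that in the matrix case ($p=2$) the identity reduces to Lemma~\ref{lem:kp property}(v).
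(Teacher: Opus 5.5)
Your proposal is correct. The paper gives no proof of this lemma. It lists the four identities ``without elaborate derivations'' and cites Batselier and Wong, so your entrywise verification replaces a citation rather than offering an alternative to an argument in the paper.

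What your version adds is that it must fix the index-merging convention, which Definition~\ref{de:tkp} leaves unspecified. Your choice $[i_rj_r]=(i_r-1)m_r+j_r$ is the one under which the case $p=2$ reproduces the block form $[\mathbf{A}_{ij}\mathbf{B}]$ of the matrix Kronecker product. It is also the reading the paper needs elsewhere, for instance when it replaces $\mathrm{bdiag}(\bar{\mathcal{A}}\otimes\mathcal{I}_{t/n_2})$ by $\mathrm{bdiag}(\bar{\mathcal{A}})\otimes\mathbf{I}_{t/n_2}$.

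Your mixed-radix argument for (iii) does not depend on this choice. With the little-endian merging $[i_rj_r]=i_r+(j_r-1)n_r$ one gets $[[i_rj_r]k_r]=[i_r[j_rk_r]]=i_r+(j_r-1)n_r+(k_r-1)n_rm_r$, so associativity holds under either reading of the definition.

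One detail is worth stating explicitly. The map $(i_r,j_r,k_r)\mapsto[[i_rj_r]k_r]$ is a bijection onto $\{1,\dots,n_rm_rl_r\}$, being a composition of bijections. This guarantees that comparing entries at nested merged indices covers every entry of both sides.
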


\section{Proofs of theorems}

\setcounter{equation}{0}
\setcounter{table}{0}
\setcounter{figure}{0}
\setcounter{theorem}{0}
\setcounter{lemma}{0}

\renewcommand{\theequation}{S2.\arabic{equation}}
\renewcommand{\thetable}{S2.\arabic{table}}
\renewcommand{\thefigure}{S2.\arabic{figure}}
\renewcommand{\thetheorem}{S2.\arabic{theorem}}
\renewcommand{\thelemma}{S2.\arabic{lemma}}
\subsection{Proof of Theorem 4.2}
To keep the main text focused on algorithmic frameworks and experimental validation, we defer the detailed theoretical derivations to this section. Specifically, we first prove Theorem 4.2, which establishes the multi-term STP-SVD decomposition for matrices and characterizes its truncation error in terms of singular values. We then extend the result to the higher-order tensor setting and prove Theorem 4.3, which presents the MSTP-SVD decomposition with verified orthogonality of factor tensors and a closed-form error bound. Finally, we provide the full proof of Theorem 5.1, which derives the expected reconstruction error bound for the randomized MRSTP-SVD algorithm by decomposing the total error into a deterministic truncation component and a randomized approximation component.
\begin{proof}
        Based on Lemma 4.2, for any matrix $\mathbf{A} \in \mathbb{R}^{m_1 m_2 \times n_1 n_2}$, there exist matrices $\mathbf{B}_i \in \mathbb{R}^{m_1 \times n_1}$ and $\mathbf{C}_i \in \mathbb{R}^{m_2 \times n_2}$ that matrix $\mathbf{A}$ can be decomposed as the sum of $k$ Kronecker product terms plus an error term, i.e.,
        \begin{equation}\label{eq:proof1}
                \mathbf{A} = \sum_{i=1}^{k}\mathbf{B}_i\otimes \mathbf{C}_i + \mathbf{E}_k,
        \end{equation}
        where $\mathbf{E}_k \in \mathbb{R}^{m_1 m_2 \times n_1 n_2}$ denotes the approximation error matrix, whose squared Frobenius norm satisfies
\begin{equation}\label{eq:mstpsvd_matrix_error}
\|\mathbf{E}_k\|_F^2 = \sum_{i=k+1}^{v} \sigma_i^2,        
\end{equation}
where $\sigma_{k+1} \ge \sigma_{k+2} \ge \dots \ge \sigma_{v} \ge 0$ are the singular values of $\mathscr{R}(\mathbf{A}) \in \mathbb{R}^{m_1 n_1 \times m_2 n_2}$, 
and $v = \min\{m_1 n_1, m_2 n_2\}$. Next, computing the SVD of each $\mathbf{B}_i$ yields $\mathbf{B}_i = \mathbf{U}_i \mathbf{\Sigma}_{\mathbf{B}_i} \mathbf{V}_i^\top$, where $\mathbf{U}_i \in \mathbb{R}^{m_1 \times m_1}$, $\mathbf{\Sigma}_{\mathbf{B}_i} \in \mathbb{R}^{m_1 \times n_1}$, $\mathbf{V}_i \in \mathbb{R}^{n_1 \times n_1}$.
        Accordingly, (\ref*{eq:proof1}) can be rewritten as 
        \begin{equation}\label{eq:proof2}
                \mathbf{A} = \sum_{i=1}^{k}(\mathbf{U}_i \mathbf{\Sigma}_{\mathbf{B}_i} \mathbf{V}_i^\top) \otimes \mathbf{C}_i + \mathbf{E}_k,
        \end{equation}
where $\mathbf{\Sigma}_{\mathbf{B}_i}$ is a diagonal matrix whose diagonal entries are the singular values of $\mathbf{B}_i$. Let $\sigma_{i1}, \sigma_{i2}, \cdots, \sigma_{ip}$ with $p=\min\{m_1 , n_1\}$ denote the singular values of $\mathbf{B}_i$ sorted in decreasing order such that $\sigma_{i1} \ge \sigma_{i2} \ge \cdots \ge \sigma_{ip}$. Then $\mathbf{\Sigma}_{\mathbf{B}_i} = \mathrm{diag}(\sigma_{i1}, \sigma_{i2}, \cdots, \sigma_{ip})$.
With Lemma \ref*{lem:kp property} and Lemma 2.2, (\ref*{eq:proof2}) can be reformulated as 
\begin{equation*}
        \begin{split}
        \mathbf{A} 
        &= \sum_{i=1}^{k}(\mathbf{U}_i \mathbf{\Sigma}_{\mathbf{B}_i} \mathbf{V}_i^\top) \otimes (\mathbf{I}_{m_2} \mathbf{C}_i \mathbf{I}_{n_2}) +\mathbf{E}_k \\
        &= \sum_{i=1}^{k}(\mathbf{U}_i \otimes \mathbf{I}_{m_2})(\mathbf{\Sigma}_{\mathbf{B}_i} \otimes \mathbf{C}_i)(\mathbf{V}_i^\top \otimes \mathbf{I}_{n_2})+ \mathbf{E}_k\\
        &= \sum_{i=1}^{k}\mathbf{U}_i \ltimes \mathbf{\Sigma}_i \ltimes \mathbf{V}_i^\top + \mathbf{E}_k,
        \end{split}
\end{equation*}
where $\mathbf{\Sigma}_i = \mathbf{\Sigma}_{\mathbf{B}_i} \otimes \mathbf{C}_i \in \mathbb{R}^{m_1 m_2 \times n_1 n_2}$ is a block-diagonal matrix whose diagonal blocks are given by $\mathbf{S}_{i1} = \sigma_{i1}\mathbf{C}_i$, $\mathbf{S}_{i2} = \sigma_{i2}\mathbf{C}_i$, $\cdots$, $\mathbf{S}_{ip} = \sigma_{ip}\mathbf{C}_i$.
Since $\sigma_{ij} (j=1, 2, \cdots, p)$ are non-negative scalars, norm properties give $\|\sigma_{ij} \mathbf{C}_i\|_F = \sigma_{ij} \|\mathbf{C}_i\|_F$. Consequently, 
\[
\|\sigma_{i1} \mathbf{C}_i\|_F \ge \|\sigma_{i2} \mathbf{C}_i\|_F \ge \cdots \ge \|\sigma_{ip} \mathbf{C}_i\|_F,
\]
or equivalently 
\[
\|\mathbf{S}_{i1} \|_F \ge \|\mathbf{S}_{i2}\|_F \ge \cdots \ge \|\mathbf{S}_{ip}\|_F.
\]
\end{proof}
\subsection{Proof of Theorem 4.3}\label{app:proof_mstpsvd}
\begin{proof}
        We first assume that each frontal slice $\bar{\mathcal{A}}^{(j)}$ ($j=1,2,\dots,l$) admits the multi-term STP-SVD decomposition derived in Theorem 4.2, i.e.,
\[
\bar{\mathcal{A}}^{(j)} = \sum_{i=1}^{k}\bar{\mathcal{U}_i}^{(j)} \ltimes \bar{\mathcal{S}_i}^{(j)} \ltimes \bigl(\bar{\mathcal{V}_i}^{(j)}\bigr)^\top + \bar{\mathcal{E}_k}^{(j)}.
\]
        Based on the definition of $\mathrm{fold}$ and $\mathrm{bdiag}$ operators, we have 
\begin{equation*}
                \begin{split}
\mathcal{\bar{A}} 
&= \mathrm{fold}(\mathrm{bdiag}(\mathcal{\bar{A}})) \\
&=\mathrm{fold}\begin{bmatrix}
                \bar{\mathcal{A}}^{(1)} & & & \\
                & \bar{\mathcal{A}}^{(2)} & & \\
                & & \ddots & \\
                & & & \bar{\mathcal{A}}^{(l)}
                \end{bmatrix} \\
&=\mathrm{fold} \left[\sum_{i=1}^{k}\mathrm{bdiag}(\bar{\mathcal{U}_i}) \ltimes \mathrm{bdiag}(\bar{\mathcal{S}_i}) \ltimes  \mathrm{bdiag}(\bar{\mathcal{V}_i}^\top) + \mathrm{bdiag}(\bar{\mathcal{E}_k})\right] \\
&=\mathrm{fold} \left[ \sum_{i=1}^{k}  (\mathrm{bdiag}(\bar{\mathcal{U}_i})\otimes\mathbf{I}_{m_2} ) \times \mathrm{bdiag}(\bar{\mathcal{S}_i}) \times (\mathrm{bdiag}(\bar{\mathcal{V}_i}^\top) \otimes \mathbf{I}_{n_2})+ \mathrm{bdiag}(\bar{\mathcal{E}_k}) \right] \\
&=\mathrm{fold} \left[ \sum_{i=1}^{k} \mathrm{bdiag}(\bar{\mathcal{U}_i} \otimes\mathcal{I}_{m_2}) \times  \mathrm{bdiag}(\bar{\mathcal{S}_i}) \times \mathrm{bdiag}(\bar{\mathcal{V}_i}^\top \otimes \mathcal{I}_{n_2}) + \mathrm{bdiag}(\bar{\mathcal{E}_k}) \right] \\
&=\mathrm{fold} \left[ \sum_{i=1}^{k} \mathrm{bdiag}(\overline{\mathcal{U}_i \otimes\mathcal{I}_{m_2}}) \times \mathrm{bdiag}(\bar{\mathcal{S}_i}) \times \mathrm{bdiag}(\overline{\mathcal{V}_i^\top \otimes \mathcal{I}_{n_2}}) + \mathrm{bdiag}(\bar{\mathcal{E}_k})\right].                     
                \end{split}
\end{equation*}
Then,\begin{equation*}
        \begin{split}
\mathcal{A} 
        = L^{-1} (\bar{\mathcal{A}}) 
        = \sum_{i=1}^{k} \mathcal{U}_i \ltimes_L  \mathcal{S}_i \ltimes_L \mathcal{V}_i^\top + \mathcal{E}_k.
        \end{split}
\end{equation*}  
Since $(\bar{\mathcal{U}_i}^{(j)})^\top$ is orthogonal, we have,
\begin{equation*}
        \begin{split}
L(\mathcal{U}_i^\top *_L \mathcal{U}_i)
        &=L(\mathcal{U}_i^\top \ltimes_L \mathcal{U}_i) \\
        &= \mathrm{fold}\left[\mathrm{bdiag}(\bar{\mathcal{U}_i}^\top) \ltimes \mathrm{bdiag}(\bar{\mathcal{U}_i}) \right] \\
        &=\mathrm{fold} 
                \begin{bmatrix}
                (\bar{\mathcal{U}_i}^{(1)})^\top \ltimes \bar{\mathcal{U}_i}^{(1)}& & & \\
                & (\bar{\mathcal{U}_i}^{(2)})^\top \ltimes \bar{\mathcal{U}_i}^{(2)}& & \\
                & & \ddots & \\
                & & & (\bar{\mathcal{U}_i}^{(l)})^\top \ltimes \bar{\mathcal{U}_i}^{(l)}
                \end{bmatrix}\\ 
        &=\mathrm{fold} 
                \begin{bmatrix}
                (\bar{\mathcal{U}_i}^{(1)})^\top \times \bar{\mathcal{U}_i}^{(1)}& & & \\
                & (\bar{\mathcal{U}_i}^{(2)})^\top \times \bar{\mathcal{U}_i}^{(2)}& & \\
                & & \ddots & \\
                & & & (\bar{\mathcal{U}_i}^{(l)})^\top \times \bar{\mathcal{U}_i}^{(l)}
                \end{bmatrix}\\ 
        &=\mathrm{fold}\left[\mathrm{bdiag}(\bar{\mathcal{I}}_{m_1 m_1 l})\right] \\
        &=\bar{\mathcal{I}}_{m_1 m_1 l}.
        \end{split}
\end{equation*}
By analogous arguments, $L\bigl(\mathcal{U}_i *_L \mathcal{U}_i^\top\bigr) = \bar{\mathcal{I}}_{m_1 m_1 l}$, which verifies that $\mathcal{U}_i$ is an orthogonal tensor. Following identical reasoning, $\mathcal{V}_i$ is also orthogonal.

Suppose that the transform matrix $\mathbf{L}$ satisfies $\mathbf{L}^\mathrm{H}\mathbf{L} = \mathbf{L}\mathbf{L}^\mathrm{H} = \rho\mathbf{I}_{l}$ and $\mathbf{L}^{-1} =  \mathbf{L}^\mathrm{H}/\rho$ for some constant $\rho > 0$. For the approximation error tensor $\mathcal{E}_k$, its squared Frobenius norm satisfies
\begin{equation}
\|\mathcal{E}_k\|_F^2 = \left(\frac{1}{\sqrt{\rho}} \|\mathrm{bdiag}(\bar{\mathcal{E}_k})\|_F\right)^2 = \frac{1}{\rho} \sum_{j=1}^{l} \|\bar{\mathcal{E}_k}^{(j)}\|_F^2 = \frac{1}{\rho} \sum_{j=1}^{l} \sum_{i=k+1}^v \bigl(\hat{\sigma}_i^{(j)}\bigr)^2,  
\end{equation}
where the last equality is from (\ref*{eq:mstpsvd_matrix_error}), and $\hat{\sigma}_i^{(j)}$ is the $i$-th singular value of $\mathscr{R}(\bar{\mathcal{A}}^{(j)})$ and $v = \min\{m_1 n_1, m_2 n_2\}$.
\end{proof}

\subsection{Proof of Theorem 5.1}\label{app:proof_mrstpsvd}
\begin{proof}
Let $\mathcal{A}_{\text{MSTP}}$ denotes the exact  MSTP-SVD approximation of tensor \(\mathcal{A}\), which obeys the deterministic error bound from Theorem 4.2: 
         \begin{equation}\label{eq:mrstpsvd_error1}
         \left\|\mathcal{A}-\mathcal{A}_{\text{MSTP}}\right\|_{F}^{2} = \frac{1}{\rho}\sum_{j=1}^{l}\sum_{i=k+1}^{v} (\hat{\sigma}_i^{(j)})^2.       
         \end{equation}
The tensor \(\tilde{\mathcal{A}}\) produced by the MRSTP-SVD algorithm is built upon the exact deterministic MSTP-SVD approximation \(\mathcal{A}_{\text{MSTP}}\). To cut the heavy computational cost of full SVD on each rearranged matrix \(\mathscr{R}(\bar{\mathcal{A}}^{(j)})\), we substitute the exact SVD routine with randomized projection and power iteration. Given this two-stage construction flow, we can naturally decompose the overall reconstruction error into two additive residual parts as
\begin{equation}\label{eq:mrstpsvd_error2}
\mathcal{A}-\tilde{\mathcal{A}} = \left(\mathcal{A}-\mathcal{A}_{\text{MSTP}}\right) + \left(\mathcal{A}_{\text{MSTP}}-\tilde{\mathcal{A}}\right),        
\end{equation}
where the first term corresponds to fixed truncation residual of multi-term decomposition, and the second term represents the error induced by randomized subspace approximation. As given in Theorem 2 of \cite{qin2024nonconvex}, the error bound for the standalone randomized T-SVD approximation satisfies
\begin{equation}\label{eq:mrstpsvd_error3}
\mathbb{E}\left\|\mathcal{A}_{\text{MSTP}}-\tilde{\mathcal{A}}\right\|_{F}^{2} \leq \frac{1}{\rho} \sum_{j=1}^{l} \left(1+\frac{k}{p-1} (\tau_k^{(j)})^{4q}\right) \left(\sum_{i=k+1}^{v} (\hat{\sigma}_i^{(j)})^2 \right).
\end{equation}
Combining (\ref*{eq:mrstpsvd_error1}), (\ref*{eq:mrstpsvd_error2}) and (\ref*{eq:mrstpsvd_error3}), we apply the parallelogram identity for the Frobenius norm and linearity of expectation to expand the total expected error:
\begin{equation}
\begin{aligned}
\mathbb{E}\left\|\mathcal{A}-\tilde{\mathcal{A}}\right\|_{F}^{2}
&= \mathbb{E}\left\| \left(\mathcal{A}-\mathcal{A}_{\text{MSTP}}\right) + \left(\mathcal{A}_{\text{MSTP}} - \tilde{\mathcal{A}}\right) \right\|_{F}^{2} \\
&\leq 2\mathbb{E}\left\|\mathcal{A}-\mathcal{A}_{\text{MSTP}}\right\|_{F}^{2} + 2\mathbb{E}\left\|\mathcal{A}_{\text{MSTP}}-\tilde{\mathcal{A}}\right\|_{F}^{2} \\
&= \frac{2}{\rho} \sum_{j=1}^{l}\left[\left(2+\frac{k}{p-1}\left(\tau_{k}^{(j)}\right)^{4q}\right)\left(\sum_{i=k+1}^{v}\left(\hat{\sigma}_{i}^{(j)}\right)^{2}\right)\right].
\end{aligned}
\end{equation}
\end{proof}

\section{Pseudocodes for all truncated variants}\label{app:truncated_algorithm}

\setcounter{equation}{0}
\setcounter{table}{0}
\setcounter{figure}{0}
\setcounter{theorem}{0}
\setcounter{lemma}{0}

\renewcommand{\theequation}{S3.\arabic{equation}}
\renewcommand{\thetable}{S3.\arabic{table}}
\renewcommand{\thefigure}{S3.\arabic{figure}}
\renewcommand{\thetheorem}{S3.\arabic{theorem}}
\renewcommand{\thelemma}{S3.\arabic{lemma}}

For completeness and ease of reproducibility, we collect the full pseudocodes for all truncated variants of the STP-SVD framework. Specifically, we detail the truncated STP-SVD for matrices, the truncated multi-term STP-SVD for matrices, the truncated MSTP-SVD for tensors, and the truncated randomized MRSTP-SVD for tensors.
\begin{algorithm}[!ht]
\caption{Truncated STP-SVD of matrices \cite{chen2023tensor}}
\label{alg:truncated stpsvd}
\KwIn{$\mathbf{A} \in \mathbb{R}^{m_1 m_2 \times n_1 n_2}$, truncated parameter $r$.}
\KwOut{$\mathbf{U}, \mathbf{\Sigma}, \mathbf{V}$.}
Calculate matrices $\mathbf{B} \in \mathbb{R}^{m_1 \times n_1}$ and $\mathbf{C} \in \mathbb{R}^{m_2 \times n_2}$ via Lemma 4.1, such that $\mathbf{A} \approx \mathbf{B} \otimes \mathbf{C}$\;
Perform truncated SVD on $\mathbf{B}$: $[\mathbf{U}_\mathbf{B}, \mathbf{\Sigma}_{\mathbf{B}}, \mathbf{V}_\mathbf{B}] = \mathrm{svds}(\mathbf{B}, r)$\;
\Return{$\mathbf{U} = \mathbf{U}_\mathbf{B}, \mathbf{\Sigma} = \mathbf{\Sigma_B} \otimes \mathbf{C}, \mathbf{V}=\mathbf{V}_\mathbf{B}$.}
\end{algorithm}
\begin{algorithm}[!ht]
\caption{Truncated multi-term STP-SVD of matrices}
\label{alg: truncated mstpsvd of matrices}
\KwIn{$\mathbf{A} \in \mathbb{R}^{m_1 m_2 \times n_1 n_2}$, the number of terms $k$, truncated parameter $r$.}
\KwOut{$\mathbf{U}_i$, $\mathbf{\Sigma}_i$, $\mathbf{V}_i$.}
Calculate matrices $\mathbf{B}_i \in \mathbb{R}^{m_1 \times n_1}$ and $\mathbf{C}_i \in \mathbb{R}^{m_2 \times n_2}$ ($i=1,\ldots,k$) via Lemma 4.2, such that
$\mathbf{A} \approx \sum_{i=1}^{k} \mathbf{B}_i \otimes \mathbf{C}_i$\;
\For{$i = 1$ \KwTo $k$}{
    Perform truncated SVD on $\mathbf{B}_i$: $[\mathbf{U}_i, \mathbf{\Sigma}_{\mathbf{B}_i}, \mathbf{V}_i] = \operatorname{svds}(\mathbf{B}_i,r)$\;
    $\mathbf{\Sigma}_i = \mathbf{\Sigma}_{\mathbf{B}_i} \otimes \mathbf{C}_i$\;
}
\Return{$\mathbf{U}_i, \mathbf{\Sigma}_i, \mathbf{V}_i$.}
\end{algorithm}
\begin{algorithm}[!ht]
\caption{Truncated MSTP-SVD method of tensors (TMSTP-SVD)}
\label{alg:truncated mstpsvd}
\KwIn{$\mathcal{A} \in \mathbb{R}^{m_1 m_2 \times n_1 n_2 \times l}$, the number of terms $k$, truncated rank matrix $\mathbf{R}$.}
\KwOut{$\mathcal{U}_i$, $\mathcal{S}_i$, $\mathcal{V}_i$.}
Obtain $\bar{\mathcal{A}}$ by applying an invertible linear transform $L$ on $\mathcal{A}$\;
\For{$j = 1$ \KwTo $l$}{
    Approximate the $j$-th frontal slice by Lemma 4.2: $\bar{\mathcal{A}}^{(j)} \approx \sum_{i=1}^{k} \mathbf{B}_i^{(j)} \otimes \mathbf{C}_i^{(j)}$\;
    \For{$i = 1$ \KwTo $k$}{
        Compute the truncated SVD of $\mathbf{B}_i^{(j)}$: $[\mathbf{U}_i^{(j)}, \mathbf{\Sigma}_{\mathbf{B}_i}^{(j)}, \mathbf{V}_i^{(j)}] = \mathrm{svds}\left(\mathbf{B}_i^{(j)},R_{ij}\right)$\;
        $\mathbf{\Sigma}_i^{(j)} = \mathbf{\Sigma}_{\mathbf{B}_i}^{(j)} \otimes \mathbf{C}_i^{(j)}$\;
        Store $\mathbf{U}_i^{(j)}$, $\mathbf{\Sigma}_i^{(j)}$, $\mathbf{V}_i^{(j)}$ into $\mathcal{U}_i$, $\mathcal{S}_i$, $\mathcal{V}_i$, respectively\;
    }
}
\Return{$\mathcal{U}_i = L^{-1}(\mathcal{U}_i)$, $\mathcal{S}_i = L^{-1}(\mathcal{S}_i)$, $\mathcal{V}_i = L^{-1}(\mathcal{V}_i)$.}
\end{algorithm}

\begin{algorithm}[!ht]
\caption{Truncated MRSTP-SVD method of tensors (TMRSTP-SVD)}
\label{alg:truncated mrstpsvd}
\KwIn{$\mathcal{A} \in \mathbb{R}^{m_1 m_2 \times n_1 n_2 \times l}$, the number of terms $k$, oversampling parameter $s \ge 0$, iteration parameter $q \ge 0$,  truncated rank matrix $\mathbf{R}$.}
\KwOut{$\mathcal{U}_i$, $\mathcal{S}_i$, $\mathcal{V}_i$.}
Generate a Gaussian random tensor $\mathcal{G} \in \mathbb{R}^{n_1 n_2 \times (k+s) \times l}$\;
Compute $\bar{\mathcal{A}} = L(\mathcal{A})$ and $\bar{\mathcal{G}} = L(\mathcal{G})$\;
\For{$j = 1$ \KwTo $l$}{
    Obtain $\mathscr{R}(\bar{\mathcal{A}}^{(j)})$ by reorganizing the blocks of $\bar{\mathcal{A}}^{(j)}$\;
    Compute $\mathbf{Y} = \bigl(\mathscr{R}(\bar{\mathcal{A}}^{(j)}) \mathscr{R}(\bar{\mathcal{A}}^{(j)})^\top\bigr)^q \mathscr{R}(\bar{\mathcal{A}}^{(j)}) \bar{\mathcal{G}}^{(j)}$\;
    Compute thin-QR factorization $\mathbf{Y} = \mathbf{Q}_j \mathbf{R}$\;
    Compute $\mathbf{B} = \mathbf{Q}_j^\top \mathscr{R}(\bar{\mathcal{A}}^{(j)})$\;
    Compute the SVD of $\mathbf{B}$: $\mathbf{B} = \mathbf{U} \mathbf{S} \mathbf{V}^\top$\;
    Form $\mathbf{U}_k$, $\mathbf{V}_k$, $\mathbf{S}_k$ by truncating $\mathbf{Q}_j\mathbf{U}$, $\mathbf{V}$, $\mathbf{S}$ with $k$\;
    \For{$i = 1$ \KwTo $k$}{
        $\operatorname{vec}(\mathbf{B}_i^{(j)}) = \sqrt{\mathbf{S}_k(i,i)} \, \mathbf{U}_k(:, i)$, \quad$\operatorname{vec}(\mathbf{C}_i^{(j)}) = \sqrt{\mathbf{S}_k(i,i)} \, \mathbf{V}_k(:, i)$
        such that $\bar{\mathcal{A}}^{(j)} \approx \sum_{i=1}^{k} \mathbf{B}_i^{(j)} \otimes \mathbf{C}_i^{(j)}$\;
        Compute the truncated SVD of $\mathbf{B}_i^{(j)}$: $[\mathbf{U}_i^{(j)}, \mathbf{\Sigma}_{\mathbf{B}_i}^{(j)}, \mathbf{V}_i^{(j)}] = \operatorname{svds}\left(\mathbf{B}_i^{(j)},R_{ij}\right)$\;
        $\mathbf{\Sigma}_i^{(j)} = \mathbf{\Sigma}_{\mathbf{B}_i}^{(j)} \otimes \mathbf{C}_i^{(j)}$\;
        Store $\mathbf{U}_i^{(j)}$, $\mathbf{\Sigma}_i^{(j)}$, $\mathbf{V}_i^{(j)}$ into $\mathcal{U}_i$, $\mathcal{S}_i$, $\mathcal{V}_i$, respectively\;
    }
}
\Return{$\mathcal{U}_i = L^{-1}(\mathcal{U}_i)$, $\mathcal{S}_i = L^{-1}(\mathcal{S}_i)$, $\mathcal{V}_i = L^{-1}(\mathcal{V}_i)$.}
\end{algorithm}
\clearpage
\section{Supplementary experimental results}

\setcounter{equation}{0}
\setcounter{table}{0}
\setcounter{figure}{0}
\setcounter{theorem}{0}
\setcounter{lemma}{0}

\renewcommand{\theequation}{S4.\arabic{equation}}
\renewcommand{\thetable}{S4.\arabic{table}}
\renewcommand{\thefigure}{S4.\arabic{figure}}
\renewcommand{\thetheorem}{S4.\arabic{theorem}}
\renewcommand{\thelemma}{S4.\arabic{lemma}}
This section presents supplementary experimental results to further validate the effectiveness and efficiency of the proposed method for compression and completion tasks. We extend the empirical evaluation in the main text to additional benchmark datasets and provide extra qualitative visual comparisons. All results presented here complement the conclusions of the main manuscript and deliver a more comprehensive empirical verification of our approach.

Fig.~\ref*{fig:conv_three_subplot_sup}presents the impact of three invertible linear transforms (DFT, DCT, and ROT) on the compression performance of both deterministic MSTP-SVD and randomized MRSTP-SVD on the Night and Fruit test images. Consistent with the observations in the main text, all three transforms yield comparable reconstruction quality in terms of PSNR and SSIM, 
while DFT consistently achieves the lowest computational overhead across both test images. These supplementary results further validate the rationality of selecting DFT as the default transform in all subsequent experiments.
\begin{figure}[!htbp]
  \centering
    \includegraphics[width=\textwidth]{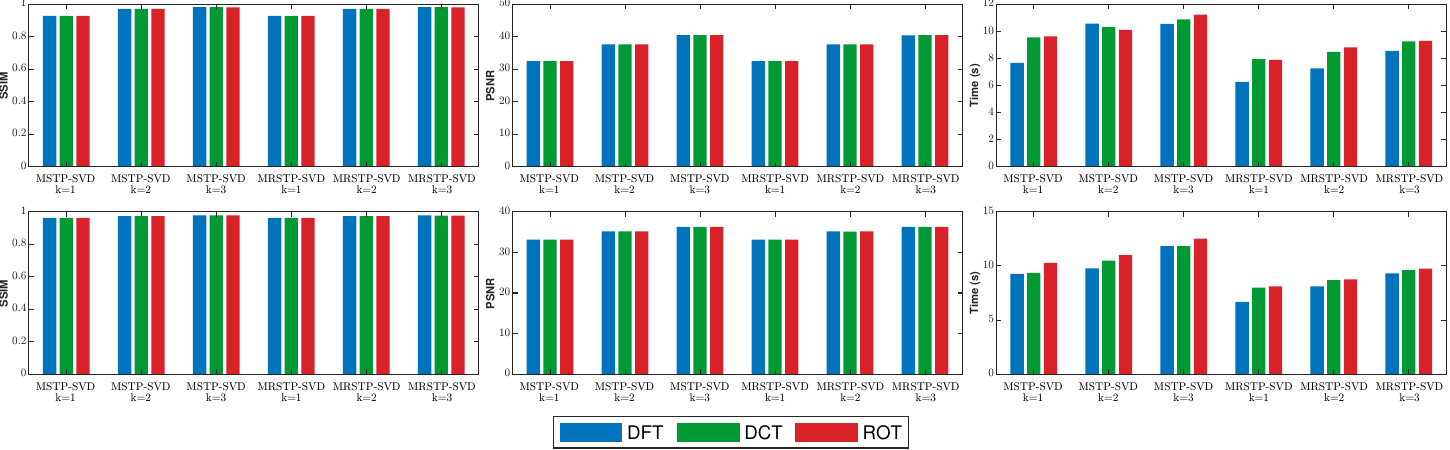}
    \caption{Quantitative metrics (PSNR, SSIM, runtime) of MSTP-SVD and randomized MRSTP-SVD for image compression under invertible transforms (DFT, DCT, ROT). Top: Night; Bottom: Fruit.}
    \label{fig:conv_three_subplot_sup}
  \end{figure}

Fig.~\ref*{fig:image_compression_sup} provides supplementary visual reconstruction comparisons and corresponding quantitative PSNR and runtime measurements on two additional representative test images, covering all competing baselines and the proposed multi-term, truncated, and randomized variants. The block partition sizes and truncation rank settings are identical to those in the main text to ensure fair comparison. Quantitatively, MSTP-SVD (k=3) achieves $7.9$ dB and $3.1$ dB PSNR improvements over the single-term STP-SVD on the two images, respectively, and outperforms the TT-SVD baseline by $5.8-7.3$ dB. The randomized MRSTP-SVD yields comparable reconstruction fidelity to its deterministic counterpart, with PSNR loss less than $0.08$ dB, while reducing the runtime by approximately 22\%. It can be observed that the multi-term schemes consistently outperform single-term baselines and preserve finer local structural details, whereas baseline methods tend to produce over-smoothed outputs. These results corroborate the effectiveness and generalization capability of the proposed methods across diverse image content.
\begin{figure}[!ht]
\centering
\renewcommand{\arraystretch}{0.3}
\setlength\tabcolsep{0.1pt}
\begin{tabular}{@{}ccccccc@{}}
\tiny Original& \tiny TT-SVD & \tiny STP-SVD & \tiny TSTP-SVD &\tiny\makecell[c]{MSTP-SVD\\[-4pt](k=2)} & \tiny\makecell[c]{MSTP-SVD\\[-4pt](k=3)} &\tiny\makecell[c]{TMSTP-SVD\\[-4pt](k=2)} \\
\includegraphics[width=0.672in]{image/night/fig_output/night.jpg} &
\includegraphics[width=0.672in]{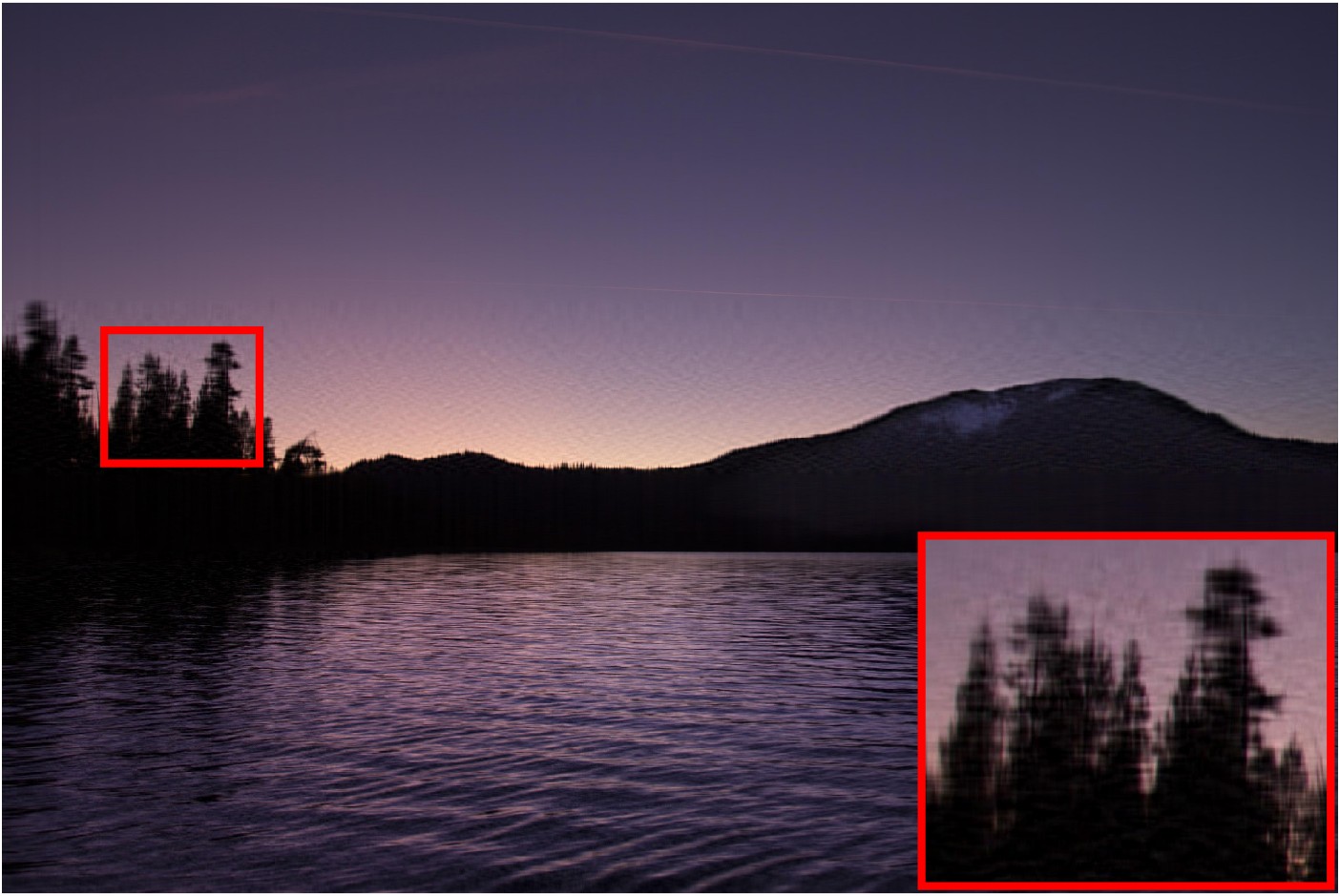} &
\includegraphics[width=0.672in]{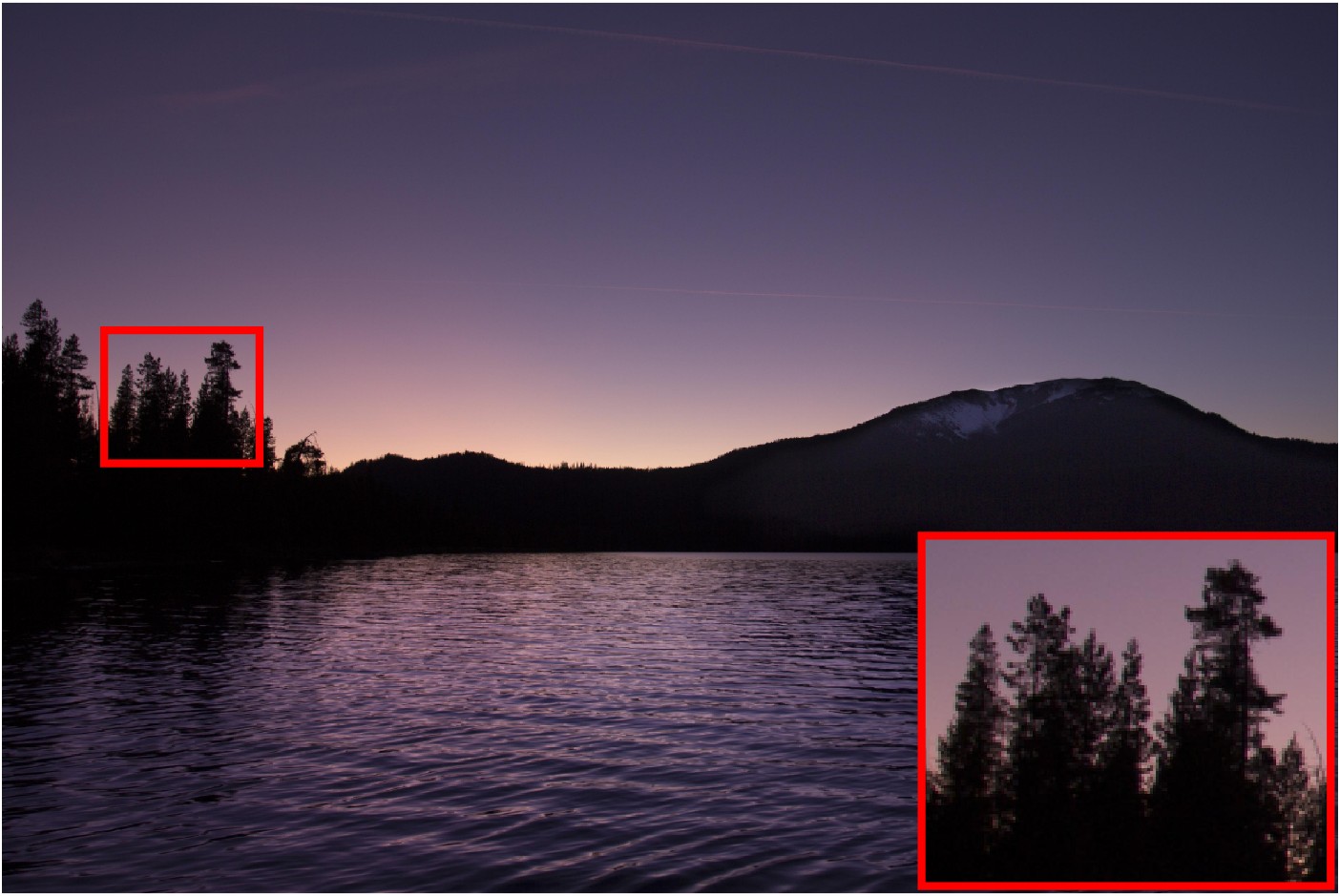} &
\includegraphics[width=0.672in]{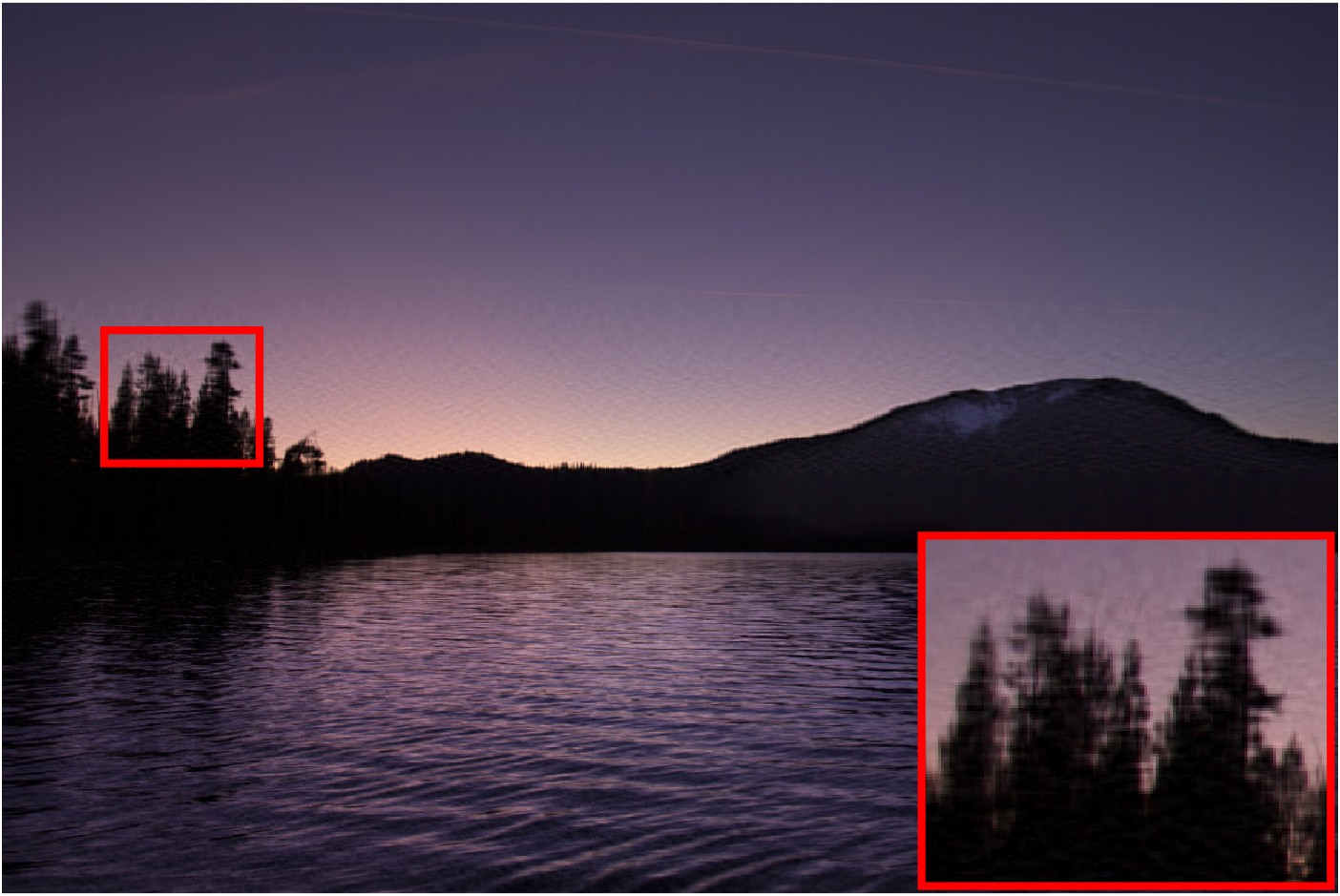} &
\includegraphics[width=0.672in]{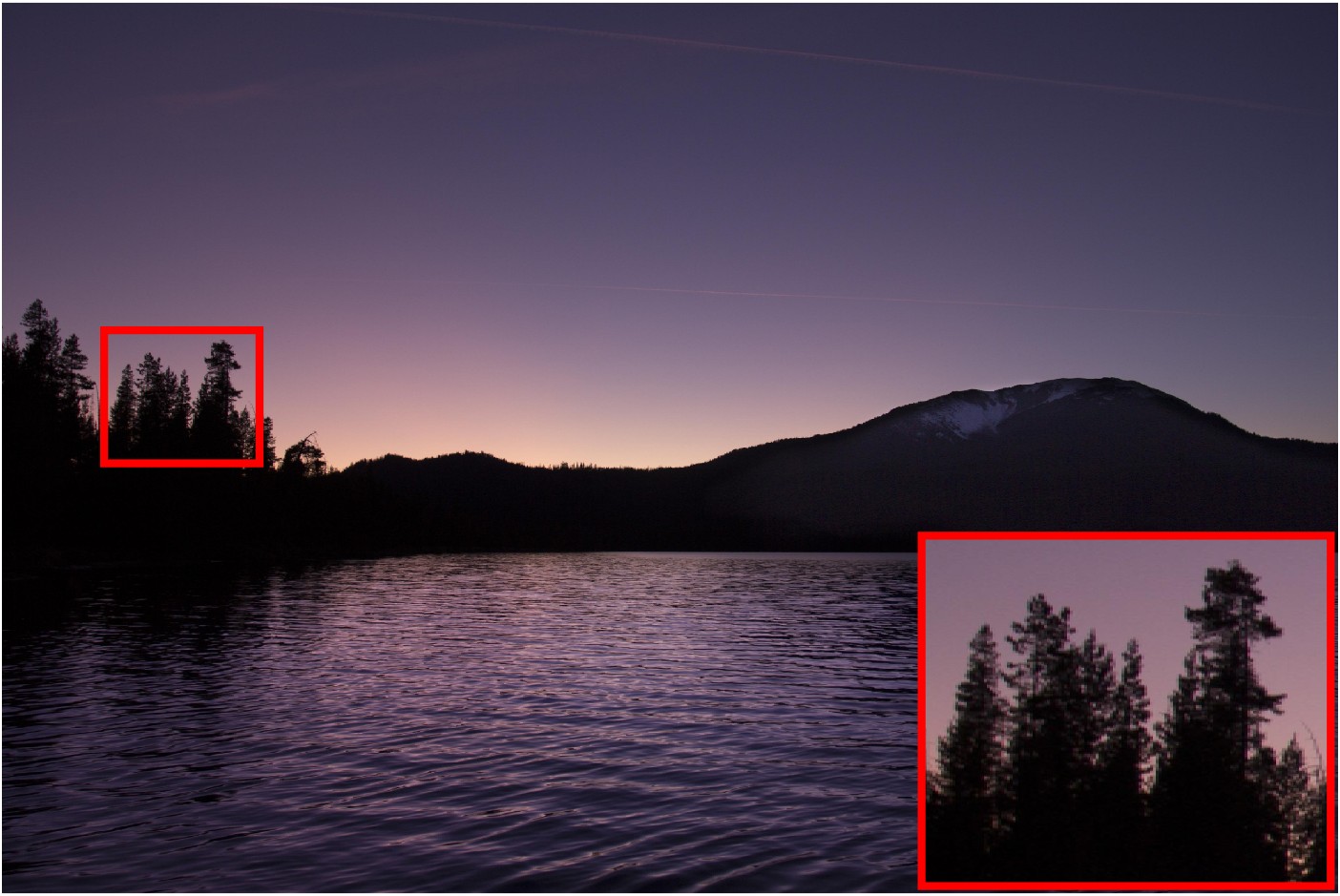} &
\includegraphics[width=0.672in]{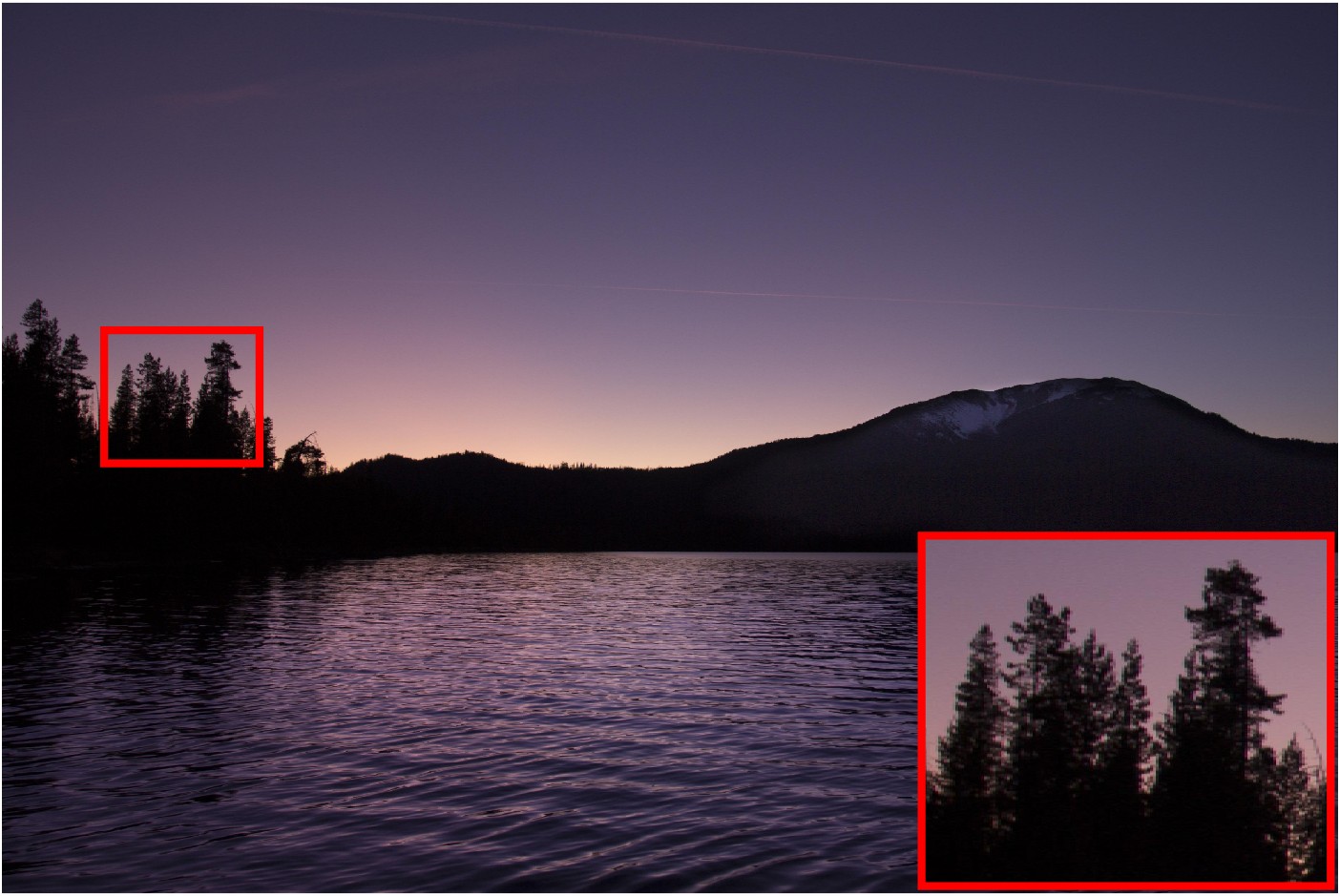} &
\includegraphics[width=0.672in]{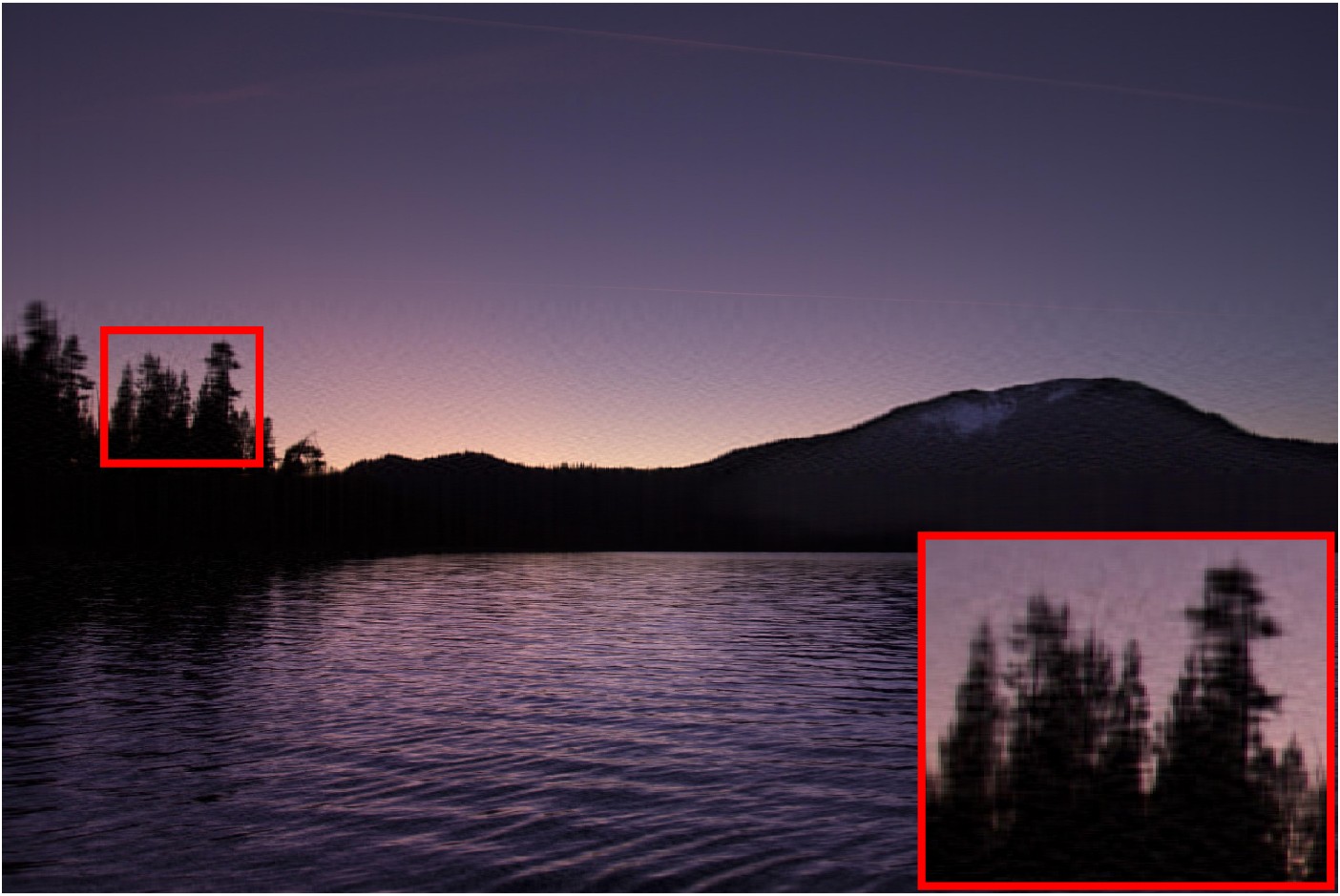} \\
&\tiny PSNR:33.22 & \tiny PSNR:32.53 & \tiny PSNR:30.91& \tiny PSNR:37.67 & \tiny PSNR:40.52 & \tiny PSNR:33.15\\ 
&\tiny Time:19.02s & \tiny Time:9.67s & \tiny Time:9.51s & \tiny Time:11.01s & \tiny Time:11.99s & \tiny Time:10.77s\\ 
\tiny\makecell[c]{TMSTP-SVD\\[-4pt](k=3)} &\tiny\makecell[c]{MRSTP-SVD\\[-4pt](k=1)} & \tiny\makecell[c]{MRSTP-SVD\\[-4pt](k=2)} & \tiny\makecell[c]{MRSTP-SVD\\[-4pt](k=3)}& \tiny\makecell[c]{TMRSTP-SVD\\[-4pt](k=1)} & \tiny\makecell[c]{TMRSTP-SVD\\[-4pt](k=2)} & \tiny\makecell[c]{TMRSTP-SVD\\[-4pt](k=3)}\\
\includegraphics[width=0.672in]{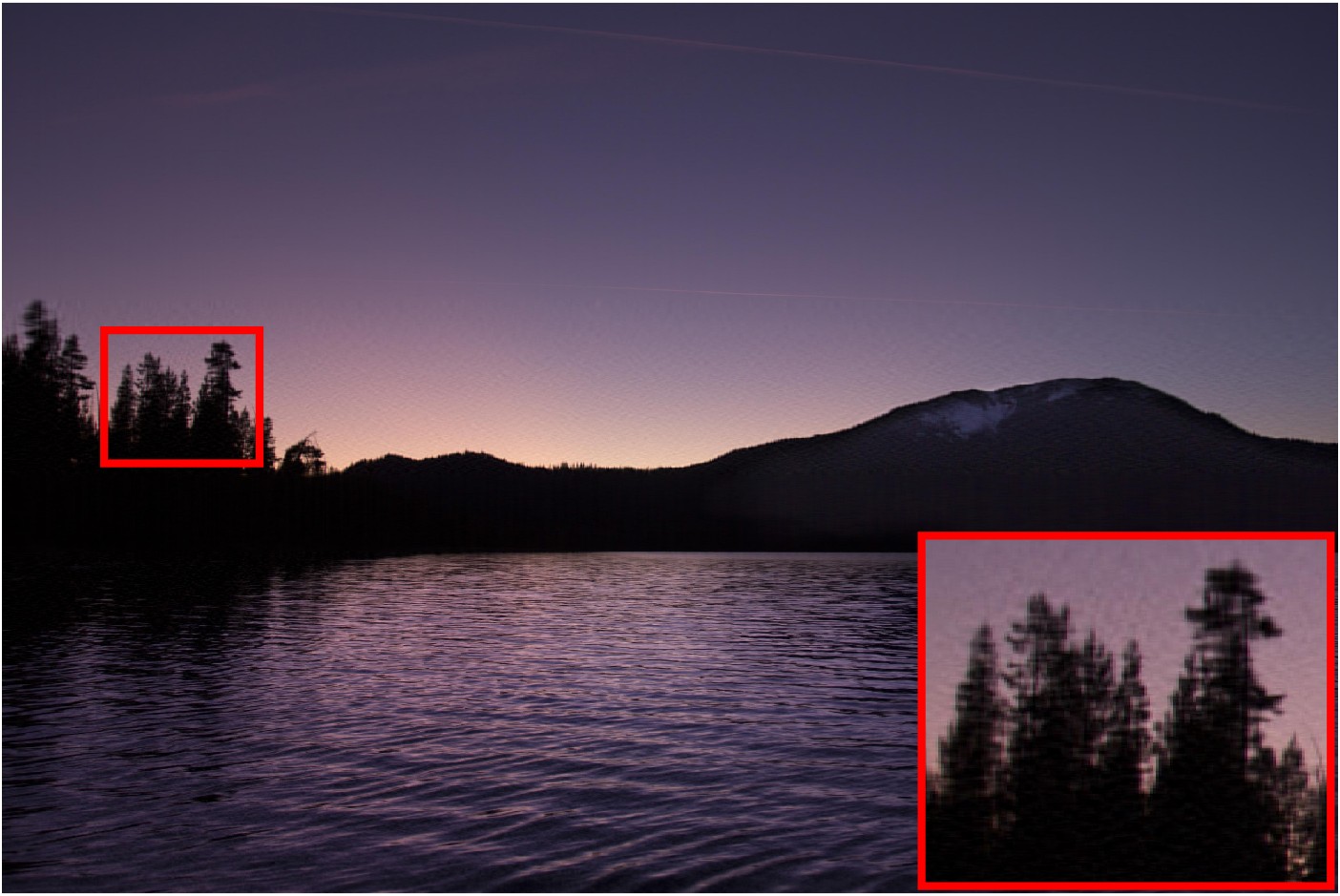} &
\includegraphics[width=0.672in]{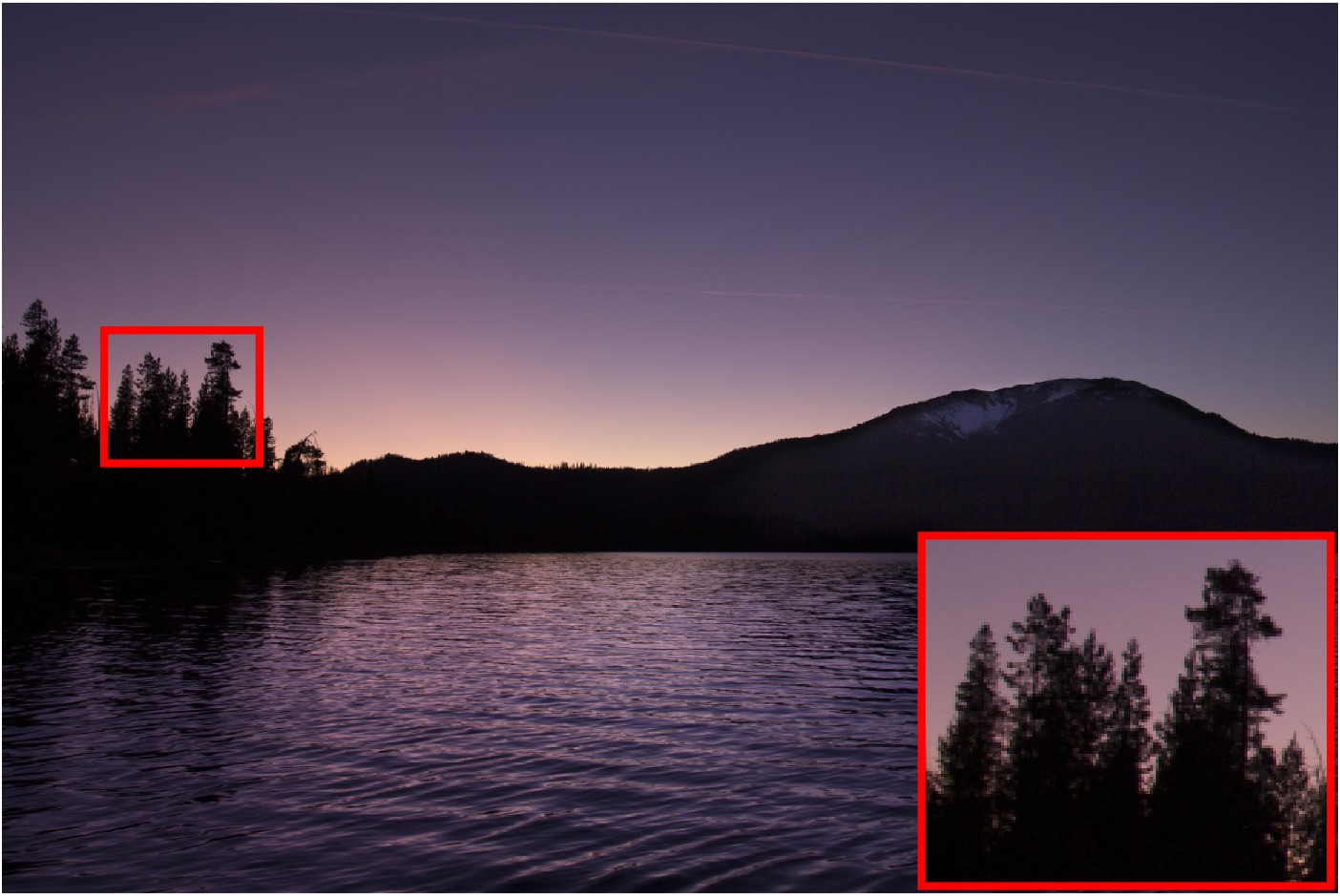} &
\includegraphics[width=0.672in]{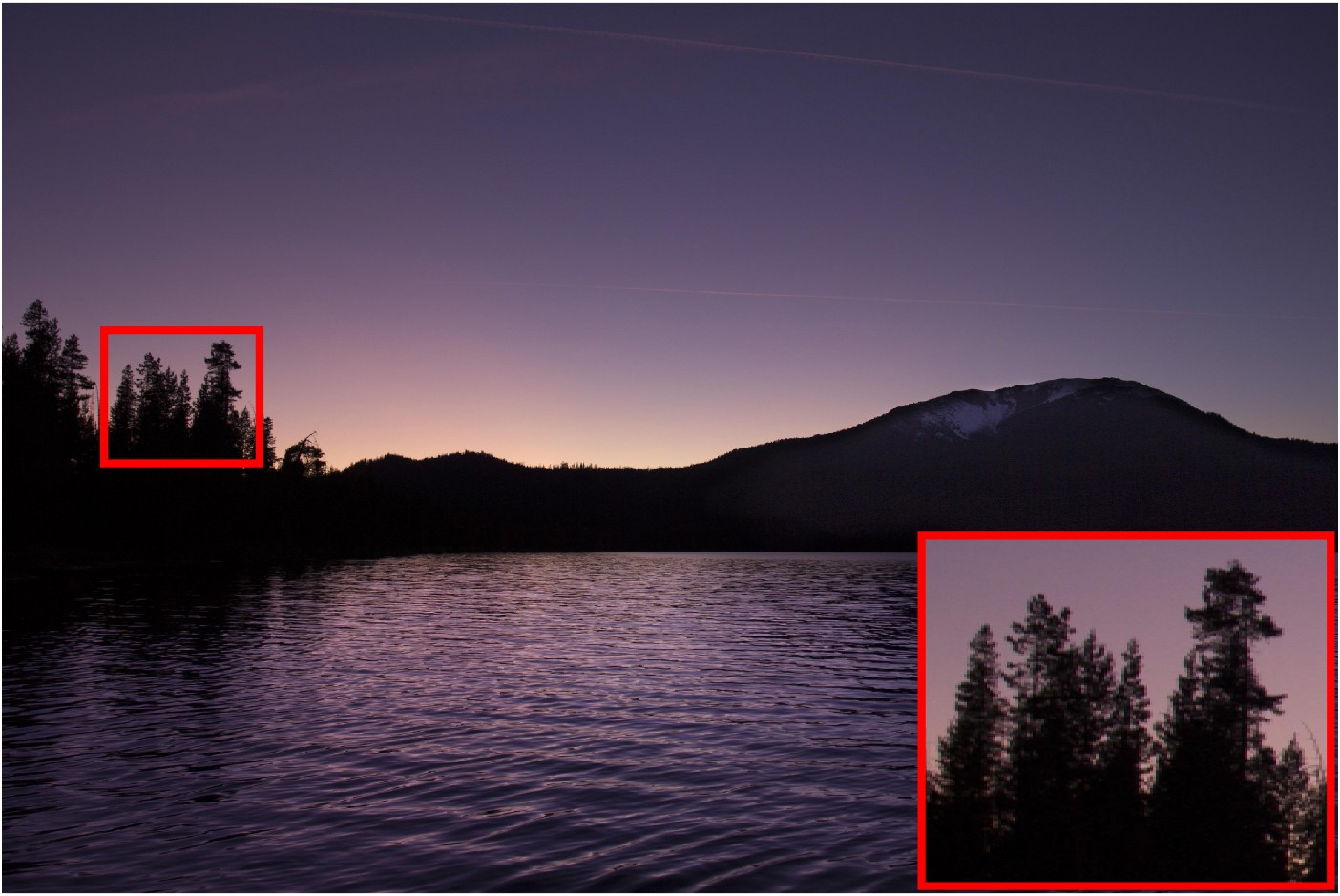} &
\includegraphics[width=0.672in]{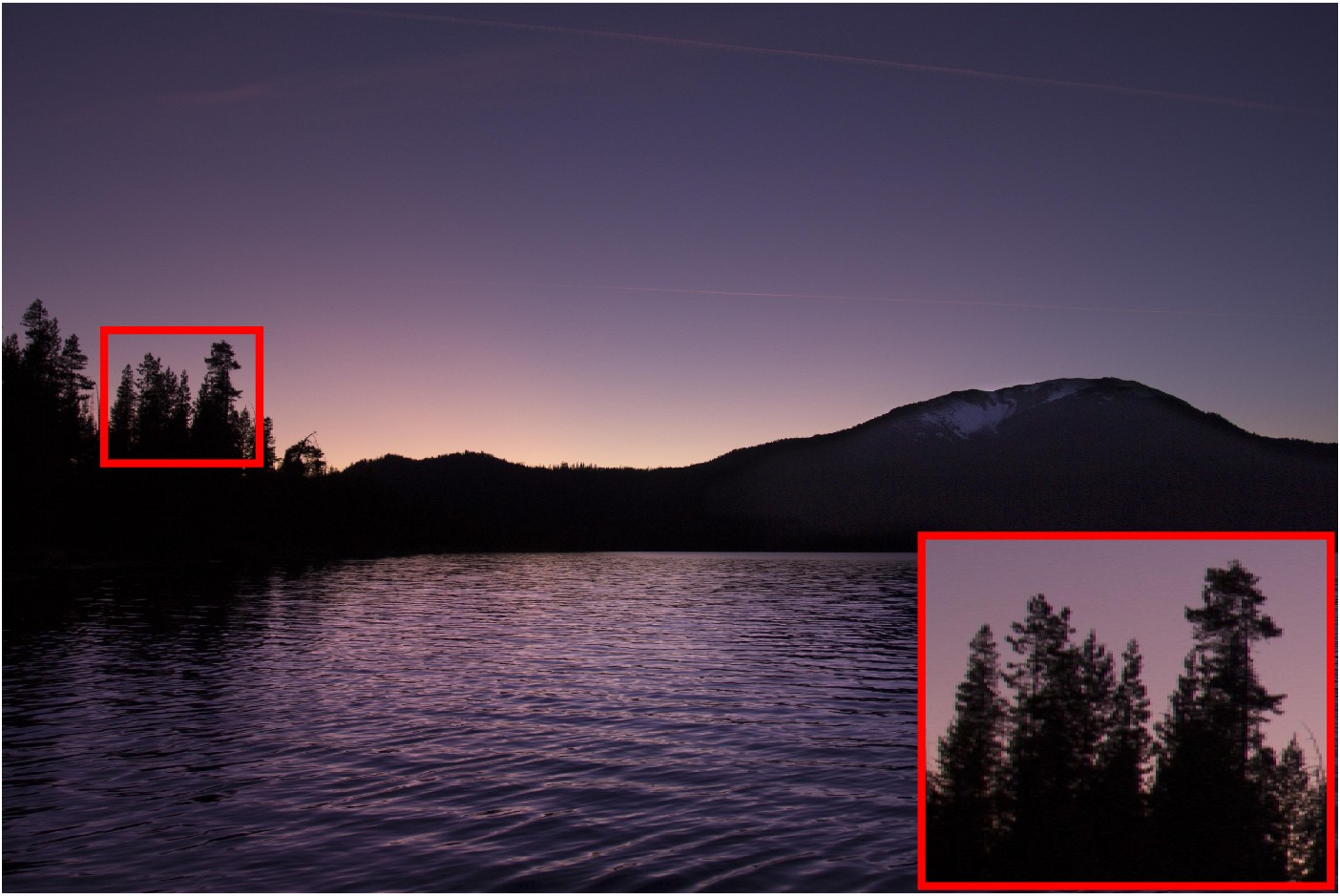}&
\includegraphics[width=0.672in]{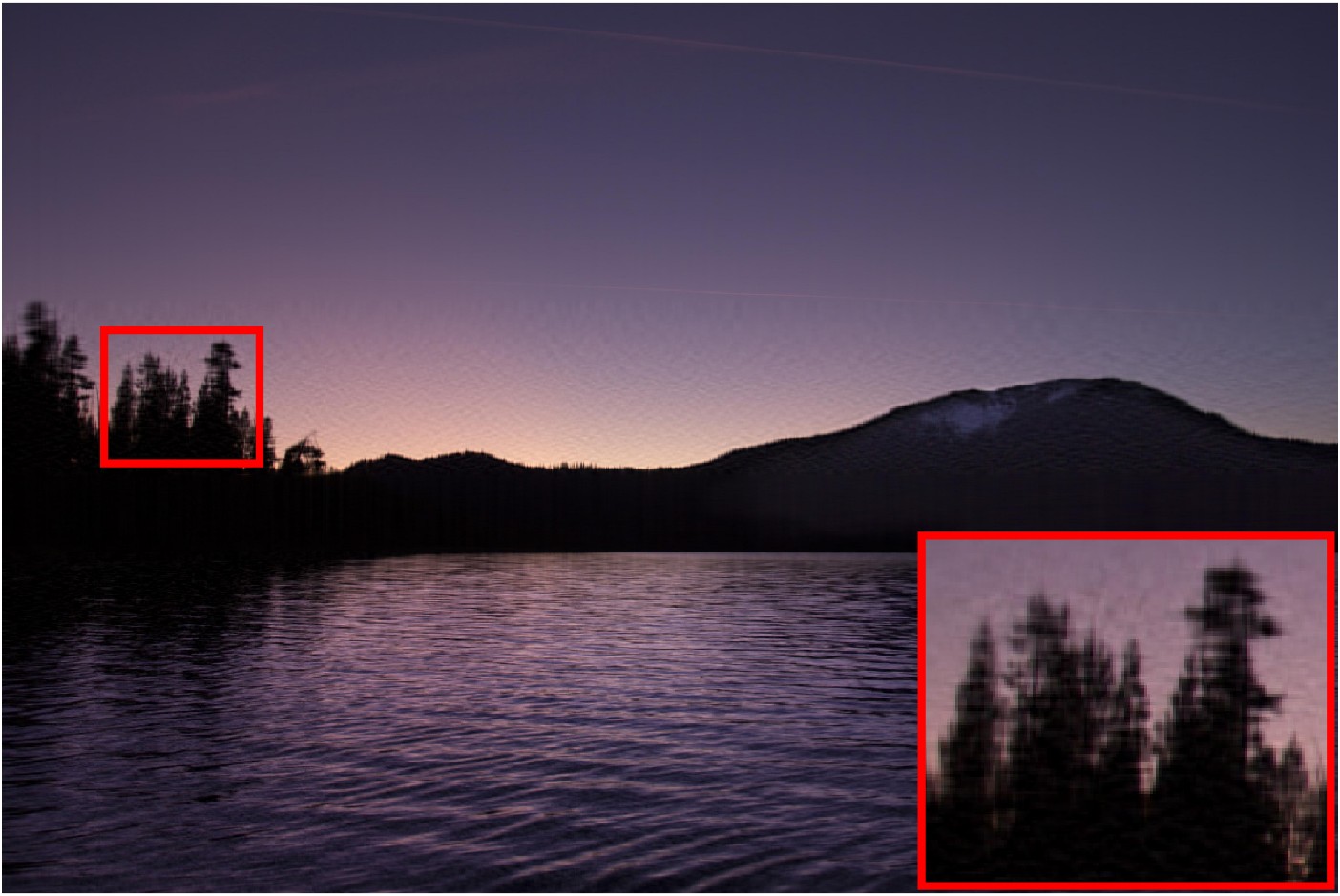} &
\includegraphics[width=0.672in]{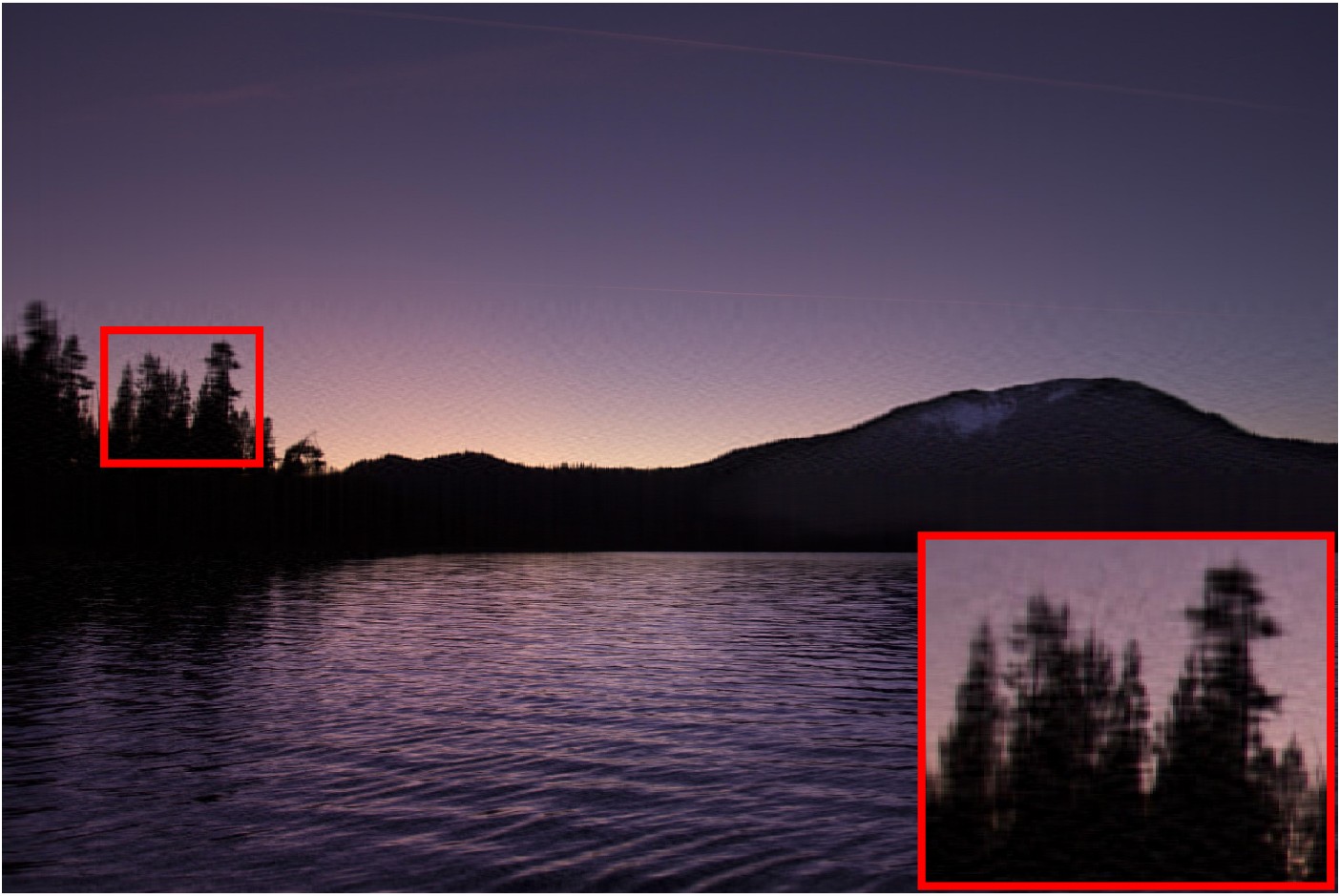} &
\includegraphics[width=0.672in]{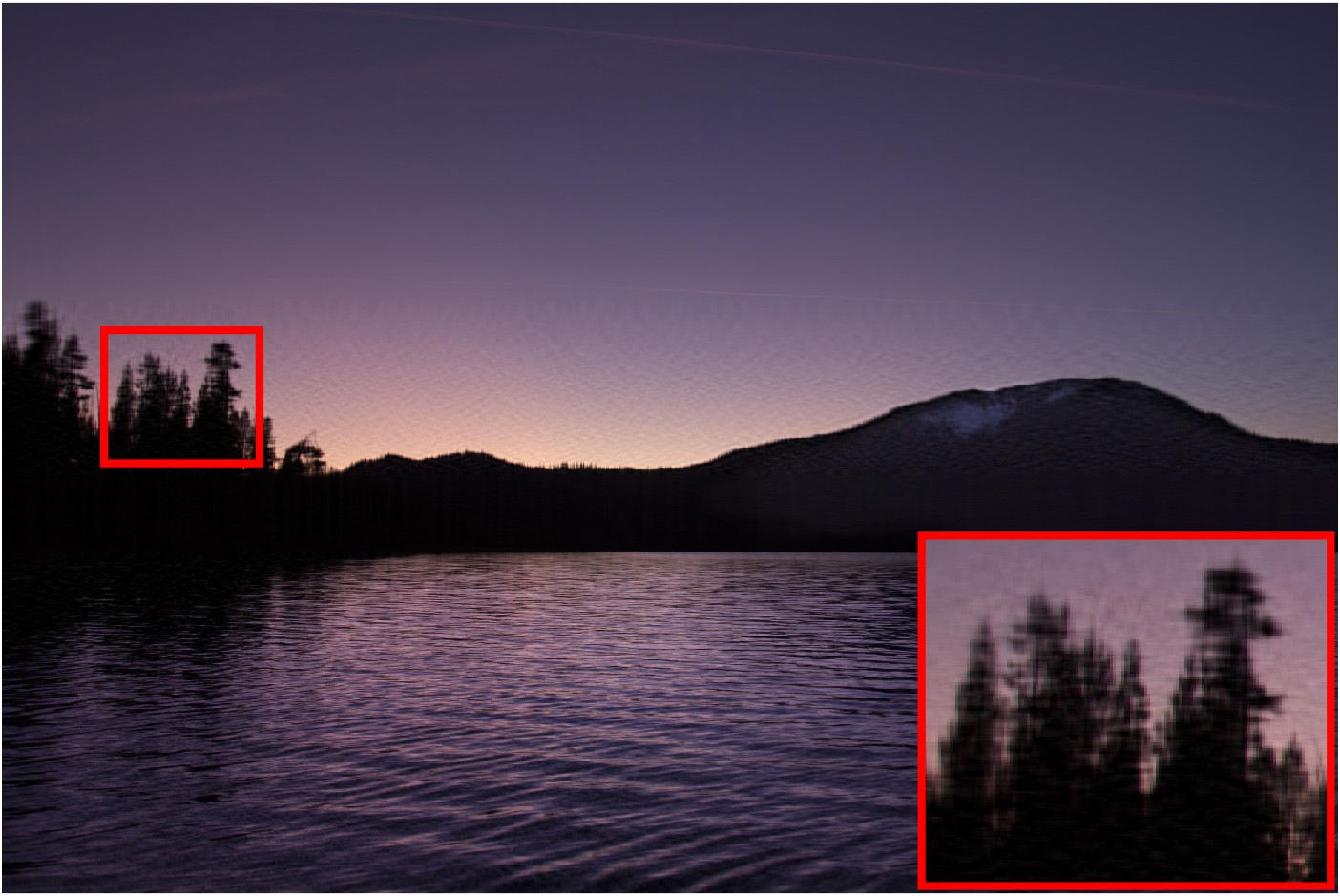} \\
\tiny PSNR:33.79& \tiny PSNR:32.53 & \tiny PSNR:37.60 & \tiny PSNR:40.52 & \tiny PSNR:30.91 & \tiny PSNR:33.13 & \tiny PSNR:33.79\\
\tiny Time: 11.79s & \tiny Time:8.39s & \tiny Time:8.67s & \tiny Time:9.20s & \tiny Time:7.37s & \tiny Time:8.28s & \tiny Time:9.01s \\

\tiny Original &\tiny TT-SVD & \tiny STP-SVD & \tiny TSTP-SVD &\tiny\makecell[c]{MSTP-SVD\\[-4pt](k=2)} & \tiny\makecell[c]{MSTP-SVD\\[-4pt](k=3)} &\tiny\makecell[c]{TMSTP-SVD\\[-4pt](k=2)} \\
\includegraphics[width=0.672in]{image/strawberry/fig_output/strawberry.jpg} &
\includegraphics[width=0.672in]{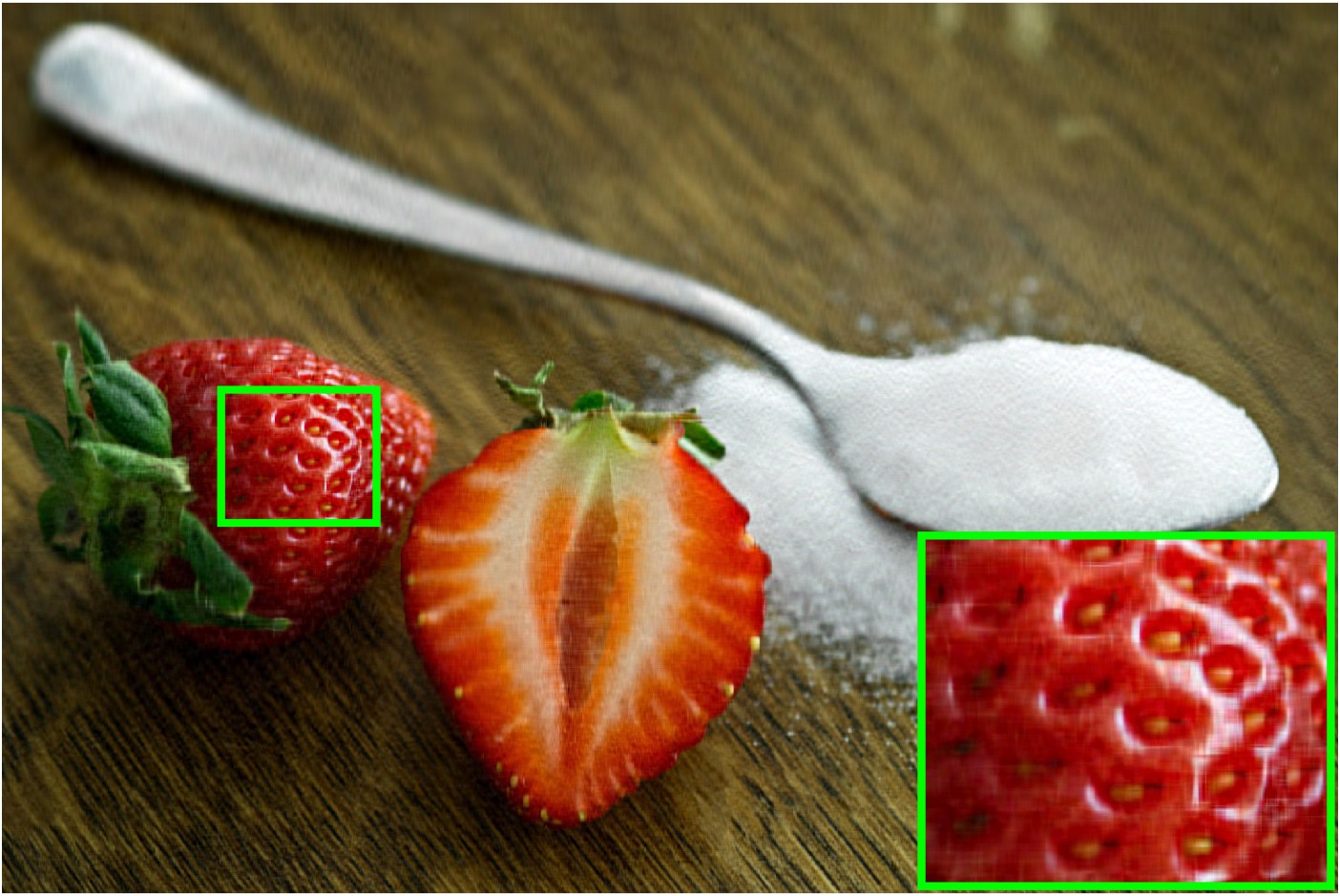} &
\includegraphics[width=0.672in]{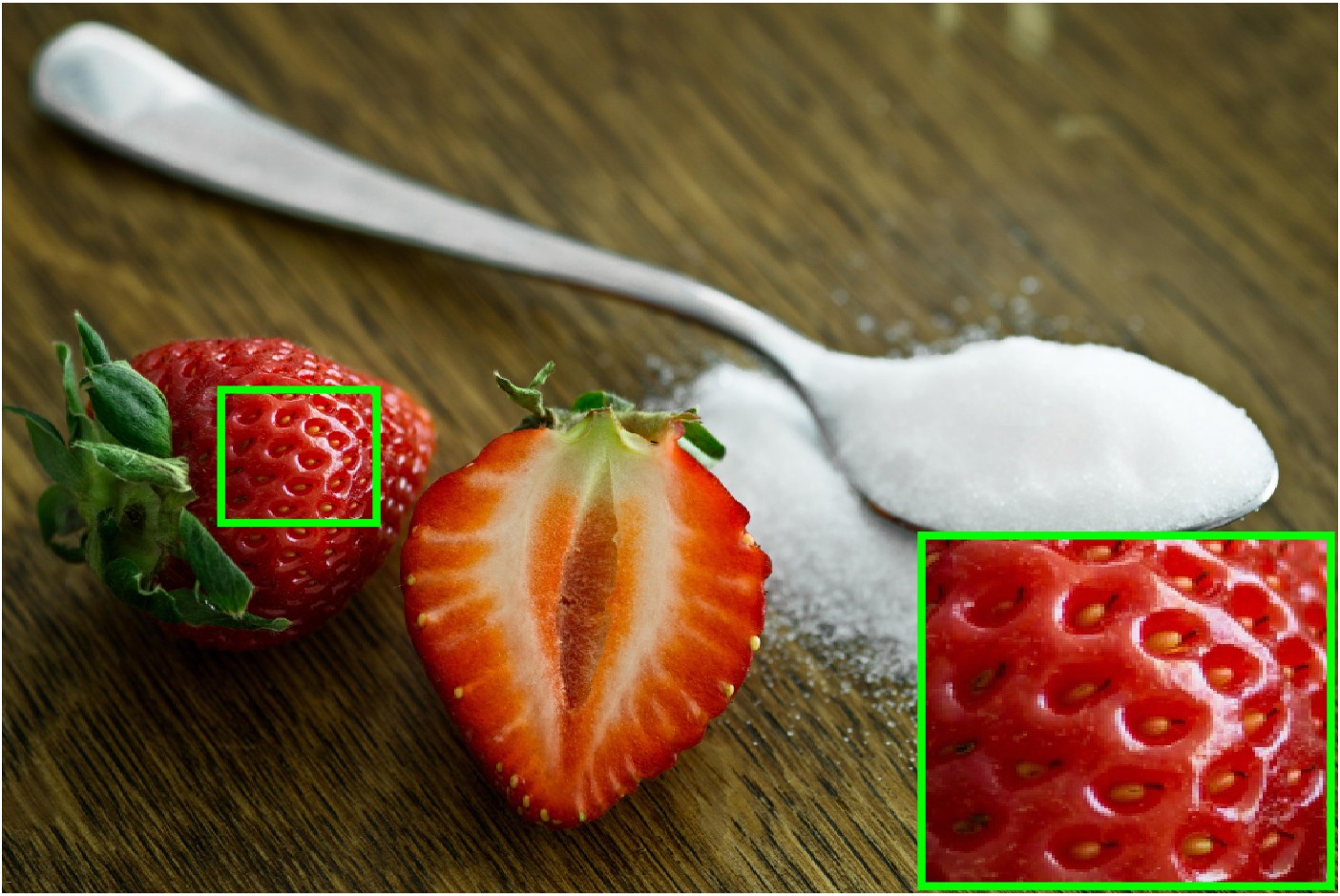} &
\includegraphics[width=0.672in]{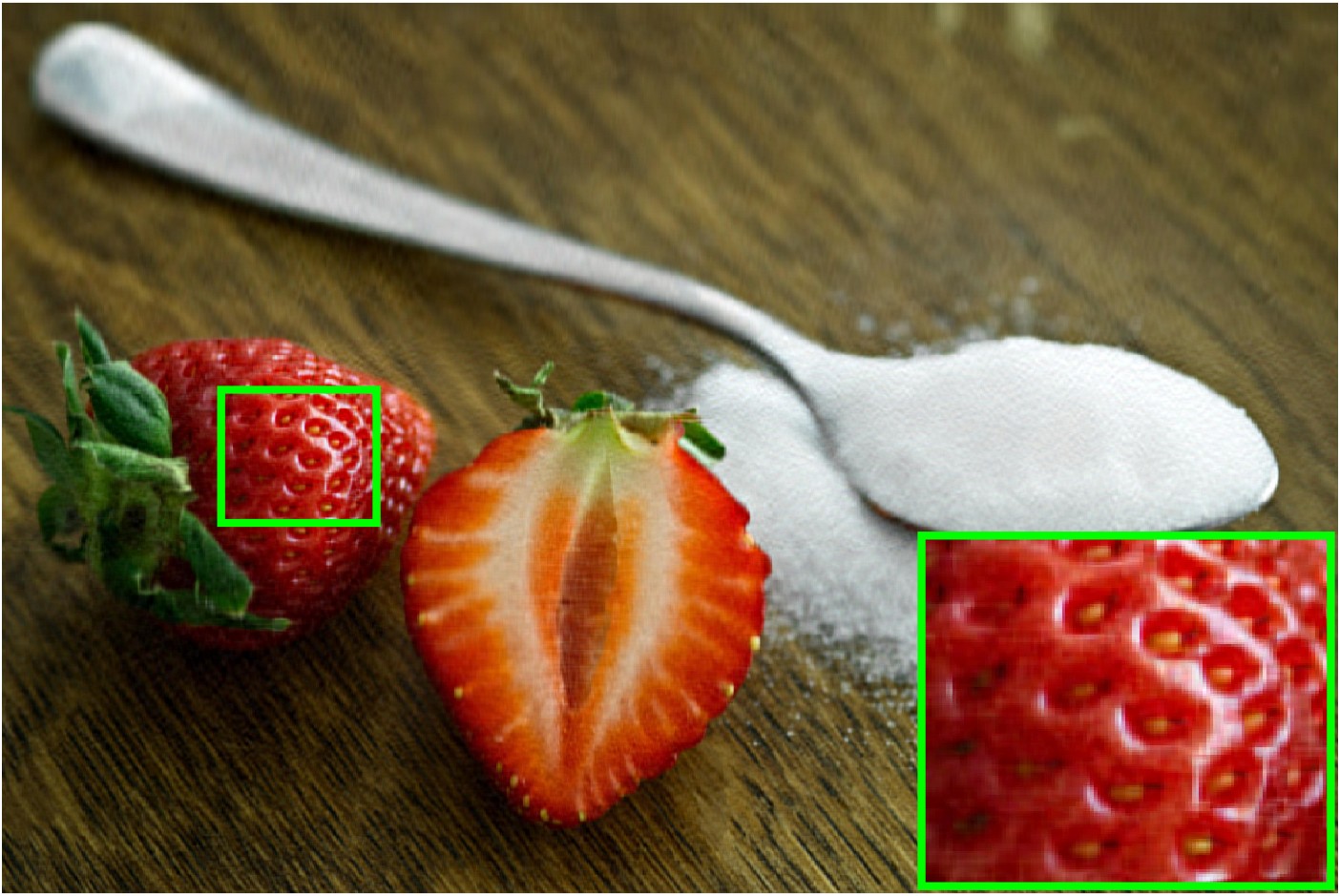} &
\includegraphics[width=0.672in]{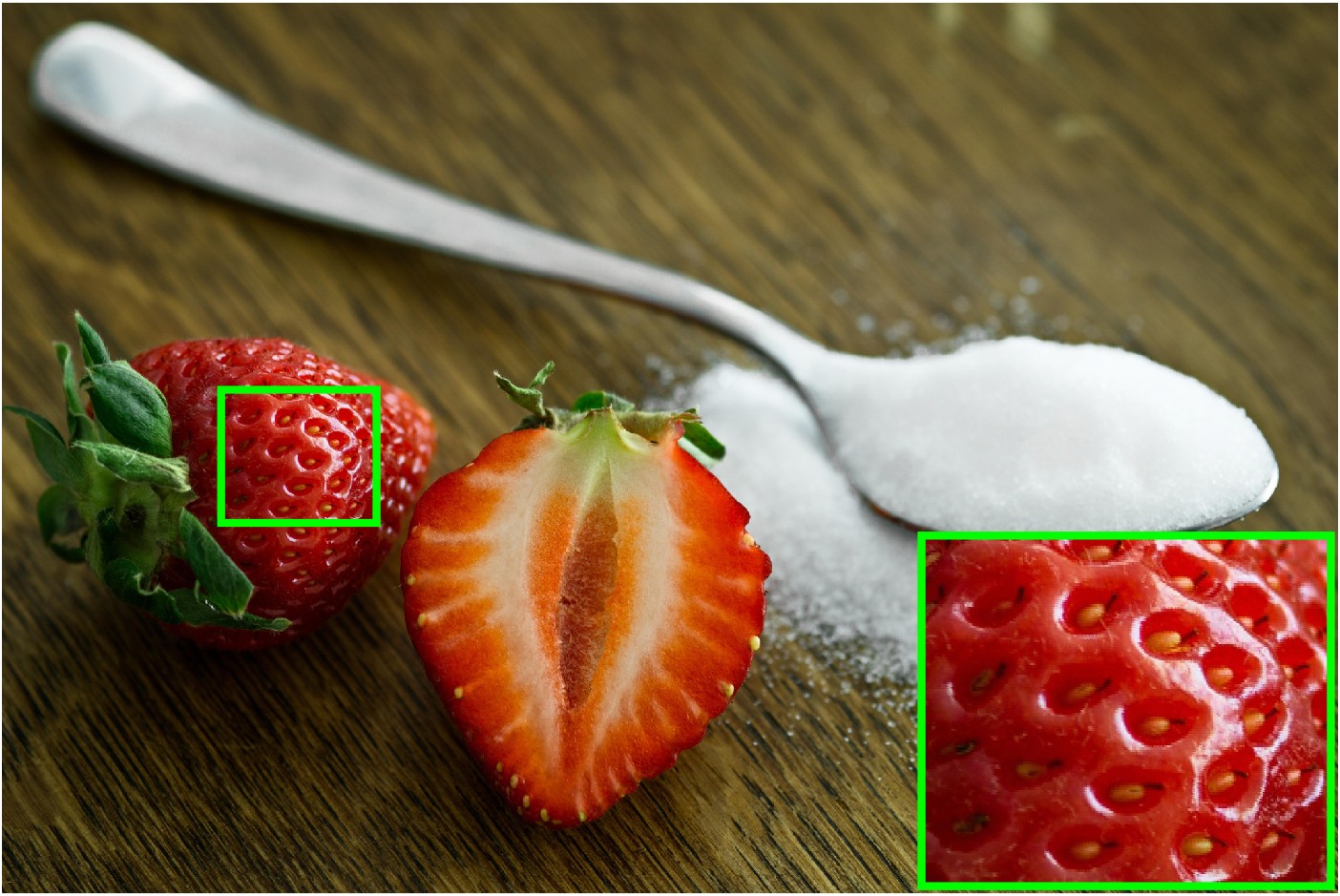} &
\includegraphics[width=0.672in]{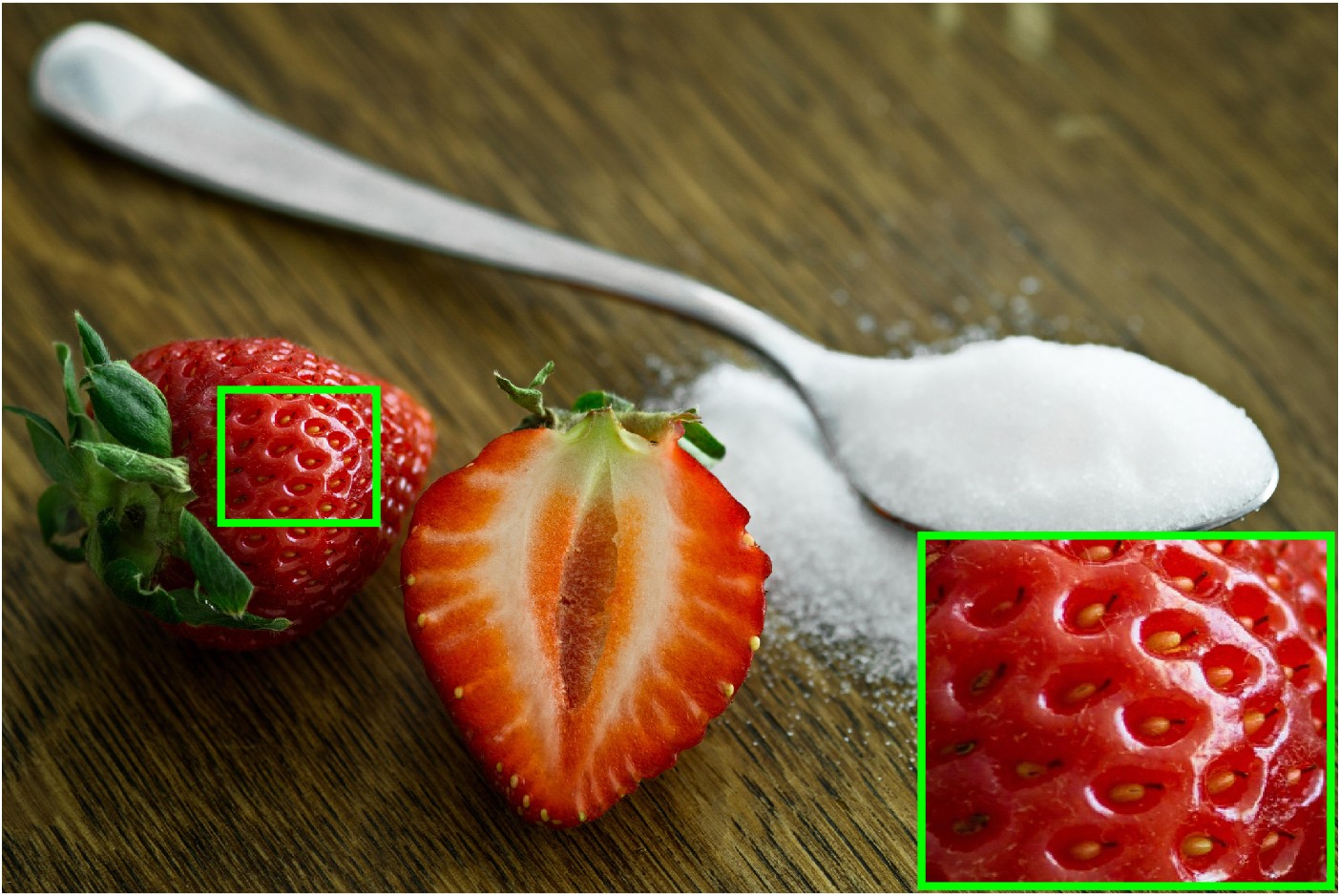} &
\includegraphics[width=0.672in]{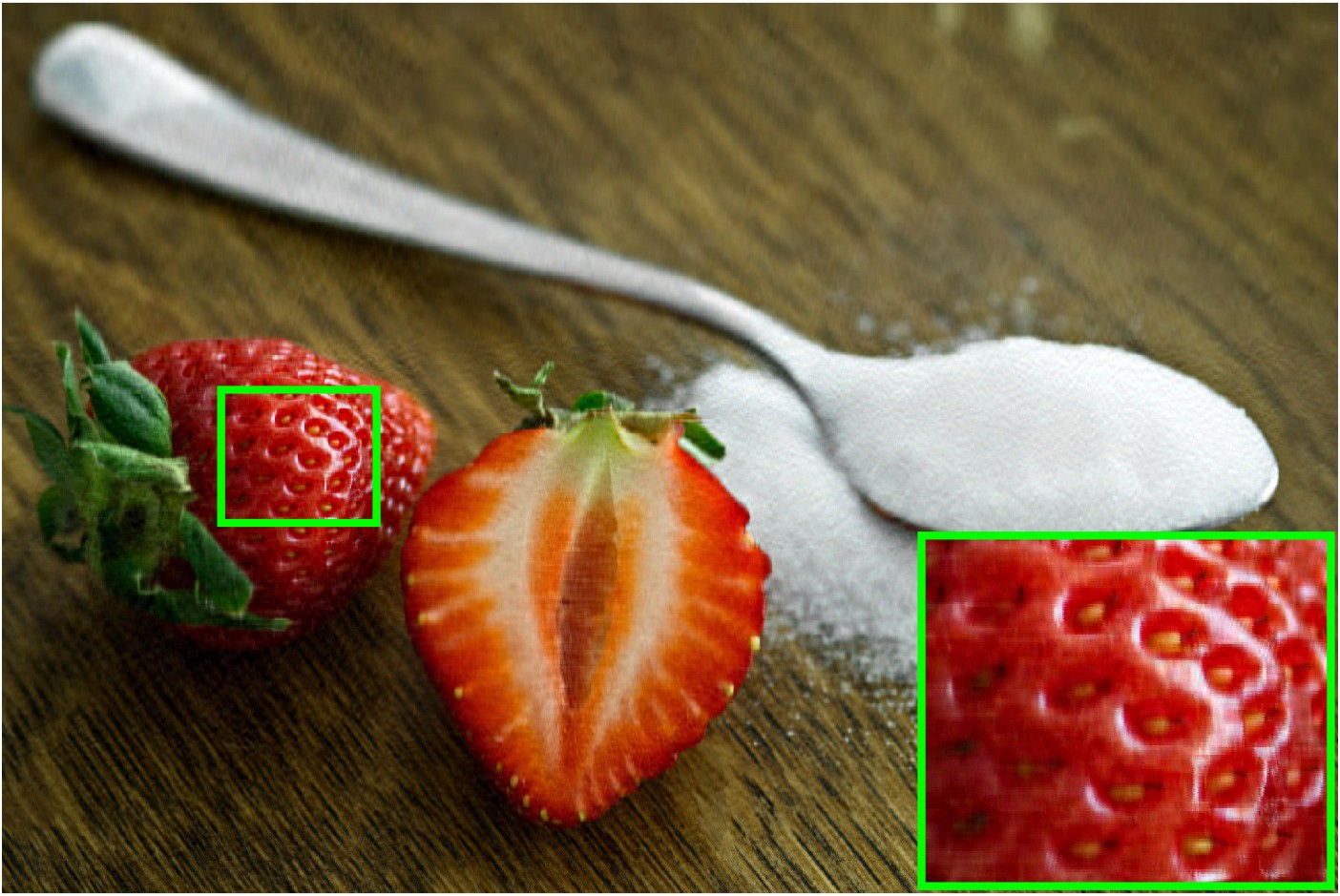} \\
&\tiny PSNR:30.46& \tiny PSNR:33.12 & \tiny PSNR:29.95 & \tiny PSNR:35.17 & \tiny PSNR:36.30 & \tiny PSNR:30.46\\
&\tiny Time:26.84s & \tiny Time:10.45s & \tiny Time:10.30s & \tiny Time:12.23s & \tiny Time:14.14s & \tiny Time:11.35s\\
\tiny\makecell[c]{TMSTP-SVD\\[-4pt](k=3)} &\tiny\makecell[c]{MRSTP-SVD\\[-4pt](k=1)} & \tiny\makecell[c]{MRSTP-SVD\\[-4pt](k=2)} & \tiny\makecell[c]{MRSTP-SVD\\[-4pt](k=3)}& \tiny\makecell[c]{TMRSTP-SVD\\[-4pt](k=1)} & \tiny\makecell[c]{TMRSTP-SVD\\[-4pt](k=2)} & \tiny\makecell[c]{TMRSTP-SVD\\[-4pt](k=3)}\\
\includegraphics[width=0.672in]{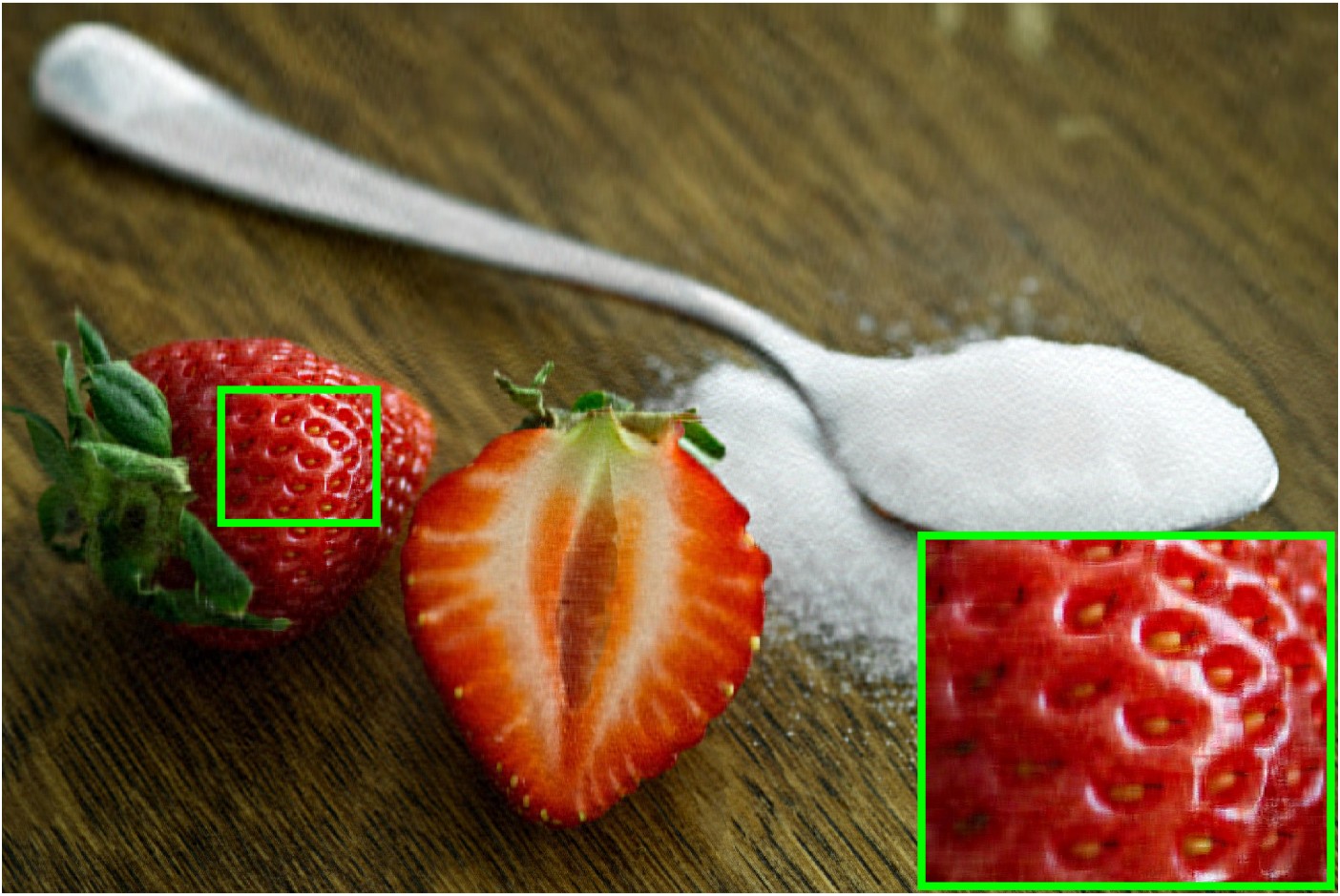} &
\includegraphics[width=0.672in]{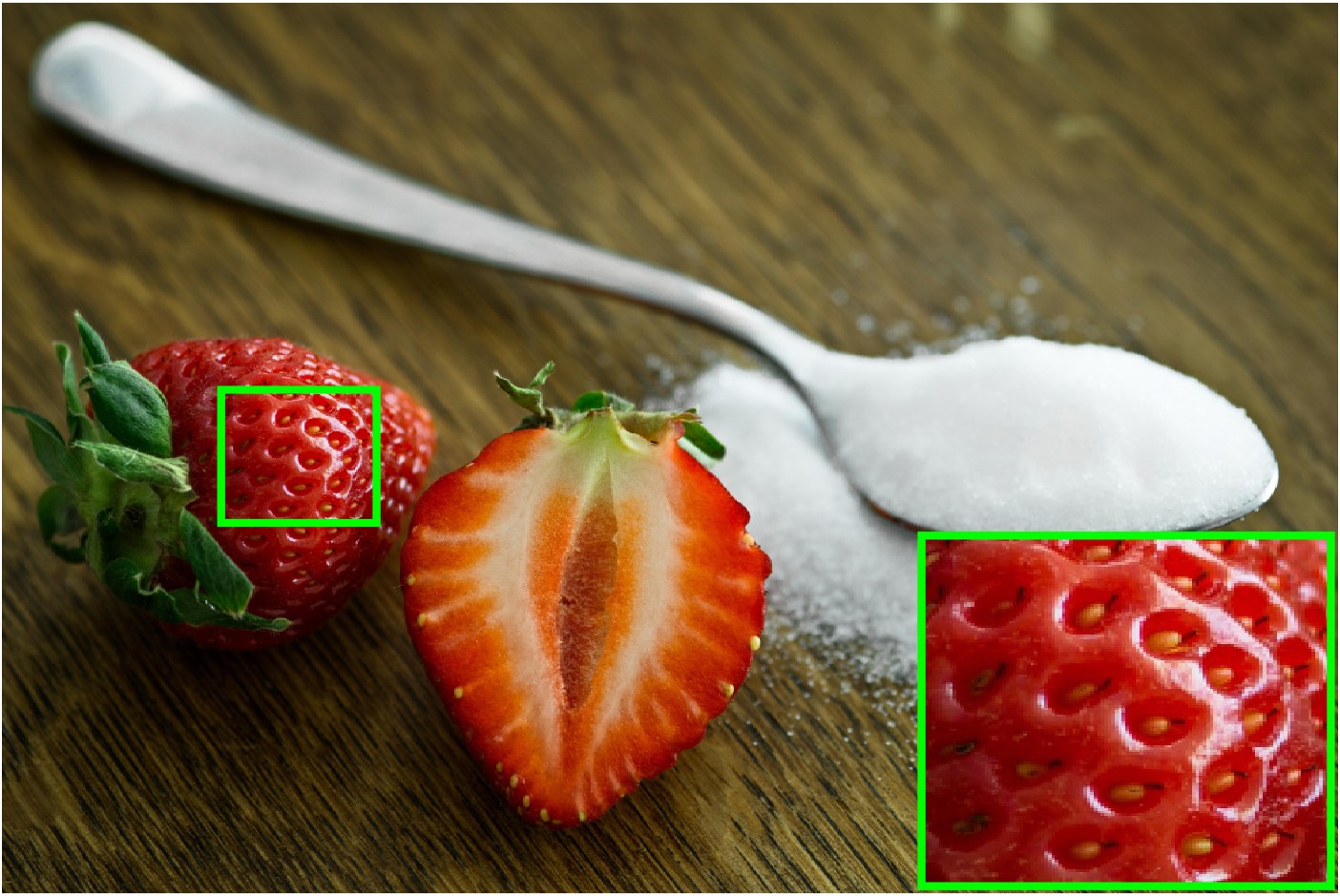} &
\includegraphics[width=0.672in]{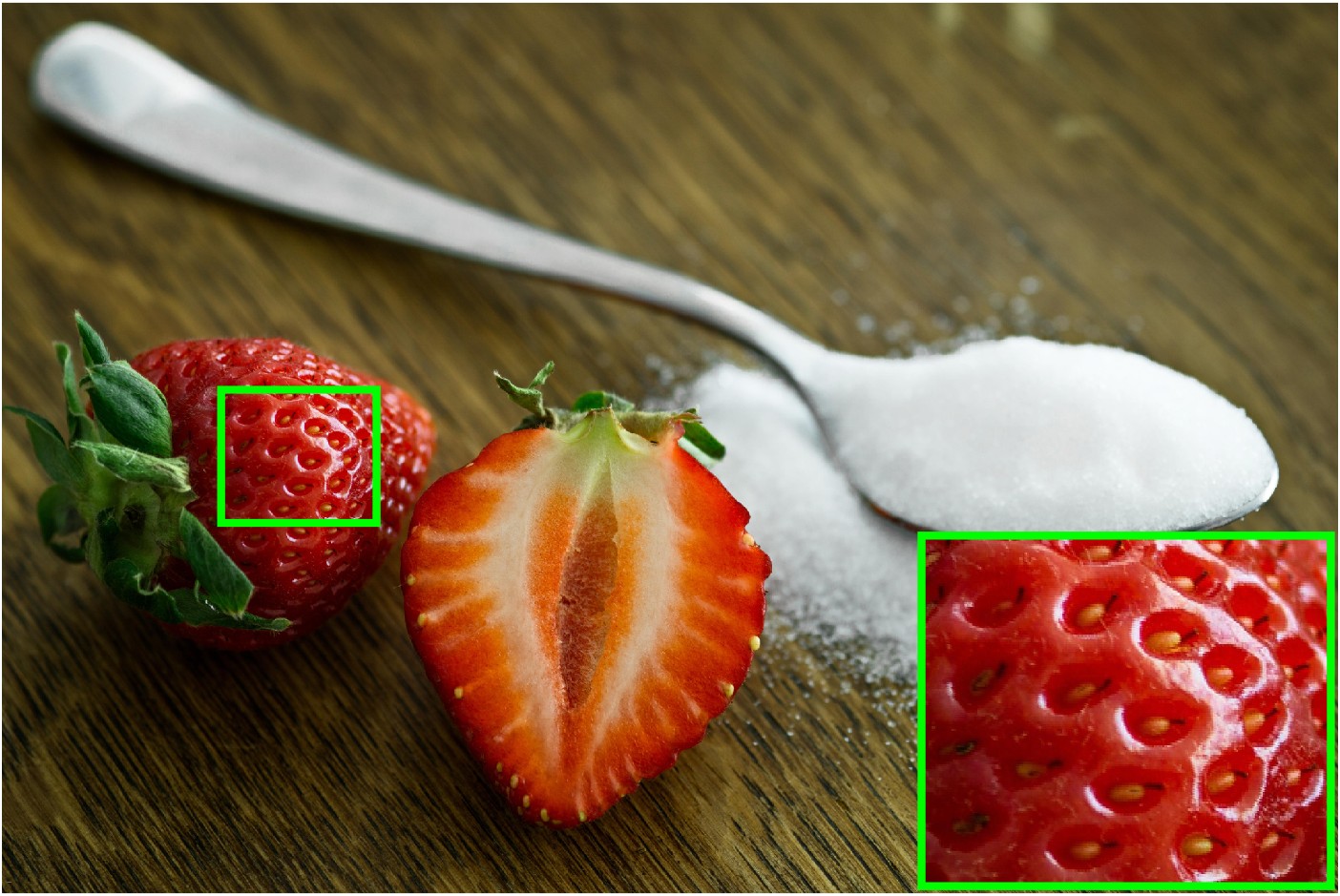} &
\includegraphics[width=0.672in]{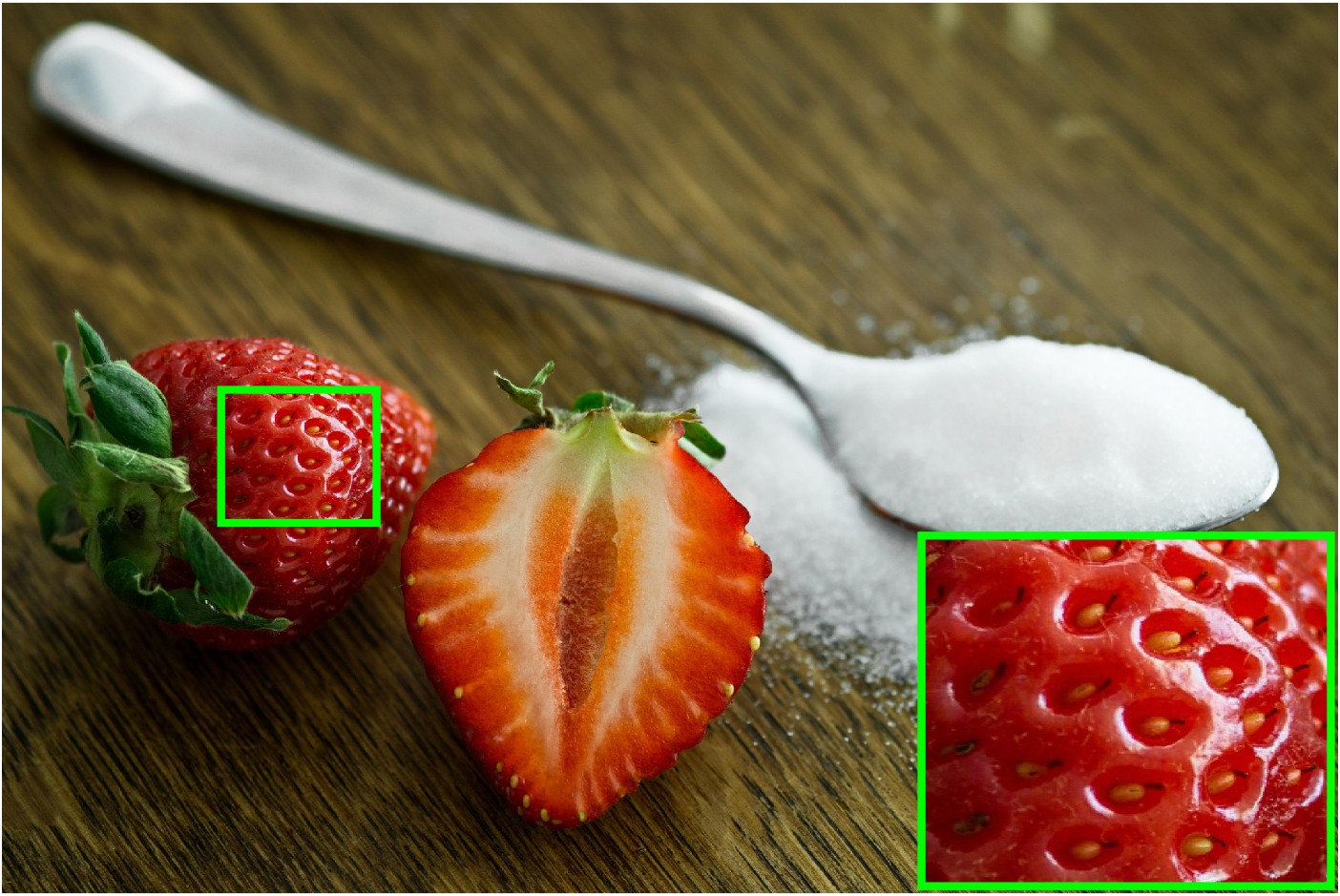}&
\includegraphics[width=0.672in]{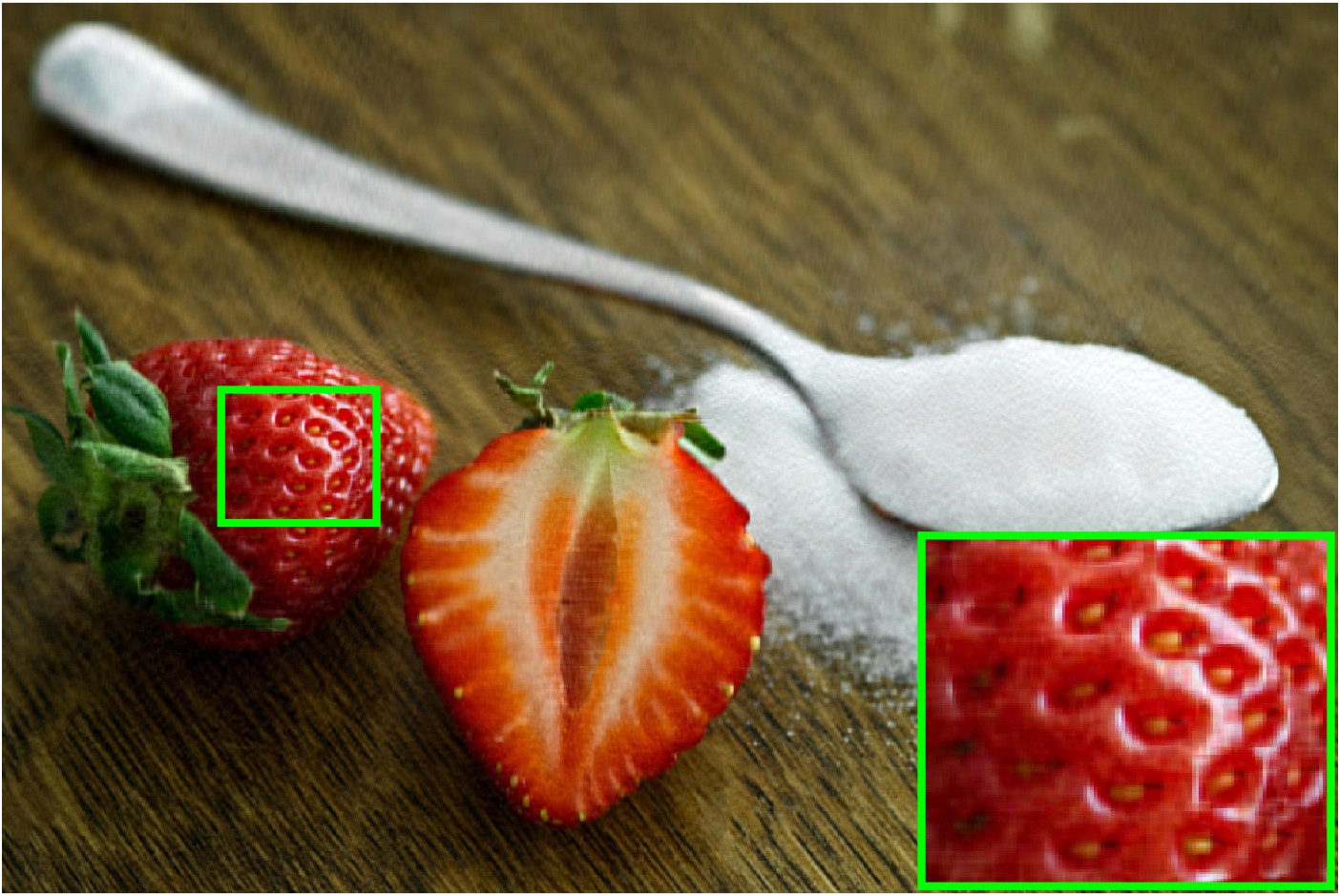} &
\includegraphics[width=0.672in]{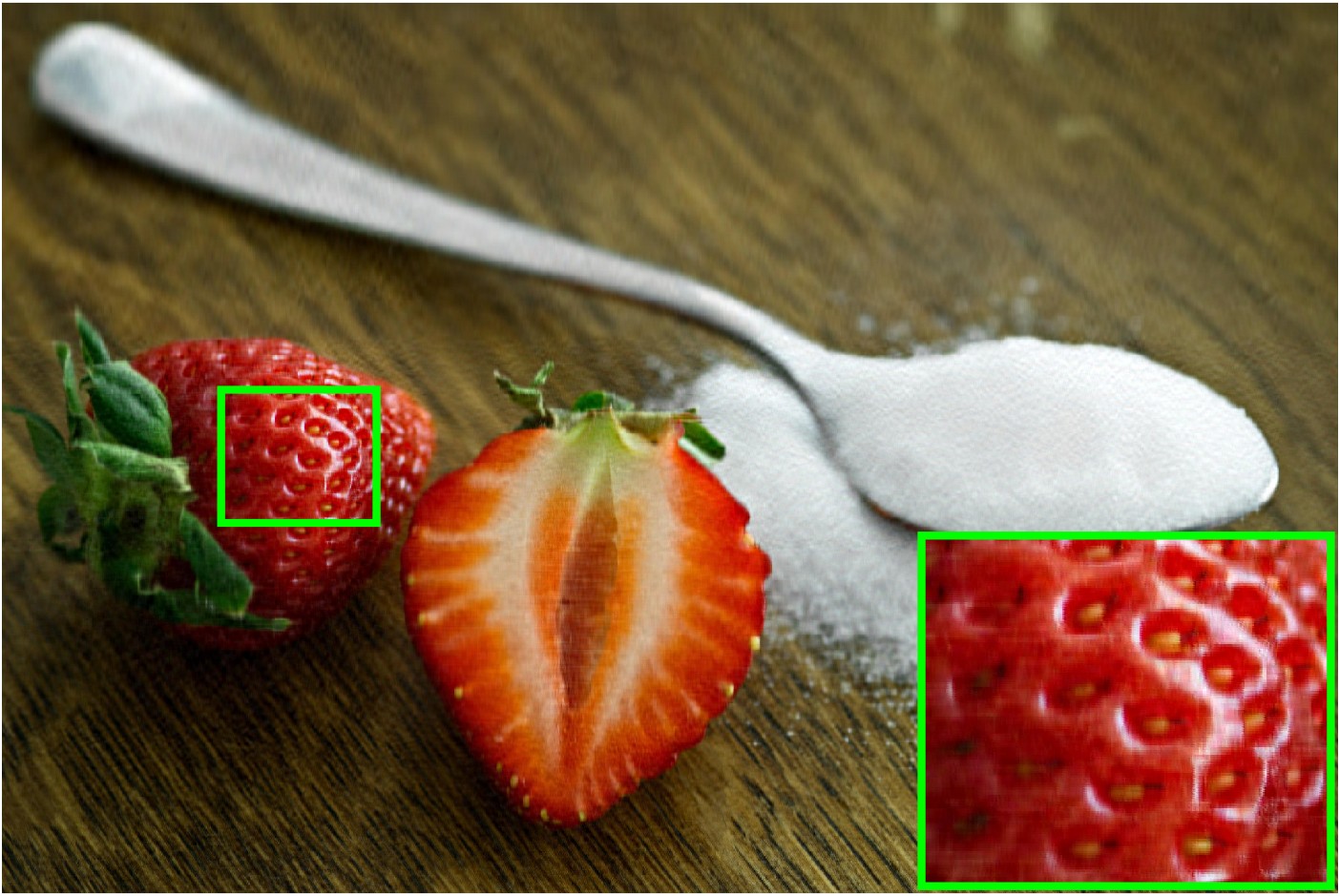} &
\includegraphics[width=0.672in]{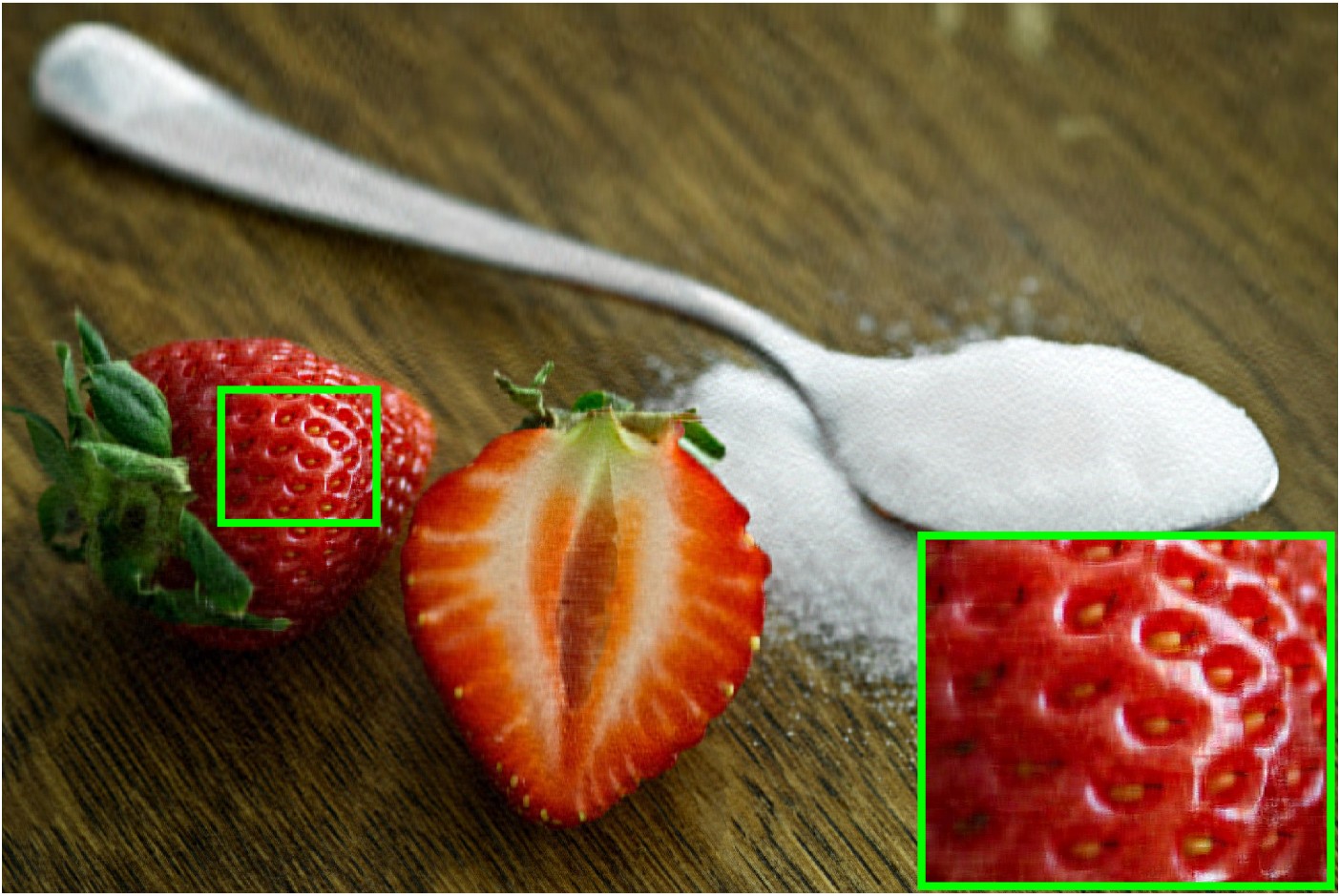}  \\
\tiny PSNR:30.63& \tiny PSNR:33.12 & \tiny PSNR:35.16 & \tiny PSNR:36.28 & \tiny PSNR:29.94 & \tiny PSNR:30.46 & \tiny PSNR:30.64\\
\tiny Time:13.31s& \tiny Time:7.79s & \tiny Time:9.87s & \tiny Time:10.84s & \tiny Time:7.13s & \tiny Time:8.54s & \tiny Time:9.43s\\   

\end{tabular}
\caption{Supplementary comparisons of reconstruction performance, PSNR, and runtime for competing methods and the proposed approach on two additional representative test images.}
\label{fig:image_compression_sup}
\end{figure}  

Fig.~\ref*{fig:video compression_sup} shows supplementary visual reconstruction examples and quantitative PSNR-SSIM comparisons on randomly sampled frames from two additional test video sequences, including all competing baselines as well as the original and truncated variants of the proposed methods. All experimental configurations, including block partition sizes, truncation ranks, and randomized hyperparameters, follow the settings used in the main text. Quantitative results demonstrate that MSTP-SVD (k=3) improves PSNR by 2.9 dB and 5.2 dB over single-term STP-SVD on the two sequences, respectively, and achieves around 8 dB of PSNR gain over the TT-SVD baseline for both sequences. The randomized MRSTP-SVD reduces average  runtime by 15\%-25\% with negligible quality degradation. Visually, the proposed methods recover richer textures and finer details than baseline approaches. These supplementary video samples further verify the robustness of the proposed framework across different video content.
\begin{figure}[!ht]
\centering
\renewcommand{\arraystretch}{0.3}
\setlength\tabcolsep{0.1pt}
\begin{tabular}{@{}ccccccc@{}}

\tiny Original &\tiny TT-SVD & \tiny STP-SVD & \tiny TSTP-SVD &\tiny\makecell[c]{MSTP-SVD\\[-4pt](k=2)} & \tiny\makecell[c]{MSTP-SVD\\[-4pt](k=3)} &\tiny\makecell[c]{TMSTP-SVD\\[-4pt](k=2)} \\
\includegraphics[width=0.672in]{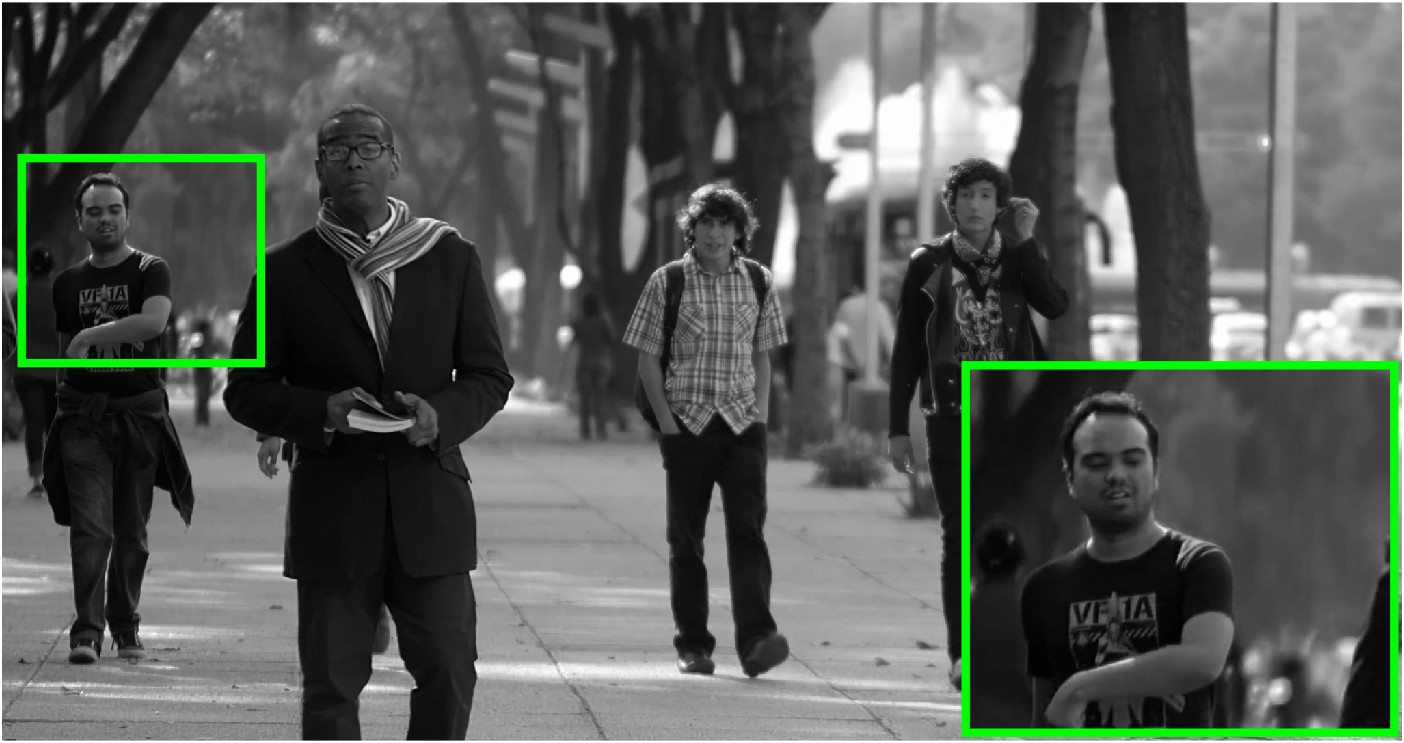} &
\includegraphics[width=0.672in]{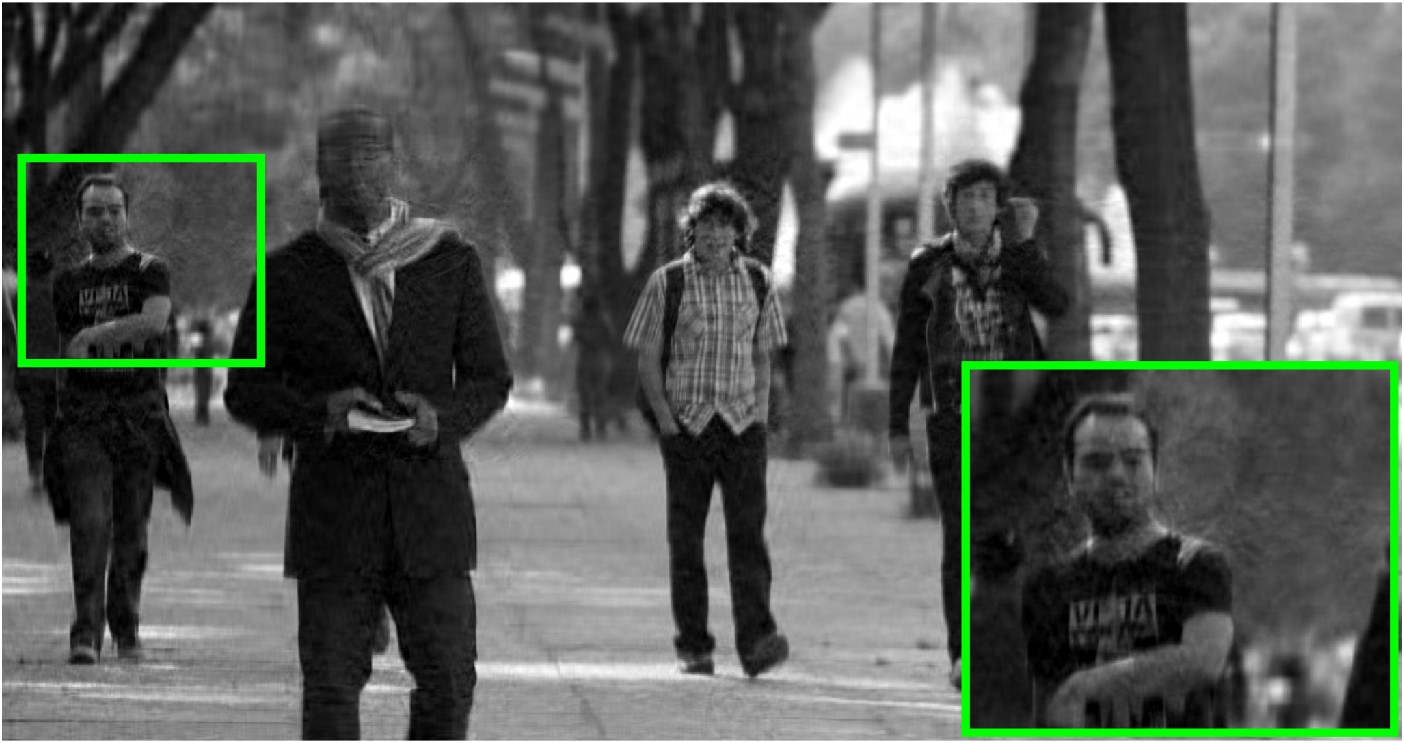} &
\includegraphics[width=0.672in]{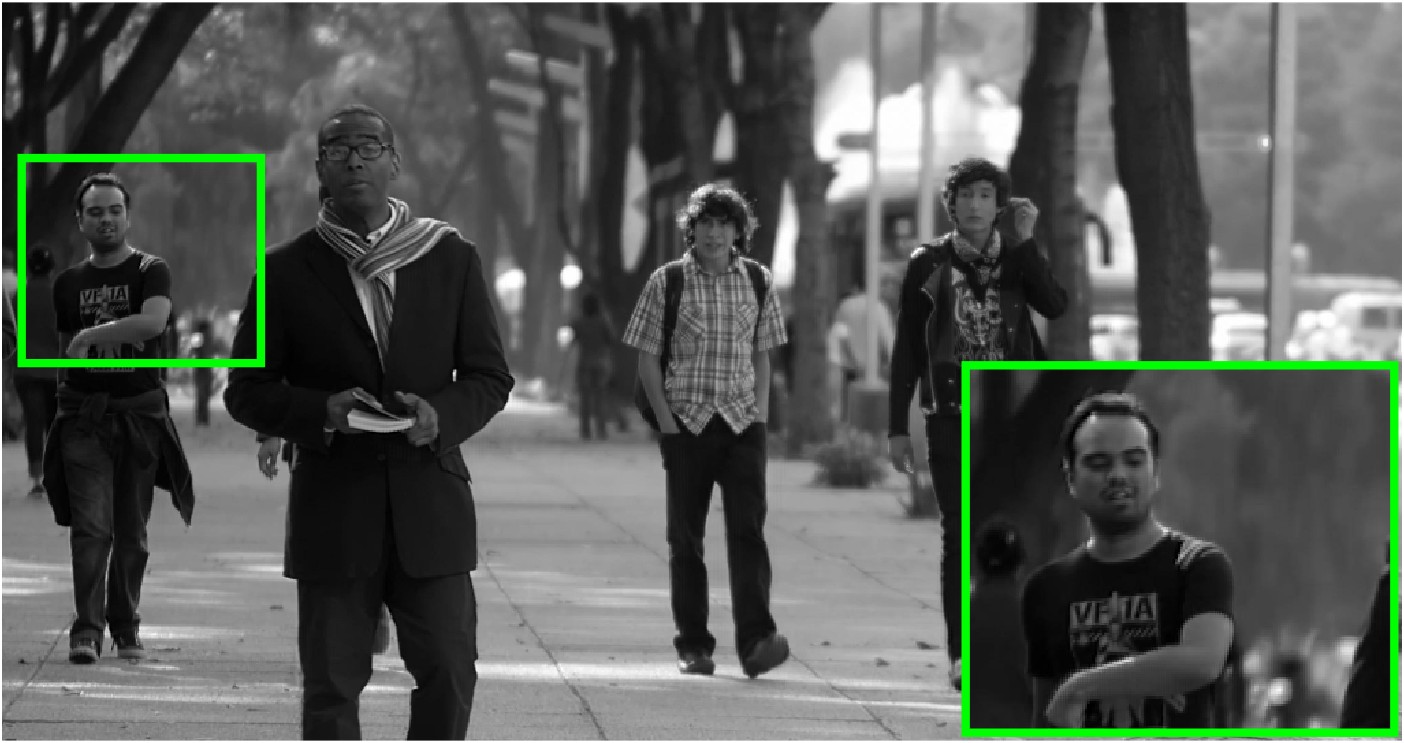} &
\includegraphics[width=0.672in]{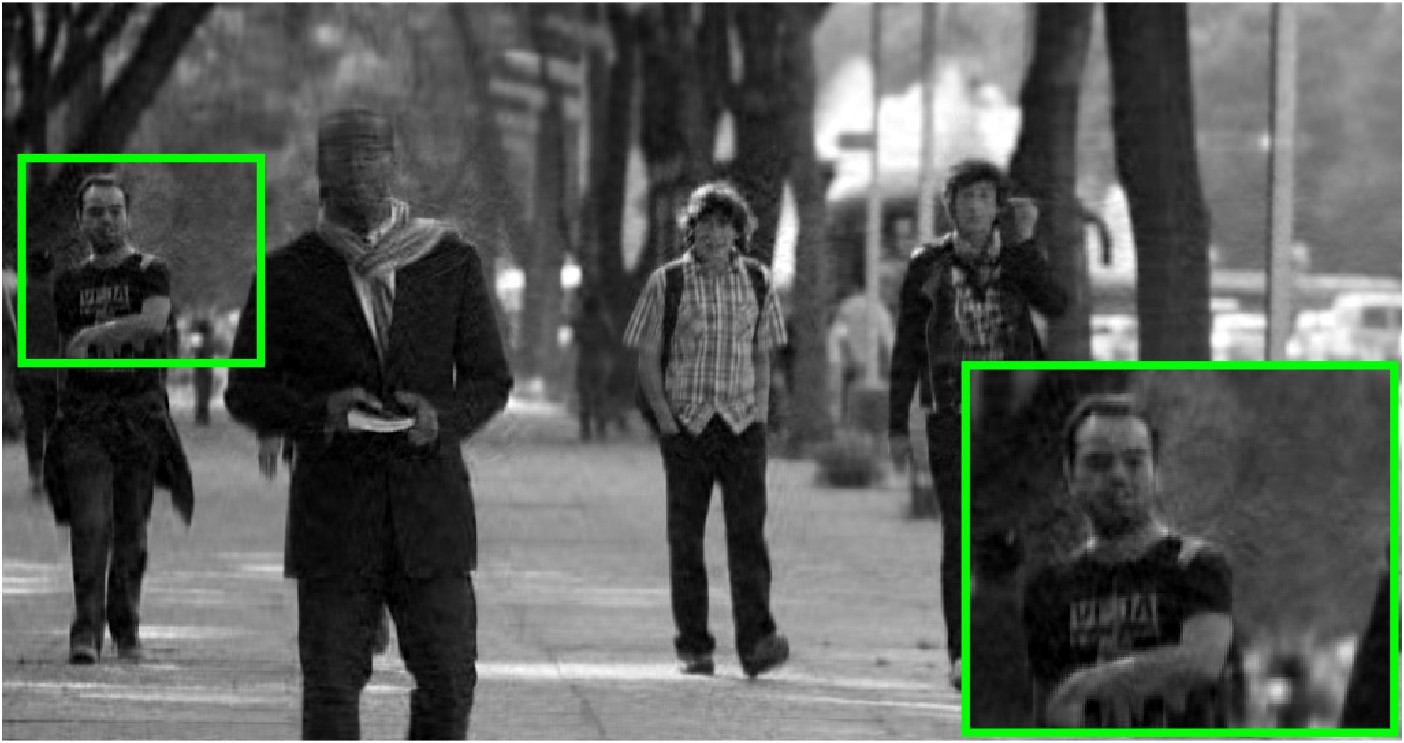} &
\includegraphics[width=0.672in]{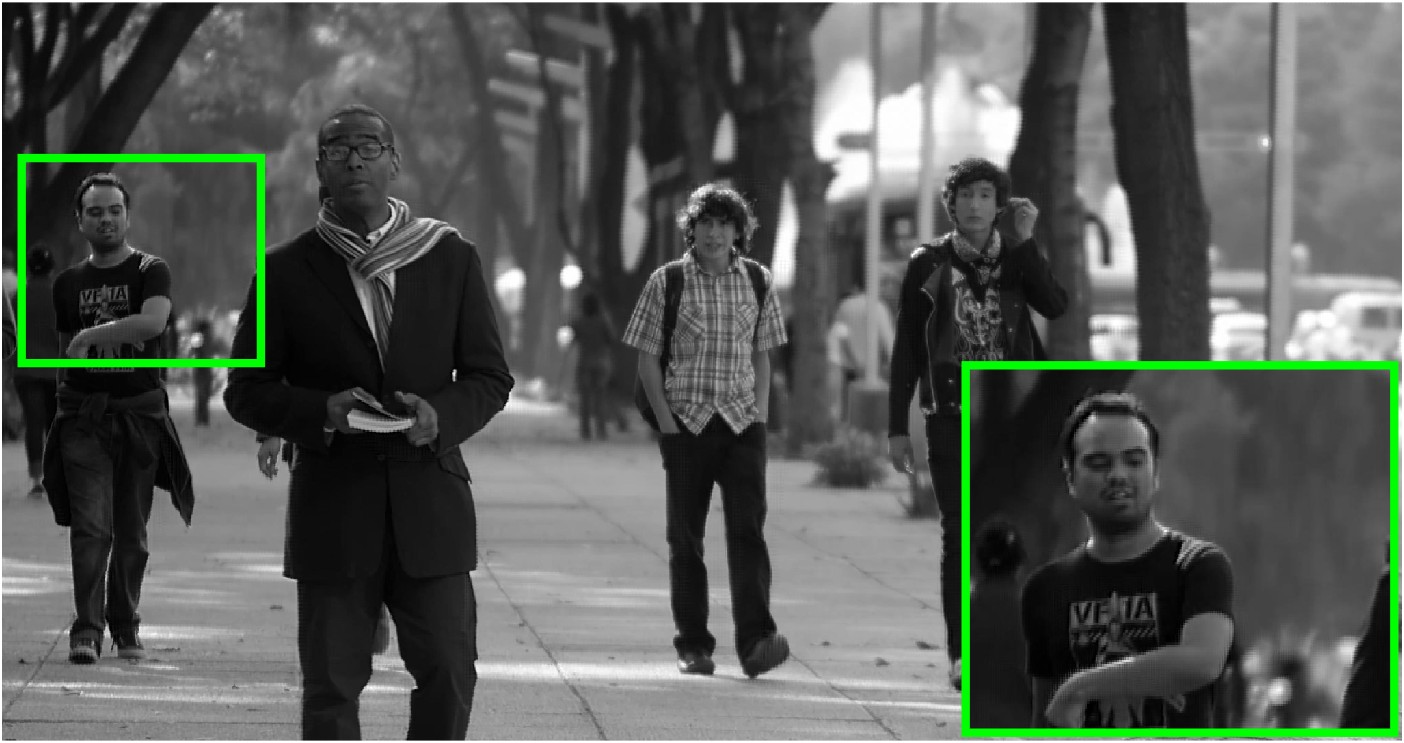} &
\includegraphics[width=0.672in]{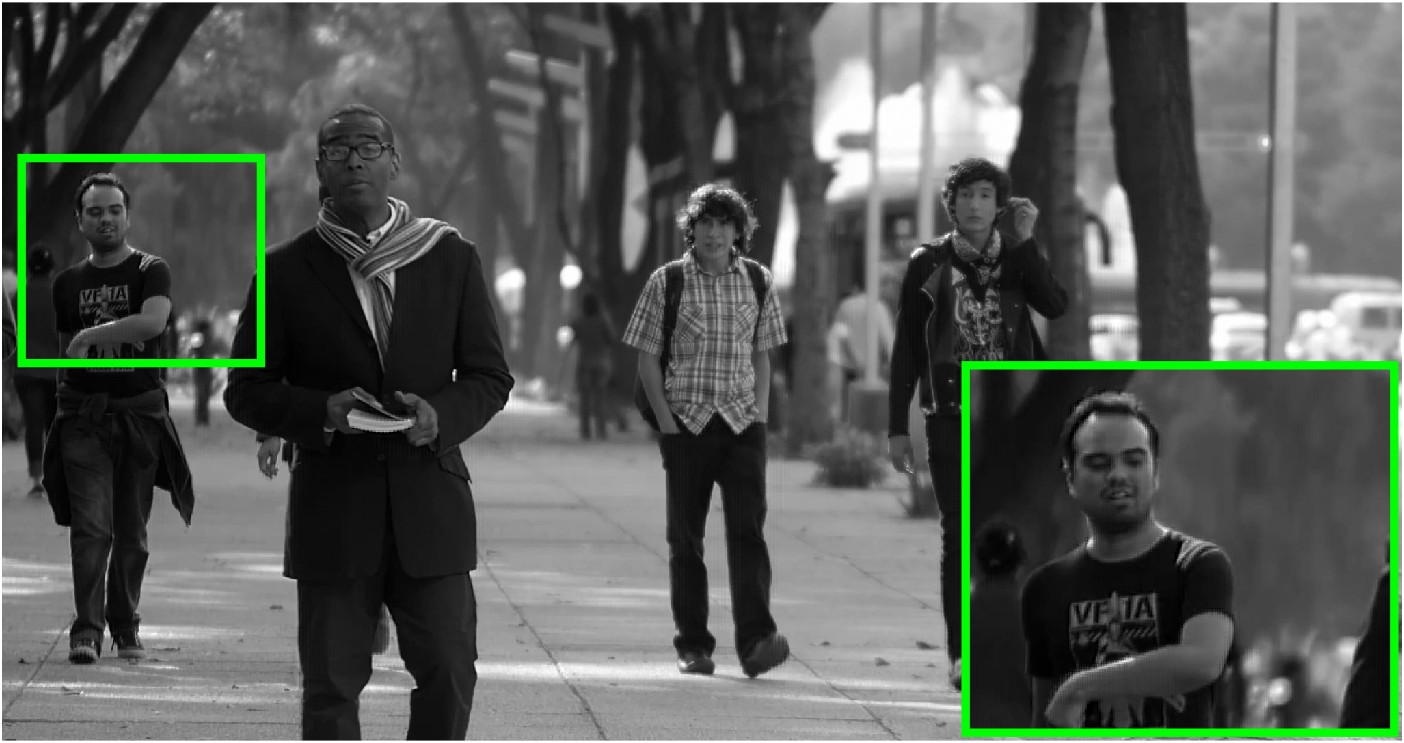} &
\includegraphics[width=0.672in]{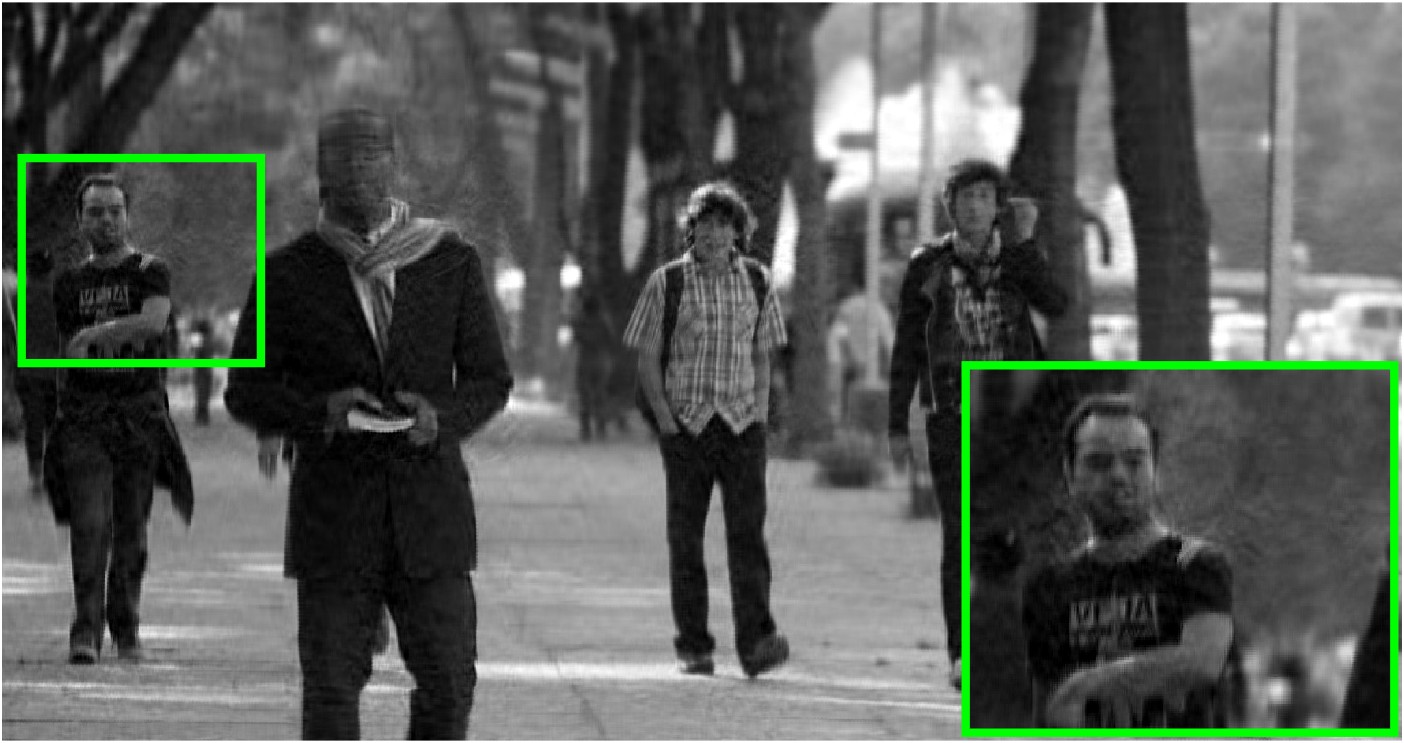} \\
&\tiny PSNR:31.68 & \tiny PSNR:34.7 & \tiny PSNR:30.33 &\tiny PSNR:36.31& \tiny PSNR:37.69 & \tiny PSNR:30.69\\
&\tiny SSIM:0.932& \tiny SSIM:0.942& \tiny SSIM:0.895& \tiny SSIM:0.962 & \tiny SSIM:0.979 & \tiny SSIM:0.904\\
\tiny\makecell[c]{TMSTP-SVD\\[-4pt](k=3)} &\tiny\makecell[c]{MRSTP-SVD\\[-4pt](k=1)} & \tiny\makecell[c]{MRSTP-SVD\\[-4pt](k=2)} & \tiny\makecell[c]{MRSTP-SVD\\[-4pt](k=3)}& \tiny\makecell[c]{TMRSTP-SVD\\[-4pt](k=1)} & \tiny\makecell[c]{TMRSTP-SVD\\[-4pt](k=2)} & \tiny\makecell[c]{TMRSTP-SVD\\[-4pt](k=3)}\\
\includegraphics[width=0.672in]{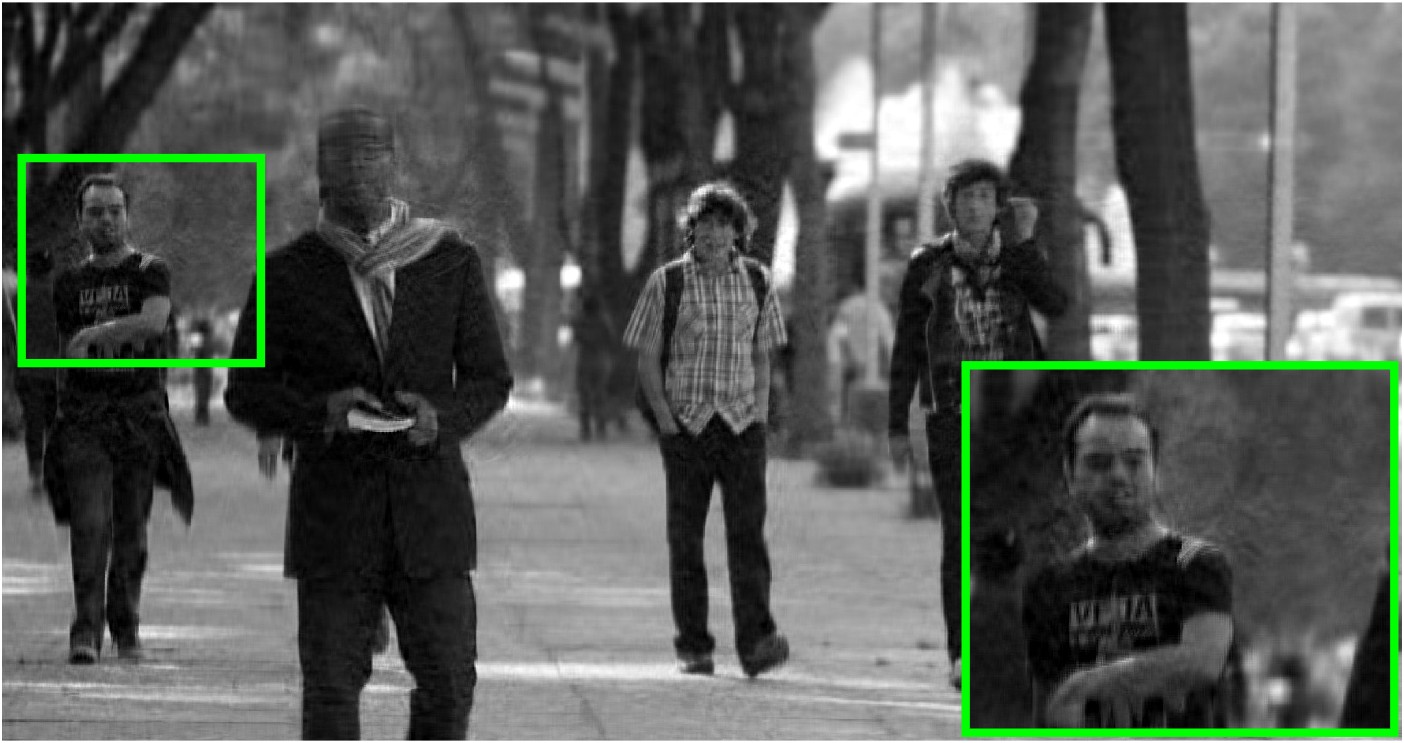} &
\includegraphics[width=0.672in]{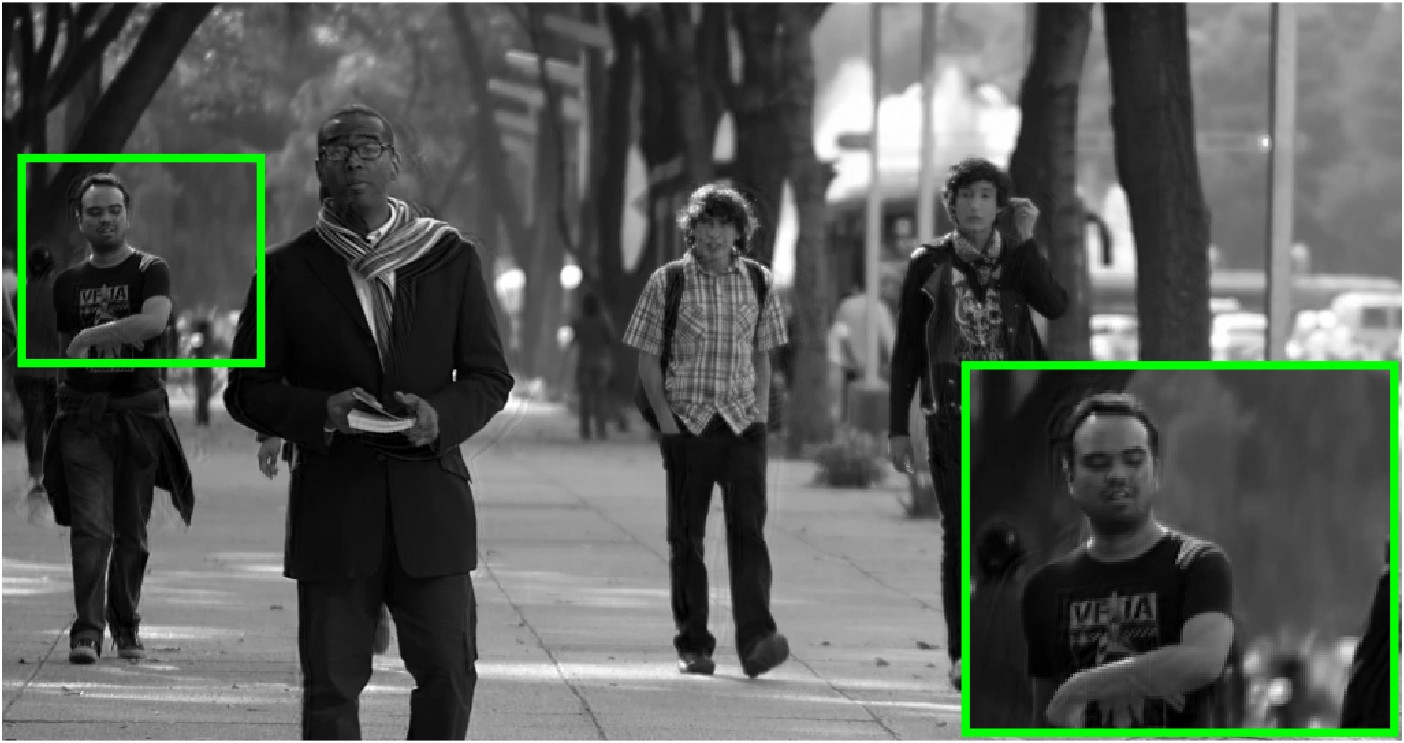} &
\includegraphics[width=0.672in]{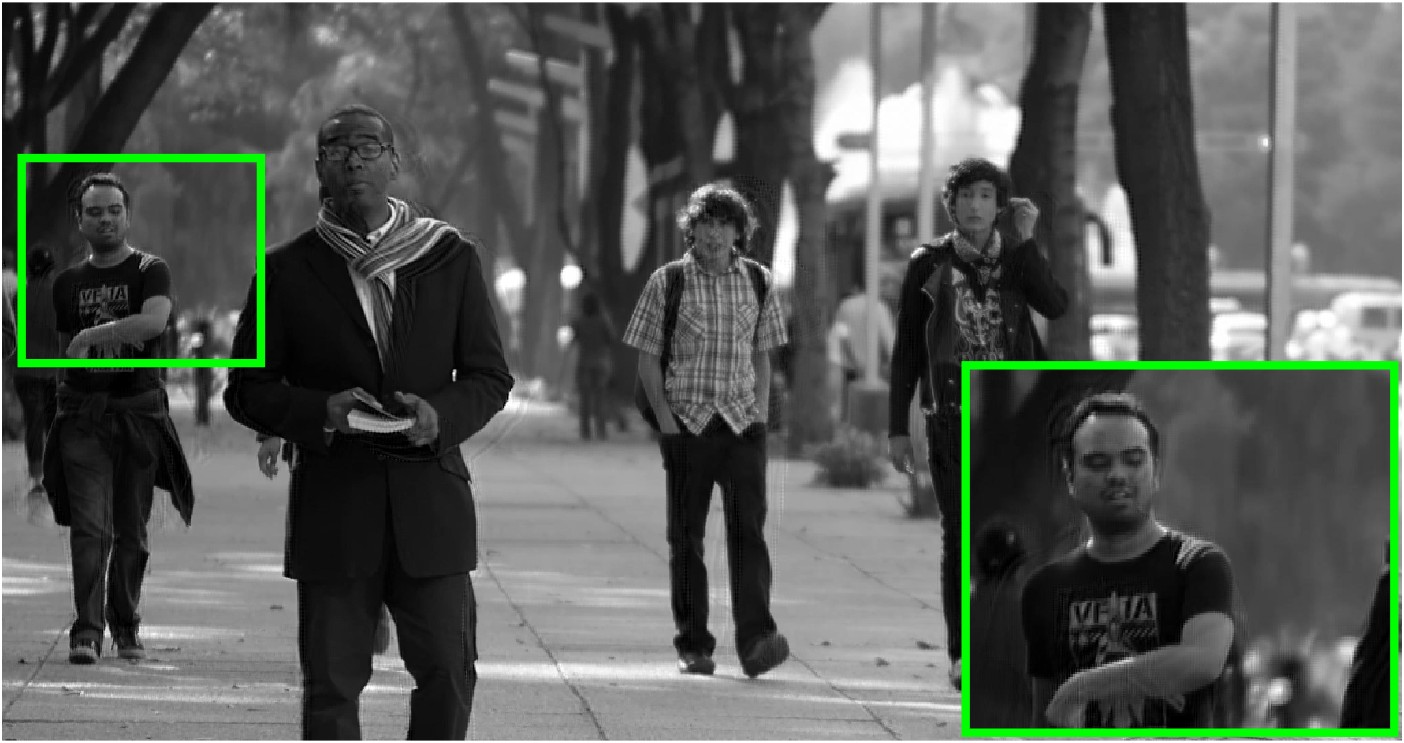} &
\includegraphics[width=0.672in]{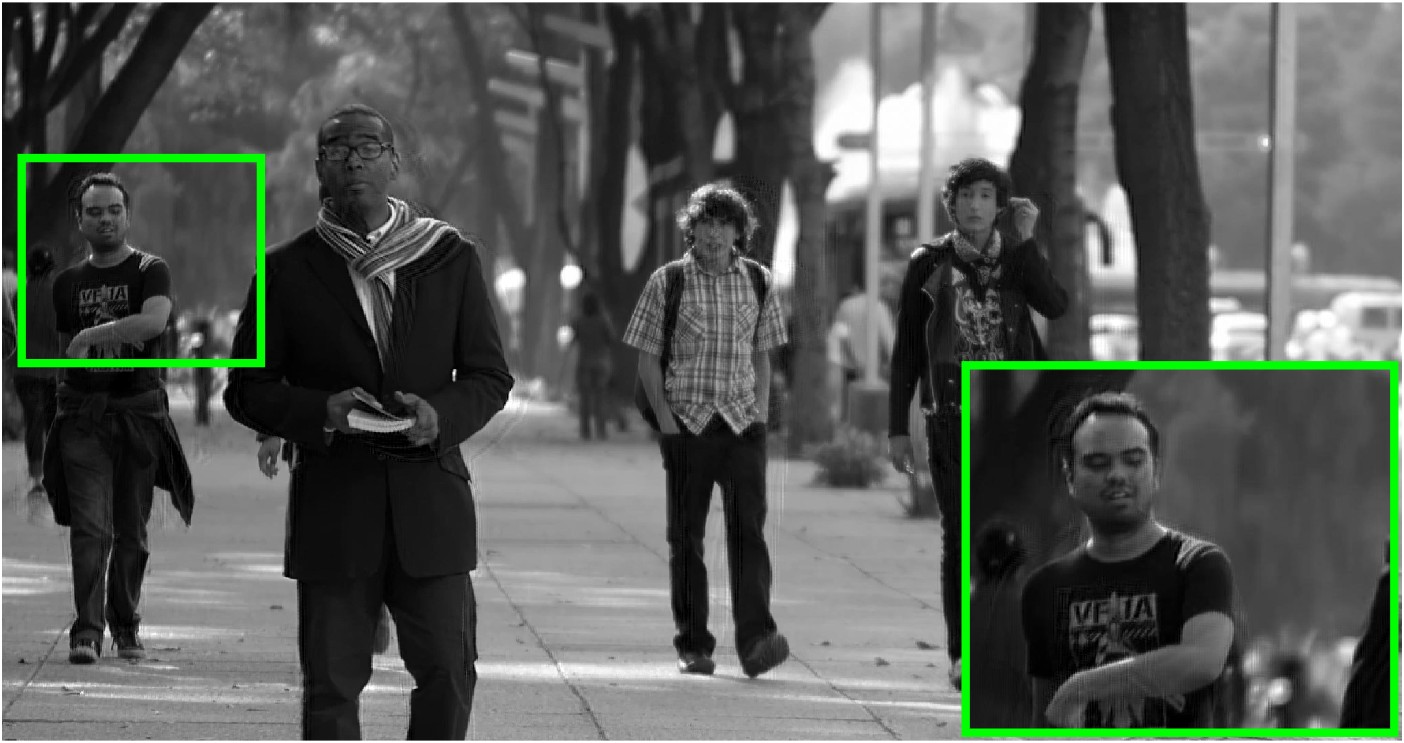}&
\includegraphics[width=0.672in]{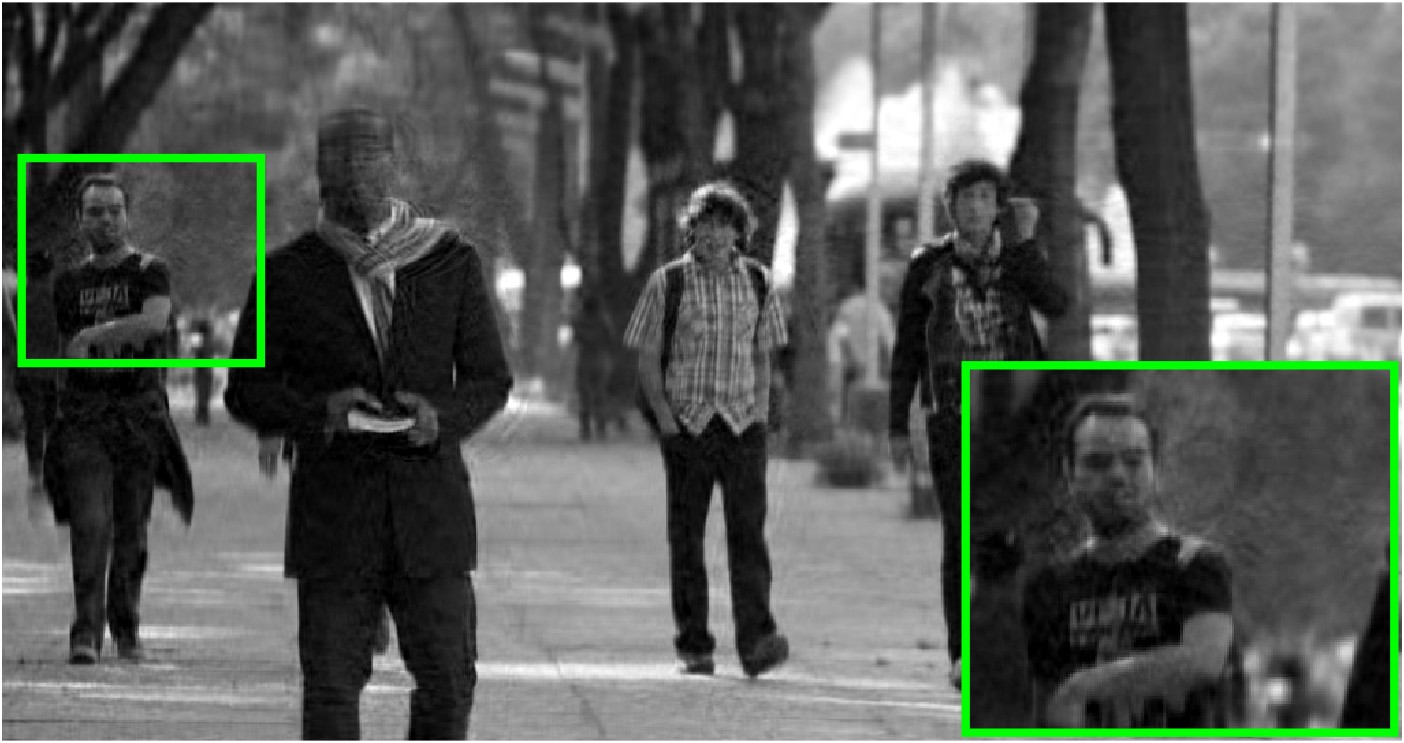} &
\includegraphics[width=0.672in]{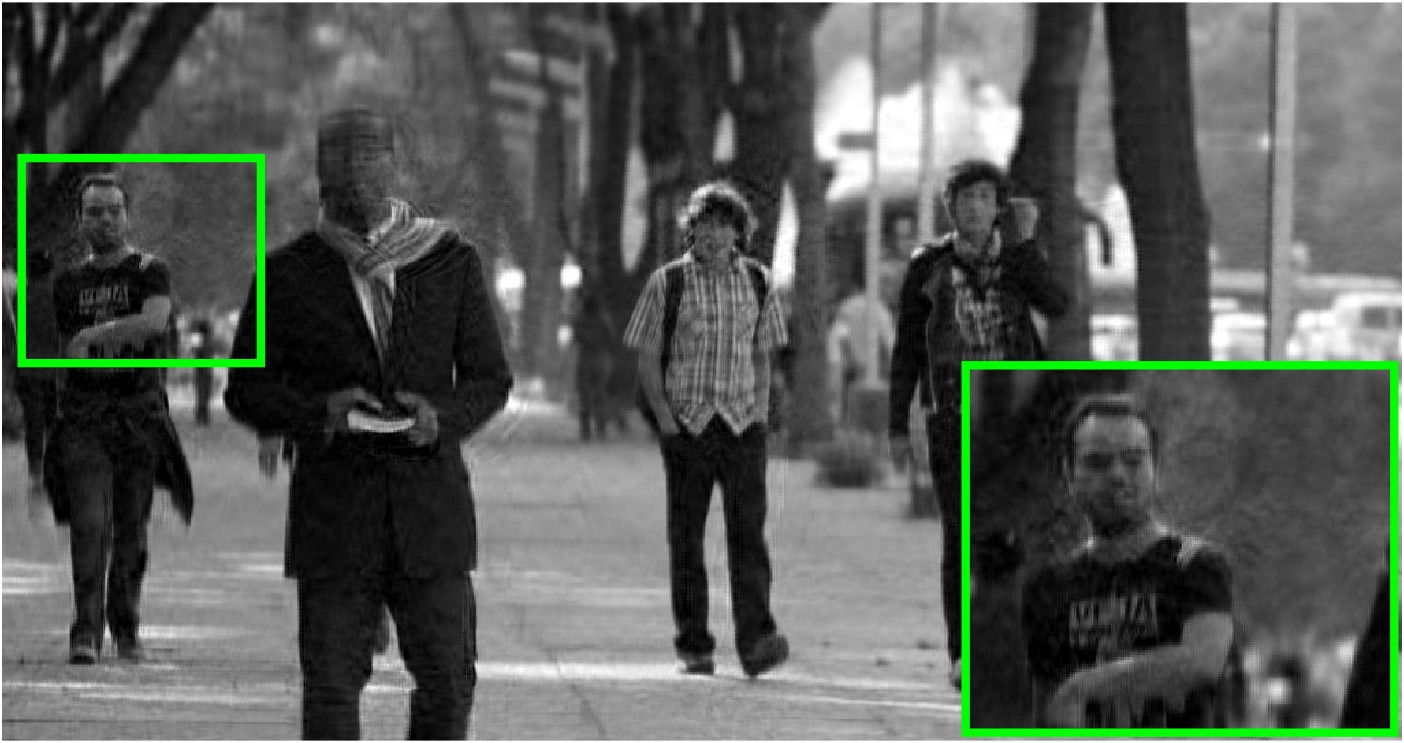} &
\includegraphics[width=0.672in]{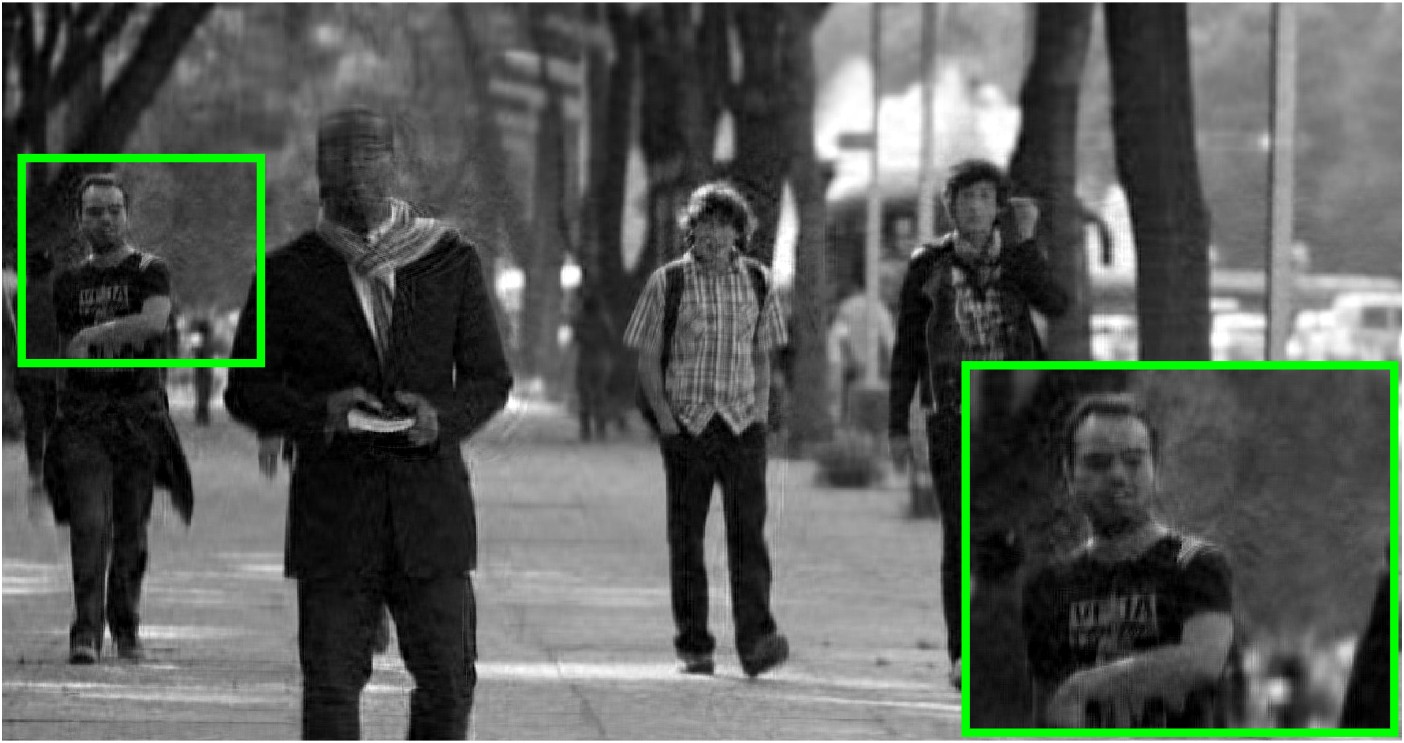}  \\
\tiny PSNR:30.92& \tiny PSNR:35.13 & \tiny PSNR:35.6 & \tiny PSNR:36.74 & \tiny PSNR:30.57 & \tiny PSNR:30.67 & \tiny PSNR:30.89\\
\tiny SSIM:0.912 & \tiny SSIM:0.935 & \tiny SSIM:0.932 & \tiny SSIM:0.947 & \tiny SSIM:0.889& \tiny SSIM:0.885 & \tiny SSIM:0.891\\

\tiny Original &\tiny TT-SVD & \tiny STP-SVD & \tiny TSTP-SVD &\tiny\makecell[c]{MSTP-SVD\\[-4pt](k=2)} & \tiny\makecell[c]{MSTP-SVD\\[-4pt](k=3)} &\tiny\makecell[c]{TMSTP-SVD\\[-4pt](k=2)} \\
\includegraphics[width=0.672in]{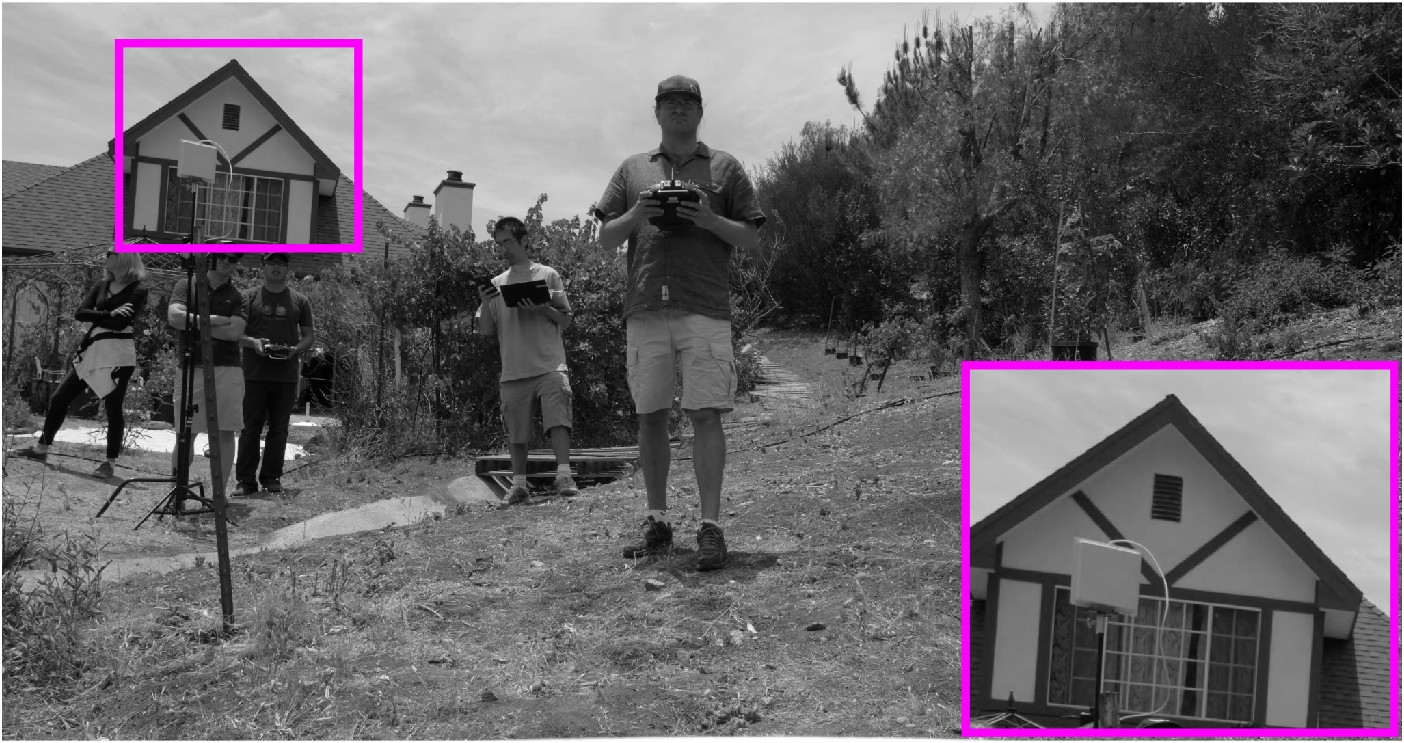} &
\includegraphics[width=0.672in]{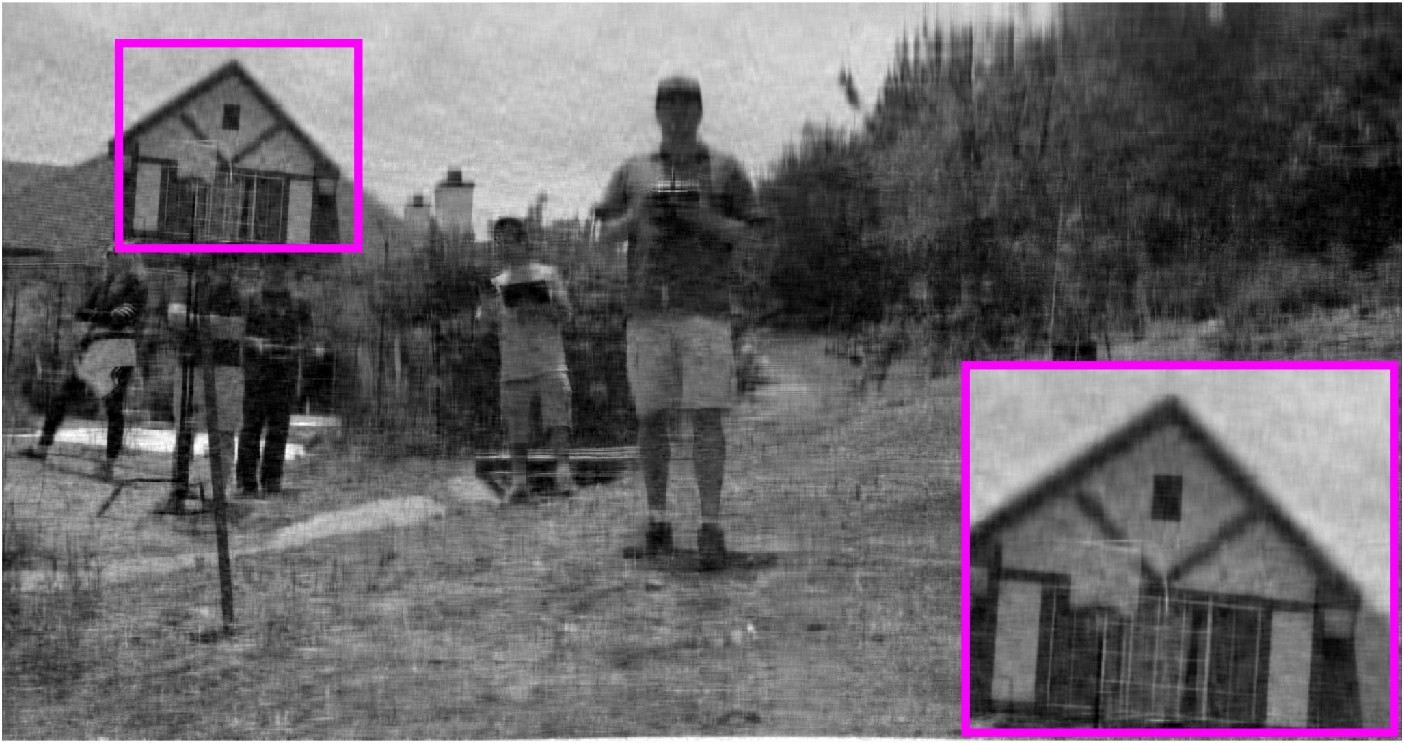} &
\includegraphics[width=0.672in]{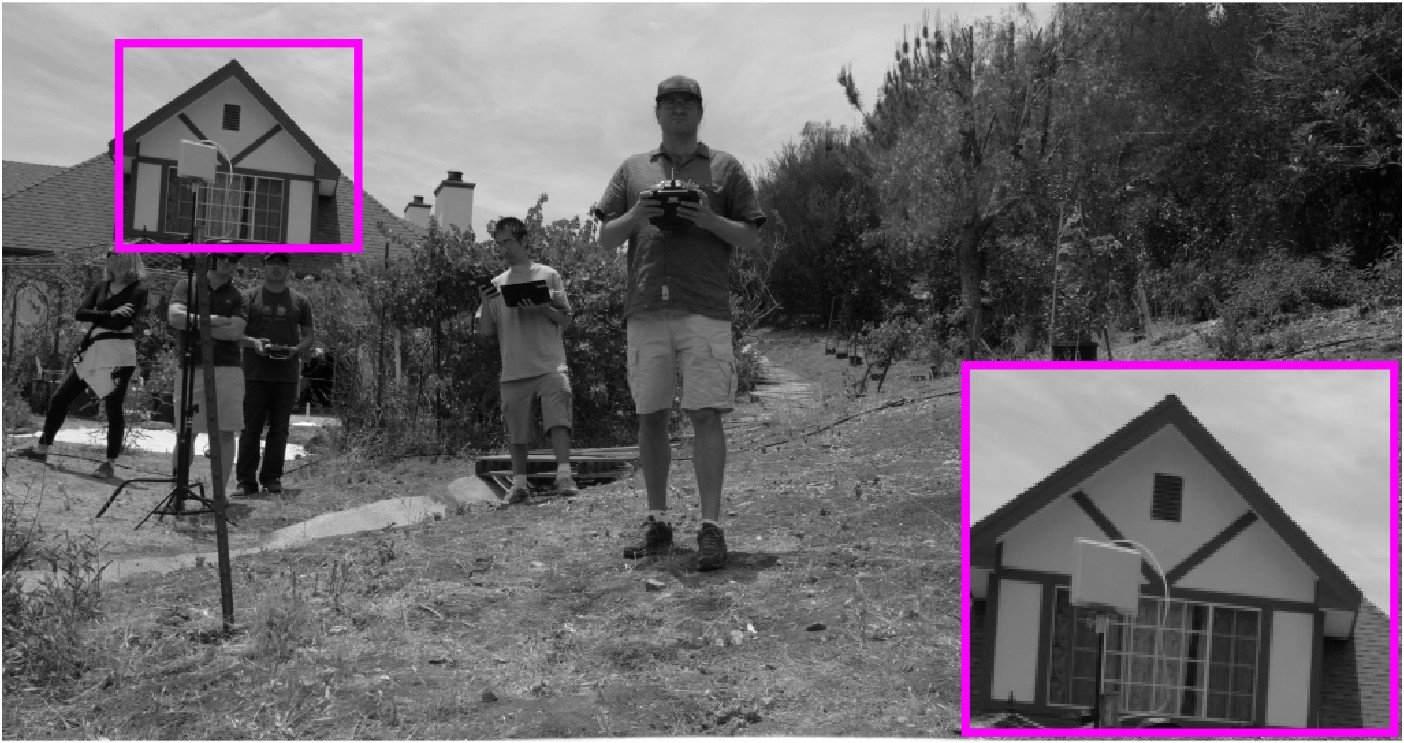} &
\includegraphics[width=0.672in]{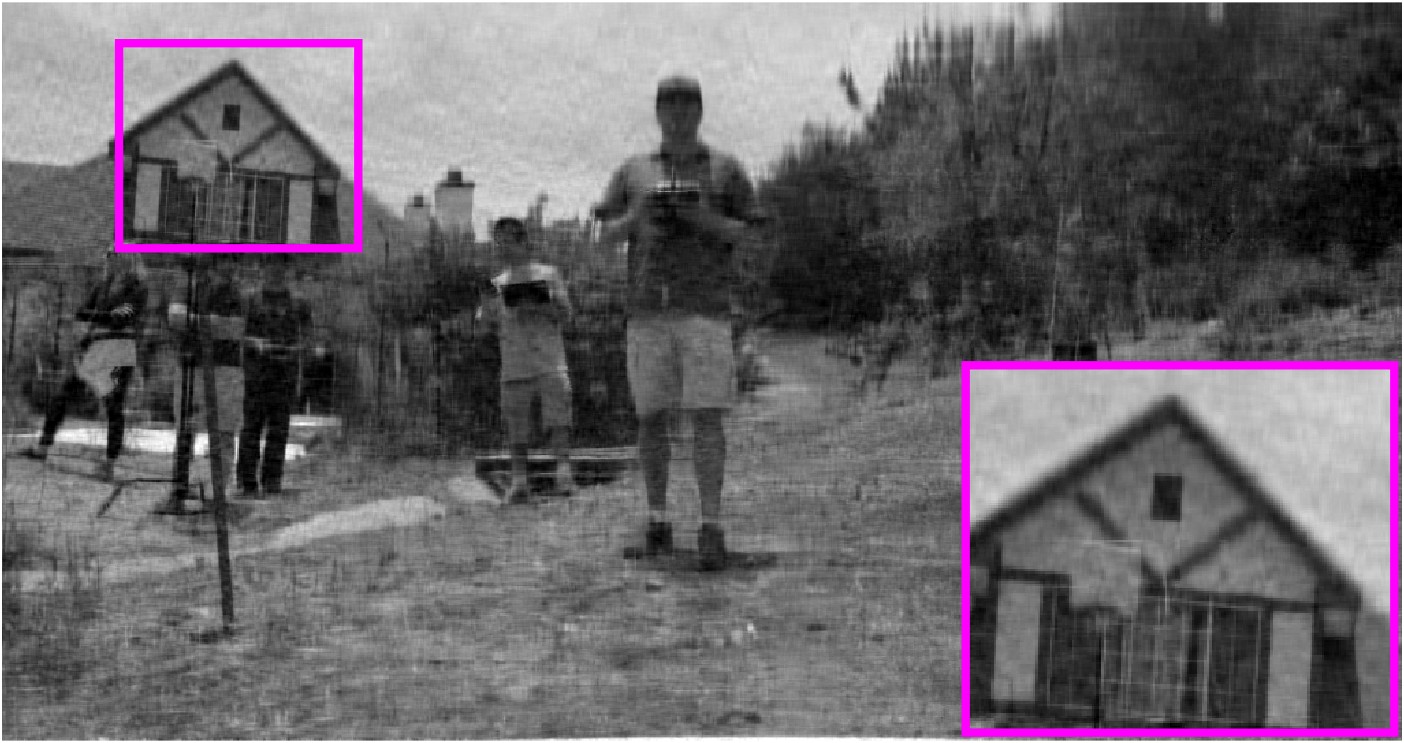} &
\includegraphics[width=0.672in]{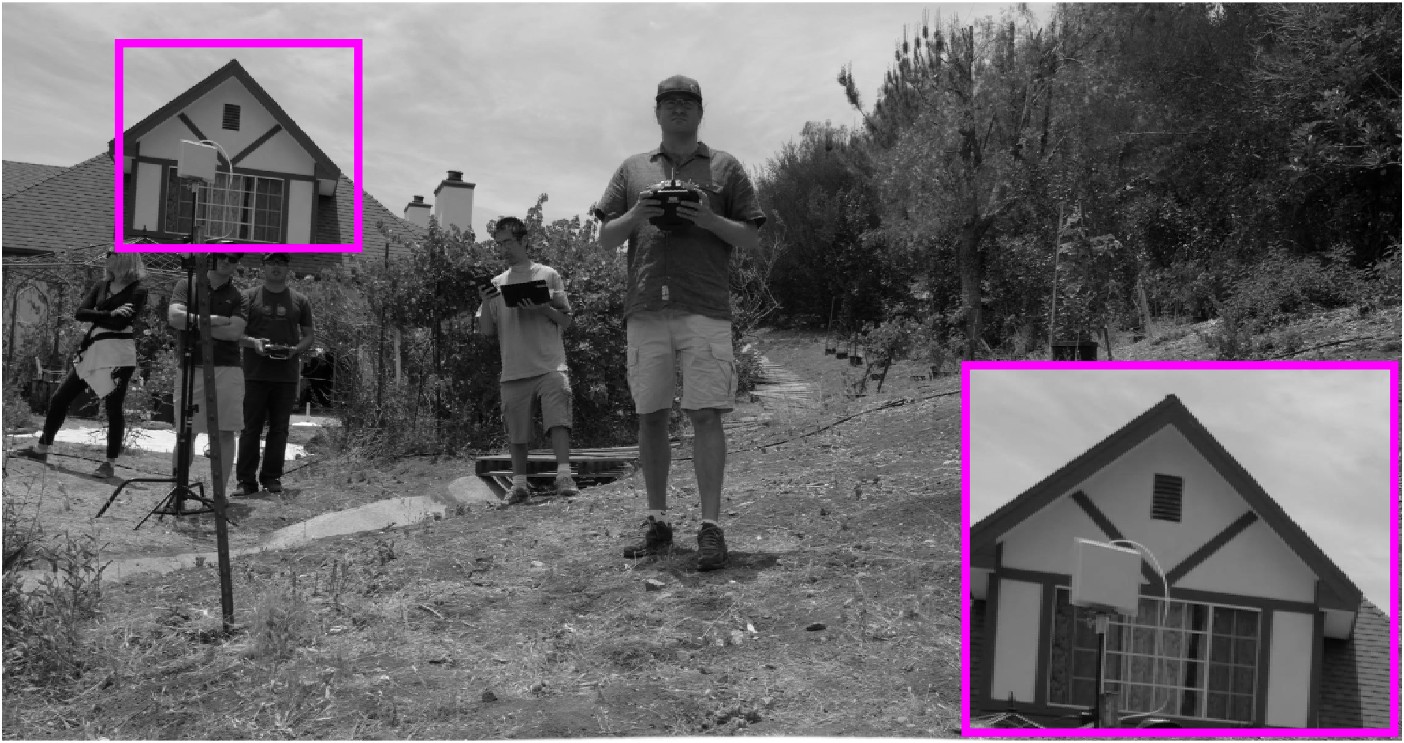} &
\includegraphics[width=0.672in]{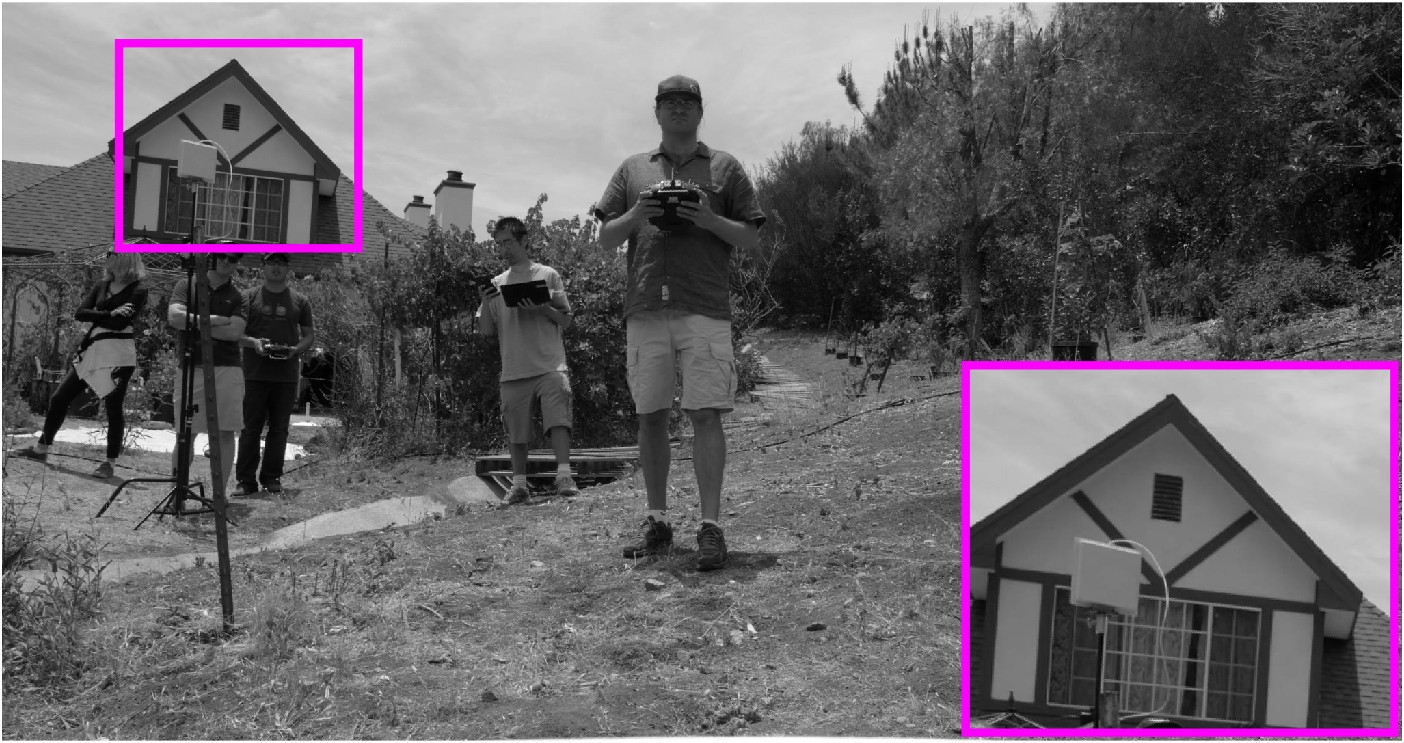} &
\includegraphics[width=0.672in]{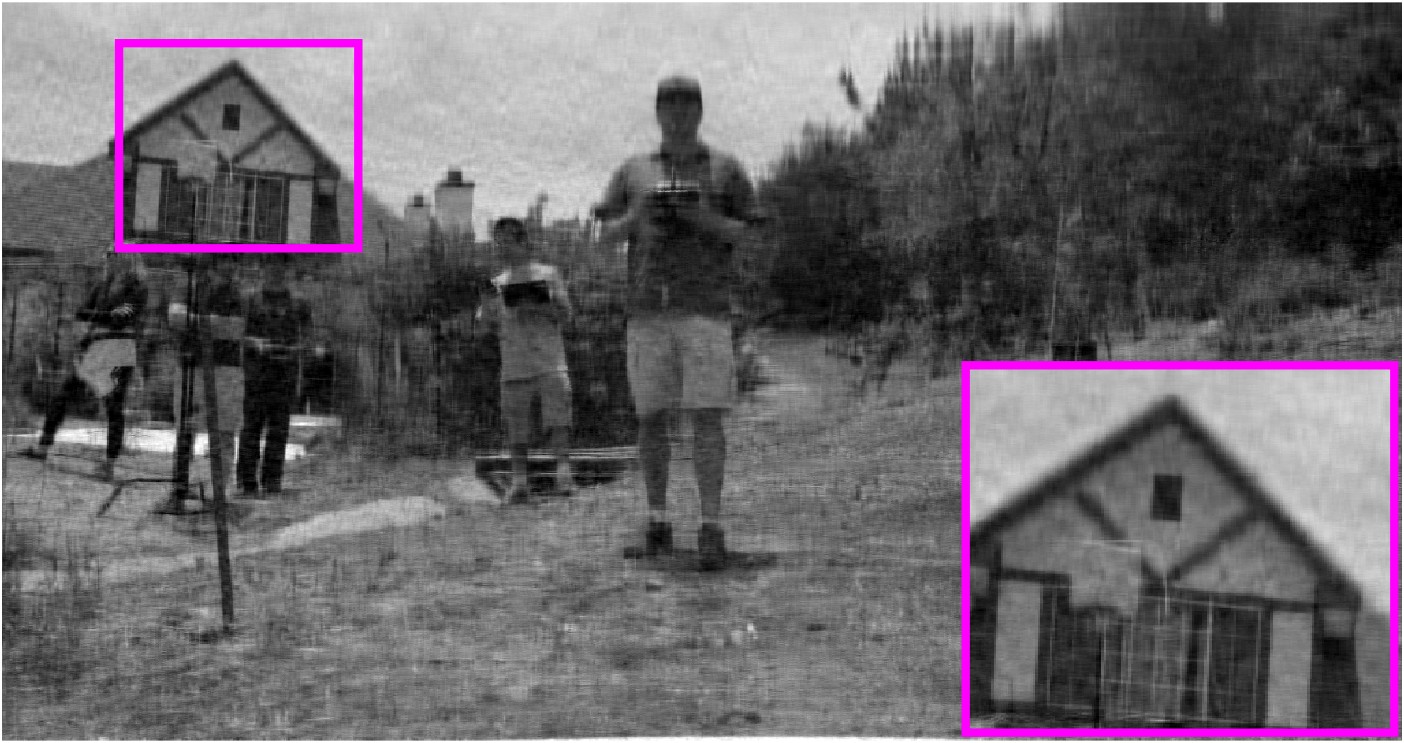} \\
&\tiny PSNR:24.45& \tiny PSNR:28.23 & \tiny PSNR:24.15 & \tiny PSNR:30.82 & \tiny PSNR:33.47 & \tiny PSNR:24.42\\
&\tiny SSIM:0.601 & \tiny SSIM:0.745 & \tiny SSIM:0.557 & \tiny SSIM:0.857 & \tiny SSIM:0.916 & \tiny SSIM:0.587\\
\tiny\makecell[c]{TMSTP-SVD\\[-4pt](k=3)} &\tiny\makecell[c]{MRSTP-SVD\\[-4pt](k=1)} & \tiny\makecell[c]{MRSTP-SVD\\[-4pt](k=2)} & \tiny\makecell[c]{MRSTP-SVD\\[-4pt](k=3)}& \tiny\makecell[c]{TMRSTP-SVD\\[-4pt](k=1)} & \tiny\makecell[c]{TMRSTP-SVD\\[-4pt](k=2)} & \tiny\makecell[c]{TMRSTP-SVD\\[-4pt](k=3)}\\
\includegraphics[width=0.672in]{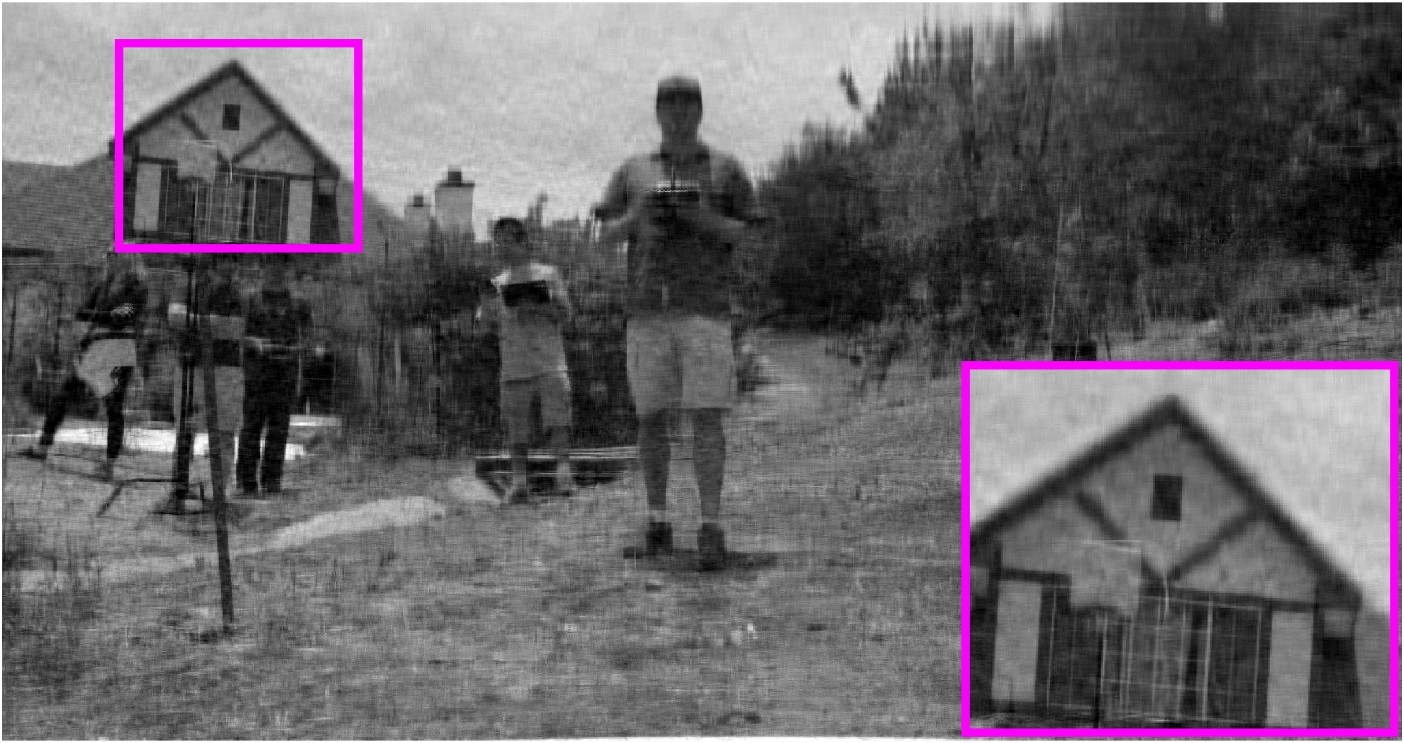} &
\includegraphics[width=0.672in]{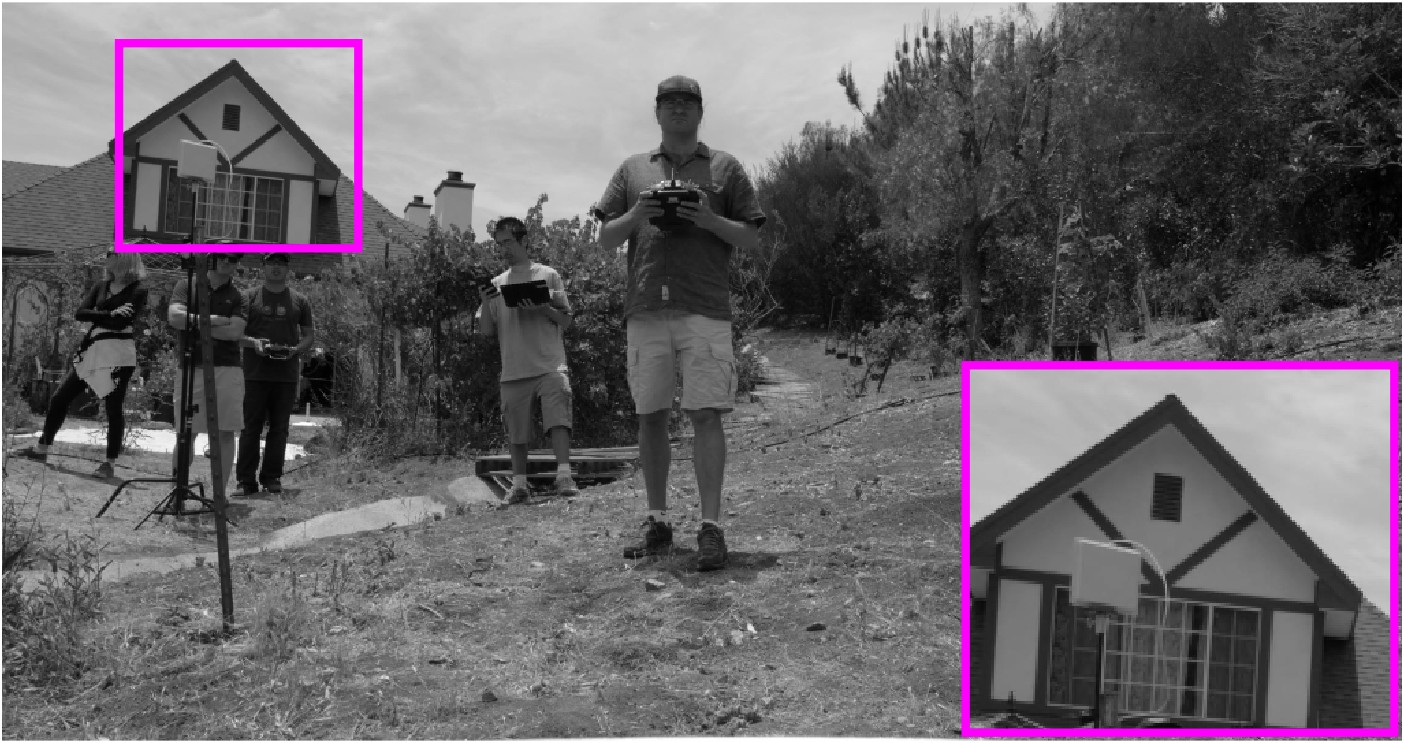} &
\includegraphics[width=0.672in]{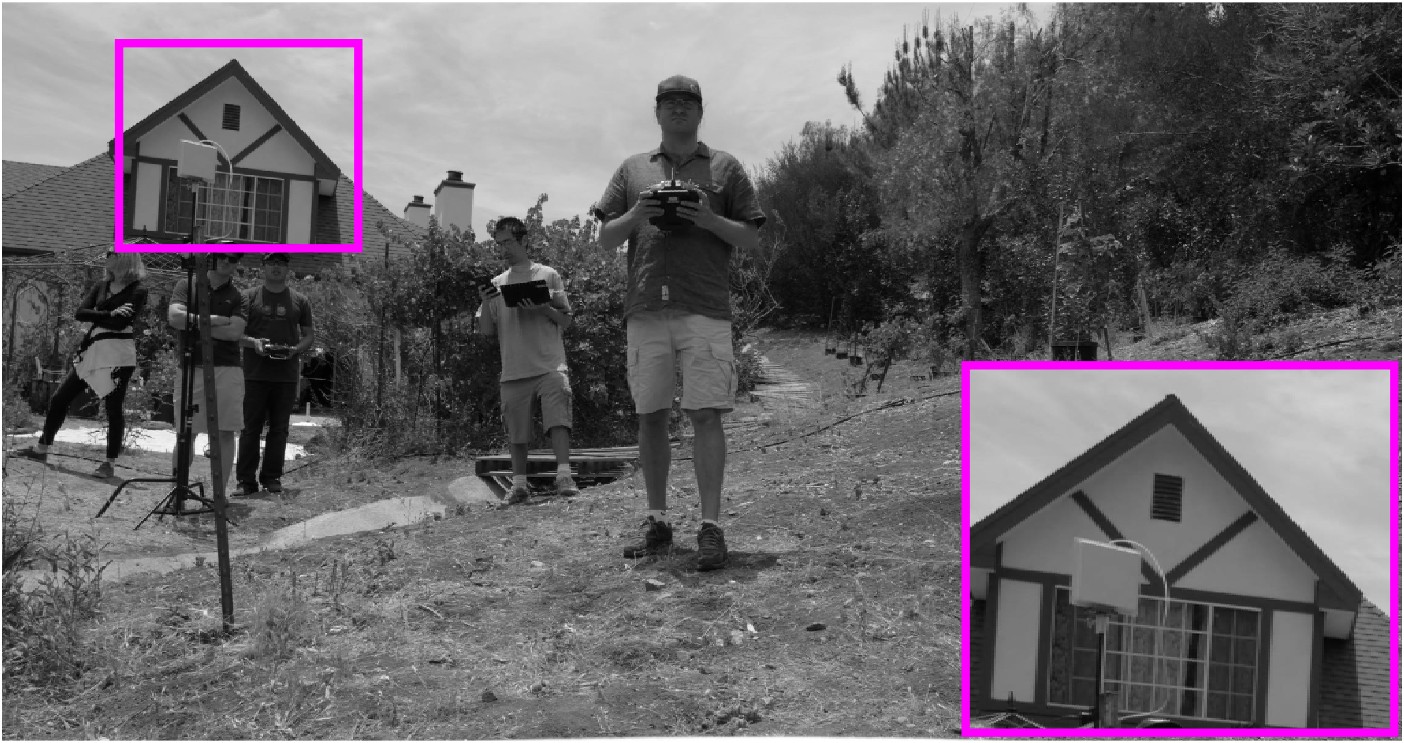} &
\includegraphics[width=0.672in]{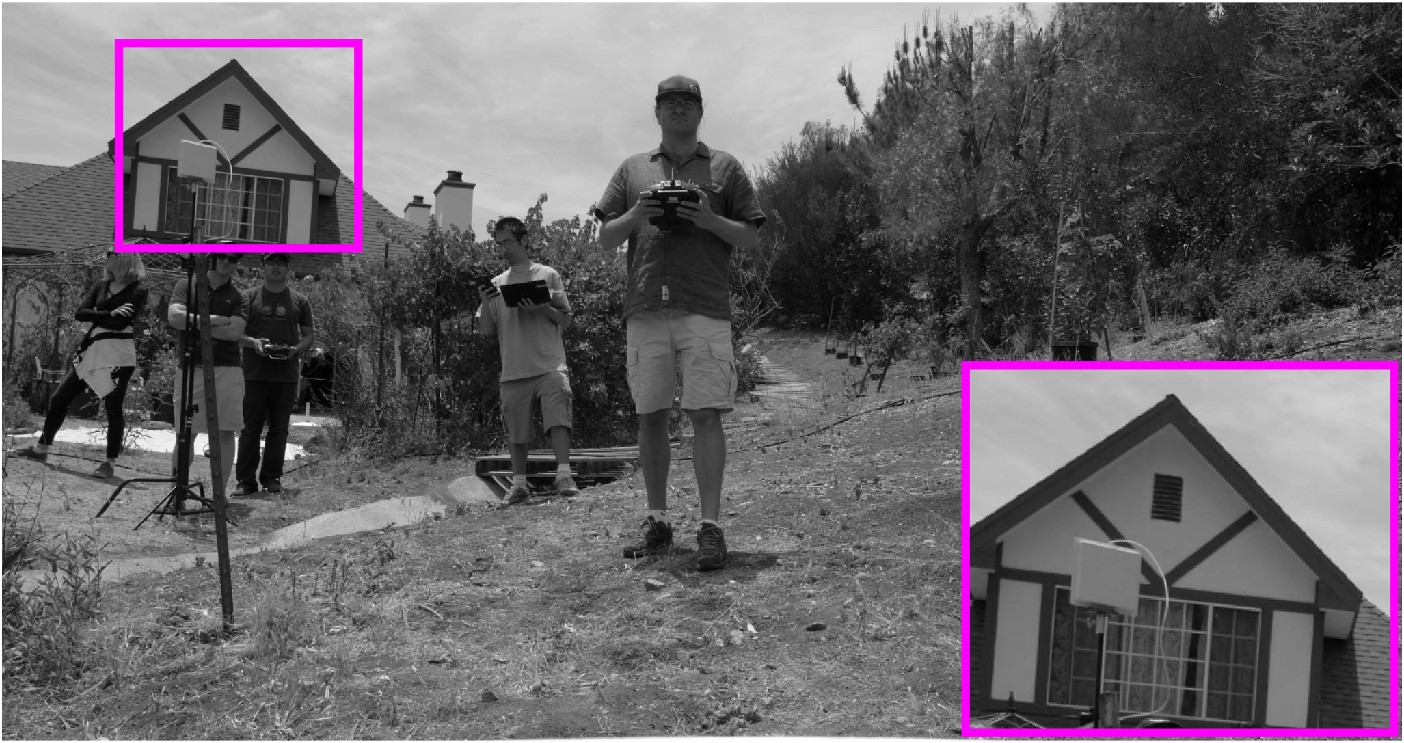}&
\includegraphics[width=0.672in]{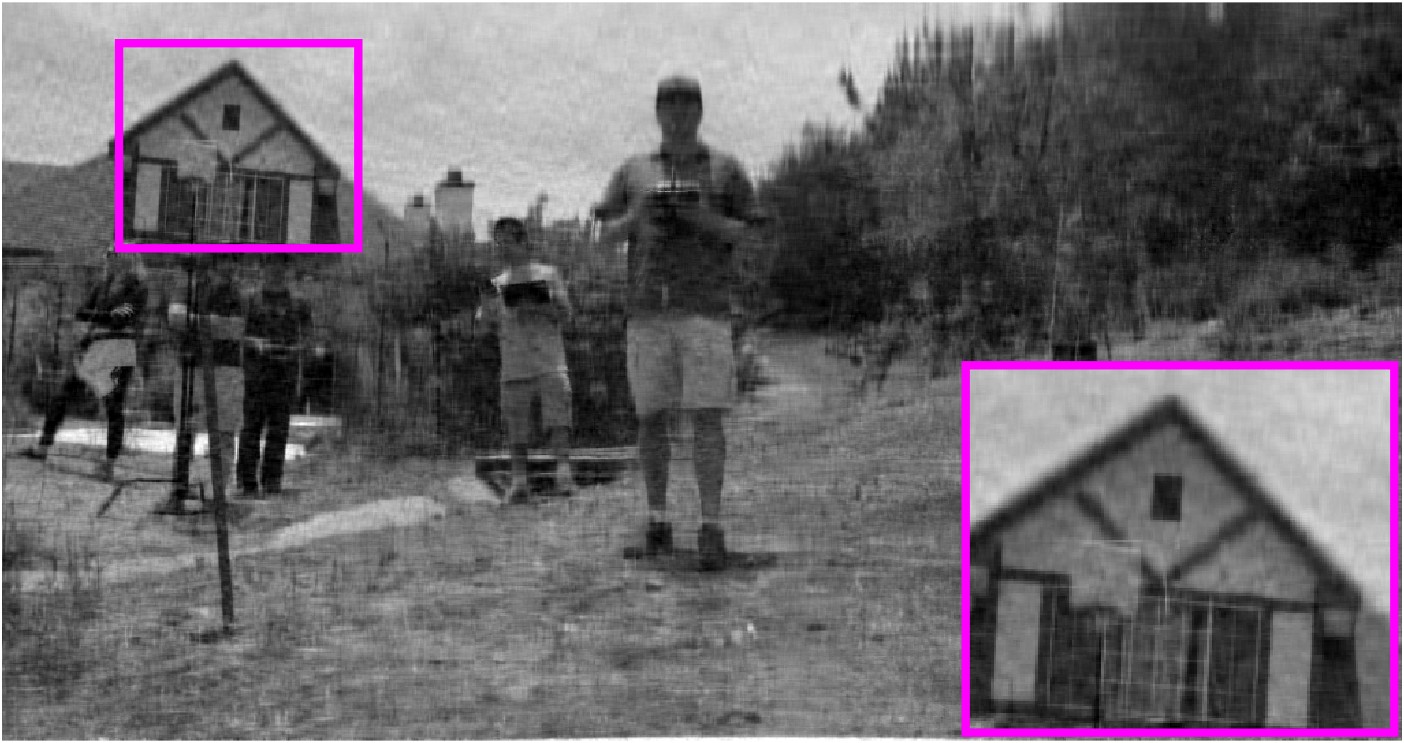} &
\includegraphics[width=0.672in]{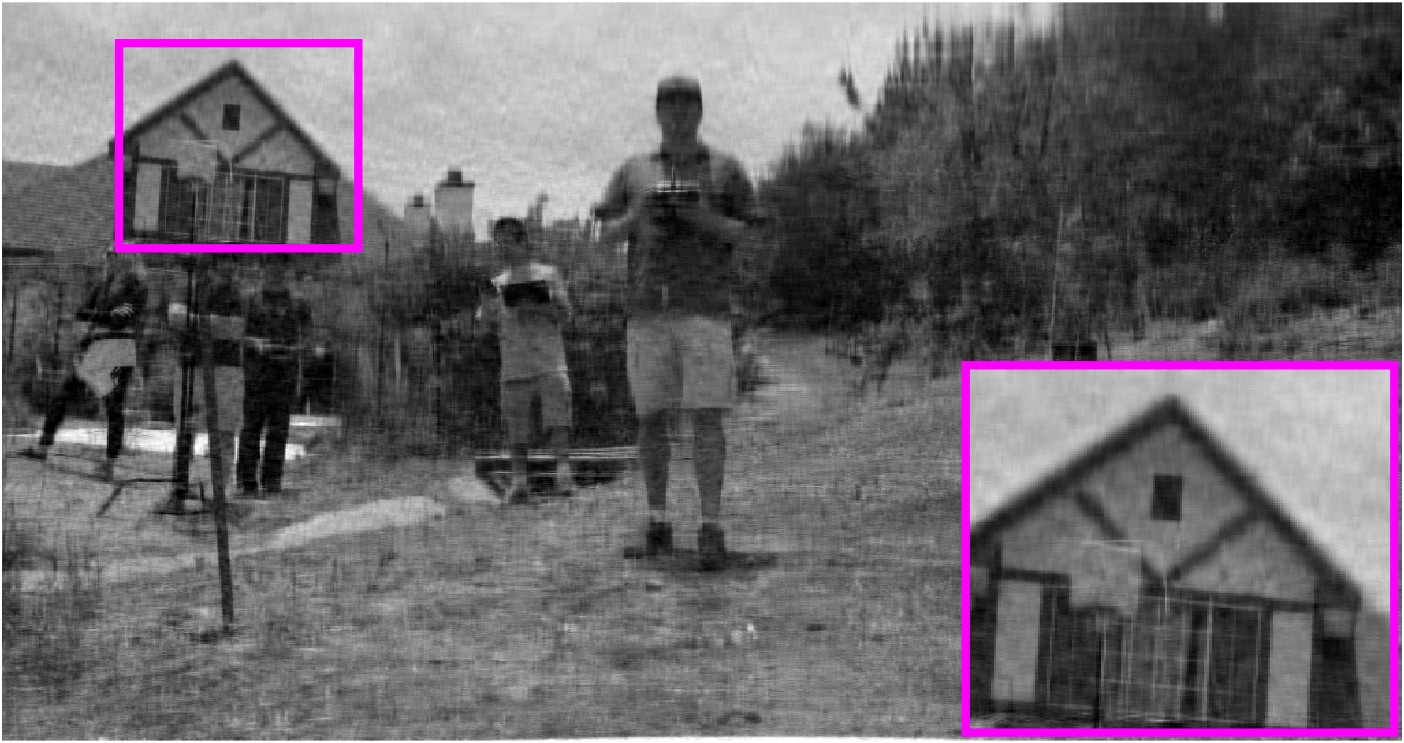} &
\includegraphics[width=0.672in]{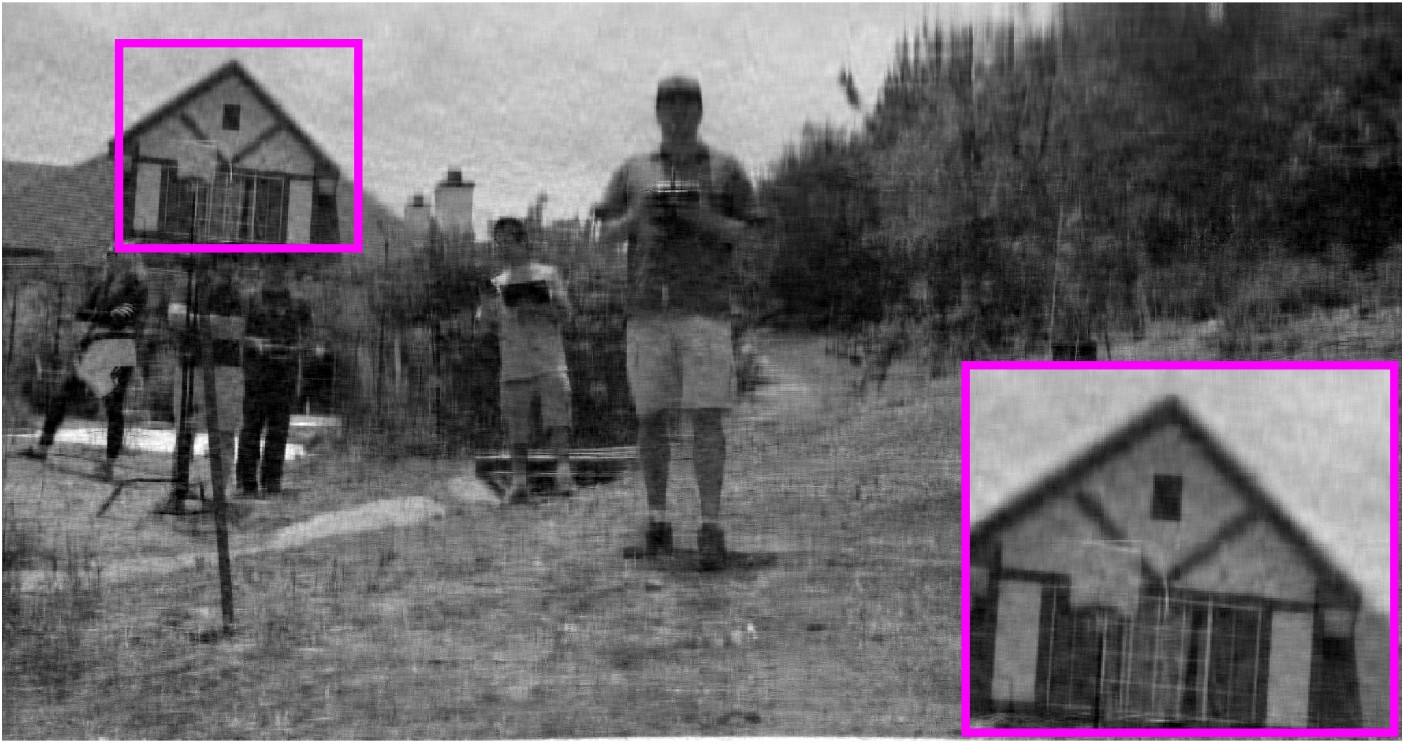}  \\
\tiny PSNR:24.58& \tiny PSNR:28.38 & \tiny PSNR:30.94 & \tiny PSNR:33.88 & \tiny PSNR:24.16 & \tiny PSNR:24.43 & \tiny PSNR:24.60\\
\tiny SSIM:0.600& \tiny SSIM:0.751 & \tiny SSIM:0.860 & \tiny SSIM:0.922 & \tiny SSIM:0.558 & \tiny SSIM:0.588 & \tiny SSIM:0.602\\      
\end{tabular}

\caption{Supplementary visual reconstruction examples and quantitative PSNR-SSIM comparisons between competing baselines and our approaches (original and truncated variants) on randomly sampled frames from two additional test video sequences.}
\label{fig:video compression_sup}
\end{figure}

Fig.~\ref*{fig:video_complete_sup} reports supplementary video completion results on two additional test sequences under 70\% randomly missing pixels, including frame-wise PSNR-SSIM curves, total runtime statistics, and visual reconstruction examples on sampled frames. The experiments adopt the same iterative completion framework as the main text, with DFT as the default transform and 70\% pixel missing ratio. Quantitative results show that the proposed methods outperform baseline competitors in reconstruction accuracy. Compared with the deterministic TMSTP-SVD, the randomized TMRSTP-SVD substantially cuts down the overall completion runtime, while TMRSTP-SVD incurs nearly negligible reconstruction quality loss. Visually, the proposed methods recover finer textures and introduce fewer visible artifacts in missing regions. These supplementary results further confirm the generalization ability of the proposed methods for video completion tasks.
\begin{figure}[!ht]  
  \centering
  \includegraphics[width=0.49\linewidth]{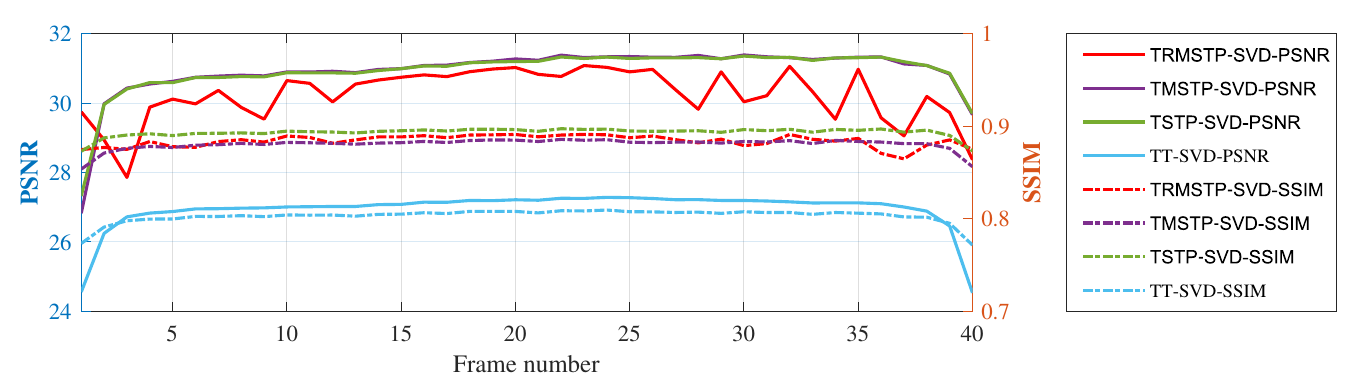}
\hfill
  \includegraphics[width=0.49\linewidth]{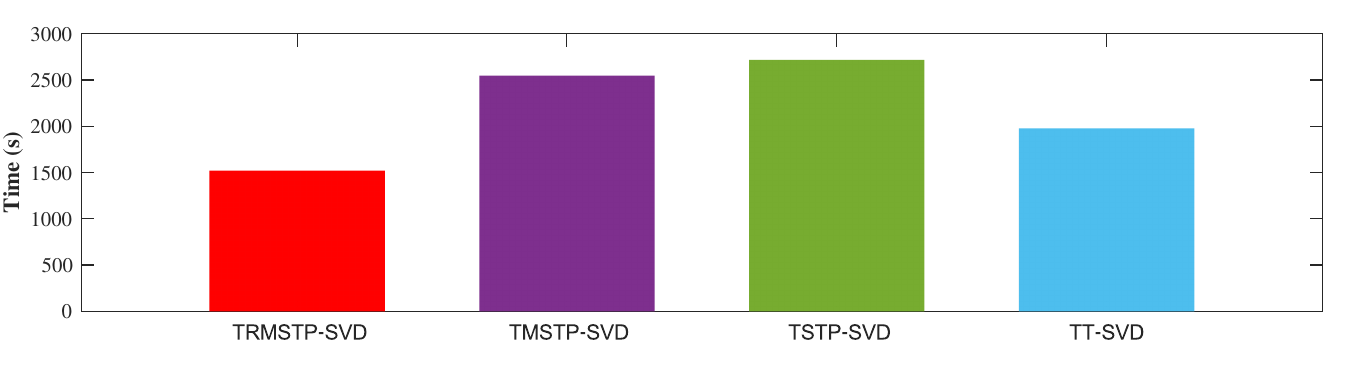}
\renewcommand{\arraystretch}{0.5} 
  \setlength{\tabcolsep}{0.3pt}       
  \begin{tabular}{@{}cccccc@{}}
    \includegraphics[width=0.77in]{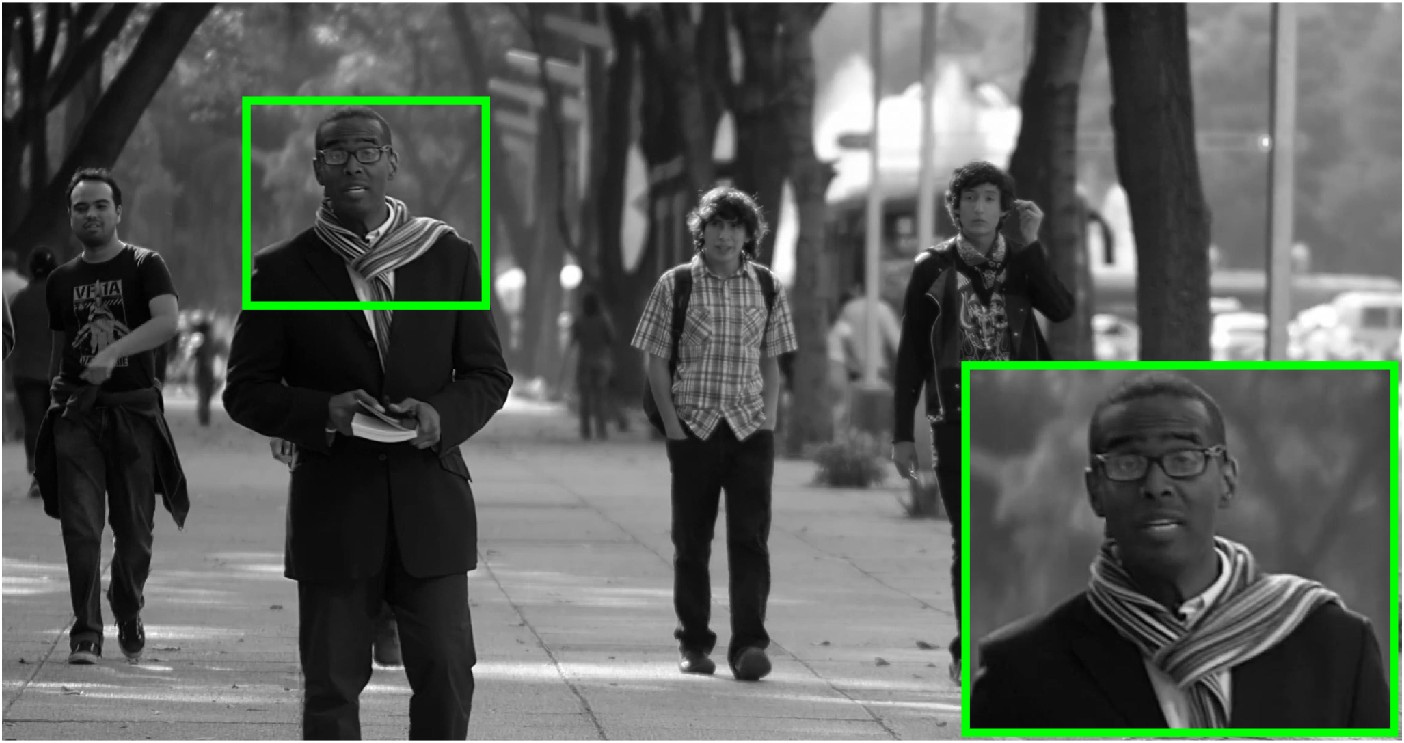} &
    \includegraphics[width=0.77in]{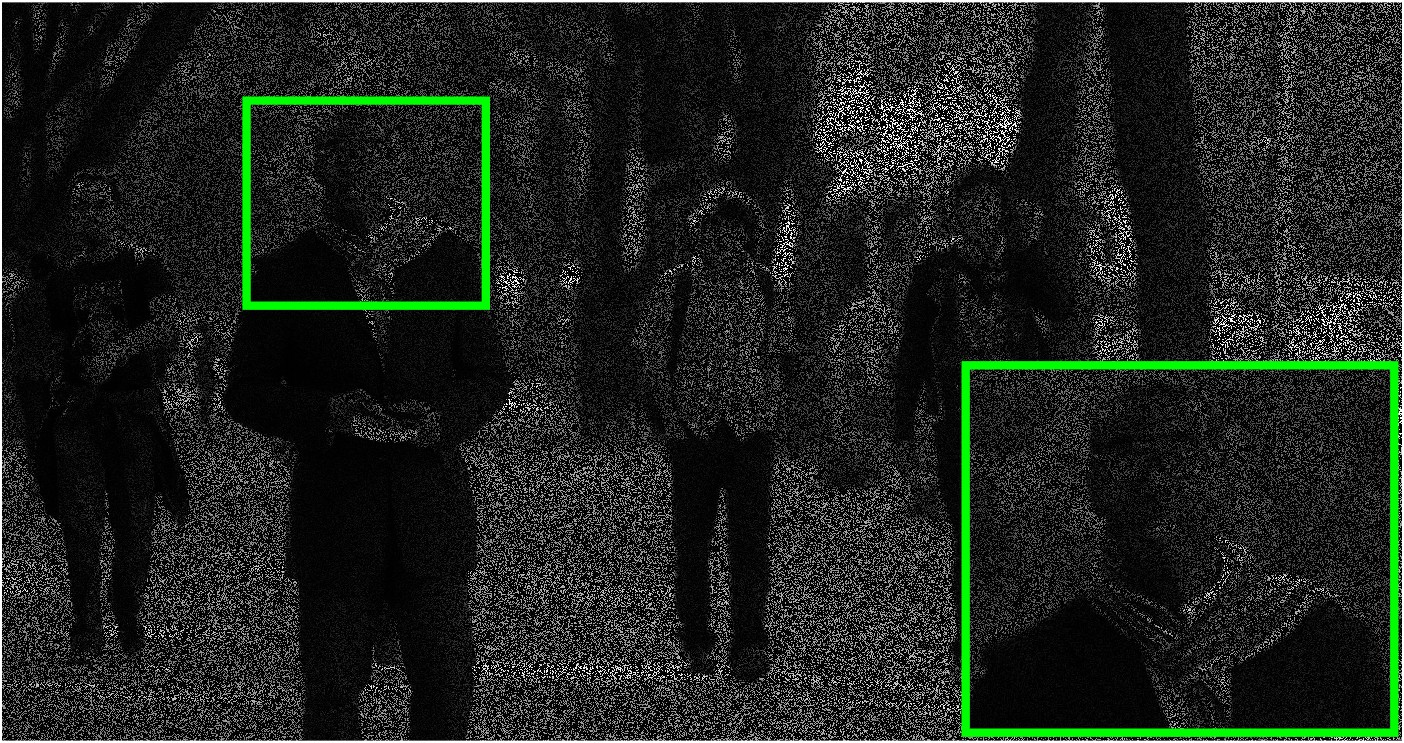} &
    \includegraphics[width=0.77in]{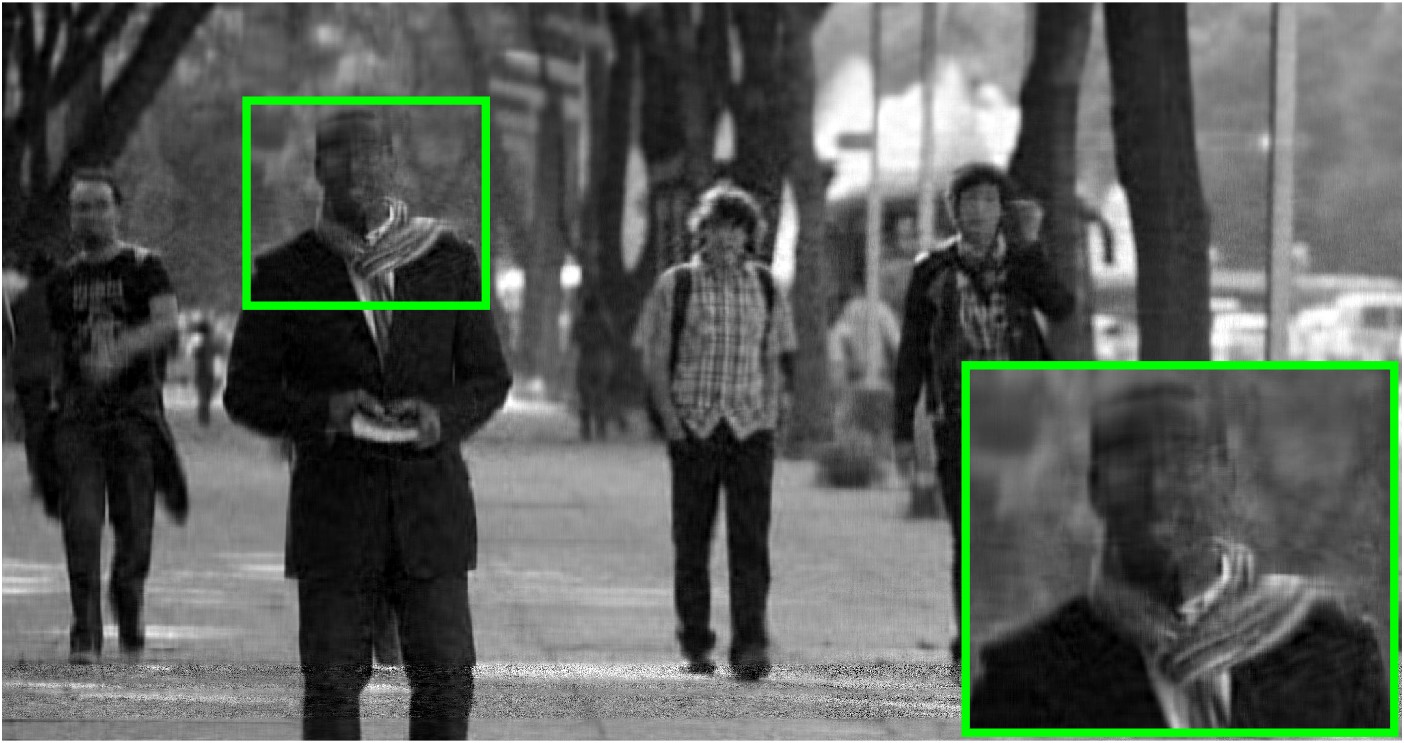} &
    \includegraphics[width=0.77in]{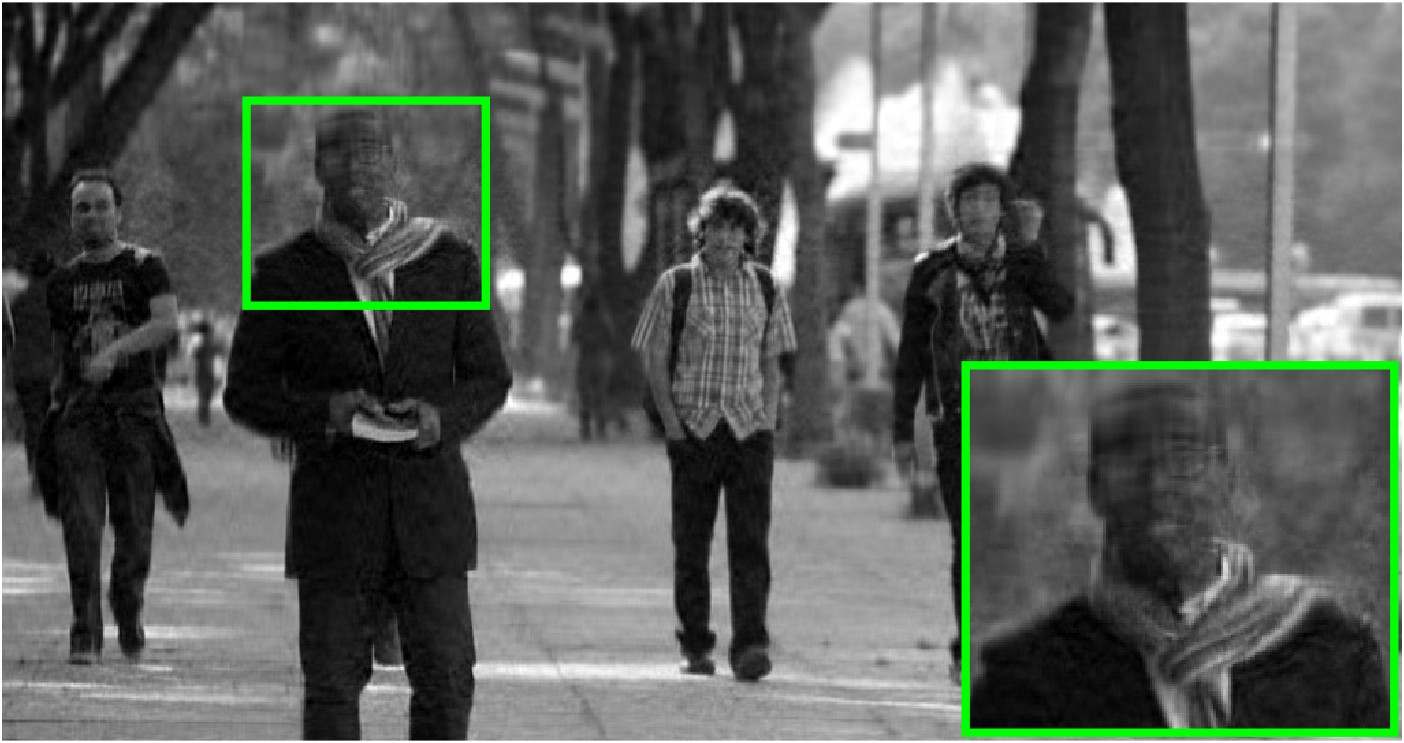}&
    \includegraphics[width=0.77in]{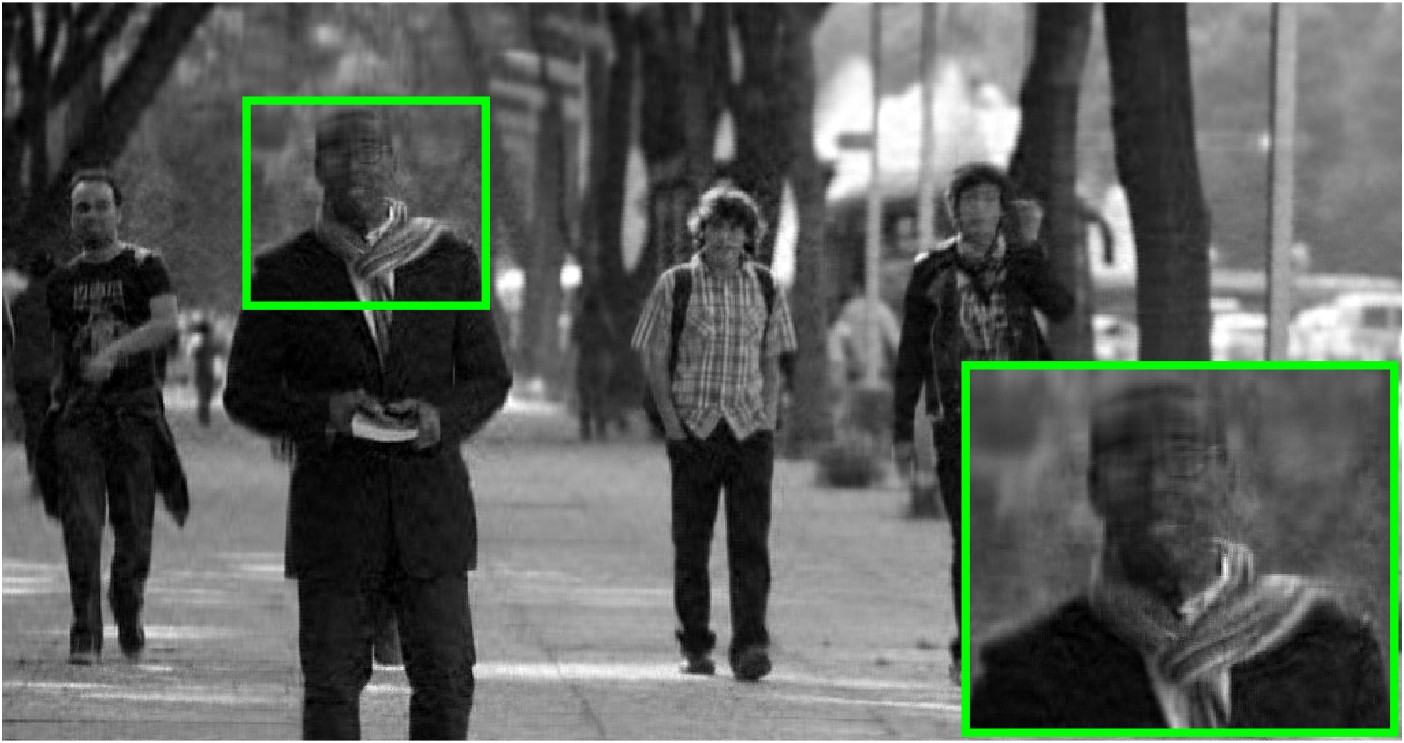} &
    \includegraphics[width=0.77in]{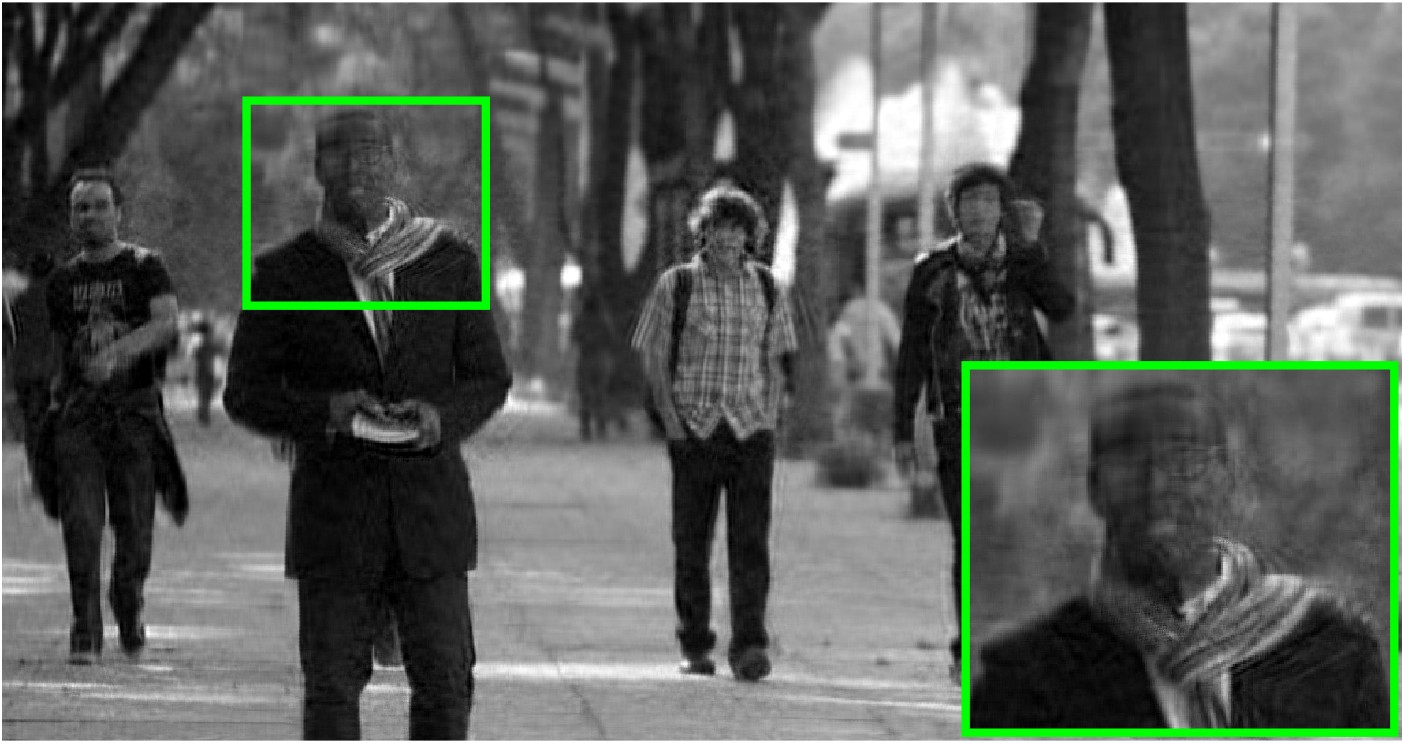} \\
    \includegraphics[width=0.77in]{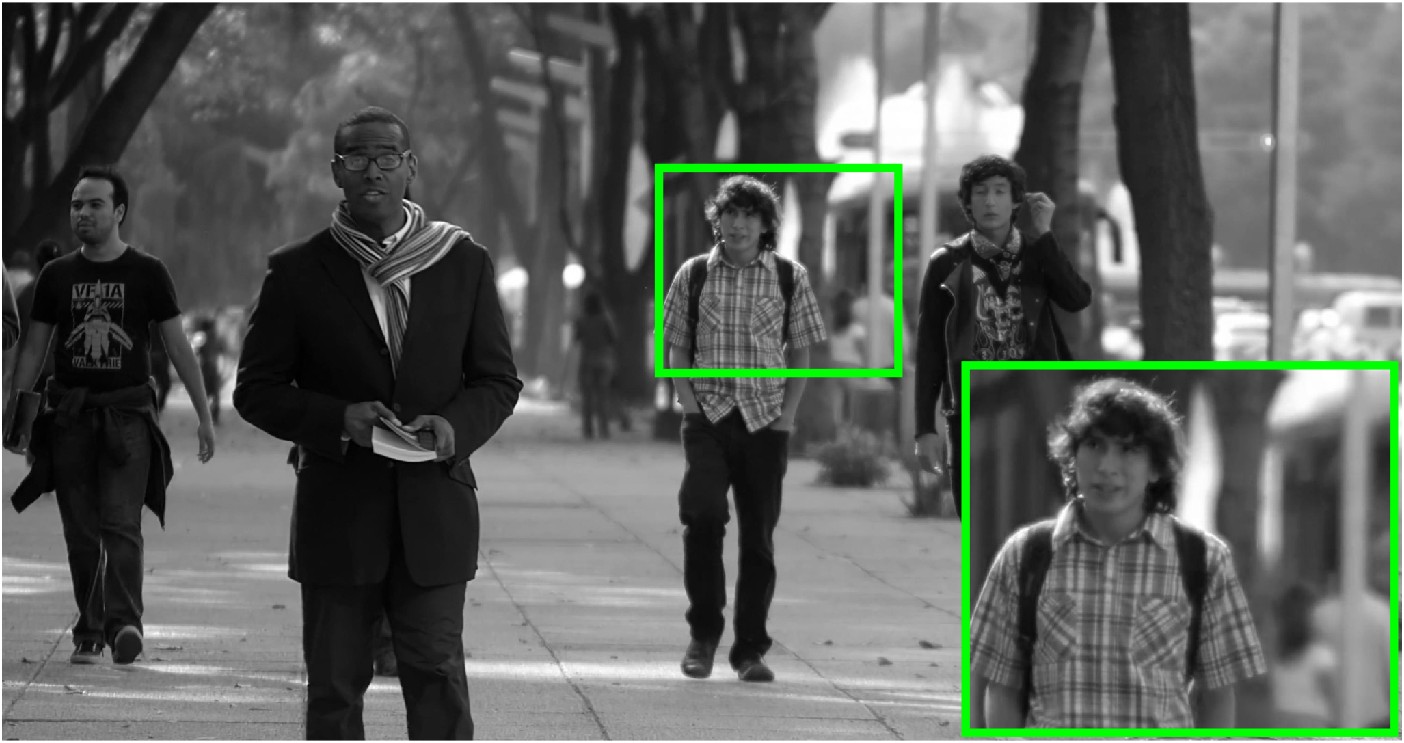} &
    \includegraphics[width=0.77in]{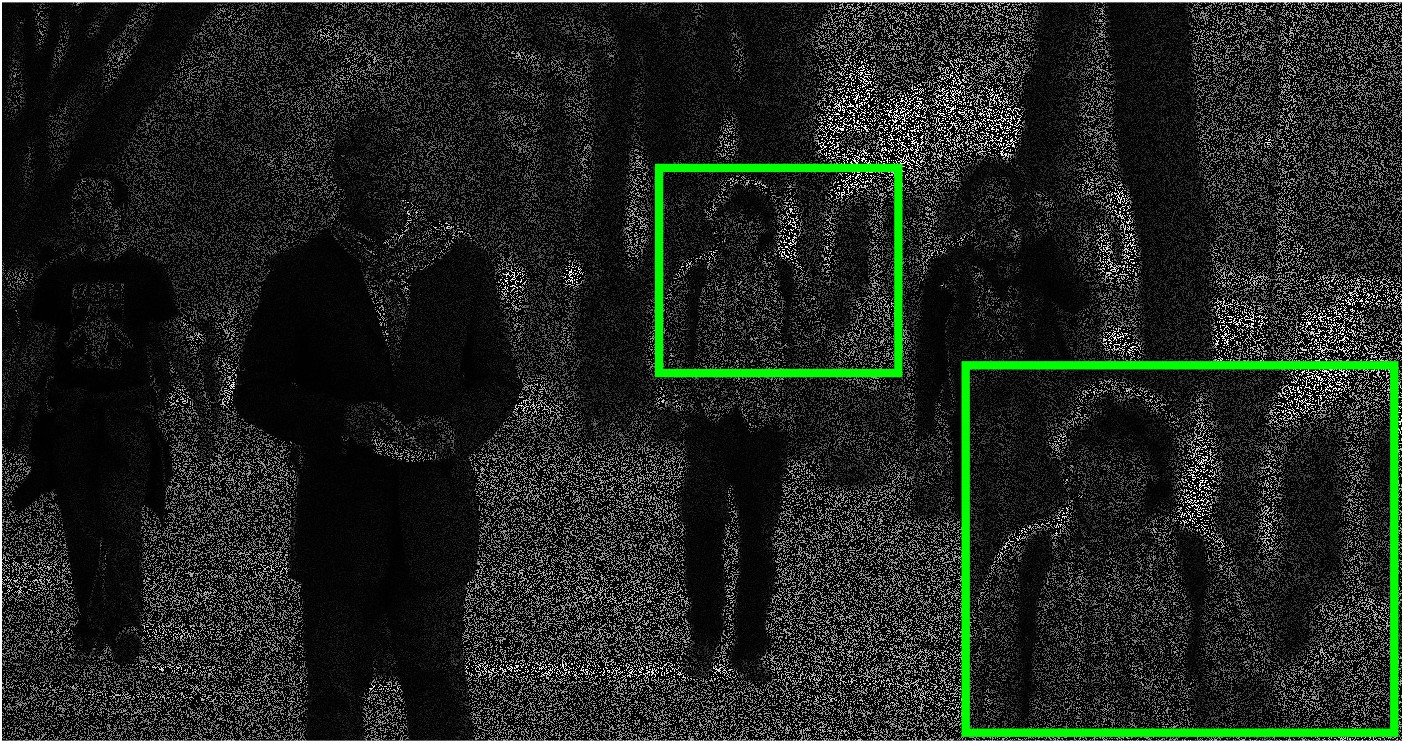} &
    \includegraphics[width=0.77in]{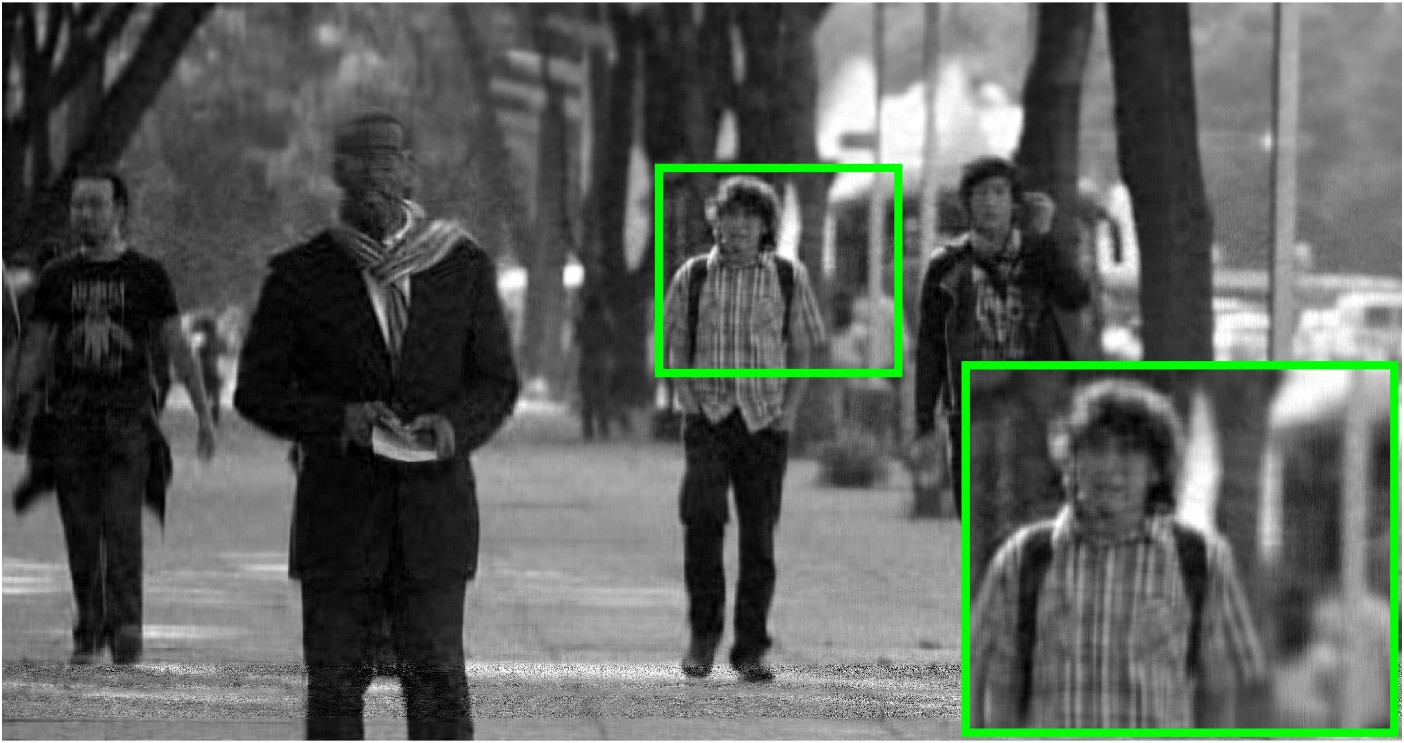} &
    \includegraphics[width=0.77in]{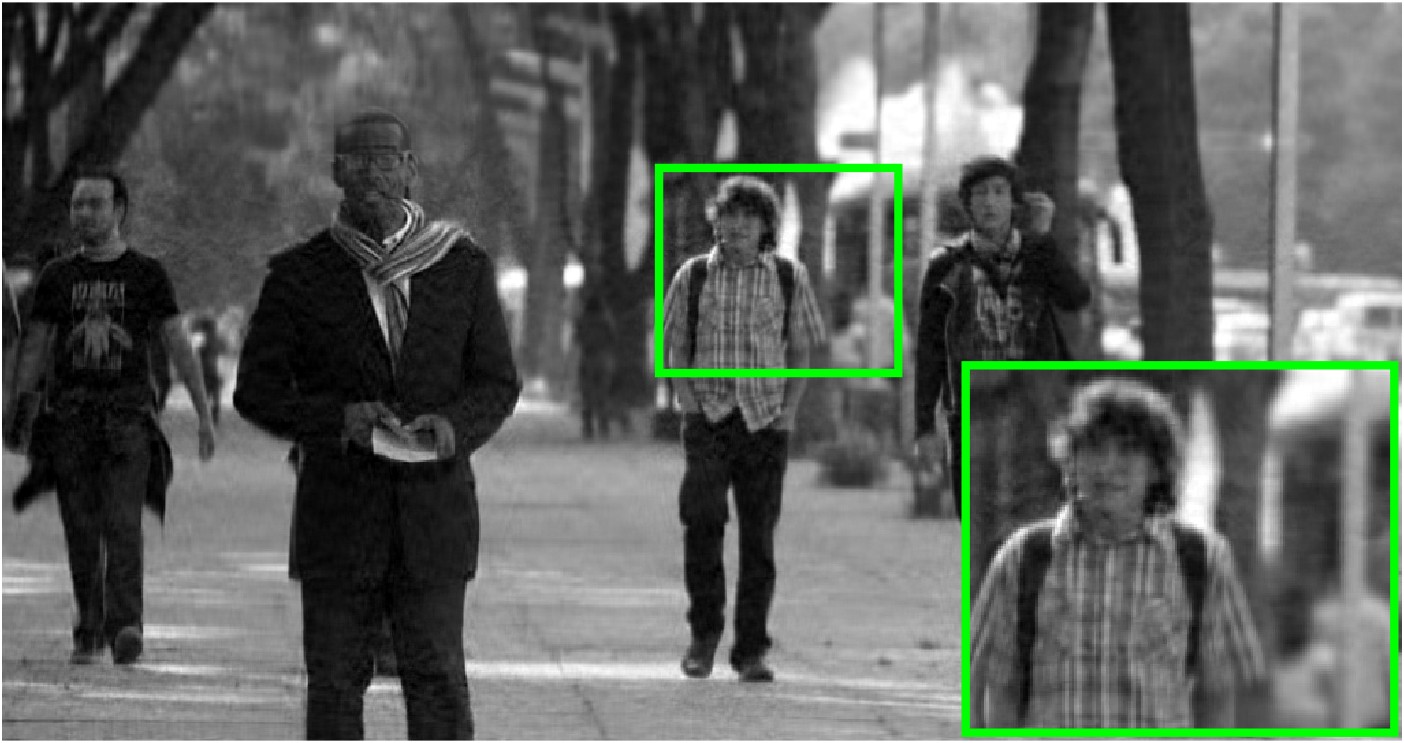}&
    \includegraphics[width=0.77in]{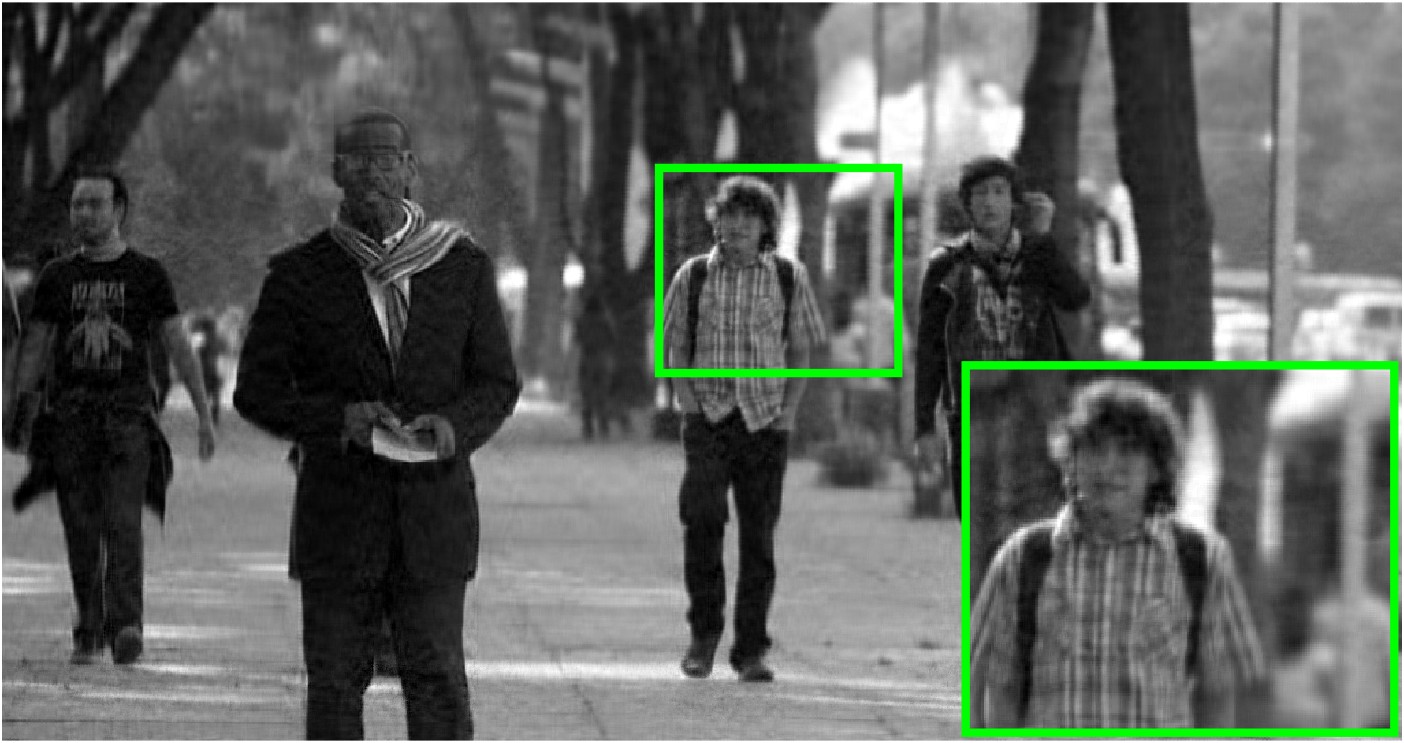} &
    \includegraphics[width=0.77in]{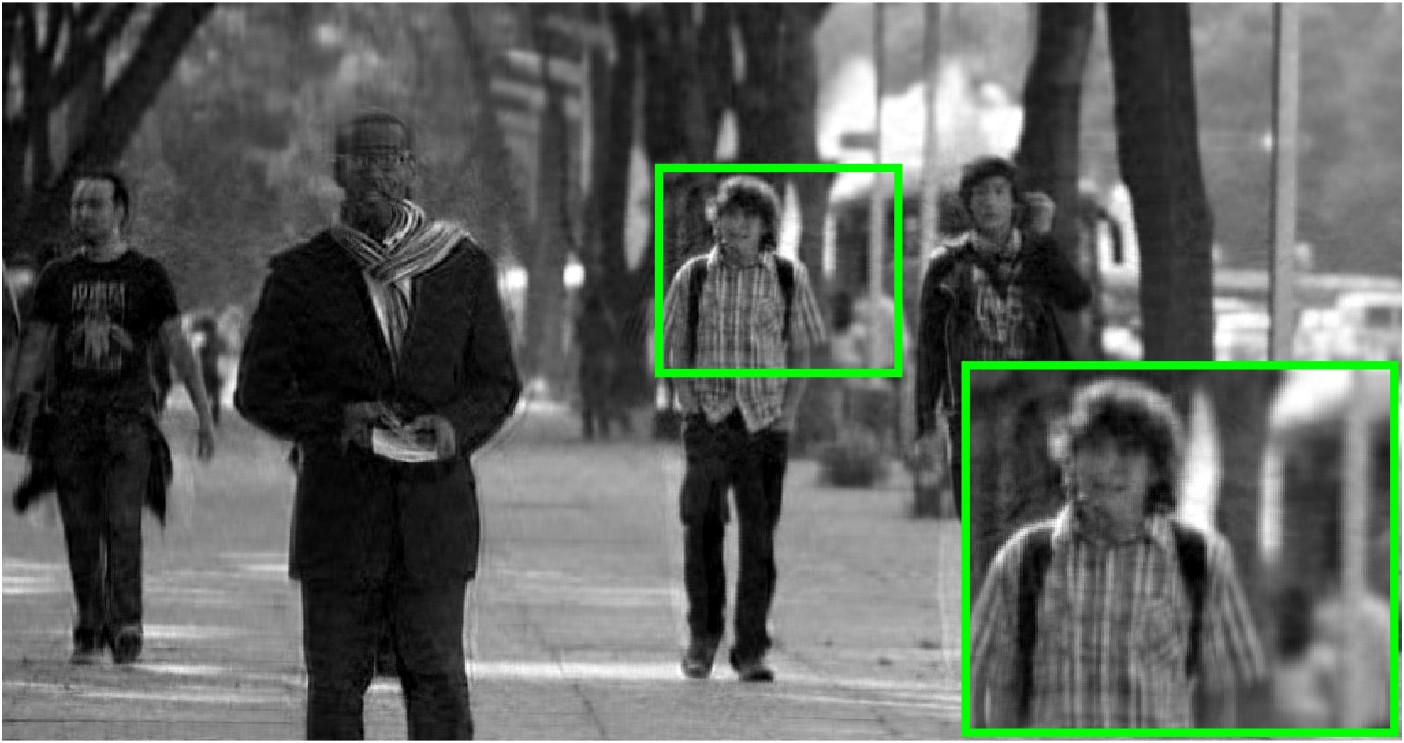} \\
    \scriptsize Original  &
    \scriptsize 70\% missing &
    \scriptsize  TT-SVD &
    \scriptsize  TSTP-SVD&
    \scriptsize  TMSTP-SVD &
    \scriptsize  TMRSTP-SVD
  \end{tabular}
\includegraphics[width=0.49\linewidth]{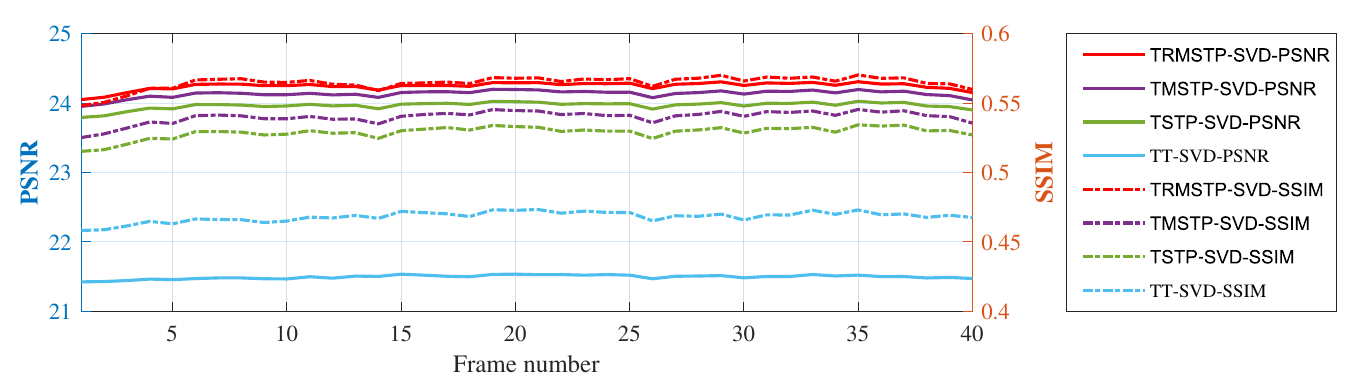}
\hfill
\includegraphics[width=0.49\linewidth]{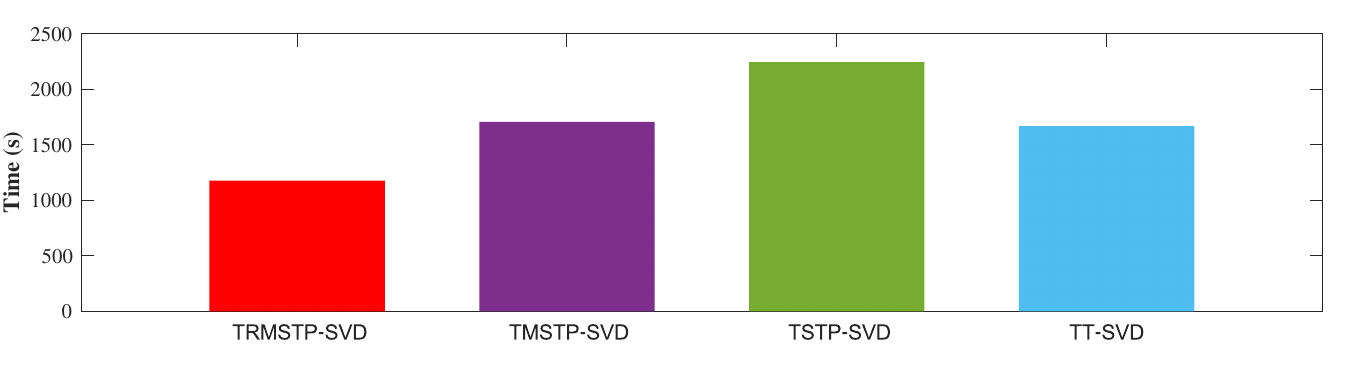}
\renewcommand{\arraystretch}{0.5} 
  \setlength{\tabcolsep}{0.3pt}       
  \begin{tabular}{@{}cccccc@{}}
    \includegraphics[width=0.77in]{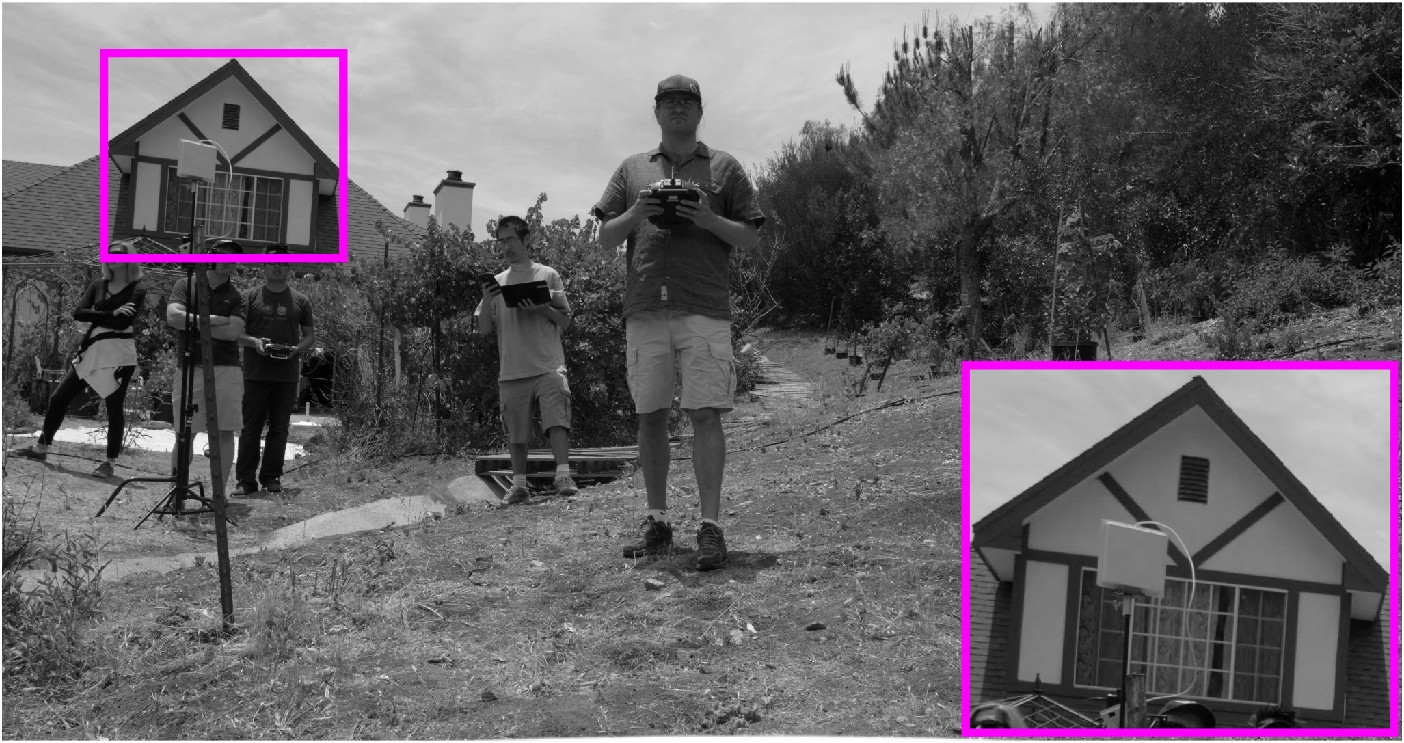} &
    \includegraphics[width=0.77in]{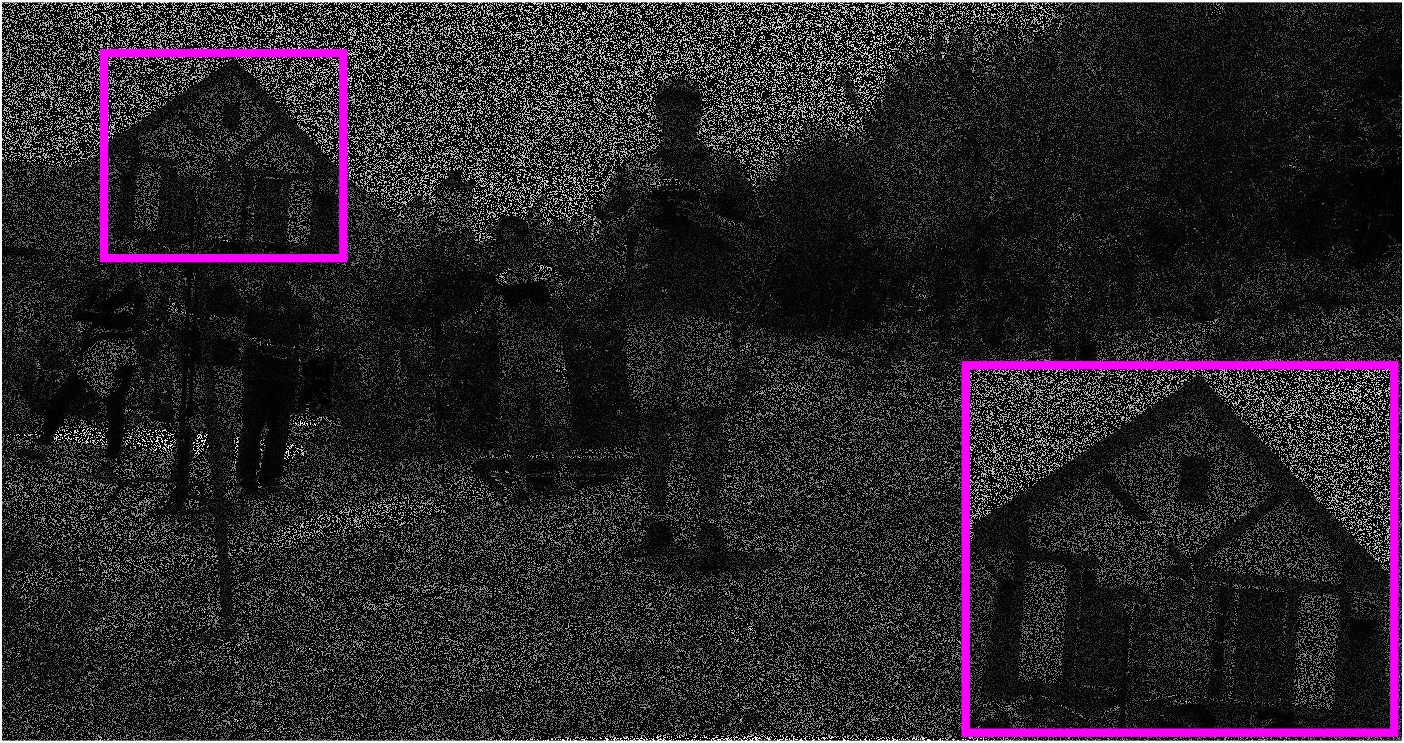} &
    \includegraphics[width=0.77in]{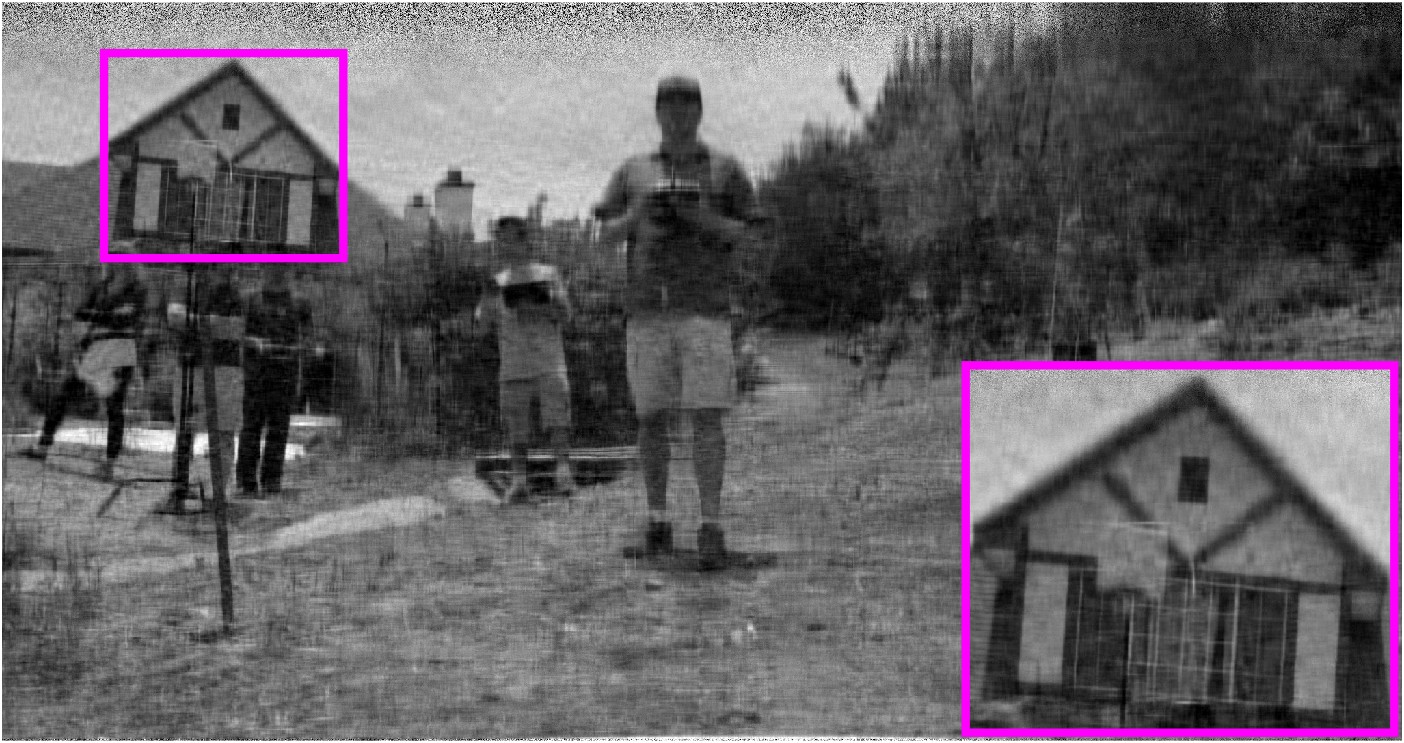} &
    \includegraphics[width=0.77in]{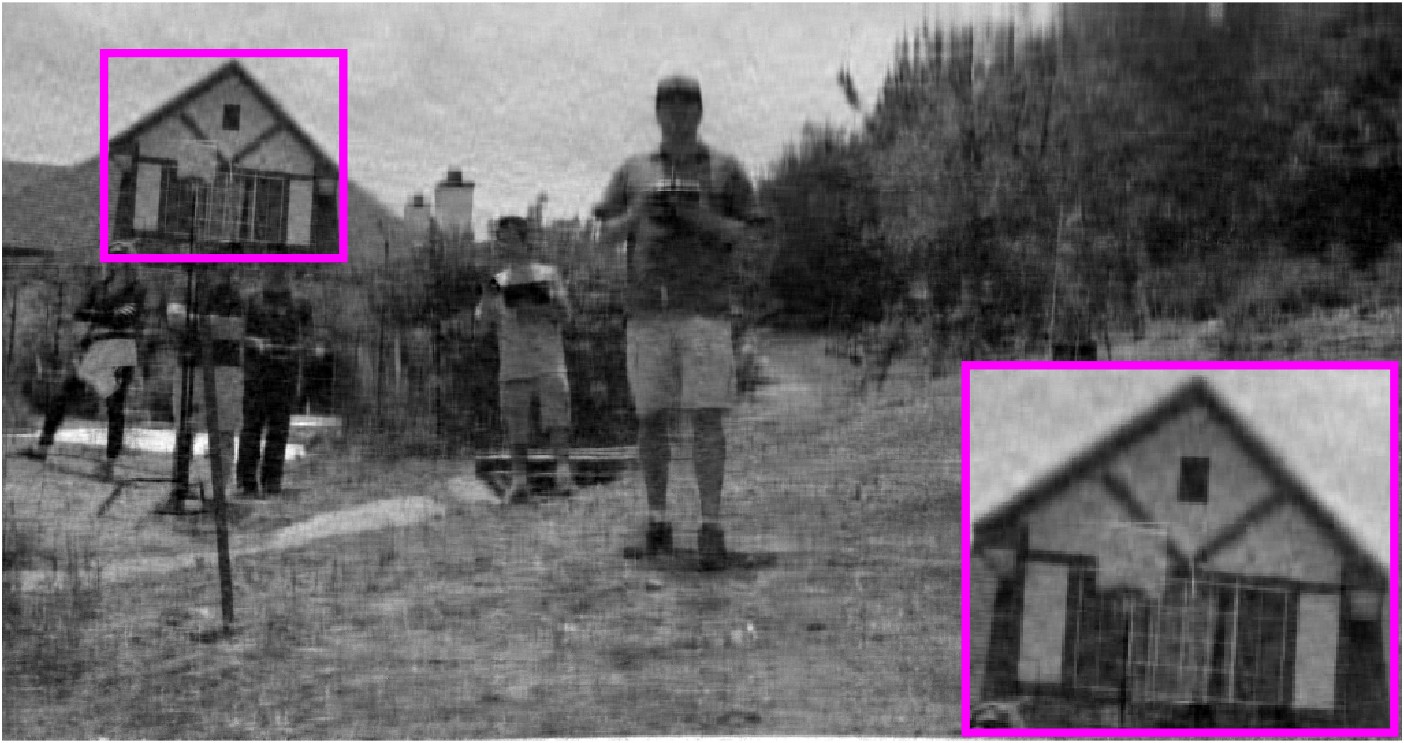}&
    \includegraphics[width=0.77in]{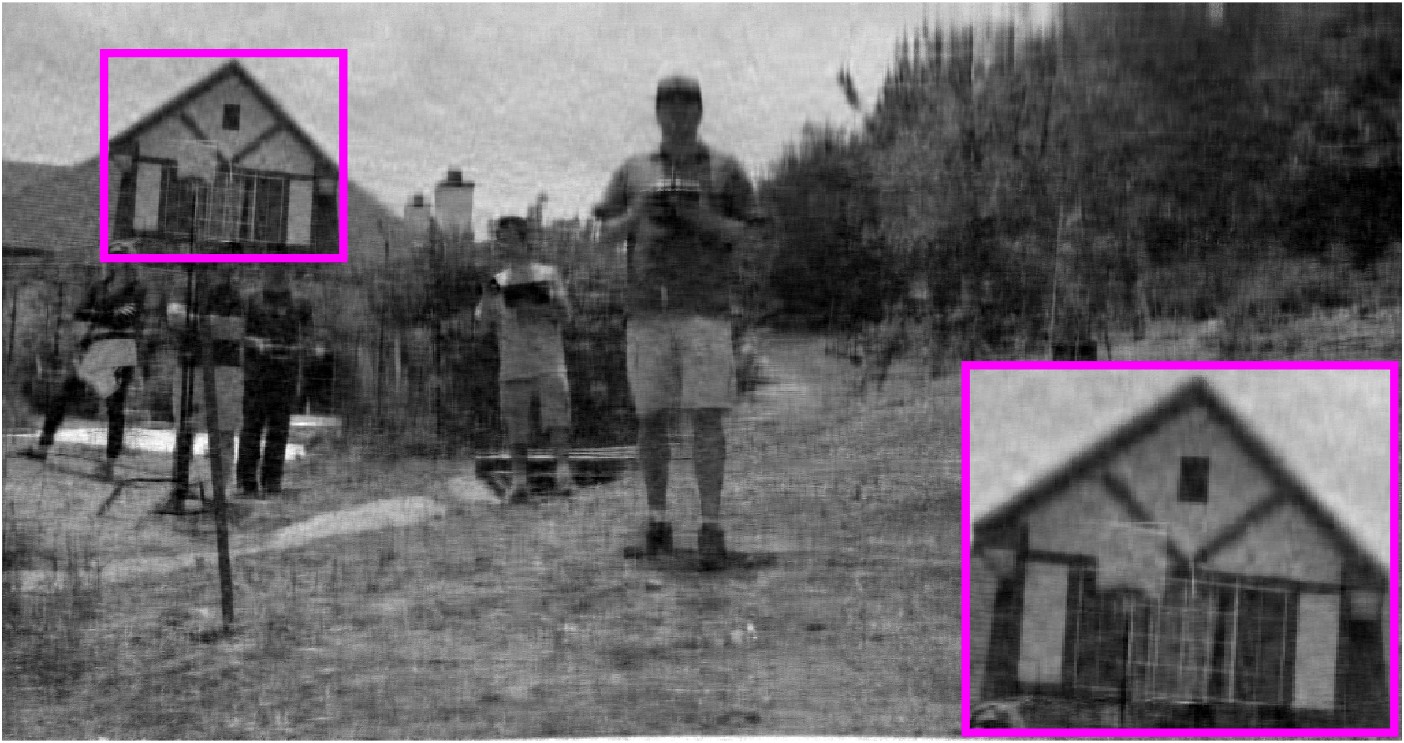} &
    \includegraphics[width=0.77in]{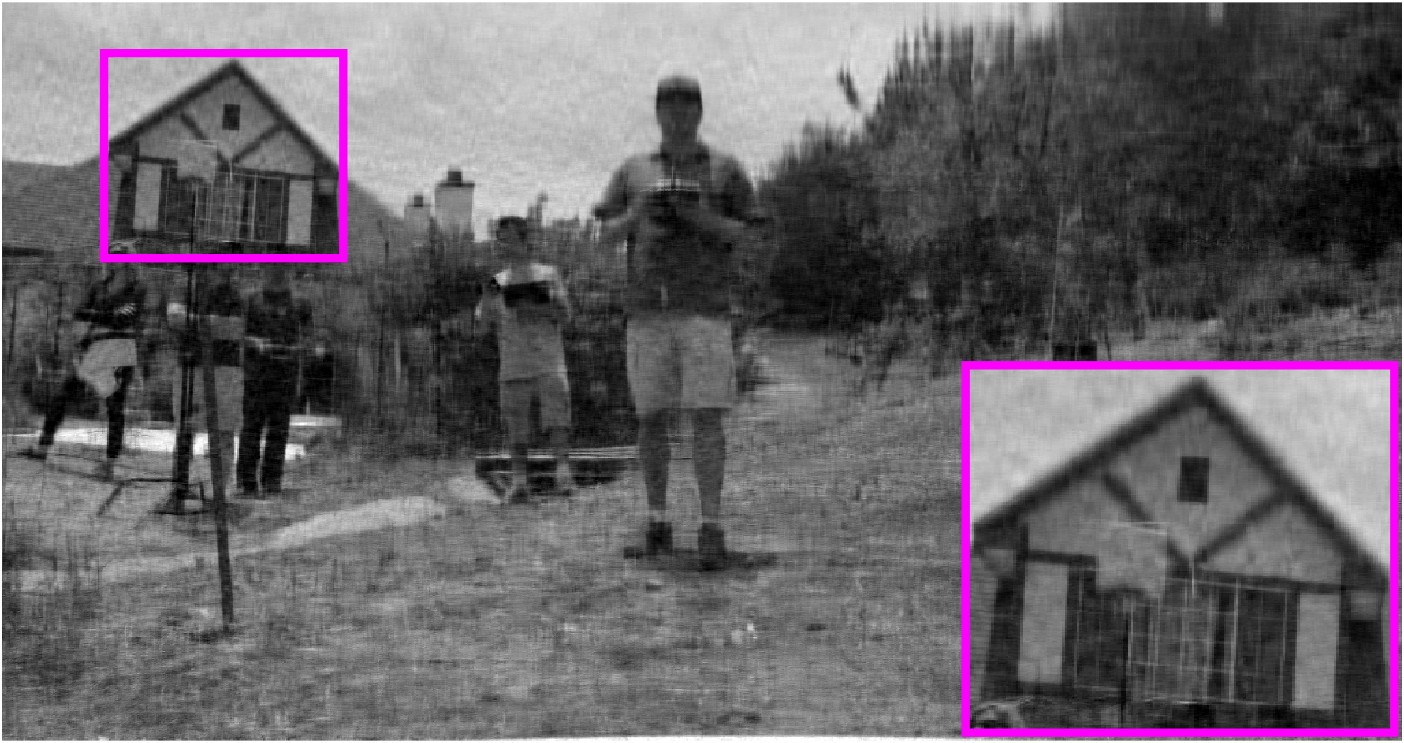} \\
    \includegraphics[width=0.77in]{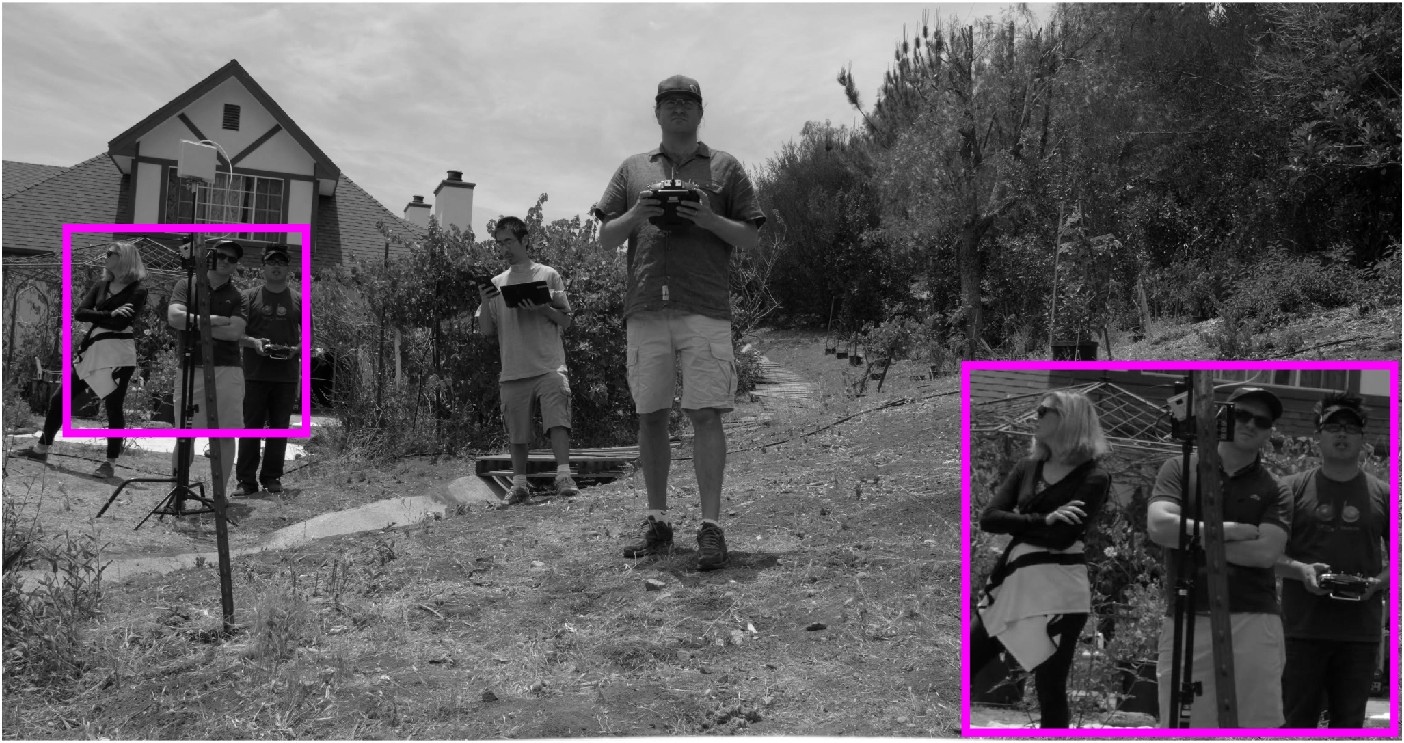} &
    \includegraphics[width=0.77in]{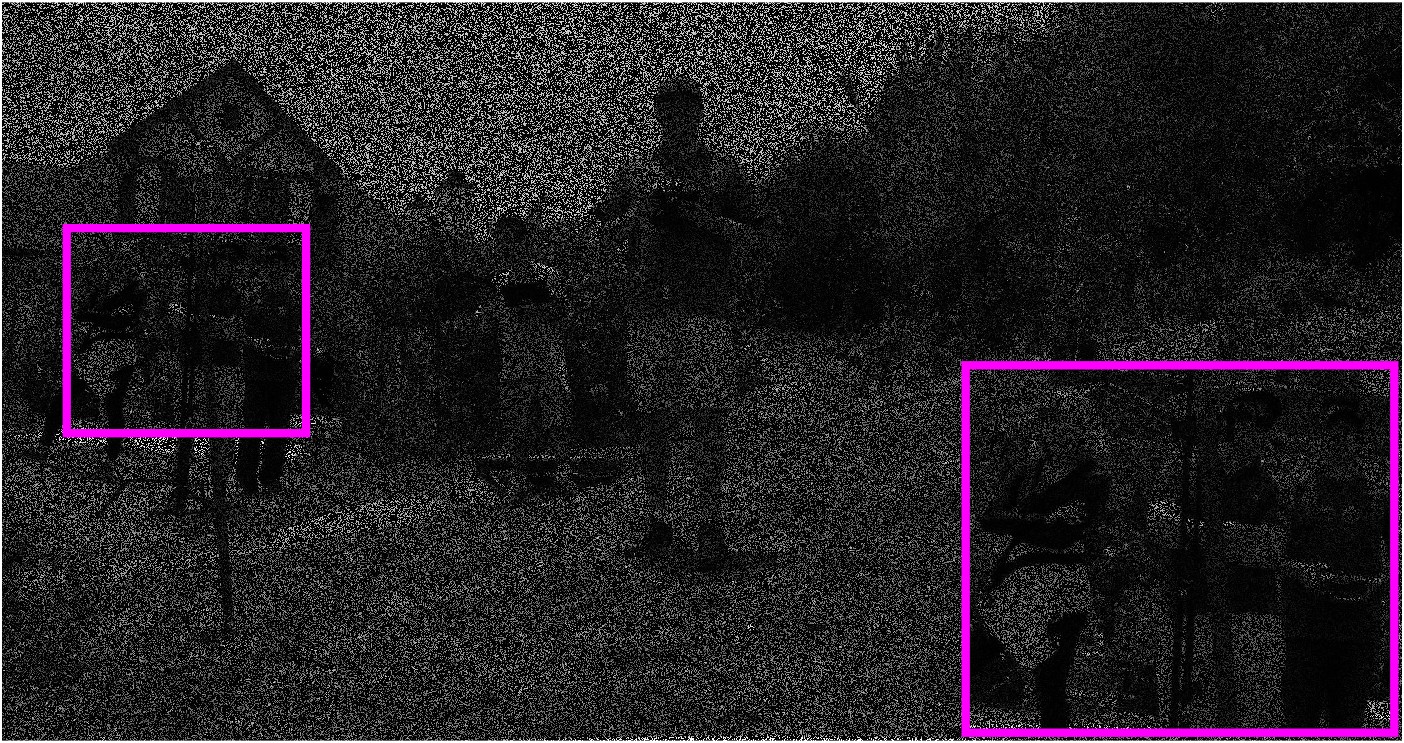} &
    \includegraphics[width=0.77in]{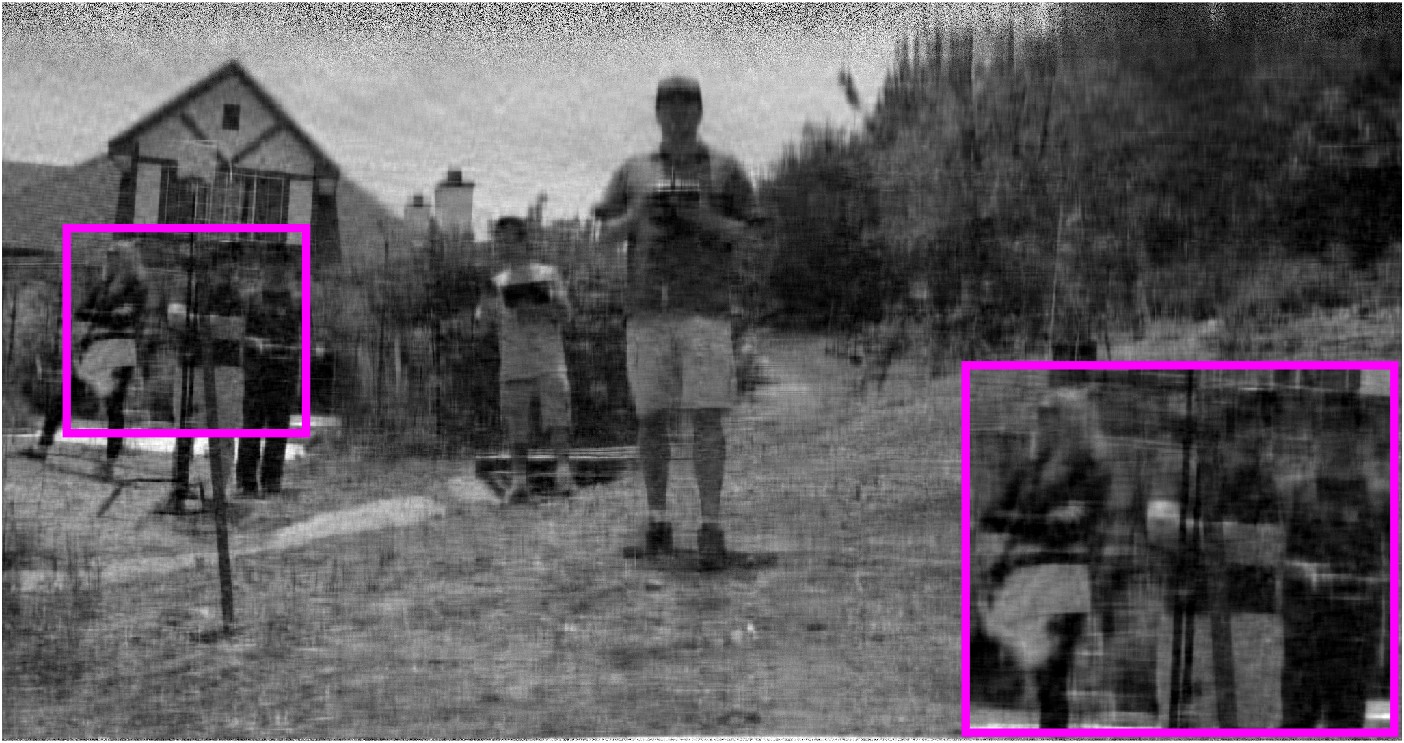} &
    \includegraphics[width=0.77in]{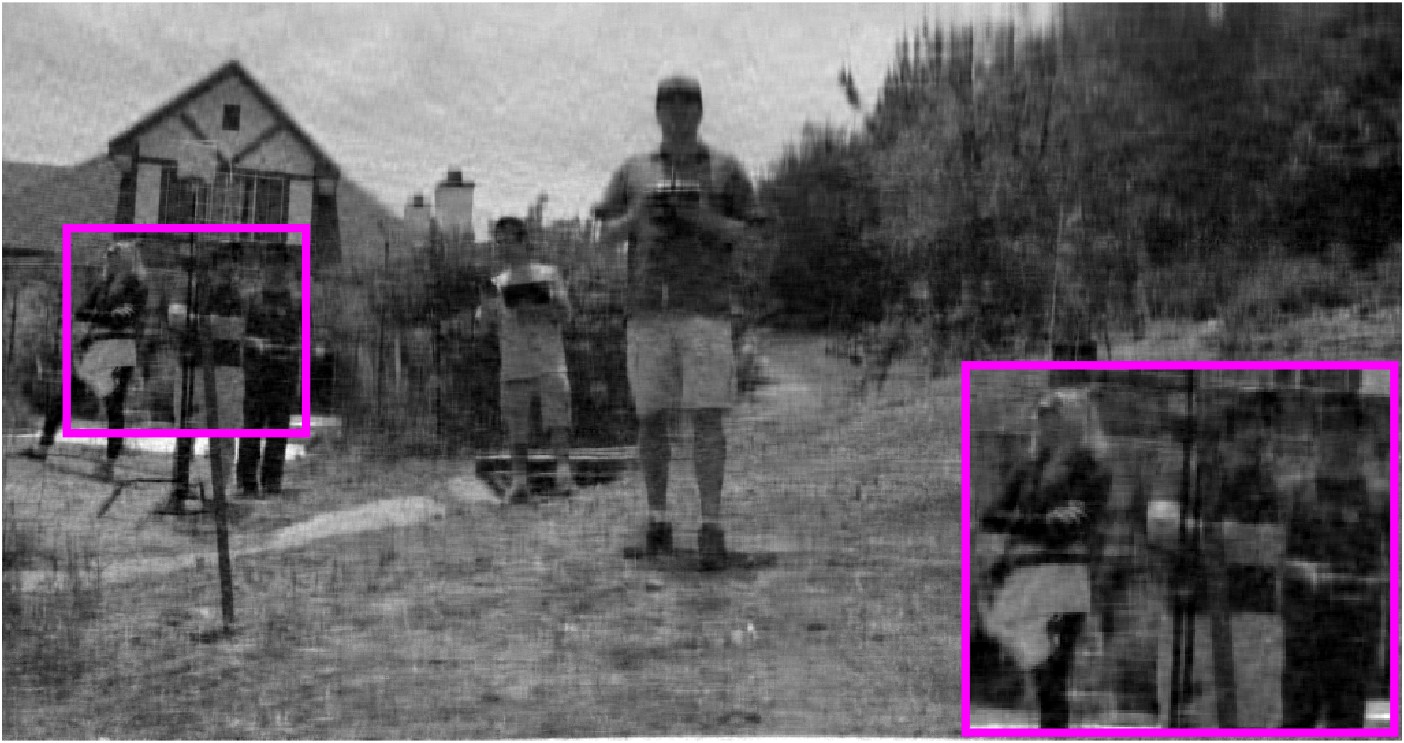}&
    \includegraphics[width=0.77in]{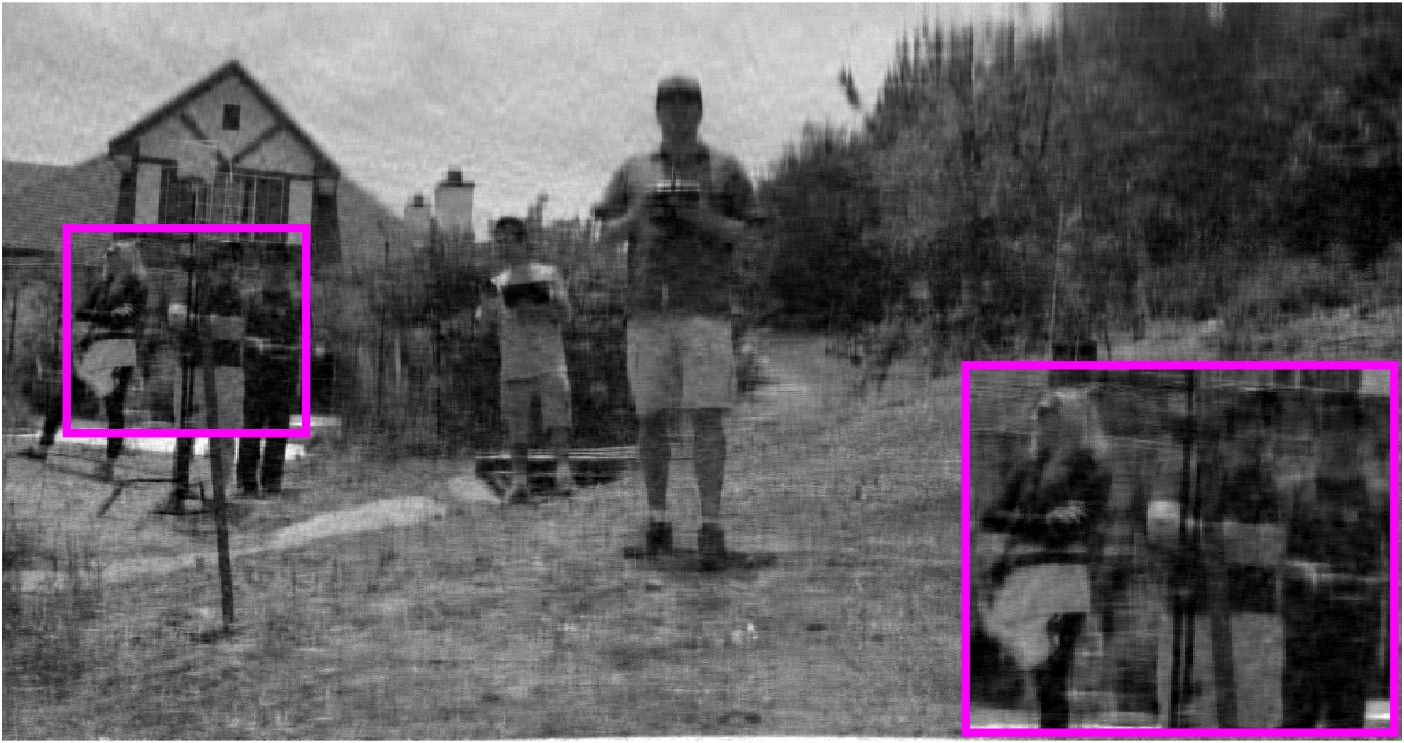} &
    \includegraphics[width=0.77in]{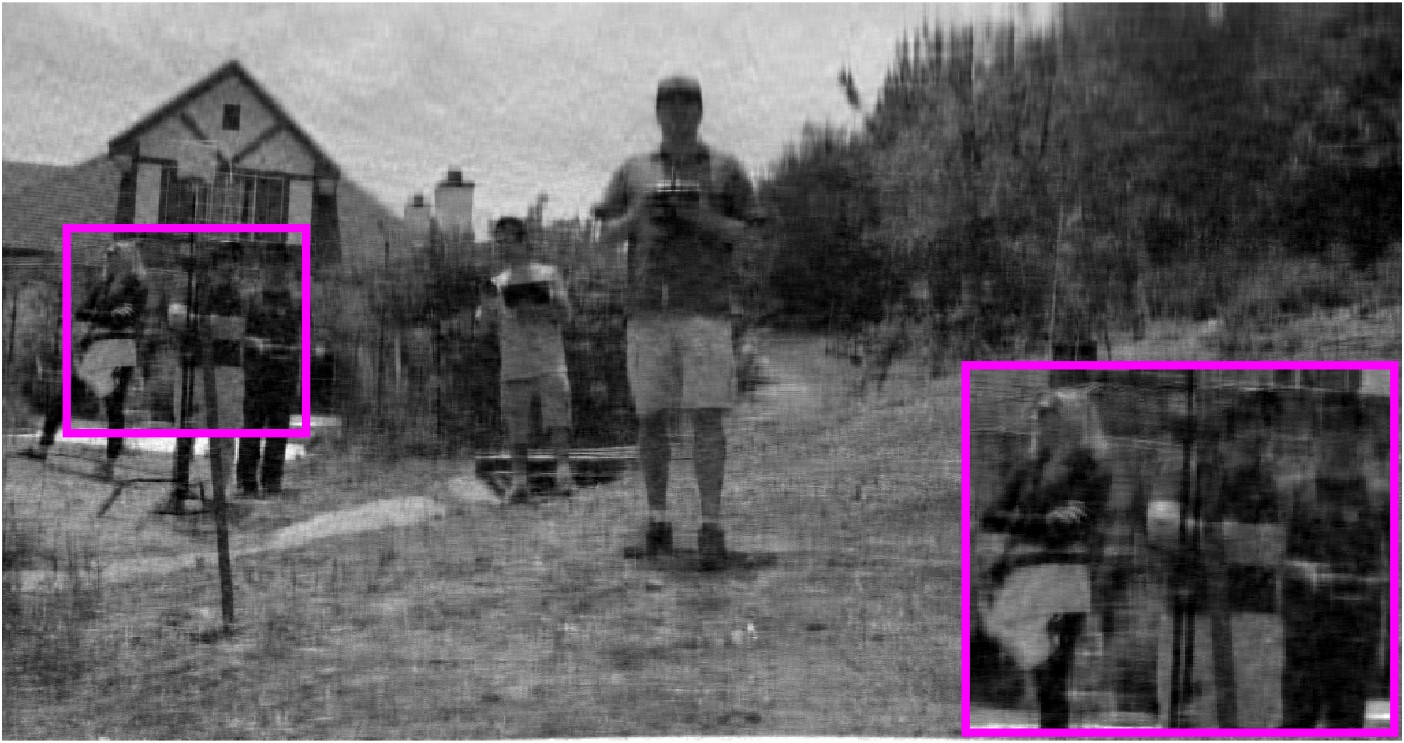} \\
    \scriptsize Original  &
    \scriptsize 70\% missing &
    \scriptsize  TT-SVD &
    \scriptsize  TSTP-SVD&
    \scriptsize  TMSTP-SVD &
    \scriptsize  TMRSTP-SVD
  \end{tabular}        
 \caption{Supplementary objective and visual comparisons of video recovery methods on two additional test sequences, including PSNR-SSIM curves, runtime measurements, and reconstruction results under 70\% missing pixels.}
  \label{fig:video_complete_sup}
\end{figure}

\end{document}